\theoremstyle{plain}
\newtheorem{theorem}{Theorem}[section]
\newtheorem{proposition}[theorem]{Proposition}
\newtheorem{lemma}[theorem]{Lemma}
\newtheorem{corollary}[theorem]{Corollary}
\theoremstyle{definition}
\newtheorem{definition}[theorem]{Definition}
\theoremstyle{remark}
\newtheorem{remark}[theorem]{Remark}
\noindent\textit{Sketch of the proof.}\hspace{0.3em}} 
\icmltitlerunning{Finite-Sample Convergence Bounds for MF-TRPO}
\definecolor{ao}{rgb}{0.0, 0.5, 0.0}
\def\rset{\mathbb{R}}
\def\nset{\ensuremath{\mathbb{N}}}
\def\nsets{\ensuremath{\mathbb{N}^*}}
\newcommand{\cL}{\mathcal{L}}
\newcommand{\cO}{\mathcal{O}}
\newcommand{\cP}{\mathcal{P}}
\newcommand{\cU}{\mathcal{U}}
\newcommand{\cX}{\mathcal{X}}
\newcommand{\tO}{\widetilde{O}}
\newcommand{\Ind}{\mathsf{1}}
\newcommandx{\oppopulation}{\mathcal{P}_{\powMonoton}}
\newcommand{\steppdist}{\beta}
\newcommand{\BoundReward}{\norm{\rewardMDP}[\infty]}
\newcommand{\constMonoton}{C_{\operatorname{op,MFG}}}
\newcommand{\powMonoton}{M}
\newcommand{\constTRPO}[1]{C_{\operatorname{TRPO,#1}}}
\newcommand{\constMFTRPO}[1]{C_{\operatorname{MF,#1}}}
\newcommand{\constLipPolicy}{C_{\policy,\pdist}}
\newcommand{\constExploit}{C_{\Exploit}}
\newcommand{\Exploit}{\phi}
\newcommand{\constErg}{C_{\operatorname{Erg}}}
\newcommandx{\constErgM}[1][1=]{C_{\operatorname{Erg,#1}}}
\newcommandx\constLip[1][1=]
\newcommand{\CLipReward}{L_{\rewardMDP}}
\newcommand{\CLipKerMDP}{L_{\kerMDP}}
\newcommand{\nuRestart}{\nu}
\newtheorem{assumLHG}{\textbf{A}-\hspace{-3pt}}
\Crefname{assumLHG}{\textbf{A}-\hspace{-3pt}}{\textbf{A}-\hspace{-3pt}}
\crefname{assumLHG}{\textbf{A}}{\textbf{A}\hspace{-2pt}}
\newcommandx\sequence[4][2=,3=,4=]
\newcommandx\sequenceD[2][2=]
\newcommandx\sequenceDouble[4][3=,4=]
\newcommandx{\sequencen}[2][2=n\in\nset]{\ensuremath{\{ #1, \eqsp #2 \}}}
\newcommandx\sequencens[2][2=n]
\newcommandx\sequencet[4]
\def\PE{\mathbb{E}}
\def\P{\mathbb{P}}
\newcommandx{\CPP}[3][1=]
{\ifthenelse{\equal{#1}{}}{{\mathbb P}\left(\left. #2 \, \right| #3 \right)}{{\mathbb P}_{#1}\left(\left. #2 \, \right | #3 \right)}}
\def\PE{\mathbb{E}}
\newcommandx{\CPE}[3][1=]{{\mathbb E}_{#1}\left[#2 \middle\vert #3 \right]}
\newcommandx{\CPEs}[3][1=,2=,3=]{{\mathbb{E}}_{#2}[#1|#3]} 
\def\PE{\mathbb{E}}
\newcommandx{\norm}[2][2=]{\left\Vert#1 \right\Vert_{{#2}}}
\newcommandx{\normop}[2][2=]{\left\Vert{#1}\right\Vert_{{#2}}}
\newcommand{\tvnorm}[1]{\left\Vert #1 \right\Vert_{\mathrm{TV}}}
\newcommandx{\as}[1][1=\PP]{\ensuremath{#1\, -\mathrm{a.s.}}}
\newcommand{\ie}{\emph{i.e.}}
\newcommand{\eg}{\emph{e.g.}}
\newcommand{\cf}{cf.}
\newcommand{\eqsp}{\;}
\newcommand{\Id}{\mathrm{I}}
\newcommandx{\kerMRP}[2][1=,2=]{\kergen_{#1}^{\hspace{0.07em}#2}}
\newcommand{\policy}{\pi}
\newcommand{\Policies}{\Pi}
\newcommand{\policyTRPO}[1]{\hat\policy_{#1}}
\newcommand{\pdistStar}{\pdist_{\star}}
\newcommand{\policyStar}{\policy_{\pdist_{\star}}}
\newcommand{\policyExactTRPO}[1]{\policy_{#1}}
\newcommand{\estimMFTRPO}[1]{\hat\zeta_{#1}}
\newcommand{\pdistTRPO}[1]{\hat\pdist_{#1}}
\newcommandx{\apolicy}[1][1=]{\policy_{#1}}
\newcommandx{\optpolicy}[2][1=,2=]{\policy_{#1}^{#2}}
\newcommandx{\gloptpolicy}[1][1=]{\policy_{#1}^{\text{col}, \star}}
\newcommand{\kergen}{\mathsf{P}}
\newcommandx{\kerMDP}[2][1=,2=]{\kergen_{#1}^{\hspace{0.07em}#2}}
\newcommandx{\dMeas}[2][1=,2=]{\dbold_{#1}^{\hspace{0.07em}#2}}
\newcommandx{\dMeasMarg}[2][1=,2=]{\overline{\dbold}_{#1}^{\hspace{0.07em}#2}}
\newcommandx{\ekerMDP}[2][1=, 2=]{\widehat{\kergen}_{#1}^{\hspace{0.07em}#2}}
\def\rbold{\mathsf{r}}
\def\dbold{\mathsf{d}}
\newcommandx{\rewardMDP}[2][1=,2=]{\rbold_{#1}^{\hspace{0.07em}#2}}
\newcommandx{\erewardMDP}[1][1=]{\widehat{\rbold}_{#1}}
\newcommandx{\arewardMDP}[1][1=]{\tilde{\rbold}_{#1}}
\newcommandx{\qstar}[1][1=]{\qfuncgen_{#1}^{ \star}}
\newcommandx{\vstar}[1][1=]{\valuefuncgen_{#1 }^{ \star}}
\def\valuefuncgen{\mathcal{V}}
\def\qfuncgen{Q}
\newcommandx{\valuefunc}[2][1=,2=]{\valuefuncgen_{#1}^{#2}}
\newcommandx{\qfunc}[2][1=,2=]{\qfuncgen_{#1}^{#2}}
\newcommandx{\advant}[2][1=,2=]{A_{#1}^{#2}}
\newcommand{\pdist}{\mu}
\newcommand{\initdist}{\xi}
\newcommandx{\ergdist}[1][1=]{\lambda_{#1}}
\newcommand{\vect}[1]{\underline{#1}}
\newcommand{\JMFG}{J^{\operatorname{MFG}}}
\newcommand{\VMFG}{V^{\operatorname{MFG}}}
\newcommandx{\Bregman}[1][1=]{\mathbf{D}_{#1}}
\def\A{\mathcal{A}}
\def\S{\mathcal{S}}
\def\nstates{|\S|}
\def\nactions{|\A|}
\def\discountf{\gamma}
\def\regulf{\eta}
\def\ergodicf{\rho}
\def\l{\left}
\def\r{\right}
\newcommand{\KL}{\mathrm{KL}}
\newcommand{\ExactAlgo}{\texorpdfstring{\hyperref[alg:ExactAlgo]{\texttt{ExactAlgo}}}{ExactAlgo}}
\newcommandx{\UniformMixturePolicy}[1][1=]{\texorpdfstring{\hyperref[alg:UniformMixturePolicy]{\texttt{Uniform-Mixture}#1}}{UniformMixturePolicy}}
\newcommandx{\TRPO}[1][1=]{\hyperref[alg:TRPO]{\texttt{TRPO}#1}}
\newcommandx{\UnifMixture}[2][1=,2=]{\hat{#1}^{\texttt{Unif},#2}}
\newcommandx{\ExactTRPO}[1][1=]{\hyperref[alg:ExactTRPO]{\texttt{Exact TRPO}#1}}
\newcommandx{\PolicyUpdate}[1][1=]{\hyperref[eq:PolicyUpdate]{\texttt{PolicyUpdate}#1}}
\newcommandx{\Initialize}[1][1=]{\hyperref[eq:Initialize]{\texttt{Initialize}#1}}
\newcommandx{\SampleBasedTRPO}[1][1=]{\hyperref[alg:SampleBasedTRPO]{\texttt{Sample-Based TRPO}#1}}
\newcommand{\ExactMFTRPO}{\texorpdfstring{\hyperref[alg:ExactMFTRPO]{\texttt{Exact MF-TRPO}}}{ExactMFTRPO}}
\newcommand{\SampleBasedMFTRPO}{\texorpdfstring{\hyperref[alg:SampleBasedMFTRPO]{\texttt{Sample-Based MF-TRPO}}}{SampleBasedMFTRPO}}
\begin{document}

\twocolumn[

\icmltitle{Finite-Sample Convergence Bounds for Trust Region Policy Optimization in Mean-Field Games}

\icmlsetsymbol{equal}{*}

\begin{icmlauthorlist}
\icmlauthor{Antonio Ocello}{poly}
\icmlauthor{Daniil Tiapkin}{poly,lmo}
\icmlauthor{Lorenzo Mancini}{poly}
\icmlauthor{Mathieu Laurière}{nyu_shanghai}
\icmlauthor{Eric Moulines}{poly,mbzuai}
\end{icmlauthorlist}

\icmlaffiliation{poly}{CMAP, CNRS, École polytechnique, Institut Polytechnique de Paris, 91120 Palaiseau, France}
\icmlaffiliation{lmo}{Université Paris-Saclay, CNRS, LMO, 91405, Orsay, France}
\icmlaffiliation{nyu_shanghai}{NYU Shanghai, 567 West Yangsi Road, Pudong New District, Shanghai, China 200124}
\icmlaffiliation{mbzuai}{Mohamed bin Zayed University of Artificial Intelligence, UAE}

\icmlcorrespondingauthor{Antonio Ocello}{antonio.ocello@polytechnique.edu}

\icmlkeywords{Mean-Field Games,Trust Region Policy Optimization, Reinforcement Learning, Finite Sample Complexity, Nash Equilibrium, Sample-Based Algorithms, Multi-Agent Systems}



\vskip 0.3in
]



\printAffiliationsAndNotice{}  

\begin{abstract}
    We introduce Mean-Field Trust Region Policy Optimization (MF-TRPO), a novel algorithm designed to compute approximate Nash equilibria for ergodic Mean-Field Games (MFG) in finite state-action spaces. Building on the well-established performance of TRPO in the reinforcement learning (RL) setting, we extend its methodology to the MFG framework, leveraging its stability and robustness in policy optimization. Under standard assumptions in the MFG literature, we provide a rigorous analysis of MF-TRPO, establishing theoretical guarantees on its convergence. Our results cover both the exact formulation of the algorithm and its sample-based counterpart, where we derive high-probability guarantees and finite sample complexity. This work advances MFG optimization by bridging RL techniques with mean-field decision-making, offering a theoretically grounded approach to solving complex multi-agent problems.
\end{abstract}

\section{Introduction}

In an increasingly interconnected world, autonomous systems capable of adaptive decision-making have become indispensable.
Their range of applications is vast and continuously expanding, spanning from autonomous driving~\citep[see, \eg,][]{shalev2016safe} to energy market control~\citep{samvelyan2019starcraft}, and from traffic light optimization~\citep{wiering2000multi} to advanced robotic systems~\citep{matignon2007hysteretic,kober2013reinforcement}.

A powerful framework to model these adaptive decision agents is Multi-Agent Reinforcement Learning (MARL), which enables agents to learn optimal strategies through interaction with both the environment and other agents~\citep[see, \eg,][]{zhang2021multi,gronauer2022multi}. However, MARL faces two major challenges: \emph{scalability} and \emph{non-stationarity}. As the number of agents increases, the joint state-action space grows exponentially, making learning computationally prohibitive. Additionally, because all agents are simultaneously updating their policies, the environment becomes non-stationary from the perspective of each individual agent, severely hindering convergence and stability. To address these issues, many techniques have been considered, like Centralized Training with Decentralized Execution (CTDE)~\citep{foerster2018counterfactual} and attention approaches~\citep{iqbal2019actor}. While these methods improve stability and learning efficiency, it comes at a significant computational cost, limiting its scalability in large-scale systems.

Under the assumptions of \textit{homogeneity} and \textit{anonymity}, complex multi-agent games can be effectively approximated using Mean-Field Games (MFG). MFG provide an asymptotic approximation of exchangeable particle systems as the number of agents grows. Originally introduced by~\citet{lasry2006jeuxI,lasry2006jeuxII,lasry2007mean}  and~\citet{huang2003individual,huang2005nash,huang2006large,huang2006nash}, MFG replace direct agent-to-agent interactions with a representative agent interacting with the statistical distribution of the population.
Due to their analytical tractability and broad applicability, MFG have been widely adopted across various domains, including economic modeling~\citep{bassiere2024mean}, finance~\citep{lavigne2023decarbonization,carmona2013mean}, public health dynamics~\citep{doncel2022mean}, and energy storage~\citep{alasseur2020extended}.

In particular, Mean-Field Reinforcement Learning (MFRL) arises as the scaling limit of many MARL problems, positioning itself at the intersection of MFG and Reinforcement Learning (RL). In this setting, RL techniques are employed to solve equilibrium problems in large-scale multi-agent systems.
At the core of this framework, each agent optimizes its objective while treating the mean-field distribution as fixed. This structure closely resembles CTDE in the MARL setting, but its computational cost is significantly reduced due to the mean-field approximation.
In turn, the distribution evolves dynamically based on the collective behavior of all agents. This formulation extends the classical notion of Nash equilibrium to the Mean-Field Nash Equilibrium (MFNE), where equilibrium emerges from the interaction between individual decision-making and population-wide updates.

MFNE provide an adequate approximation for large-scale multi-agent interactions, achieving an $\tO(1/\sqrt{N})$-approximate Nash equilibrium in the corresponding $N$-player game~\citep{cardaliaguet2019master,fischer2021asymptotic,flandoli2022n}. This approximation drastically reduces the complexity of analyzing strategic interactions in large populations, establishing MFG as a scalable and computationally efficient framework for real-world applications.


\paragraph{Related works.}

Recent work in MFG has explored the use of proximal methods due to their stability and empirical performance. Notably,~\citet{perolat2022scaling,perrin2022generalization} analyze Online Mirror Descent (OMD) from a model-specific perspective, while~\citet{yardim2023policy} extend this direction with a model-free approach. However, their analysis does not address population updates, operating in a restricted no-manipulation regime.
These works have demonstrated the effectiveness of proximal methods in MFG, which is consistent with our research direction. 

We extend this line of research by establishing finite-sample complexity guarantees providing a rigorous theoretical framework that ensures provable efficiency in solving MFG. Moreover, we explicitly incorporates the role of monotonicity in stabilizing population dynamics, a key aspect of the ergodic structure in MFG, and relax overly restrictive assumptions—such as uniformly bounded-away-from-zero policies or absolute continuity with respect to the uniform distribution. With a more flexible framework, we get refined learning guarantees under weaker assumptions, achieving an $\tO(1/L)$ convergence rate in the optimization problem with improved sample efficiency, thus broadening the applicability of these methods. For an additional discussion on related work, we refer to \Cref{appendix:section:Related-works}.

\paragraph{Contributions.} We propose \ExactMFTRPO~and \SampleBasedMFTRPO, trust-region-based algorithms for computing approximate MFNE in the MFRL setting. Our method combines the structure of ergodic MFG with the stability of trust-region optimization to enable theoretically grounded learning in multi-agent environments.
Our key contributions are:
\begin{enumerate}
    \item Theoretical analysis of \ExactMFTRPO\ with a bound on the exploitability of the learned policies, quantifying the achieved $\varepsilon_K$-MFNE after $K$ iterations.    
    
    \item A sample-based variant, \SampleBasedMFTRPO, with finite-sample complexity guarantees under the $\nuRestart$-paradigm, requiring at most $\tO(1/\varepsilon^6)$ environment interactions to reach an $\varepsilon$-MFNE.
    
    \item Numerical experiments validating the efficiency and effectiveness of our approach in representative MFRL settings.
\end{enumerate}

\section{Framework}
\label{main:section:Framework}

We use the discounted formulation as an approximation to the ergodic setting, a standard approach in the literature~\citep[see, \eg,][]{lauriere2022learning}. This methodology allows us to build on the well-established theoretical framework of discounted RL while capturing the long-term behavior of the ergodic formulation in a stepwise fashion. In particular, we adopt the Mean-Field Markov Decision Process (MF-MDP) framework, a natural extension of the infinite-horizon discounted MDP commonly studied in RL~\citep{sutton2021reinforcement}. This adaptation preserves the computational tractability and convergence properties of the discounted problem while approximating the stationary dynamics of the ergodic setting. The MF-MDP framework thus serves as a bridge between classical RL and MFG models and provides a structured approach to policy optimization that is valid for both finite horizons and stationary regimes. A detailed discussion of this approach and its theoretical foundations can be found in the Appendix~\ref{section:appendix:Framework}.

\paragraph{Notations.}
For a finite set $\cX$, let $\cP(\cX)$ denote the set of probability distributions over $\cX$. A finite MF-MDP is a tuple $(\S, \A, \kerMDP, \rewardMDP, \discountf)$, where $\S$ is a finite state space,  $\A$ is a finite action space,   $\kerMDP \colon \S \times \A \times \cP(\S) \to \cP(\S)$ is the transition function,   $\rewardMDP \colon \S \times \A \times \cP(\S) \to \rset$ is the reward function,   $\discountf \in [0,1)$ is the discount factor.
Since we consider probability distributions over a finite state space of size $\nstates$, they can be identified as vectors in $\mathbb{R}^{\nstates}$; thus, we define the inner product $\langle \cdot, \cdot \rangle$ and the Euclidean norm \(\norm{\cdot}[2]\) accordingly.

We assume that $\rewardMDP$ is continuous, thus bounded, and denote by $\BoundReward$ its upper bound, \ie,  
$\rewardMDP(s,a,\pdist) \in [0, \BoundReward]$, for $(s, a, \pdist) \in \S \times \A \times \cP(\S).$ Given a policy $\policy \colon \S \to \cP(\A)$ and a population profile $\pdist \in \cP(\S)$, we define the transition operator $\kerMDP[\pdist][\policy]$ as the transition matrix induced by the probability kernel, where actions are sampled as $a \sim \policy(s)$ under the mean-field parameter $\pdist$, \ie,
\begin{align}\notag
     \kerMDP[\pdist][\policy](s,s^\prime) &=  \sum_{a\in \A } \policy(a|s) \kerMDP(s^\prime|s, a, \pdist) \eqsp, &\text{ for } s,s^\prime \in \S \eqsp,\\
     \label{def:transition_kernel_nots}
      \pdist \kerMDP[\pdist][\policy](s) &= \sum_{s_0 \in \S} \pdist(s_0)\kerMDP[\pdist][\policy](s_0,s) \eqsp, &\text{ for } s \in \S \eqsp.
 \end{align}%
Denote $\ergdist[\policy,\pdist]$ the stationary distribution of the Markov chain $\kerMDP[\pdist][\policy]$. Let $\Policies$ to be set the policies, \ie, the set of functions from $\S$ to $\mathcal{P}(\A)$.
For $s\in \S$, and $\policy,\policy^\prime\in\Policies$, the Kullback--Leibler (KL) divergence between the two distributions $\policy(\cdot| s)$ and $\policy^\prime(\cdot| s)$ is defined as $\KL(\policy(\cdot| s) \Vert  \policy^\prime(\cdot| s)) = \sum_{a \in \A} \policy(a| s) \log(\policy(a| s)/\policy^\prime(a| s))$, if $\policy$ is absolutely continuous to $\policy^\prime$, $\KL(\policy(\cdot| s) \Vert  \policy^\prime(\cdot| s))=\infty$ otherwise. We use the notation $\tO(\cdot)$ to hide polylogarithmic factors in the asymptotic complexity.

\paragraph{Problem formulation.}
As in~\citet{lauriere2022learning}, the discounted stationary problem within a mean-field interaction setting is designed to approximate the $N$-player game in the ergodic regime, as $\discountf \to 1$.
In this setting, a representative agent in the mean-field approximation seeks to maximize the expected discounted sum of rewards while interacting with the population distribution $\pdist$ under a policy $\policy$. The objective function is given by

\vspace{-10pt}
\begin{align}
\label{eq:cost-function}
    \begin{split}
        &\JMFG(\policy, \pdist, \initdist)
        \\
        &:= \PE\left[ \sum_{t=0}^{\infty} \discountf^t \l[
          \rewardMDP(s_t,a_t, \pdist) + \regulf \log\big(\policy(a_t|s_t)\big)
         \r] \right]
         ,
    \end{split}
\end{align}
Given an initial state $s_0 \sim \initdist$, actions are sampled at each time step as $a_t \sim \policy(\cdot \mid s_t)$, with state transitions governed by the kernel $\kerMDP$, \ie, $s_{t+1} \sim \kerMDP(\cdot \mid s_t, a_t, \pdist)$. We consider an entropy-regularized variant of the classical MFG problem, where $\regulf$ denotes the entropy regularization parameter.

As MFG are an $\tO(1/\sqrt{N})$-approximation of the $N$-player game, it is natural to introduce an additional regularization term. If this extra bias remains within the order of the existing approximation error, model fidelity is thus preserved. Regularization enhances solution stability, which is particularly beneficial given the inherent nonlinearity of MFG optimization. This is especially advantageous in RL settings, where small perturbations in the value function or policy updates can otherwise lead to erratic behavior.

The value function is defined as   
\begin{align}
\label{eq:value-function}
     \VMFG(\pdist, \initdist) &:= \max_{\policy\in\Policies} \JMFG(\policy, \pdist, \initdist)\eqsp.
\end{align}  

\!With a slight abuse of notation, we use $\VMFG(\pdist, s)$ (resp. $\VMFG(\pdist)$ and $\JMFG(\policy, \pdist)$) to denote $\VMFG(\pdist, \delta_s)$ for $s \in \S$ (resp. $\VMFG(\pdist, \pdist)$ and $\JMFG(\policy, \pdist, \pdist)$).  

Furthermore, let $\qfunc[\pdist][\policy]$ denote the regularized $Q$-function, defined as
\begin{align}
\label{eq:Q-function}
    \begin{split}
        &\qfunc[\pdist][\policy](s,a)
        \\
        &=  \rewardMDP(s,a,\pdist) + \discountf\sum_{s^\prime\in\S}\kerMDP(s^\prime|s,a,\pdist) \cdot \JMFG(\policy,\pdist, s^\prime)
        \eqsp.
    \end{split}
\end{align}
We denote $\policy_\pdist$ the optimal policy of optimization problem $\VMFG\l(\pdist\r)$, which is unique in the entropy-regularized RL~\citep[see, \eg][]{haarnoja2017reinforcement,geist2019theory}.

Define $\dMeas[\initdist,\pdist][\policy]$ (resp. $\dMeas[s,\pdist][\policy]$ and $\dMeas[\pdist][\policy]$) the occupation measure of the process $(s_{t})_{t}$ induced by this policy, under the population distribution $\pdist$ and initial distribution $\initdist$ (resp. $\delta_s$ and $\pdist$), \ie,

\vspace{-24pt}
\begin{align}
\label{eq:occupation-measure}
    \dMeas[\initdist,\pdist][\policy](s,a) = (1-\discountf)\sum_{t=0}^\infty \discountf^t \kerMDP\l(
        (s_t,a_t)=(s,a)
    \r)
    \eqsp,
\end{align}
\!with
$s_0 \sim\initdist$, $a_t \sim \policy_t(\cdot|s_t)$, and $s_{t+1}\sim \kerMDP(\cdot|s_t,a_t,\pdist)$, and $\dMeasMarg[\pdist,\initdist][\policy]$ its spatial marginal, \ie,
\begin{align}
\label{eq:occupation-measure-marginal}
    \dMeasMarg[\pdist,\initdist][\policy](s)= \sum_{a\in\A}\dMeas[\initdist,\pdist][\policy](s,a)
    \eqsp.
\end{align}

\paragraph{Interactions with the environment.} In the RL literature, various paradigms have been proposed to structure the interaction between an agent and its environment, influencing how data is collected and utilized for policy updates. In our setting, two fundamental actions can be performed: \textbf{reset} and \textbf{step}. The \textbf{reset} action initializes the environment by sampling a new state from the distribution $\nuRestart$, effectively allowing the agent to restart from a fresh initial condition. The \textbf{step} action, on the other hand, takes a chosen action $a$ and the mean-field distribution profile $\pdist$, and progresses the environment based on the current state $s$. Specifically, it samples the next state from the transition kernel $\kerMDP(\cdot | s, a, \pdist)$ and updates the environment accordingly. This interaction model aligns with previous works~\citep{kakade2003sample, shani2020adaptive} and provides an intermediate assumption in the spectrum of data access models in RL. It is therefore weaker than assuming full access to the true model or a generative model, where arbitrary state-action pairs can be queried, but stronger than the setting where no restarts are allowed, restricting exploration to trajectories induced by the current policy. This ensures reliable state exploration and promotes stable convergence.

\paragraph{Nash Equilibrium.} Our work aims to compute a MFNE. A \textit{Nash equilibrium} (NE), a fundamental concept in game theory, represents a stable state in which no player can improve her payoff by unilaterally changing her strategy, \ie, each player reacts optimally to the strategies of the others.

In the MFG framework, as the number of agents approaches infinity, this concept extends to a situation in which each agent optimally adapts its strategy to the behavior of the collective population. This leads to a mean-field \textit{fixed-point condition} in which individual decisions shape and are shaped by the evolving population distribution. This formulation makes MFG a powerful tool for analyzing large-scale strategic interactions.

\begin{definition}[MFNE]
\label{def:MFNE}
     A pair $(\policy_{\star},\pdistStar)$ is said to be a MFNE if it satisfies the following two conditions:
     \begin{enumerate}
          \item \textbf{(Optimality under the population dynamics)} The policy $\policy_{\star}$ is an optimal solution to~\eqref{eq:value-function}, given the population evolution $\pdistStar$, \ie,
          \begin{align*}
               \JMFG(\policy_{\star},\pdistStar,\pdistStar) = \max_{\policy\in\Policies} \JMFG(\policy,\pdistStar,\pdistStar)
               \eqsp.
          \end{align*}

          \item \textbf{(Consistency of the population evolution)} The population distribution $\pdistStar$ is a fixed point of the mean-field evolution equation
          \begin{align*}
               \pdistStar =
               \pdistStar \kerMDP[\policy_{\star}][\pdistStar]
               \eqsp.
          \end{align*}
     \end{enumerate}
\end{definition}

In this definition, the first condition ensures that no individual agent can improve their long-term objective by unilaterally deviating from the equilibrium policy $\policy_{\star}$. Given the equilibrium population distribution $\pdistStar$, this optimality condition guarantees that the policy $\policy_{\star}$ remains the best possible strategy for each agent, thereby ensuring individual rationality.

The second condition enforces consistency in population dynamics: when all agents follow the equilibrium policy $\policy_{\star}$, the resulting population distribution remains stable over time. This fixed-point property ensures the system's long-term evolution does not deviate from the equilibrium state.

This notion of equilibrium, introduced by~\citet{nash1950equilibrium} and extended to the mean-field setting by~\citet{lasry2006jeuxI,lasry2006jeuxII,lasry2007mean, huang2003individual, huang2005nash, huang2006large}, is fundamental in game theory. In these works, the authors generalize the results of~\citet{nash1951non}, proving the existence of at least one NE using the Brouwer Fixed-Point Theorem. This non-constructive result holds under mild assumptions, such as the continuity of payoff functions and the compactness of strategy spaces, forming a cornerstone for analyzing strategic interactions. However, explicitly computing such equilibria remains challenging, particularly in the multi-player regime~\citep{austrin2011inapproximability}.

\paragraph{Exploitability.}
A key metric for evaluating the deviation from a NE is exploitability~\citep{lauriere2022learning}. Formally, the mean-field setting, it is defined as:
\begin{align}
\label{eq:def:exploitability}
     \begin{split}
          &\Exploit(\policy,\pdist):= 
          \\
          &\max_{\policy^\prime\in\Policies} \JMFG\l(\policy^\prime, \ergdist[\policy,\pdist],\ergdist[\policy,\pdist]\r) - \JMFG\l(\policy, \ergdist[\policy,\pdist],\ergdist[\policy,\pdist]\r)
          \eqsp.
     \end{split}
\end{align}

This quantity measures the potential improvement an individual agent could achieve by deviating unilaterally from the learned policy \( \policy \), given as mean-field parameter the stationary distribution \( \ergdist[\policy,\pdist] \).

\begin{definition}
     A pair $(\policy_{\varepsilon},\pdist_{\varepsilon})$ is said to be a $\varepsilon$-MFNE, if its exploitability is bounded by $\varepsilon$, \ie, $\Exploit(\policy_{\varepsilon},\pdist_{\varepsilon})\leq \varepsilon$.
\end{definition}

\section{Assumptions}
\label{main:section:Assumptions}

We outline the key assumptions that guarantee the well-posedness and stability of the MFG problem, providing a foundation for deriving finite-sample complexity bounds for the proposed algorithm.

\begin{assumLHG}
    \label{hyp:Lipschitz_continuity}
    Let $\rewardMDP$ (resp. $\kerMDP$) be Lipschitz continuous with respect to $\pdist$, with Lipschitz constant $L_{\pdist}^{\rewardMDP}$ (resp. $L_{\pdist}^{\kerMDP}$), \ie,
    for $s,s^\prime \in \S$, $a \in \A$, and $\pdist,\pdist^\prime \in \mathcal{P}(\S)$, we have
    \begin{align*}
        \begin{split}
            \l|
                \rewardMDP(s,a,\pdist) -\rewardMDP(s,a,\pdist^\prime)
            \r|
            &\leq
            \CLipReward 
                \norm{\pdist -  \pdist^\prime}[2] 
            \eqsp,
        \\
            \l|
                \kerMDP(s^\prime |s,a, \pdist) -\kerMDP(s^\prime |s,a^\prime, \pdist^\prime)
            \r|
            &\leq
            \CLipKerMDP 
                \norm{\pdist -  \pdist^\prime}[2]
            \eqsp.
        \end{split}
    \end{align*}
\end{assumLHG}

Lipschitz continuity for the MDP parameters ensures that small perturbations in the state or action lead to proportionally small changes in the transition dynamics. This assumption is well established in the RL literature~\citep{asadi2018lipschitz,le2021metrics} and 
it facilitates the derivation of meaningful error bounds and convergence rates in policy optimization.
From this, we establish Lipschitz continuity of the optimal policies w.r.t.\ mean-field parameter as follows (\cf~\Cref{coroll:lipschitz-policies}).
\vspace{1em}

\begin{proposition}
\label{prop:opt_policy-Lipschitz-pdist}
    Suppose Assumption~\ref{hyp:Lipschitz_continuity} holds. Then, there exists a constant $\constLipPolicy\geq0$ such that, for $\pdist,\pdist^\prime\in\mathcal{P}(\S)$,
    \begin{align*}
        \sup_{s\in\S}\tvnorm{\policy_{\pdist}(\cdot|s)-\policy_{\pdist^\prime}(\cdot|s)}^2
        \leq
        \constLipPolicy \norm{\pdist-\pdist^\prime}[2]
        \eqsp,
    \end{align*}
    \!where $\policy_{\pdist}$ is the optimal policy associated with the mean-field distribution $\pdist$.
\end{proposition}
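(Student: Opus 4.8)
The plan is to exploit the closed (Gibbs) form of the entropy-regularized optimal policy and to reduce the claim to a Lipschitz estimate on the soft $Q$-function. Recall that in the entropy-regularized setting the maximizer $\policy_{\pdist}$ of~\eqref{eq:value-function} is unique and admits the softmax representation
\begin{align*}
\policy_{\pdist}(a|s) = \frac{\exp\!\big(\qfunc[\pdist][\policy_\pdist](s,a)/\regulf\big)}{\sum_{a'\in\A}\exp\!\big(\qfunc[\pdist][\policy_\pdist](s,a')/\regulf\big)}\eqsp,
\end{align*}
see, e.g.,~\citet{haarnoja2017reinforcement,geist2019theory}. Viewing $\policy_\pdist(\cdot|s)$ as the image of the logit vector $\qfunc[\pdist][\policy_\pdist](s,\cdot)/\regulf$ under the softmax map, the problem splits into two independent pieces: (i) controlling how the soft $Q$-function moves when the mean-field argument changes from $\pdist$ to $\pdist^\prime$, and (ii) transferring this change through the softmax map to the output distribution. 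For (ii) I would use that the log-sum-exp function is $1$-Lipschitz in $\norm{\cdot}[\infty]$ and that the induced Gibbs kernel depends smoothly on its logits; composing with the $1/\regulf$ scaling yields a modulus of continuity that degrades like $1/\regulf$, which is exactly where the regularization buys quantitative stability.

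The core step, and the main obstacle, is (i): proving a uniform bound $\norm{\qfunc[\pdist][\policy_\pdist]-\qfunc[\pdist^\prime][\policy_{\pdist^\prime}]}[\infty]\le C_Q\,\norm{\pdist-\pdist^\prime}[2]$. I would establish this through the soft Bellman optimality operator $\T_{\pdist}$, whose fixed point is the optimal soft value $\VMFG(\pdist,\cdot)$ and which is a $\discountf$-contraction in $\norm{\cdot}[\infty]$ (the soft maximum / log-sum-exp being a non-expansion). Writing
\begin{align*}
\norm{\VMFG(\pdist,\cdot)-\VMFG(\pdist^\prime,\cdot)}[\infty]
&\le \norm{\T_\pdist \VMFG(\pdist,\cdot)-\T_\pdist \VMFG(\pdist^\prime,\cdot)}[\infty]\\
&\quad + \norm{\T_\pdist \VMFG(\pdist^\prime,\cdot)-\T_{\pdist^\prime}\VMFG(\pdist^\prime,\cdot)}[\infty]\eqsp,
\end{align*}
the first term is bounded by $\discountf\norm{\VMFG(\pdist,\cdot)-\VMFG(\pdist^\prime,\cdot)}[\infty]$, while the second is an operator-perturbation term controlled by Assumption~\ref{hyp:Lipschitz_continuity}: the reward contributes $\CLipReward\norm{\pdist-\pdist^\prime}[2]$ and the kernel contributes $\discountf\,\nstates\,\CLipKerMDP\,\norm{\VMFG(\pdist^\prime,\cdot)}[\infty]\norm{\pdist-\pdist^\prime}[2]$ after summing the per-entry transition bound over the $\nstates$ successor states. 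Rearranging gives the soft-value estimate with constant $(\CLipReward+\discountf\nstates\CLipKerMDP\norm{\VMFG}[\infty])/(1-\discountf)$, and the same perturbation calculation transfers to $\qfunc[\pdist][\policy_\pdist]$. The one technical point to discharge carefully is a uniform a priori bound on $\norm{\VMFG}[\infty]$; this is finite because $\rewardMDP\in[0,\BoundReward]$ and the per-step entropy over a finite action set is bounded by $\log\nactions$, so $\norm{\VMFG}[\infty]\le(\BoundReward+\regulf\log\nactions)/(1-\discountf)$.

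Finally I would convert the $Q$-estimate into the stated bound on the squared total variation. Using the Gibbs form, the log-ratio $\log(\policy_\pdist(a|s)/\policy_{\pdist^\prime}(a|s))$ equals $\regulf^{-1}\big(\qfunc[\pdist][\policy_\pdist](s,a)-\qfunc[\pdist^\prime][\policy_{\pdist^\prime}](s,a)\big)$ minus the difference of the two log-normalizers, each piece bounded by $\regulf^{-1}\norm{\qfunc[\pdist][\policy_\pdist]-\qfunc[\pdist^\prime][\policy_{\pdist^\prime}]}[\infty]$ (again by $1$-Lipschitzness of log-sum-exp), whence
\begin{align*}
\sup_{s\in\S}\KL\big(\policy_{\pdist}(\cdot|s)\,\Vert\,\policy_{\pdist^\prime}(\cdot|s)\big)\le \frac{2C_Q}{\regulf}\,\norm{\pdist-\pdist^\prime}[2]\eqsp.
\end{align*}
Pinsker's inequality $\tvnorm{\policy_{\pdist}(\cdot|s)-\policy_{\pdist^\prime}(\cdot|s)}^2\le \tfrac12\KL(\policy_{\pdist}(\cdot|s)\Vert\policy_{\pdist^\prime}(\cdot|s))$ then yields the claim with $\constLipPolicy = C_Q/\regulf$. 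Equivalently, one could derive a genuine Lipschitz estimate $\sup_s\tvnorm{\policy_{\pdist}(\cdot|s)-\policy_{\pdist^\prime}(\cdot|s)}\le C\norm{\pdist-\pdist^\prime}[2]$ directly from the smoothness of the softmax and then square, absorbing one factor via $\norm{\pdist-\pdist^\prime}[2]\le\sqrt{2}$; the KL/Pinsker route is preferable since it produces the squared form natively, and it is the version I would write up.
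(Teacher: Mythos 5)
Your argument is correct, but it reaches the conclusion by a genuinely different route from the paper. The paper (Corollary~\ref{coroll:lipschitz-policies}) never touches the softmax form of the optimal policy: it first proves a ``performance-difference'' inequality (Proposition~\ref{prop:delta_policies-leq-delta_value_fct}) stating that $\tvnorm{\policy(\cdot|s)-\policy_\pdist(\cdot|s)}^2 \le \tfrac{2}{\regulf(1-\discountf)}\bigl(\JMFG(\policy_\pdist,\pdist,s)-\JMFG(\policy,\pdist,s)\bigr)$ via the Bregman divergence of the entropy regularizer over occupation measures, then applies it with $\policy=\policy_{\pdist^\prime}$ and splits the suboptimality gap as $\bigl(\JMFG(\policy_\pdist,\pdist,s)-\JMFG(\policy_{\pdist^\prime},\pdist^\prime,s)\bigr)+\bigl(\JMFG(\policy_{\pdist^\prime},\pdist^\prime,s)-\JMFG(\policy_{\pdist^\prime},\pdist,s)\bigr)$, each term being controlled by a soft-Bellman perturbation bound (Proposition~\ref{prop:delta_value-lipschitz}). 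Your step (i) is essentially that same Bellman perturbation bound, done at the level of the soft $Q$-function rather than the value function (and you are in fact more careful than the paper about the $\nstates$ factor that arises when converting the entrywise kernel bound of Assumption~\ref{hyp:Lipschitz_continuity} into a total-variation bound). Where you diverge is the second half: instead of the suboptimality-gap/Bregman chain you exploit the Gibbs representation directly, bounding the log-ratio of the two policies by $2\regulf^{-1}\norm{\Delta\qfuncgen}[\infty]$ via the $1$-Lipschitzness of log-sum-exp, then passing through KL and Pinsker. Both routes yield the same $\tcO\bigl(1/(\regulf(1-\discountf)^2)\bigr)$ scaling of $\constLipPolicy$; yours gives a pointwise-in-$s$ KL bound natively and avoids the occupation-measure machinery, while the paper's route has the advantage that Proposition~\ref{prop:delta_policies-leq-delta_value_fct} is reused elsewhere (e.g.\ to convert TRPO value gaps into policy gaps), which is why they set it up that way. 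The only cosmetic discrepancy is the Pinsker constant: the paper's $\tvnorm{\cdot}$ convention satisfies $\tvnorm{P-Q}^2\le 2\,\KL(P\Vert Q)$ rather than your $\tfrac12\KL$, but since the claim is only the existence of some $\constLipPolicy\ge 0$ this is immaterial.
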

This result connects the structural properties of the MFG framework with the behavior of the associated optimal policies, forming a key theoretical foundation for the analysis of the proposed algorithms.
\vspace{1em}

\begin{assumLHG}
\label{hyp:cvg-MFG-operator}
    There exist an integer $\powMonoton \geq 1$ and a real number $\constMonoton < 1$ such that, for $\pdist\in\mathcal{P}(\S)$, we have
    \begin{align*}
        \begin{split}
            &\l\langle
            \pdist-\pdist^\prime,
            \pdist\Big(\kerMDP[\pdist][\policy_\pdist]\Big)^\powMonoton
            -\pdist^\prime\Big(\kerMDP[\pdist^\prime][\policy_{\pdist^\prime}]\Big)^\powMonoton 
            \r\rangle 
            \\
            &\qquad\qquad \qquad\qquad\leq
            \constMonoton \l\|\pdist-\pdist^\prime\r\|^2
            \eqsp.
        \end{split}
    \end{align*}
\end{assumLHG}
This monotonicity condition of the mean-field update is employed in various forms throughout the MFG literature~\citep{angiuli2021unifiedreinforcementqlearningmean,angiuli2023convergence,yardim2023policy}. This condition, originally introduced in the foundational works by~\citet{huang2003individual,huang2005nash, huang2006large, huang2006nash}, ensures that iterative updates to the population distribution progressively aligns the system with the NE.

This condition extends the classical contractivity condition, which corresponds to $ \powMonoton = 1 $~\citep[see, \eg,][]{guo2019learning}. The condition might fail for $M=1$ but hold for some $\powMonoton > 1$, reflecting the combined effect of the Lipschitz continuity of the regularized best response and the ergodicity of the Markov reward process $\kerMDP[\pdist][\policy_\pdist]$. A full in-depth discussion of this condition and its implications is provided in Appendix~\ref{section:appendix:monotonicity-discussion}. As the regularized best response admits a unique optimizer $\policy_\pdist$, the previous condition ensures the uniqueness of the fixed point of the associated operator, \ie,
\begin{align}
\label{eq:def:fixed_point}
    \pdistStar = \pdistStar
    \kerMDP[\pdistStar][\policy_{\pdistStar}]
    \eqsp,
\end{align}
with $\pdistStar$ the unique fixed point of this operator. This implies that, in this setting, we obtain the \textit{uniqueness of the MFNE}, corresponding to the pair $(\policy_{\pdistStar},\pdistStar)$.
\vspace{1em}

\begin{assumLHG}
\label{hyp:uniform-ergodicity}
    The MF-MDP $(\S, \A, \kerMDP, \rewardMDP, \discountf)$ is \emph{unichain}: for every fixed $\pdist \in \cP(\S)$, every stationary policy $\policy$ induces a unique stationary distribution $\ergdist[\policy,\pdist]$ over the state space, satisfying, for all $s \in \S$, the following equation:
    \begin{align*}
        \ergdist[\policy,\pdist](s) = \sum_{(s^\prime,a^\prime) \in \S \times \A} \kerMDP(s \mid s^\prime, a^\prime, \pdist) \policy(a^\prime \mid s^\prime) \ergdist[\policy,\pdist](s^\prime)
        \eqsp.
    \end{align*}
    \!Moreover, there exist constants $\ergodicf \in (0,1]$ and $\constErg > 0$ such that the following uniform mixing property holds for all $\initdist, \pdist \in \cP(\S)$ and $t \geq 0$:
    \begin{align}
    \label{eq:uniform-mixing}
        \tvnorm{\initdist \l(\kerMDP[\pdist][\policy_\pdist]\r)^t - \ergdist[\policy_\pdist, \pdist]} \leq \constErg \rho^t
        \eqsp.
    \end{align}
\end{assumLHG}

The unichain property eliminates ambiguities in the initial population distribution caused by multiple recurrent classes, which could otherwise complicate value function evaluation and policy improvement steps.
As highlighted in~\citet{shani2020adaptive}, optimal policies are defined only within the recurrent states of the Markov reward process at equilibrium. Moreover, the ergodicity property naturally aligns with the regularization framework, as it promotes exploration within the Markov chain, leading to a more robust and stable optimization landscape.

\begin{assumLHG}
    \label{hyp:concentration_dMeas}
    We have that
    \begin{align*}
        \sup_{\pdist \in \cP(\S)} \sup_{s \in \S} \l|\frac{\dMeasMarg[\pdist,\pdist][\policy_\mu](s)}{\nuRestart(s)}\r| < \infty\eqsp.
    \end{align*}
\end{assumLHG}

The structured interaction with the environment in the considered RL paradigm requires this concentration property of the occupation measure with respect to the reset distribution \( \nuRestart \). This property enhances sample efficiency and aids in the accurate estimation of key quantities, such as the value function.



\section{Exact algorithm: \ExactMFTRPO}
\label{main:section:exact-algo}

We now introduce \ExactMFTRPO{}\ to solve the optimization problem~\eqref{eq:value-function}. Note that the entropic regularization used in \ExactMFTRPO{}\ is particularly well-suited for TRPO~\citep[see, \eg,][]{shani2020adaptive}, as the proximal policy update fully exploits the entropy term, enabling a soft-max closed-form updates in terms of the $Q$-functions. This property enhances convergence by ensuring smoother and more stable policy iterations.  

Although~\eqref{eq:value-function} is not a convex optimization problem, the adaptive nature of the regularization term allows us to use mirror descent techniques from convex analysis to establish strong convergence guarantees. In particular, we derive finite-sample complexity bounds showing that show that \ExactMFTRPO{}\ converges at a rate of $\tO(1/L)$.

\paragraph{TRPO.}  
For a fixed mean-field population distribution \( \pdist \), \ExactTRPO[$(\pdist)$] provides a reliable approximation of the value function. The convergence of the algorithm is explicitly influenced by the regularization parameter \( \regulf \), which determines the optimal step size \( 1/(\regulf(\ell+2)) \).

\begin{figure}[t]
    \vspace{-0.5cm}
    \begin{algorithm}[H]
    \centering
    \caption{\ExactTRPO[$(\pdist)$]}
    \label{alg:ExactTRPO}
    \begin{algorithmic}[1]
    \STATE \textbf{Initialize:} $\policy_0$ is the uniform policy.
    \STATE \textbf{Input:} $L$.
    \FOR{$\ell \in [L]$}
        \STATE $\JMFG(\policy_\ell,\pdist,\pdist) \gets \pdist (\Id - \discountf \kerMDP[\pdist][\policy_\ell])^{-1} \rewardMDP[\pdist][\policy_k]$
        \STATE $\S_{\dMeas[\nuRestart,\pdist][\policy_\ell]} := \{ s \in \mathcal{S} : \dMeas[\nuRestart,\pdist][\policy_\ell](s) > 0 \}$
        \FOR{$s \in \S_{\dMeas[\nuRestart,\pdist][\policy_\ell]}$}
            \FOR{$a \in \mathcal{A}$}
                \STATE $\qfunc[\policy_\ell,\pdist](s, a) \gets \rewardMDP(s, a,\pdist)$
                
                $\qquad\quad  +\discountf \sum_{s^\prime} \displaystyle\kerMDP(s^\prime|s, a,\pdist) \JMFG(\policy_\ell,\pdist,s^\prime)$
            \ENDFOR
            \STATE {\small$\policy_{\ell+1}(a | s) \gets \PolicyUpdate{(\policy_\ell,\qfunc[\policy_\ell,\pdist];\ell)}(a,s)
            $}
        \ENDFOR
    \ENDFOR
    \STATE \textbf{Output:} $\policy_L$.
    \end{algorithmic}
    \end{algorithm}
    \vspace{-1cm}
\end{figure}
The proposed algorithm relies on the subroutine \PolicyUpdate, a key step in refining the policy at each iteration. This update mechanism is inherently tied to the regularization scheme employed in the mirror ascent formulation of TRPO. Specifically, with entropic regularization, as shown in~\citet{beck2017first}, the policy update admits a closed-form solution in the form of a softmax function.
This structure inherently ensures that the updated policy remains within the probabilistic simplex without requiring additional projections. This fosters smooth and stable learning while preventing overly aggressive updates that could destabilize the optimization process.

This policy update leverages a function \( \qfunc\colon \S \times \A \to \rset \) and a policy \( \pi \in \Policies \) at iteration $\ell$, leading to an explicit update rule given by
\begin{align}
    \label{eq:PolicyUpdate}
    \begin{split}
        &\PolicyUpdate{(\policy,\qfunc;\ell)}(a,s)
        \\
        &=
        \frac{
            \policy(a|s) \exp\l(\alpha_\ell\l(
                \qfunc(s,a) - \regulf\log\policy(a|s)\r)
            \r)
        }{
            \sum_{a^\prime\in\A}\policy(a^\prime|s) \exp\l(\alpha_\ell\l(
                \qfunc(s,a^\prime) - \regulf\log\policy(a^\prime|s)\r)
            \r)
        }
        \eqsp,
    \end{split}
\end{align}
\!with learning rate $\alpha_\ell = 1/(\regulf(\ell+2))$.

\citet{shani2020adaptive} establishes error bounds for the approximation of value functions that can be directly generalized to our setting. Building on these results, we quantify the gap between the value function of the computed policy and the optimal value function under the given mean-field distribution. This allows to control the policy itself, and derive policy guarantees.

\begin{proposition}
\label{prop:convergence-exact-trpo}
    Suppose that Assumptions~\ref{hyp:Lipschitz_continuity} and~\ref{hyp:uniform-ergodicity} hold. Let $\{\policy_\ell\}_{\ell=0}^L$ be the sequence generated by the \ExactTRPO[$(\pdist)$] algorithm. Then, there exists a constant $C>0$ such that, for all $L\geq 1$, we have
    \begin{align*}
        \PE_{s\sim \pdist}\left[
            \tvnorm{
            \policy_{L}(\cdot|s) - \policy_\pdist(\cdot|s)}^2
        \right]
        \leq C \frac{\log L}{L} 
        \eqsp.
    \end{align*}
\end{proposition}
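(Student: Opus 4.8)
The plan is to connect the convergence of the \emph{value function} (for which the error bounds of \citet{shani2020adaptive} are available) to the convergence of the \emph{policy itself} in the total-variation sense stated in the proposition. The key observation is that the objective $\JMFG(\policy,\pdist,\pdist)$ is, for fixed $\pdist$, a strongly concave functional in the occupation measure due to the entropy regularization $\regulf \log \policy$, and that the regularized best response $\policy_\pdist$ is its unique maximizer. Strong concavity with modulus proportional to $\regulf$ is exactly what upgrades a value-function gap into a distance-to-optimum bound.

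First I would invoke the generalized error bound from \citet{shani2020adaptive}, adapted to the MF-MDP with fixed population parameter $\pdist$, to obtain a bound of the form
\begin{align*}
    \JMFG(\policy_\pdist,\pdist,\pdist) - \JMFG(\policy_L,\pdist,\pdist) \leq C' \frac{\log L}{L}
    \eqsp,
\end{align*}
where the $\log L / L$ rate arises from the mirror-ascent step size $\alpha_\ell = 1/(\regulf(\ell+2))$ and the telescoping of the Bregman (KL) terms in the standard TRPO analysis. The restriction of the per-state update in \ExactTRPO[$(\pdist)$] to the support $\S_{\dMeas[\nuRestart,\pdist][\policy_\ell]}$, together with Assumption~\ref{hyp:uniform-ergodicity} (unichain/ergodicity), ensures that all states visited under the stationary dynamics are covered, so the value gap is genuinely controlled on the support of $\pdist$.

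Next I would relate this value gap to the expected squared TV distance between policies. Using the performance-difference (policy-gradient) identity, the suboptimality gap equals an occupation-measure-weighted sum of KL divergences between $\policy_L$ and the soft-optimal policy $\policy_\pdist$, weighted by the regularization strength $\regulf$; concretely one expects
\begin{align*}
    \JMFG(\policy_\pdist,\pdist,\pdist) - \JMFG(\policy_L,\pdist,\pdist)
    \geq
    c\,\regulf \, \PE_{s\sim \dMeasMarg[\pdist,\pdist][\policy_\pdist]}\!\left[\KL\big(\policy_\pdist(\cdot|s)\,\Vert\,\policy_L(\cdot|s)\big)\right]
    \eqsp.
\end{align*}
Applying Pinsker's inequality, $\tvnorm{\policy_L(\cdot|s)-\policy_\pdist(\cdot|s)}^2 \leq \tfrac{1}{2}\KL(\policy_\pdist(\cdot|s)\Vert\policy_L(\cdot|s))$, converts the KL bound into the desired TV bound. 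Finally, to pass from the expectation under $\dMeasMarg[\pdist,\pdist][\policy_\pdist]$ to the expectation under $\pdist$ claimed in the statement, I would use a change-of-measure (concentrability) argument: since $\dMeasMarg[\pdist,\pdist][\policy_\pdist] \geq (1-\discountf)\pdist$ holds pointwise from the definition~\eqref{eq:occupation-measure} (the $t=0$ term alone contributes $\pdist$ weighted by $1-\discountf$), the expectation under $\pdist$ is dominated by $(1-\discountf)^{-1}$ times the expectation under the occupation measure, absorbing the factor into the constant $C$.

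The main obstacle I anticipate is the second step — establishing the quadratic-growth lower bound that lets a \emph{value-function} gap control the \emph{squared TV policy distance}. This requires carefully exploiting the strong concavity induced by the entropy term (equivalently, the strong convexity of the negative-entropy mirror map) and handling the fact that $\policy_\pdist$ need not be bounded away from zero on the whole simplex, so the KL divergence must be controlled only on the support identified by the occupation measure. The ergodicity Assumption~\ref{hyp:uniform-ergodicity} and the support-restriction in the algorithm are precisely what make this control well-defined, and the change-of-measure in the final step is where the $1/(1-\discountf)$ dependence enters the constant $C$.
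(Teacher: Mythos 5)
Your plan follows the paper's proof essentially step for step. The paper first invokes the last-iterate value-gap bound of Shani et al.\ (restated as Theorem~\ref{thm:convergence-exact-trpo}) and then converts it into a squared-TV bound on the policy through exactly the chain you describe: the first-order optimality condition of the entropy-regularized objective turns the value gap into a Bregman divergence between occupation measures, which is an occupation-measure-weighted KL between policies; the $t=0$ term of the occupation measure contributes $(1-\discountf)$ times the initial distribution, and Pinsker's inequality finishes. This is precisely Proposition~\ref{prop:delta_policies-leq-delta_value_fct} combined with Corollary~\ref{cor:convergence-exact-trpo}.

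One step, as you wrote it, is false and needs to be flipped. The soft-suboptimality identity states that $\JMFG(\policy_\pdist,\pdist,\pdist)-\JMFG(\policy_L,\pdist,\pdist)$ equals, up to a factor depending only on $\regulf$ and $(1-\discountf)$, the quantity $\PE_{s\sim \dMeasMarg[\pdist,\pdist][\policy_L]}\bigl[\KL\bigl(\policy_L(\cdot|s)\,\Vert\,\policy_\pdist(\cdot|s)\bigr)\bigr]$ --- weighted by the occupation measure of the \emph{suboptimal} policy, with $\policy_L$ in the first slot of the KL. Your version, weighted by $\dMeasMarg[\pdist,\pdist][\policy_\pdist]$ with $\KL(\policy_\pdist\Vert\policy_L)$, does not hold for any universal constant $c$: already in a one-state bandit with $\policy_\pdist$ uniform and $\policy_L=(\epsilon,1-\epsilon)$, the value gap stays bounded while $\KL(\policy_\pdist(\cdot|s)\Vert\policy_L(\cdot|s))\to\infty$ as $\epsilon\to 0$. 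The repair costs nothing downstream: total variation is symmetric, so Pinsker applies equally well to $\KL(\policy_L\Vert\policy_\pdist)$, and the pointwise domination $\dMeasMarg[\pdist,\pdist][\policy_L]\geq(1-\discountf)\pdist$ holds for the suboptimal policy's occupation measure just as it does for the optimal one's. With that correction your argument coincides with the paper's.
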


\paragraph{MF-TRPO.}
We now present \ExactMFTRPO{}, which iteratively solves the MFG problem~\eqref{eq:value-function} by updating the population distribution $\pdist$ using the output of \ExactTRPO[$(\pdist)$]. This approach assumes direct access to the transition kernel and cost function, eliminating the need for stochastic approximation in policy updates.

\begin{figure}[t]
\vspace{-0.5cm}
\begin{algorithm}[H]
\centering
\caption{\ExactMFTRPO}
\label{alg:ExactMFTRPO}
\begin{algorithmic}[1]
    \STATE {\bfseries Input:}  Initial distribution $\pdist_0$, $K$.
    \STATE {\bfseries Initialize:}  Initial policy $\policy_0$ is the uniform policy.
        \FOR{$k \in [K]$}
        \STATE $\policyExactTRPO{k}\leftarrow$\ExactTRPO[$(\pdist_{k-1})$]
        \STATE
            $\pdist_{k} \gets \pdist_{k-1} + \steppdist_{k}\l(\pdist_{k-1}\l(\kerMDP[\pdist_{k-1}][\policyExactTRPO{k}]\r)^M - \pdist_{k-1}\r)$
        
        \hfill
        \texttt{\# population update}
    \ENDFOR
    \STATE \textbf{Output:} $\pdist_{K}$.
\end{algorithmic}
\end{algorithm}
\vspace{-1cm}
\end{figure}

The analysis of \ExactMFTRPO{}\
is instrumental to understand the performances of its sample-based counterpart. We do this providing precise theoretical guarantees on convergence rates without the additional complexity of sampling-induced errors. Building on this deterministic setup, we then focus on the convergence behavior of the broader \SampleBasedMFTRPO{}\ framework. We use the label \textit{``informal''} to avoid overloading the main text with technical assumptions (\eg, on learning rates), rigorously stated in~\Cref{section:appendix:ModelBased}.


\begin{proposition}[informal]
\label{thm:ExactMFTRPO-main_text}
    Suppose that Assumptions~\ref{hyp:Lipschitz_continuity},~\ref{hyp:cvg-MFG-operator}, and~\ref{hyp:uniform-ergodicity} hold.
    Then, there exists a constant $C>0$ such that the sequence $\{\pdist_{k}\}_{k\geq0}$ generated by \ExactMFTRPO\ satisfies
    \begin{align*}
        \norm{\pdist_k - \pdistStar}[2]^2
        \leq&
        2\exp\l(-\frac{\tau}{2}\sum_{j=1}^k \steppdist_{j}\r) + C\frac{\log(L)}{L}
        \eqsp,
    \end{align*}
    for $k\geq 1$,
    with $\tau:=1-\constMonoton$.
\end{proposition}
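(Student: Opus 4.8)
The plan is to track the squared Euclidean distance $\norm{\pdist_k - \pdistStar}[2]^2$ along the population update and to show it obeys a one-step contraction of the form $\norm{\pdist_k - \pdistStar}[2]^2 \le (1 - \tfrac{\tau}{2}\steppdist_k)\norm{\pdist_{k-1} - \pdistStar}[2]^2 + \mathrm{bias}_k$, where the bias is controlled by the \ExactTRPO\ error from \Cref{prop:convergence-exact-trpo}. To this end, introduce the \emph{exact} mean-field operator $F(\pdist) := \pdist(\kerMDP[\pdist][\policy_\pdist])^\powMonoton$, whose unique fixed point is $\pdistStar$ by \Cref{hyp:cvg-MFG-operator} and \eqref{eq:def:fixed_point}, and rewrite the update as $\pdist_k = (1-\steppdist_k)\pdist_{k-1} + \steppdist_k\widehat{\pdist}_k$ with $\widehat{\pdist}_k := \pdist_{k-1}(\kerMDP[\pdist_{k-1}][\policyExactTRPO{k}])^\powMonoton$. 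Setting $u := \pdist_{k-1} - \pdistStar$ and $\varepsilon_k := \widehat{\pdist}_k - F(\pdist_{k-1})$, the identity $F(\pdistStar) = \pdistStar$ gives $\pdist_k - \pdistStar = u + \steppdist_k\big(F(\pdist_{k-1}) - \pdist_{k-1}\big) + \steppdist_k\varepsilon_k$.

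Expanding the square yields $\norm{\pdist_k - \pdistStar}[2]^2 = \norm{u}[2]^2 + 2\steppdist_k\pscal{u}{F(\pdist_{k-1}) - \pdist_{k-1}} + 2\steppdist_k\pscal{u}{\varepsilon_k} + \steppdist_k^2\norm{(F(\pdist_{k-1}) - \pdist_{k-1}) + \varepsilon_k}[2]^2$. The decisive term is the first inner product: applying \Cref{hyp:cvg-MFG-operator} with $(\pdist,\pdist') = (\pdist_{k-1},\pdistStar)$ and using $F(\pdistStar)=\pdistStar$ gives $\pscal{u}{F(\pdist_{k-1}) - \pdist_{k-1}} = \pscal{u}{F(\pdist_{k-1}) - F(\pdistStar)} - \norm{u}[2]^2 \le -(1-\constMonoton)\norm{u}[2]^2 = -\tau\norm{u}[2]^2$. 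For the remaining terms I would establish Lipschitz continuity of $F$ — combining \Cref{hyp:Lipschitz_continuity} for the kernel, the mixing bound of \Cref{hyp:uniform-ergodicity}, and the policy regularity of \Cref{prop:opt_policy-Lipschitz-pdist} — to get $\norm{F(\pdist_{k-1}) - \pdist_{k-1}}[2] \le (C_F+1)\norm{u}[2]$, then apply Young's inequality to $2\steppdist_k\pscal{u}{\varepsilon_k}$. For $\steppdist_k$ small enough that the $\steppdist_k^2$ self-term is dominated by the contraction, this leaves $\norm{\pdist_k - \pdistStar}[2]^2 \le (1 - \tfrac{\tau}{2}\steppdist_k)\norm{u}[2]^2 + C'\steppdist_k\norm{\varepsilon_k}[2]^2$.

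It remains to bound the bias $\norm{\varepsilon_k}[2]^2$, which measures how far the \ExactTRPO\ output $\policyExactTRPO{k}$ sits from the true best response $\policy_{\pdist_{k-1}}$ after $\powMonoton$ transition steps. Telescoping $(\kerMDP[\pdist_{k-1}][\policyExactTRPO{k}])^\powMonoton - (\kerMDP[\pdist_{k-1}][\policy_{\pdist_{k-1}}])^\powMonoton = \sum_{j=0}^{\powMonoton-1}(\kerMDP[\pdist_{k-1}][\policyExactTRPO{k}])^j(\kerMDP[\pdist_{k-1}][\policyExactTRPO{k}] - \kerMDP[\pdist_{k-1}][\policy_{\pdist_{k-1}}])(\kerMDP[\pdist_{k-1}][\policy_{\pdist_{k-1}}])^{\powMonoton-1-j}$ and using that stochastic matrices are $\ell_1$-nonexpansive, each summand is controlled by $\PE_{s\sim q_j}[\tvnorm{\policyExactTRPO{k}(\cdot|s) - \policy_{\pdist_{k-1}}(\cdot|s)}]$ with $q_j := \pdist_{k-1}(\kerMDP[\pdist_{k-1}][\policyExactTRPO{k}])^j$. \Cref{prop:convergence-exact-trpo} supplies exactly this expectation under the weighting $\pdist_{k-1}$ (the case $j=0$), so the crux is to transfer the $\pdist_{k-1}$-weighted guarantee to the pushed-forward weightings $q_j$ for $j\ge 1$; here I would invoke the uniform mixing of \Cref{hyp:uniform-ergodicity} and the concentration of \Cref{hyp:concentration_dMeas} to perform a change of measure. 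Combining this with Cauchy--Schwarz to pass from the squared-TV bound to its square root, and with $\norm{\varepsilon_k}[2]\le\norm{\varepsilon_k}[1]$, gives $\norm{\varepsilon_k}[2]^2 \le C\,\tfrac{\log L}{L}$ uniformly in $k$. This measure-change step is the main obstacle, since the \ExactTRPO\ guarantee is anchored to a single distribution while the $\powMonoton$-step update probes the dynamics over several time slices.

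Finally I would close the recursion. Unrolling $\norm{\pdist_k - \pdistStar}[2]^2 \le (1 - \tfrac{\tau}{2}\steppdist_k)\norm{\pdist_{k-1} - \pdistStar}[2]^2 + C'C\,\tfrac{\log L}{L}\,\steppdist_k$ and using $1 - x \le \rme^{-x}$ produces the homogeneous term $\prod_{j=1}^k(1 - \tfrac{\tau}{2}\steppdist_j)\norm{\pdist_0 - \pdistStar}[2]^2 \le \exp(-\tfrac{\tau}{2}\sum_{j=1}^k\steppdist_j)\norm{\pdist_0 - \pdistStar}[2]^2$; since $\pdist_0,\pdistStar$ are probability vectors, $\norm{\pdist_0 - \pdistStar}[2]^2 \le \norm{\pdist_0 - \pdistStar}[1]\,\norm{\pdist_0 - \pdistStar}[\infty] \le 2$, giving the leading factor $2$. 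The accumulated bias is handled by the standard estimate $\sum_{j=1}^k \steppdist_j\prod_{i=j+1}^k(1 - \tfrac{\tau}{2}\steppdist_i) \le 2/\tau$, which turns the per-step bias into $C\,\tfrac{\log L}{L}$ after absorbing constants, matching the claimed bound.
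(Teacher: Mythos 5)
Your proposal follows essentially the same route as the paper's proof of the formal version (Theorem~\ref{thm:ExactMFTRPO}): the same decomposition of $\pdist_k - \pdistStar$ into a contracting part, a cross-term killed by Assumption~\ref{hyp:cvg-MFG-operator} via $F(\pdistStar)=\pdistStar$, and a bias $\varepsilon_k$ measuring the gap between $\policyExactTRPO{k}$ and $\policy_{\pdist_{k-1}}$ after $\powMonoton$ kernel applications, followed by Young's inequality, the bound $\norm{\pdist_0-\pdistStar}[2]^2\le 2$, and the standard unrolling with $\sum_j\steppdist_j\prod_{i>j}(1-\tfrac{\tau}{2}\steppdist_i)\le 2/\tau$. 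The one step you rightly flag as the main obstacle — transferring the $\pdist_{k-1}$-weighted \ExactTRPO\ guarantee to the pushed-forward weightings $q_j=\pdist_{k-1}(\kerMDP[\pdist_{k-1}][\policyExactTRPO{k}])^j$ — is resolved in the paper not via Assumption~\ref{hyp:concentration_dMeas} (which is not among this proposition's hypotheses and is reserved for the sample-based analysis) but inside \Cref{lemma:Lipschitz-erg-operator}, where the telescoped terms are bounded by dominating $\initdist(\kerMDP[\pdist][\policy])^{\powMonoton-m-1}$ by $\initdist$ directly; your instinct that this transfer requires justification is sound, and if you pursue your change-of-measure route you should check it goes through under Assumptions~\ref{hyp:Lipschitz_continuity}--\ref{hyp:uniform-ergodicity} alone.
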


All constants appearing in our results are explicitly defined and detailed in~\Cref{section:appendix:ModelBased}, where we also provide complete proofs supporting our theoretical guarantees.  

This convergence result proves its effectiveness in tackling the challenges inherent in the non-linear and non-gradient structure of MFNE. Below, we summarize the key insights derived from this result:
\begin{itemize}
    \item \textbf{Convergence.}  
    We establish an exponential rate of convergence in the \textit{first term} of the bound to the equilibrium population distribution $\pdistStar$, while the \textit{second one} captures the finite-sample bias in the best-response computation.

    \item \textbf{Learning Rates constraints.}  
    The theorem provides explicit constraints on the step size $\steppdist_k$ (\cf\ condition~\eqref{eq:cond:steppdist:Exact}), ensuring stability and preventing oscillations or divergence in the optimization process.  

    \item \textbf{Explicit Dependence on Model Parameters.}  
    All constants in the convergence bound are fully characterized in terms of the structural parameters of the model (\cf~\Cref{section:appendix:ModelBased}).  

    \item \textbf{Controlled Policy Learning Bias.}  
    The bias introduced by policy updates in \ExactTRPO{}, bounded by $\log(L) / L$ (Proposition~\ref{thm:ExactMFTRPO-main_text}), remains controlled throughout the iterative process. This ensures algorithmic stability, even with approximations in policy optimization.  
\end{itemize}

With these results, we can now explicitly quantify the parameter $\varepsilon$ corresponding to the proximity of our obtained solution with respect to the MFNE.
\begin{corollary}
    Suppose that Assumptions~\ref{hyp:Lipschitz_continuity},~\ref{hyp:cvg-MFG-operator}, and~\ref{hyp:uniform-ergodicity} hold.
    Let $\pdist_K$ (resp. $\policy_{K+1}$) the output of \ExactMFTRPO\ (resp. \ExactTRPO[$(\pdist_{K})$]). Then, there exists a constant $C>0$, such that $(\policy_{K+1},\pdist_K)$ is $\varepsilon_K$-MFNE, with
    \begin{align*}
        \varepsilon_K = C\exp\l(-\frac{\tau}{4}\sum_{j=1}^K \steppdist_{j}\r) + C\sqrt{\frac{\log(L)}{L}}\eqsp.
    \end{align*}
\end{corollary}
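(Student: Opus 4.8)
The plan is to start from the definition of exploitability and reduce it to a best-response value gap evaluated at the induced stationary mean field. Set $\lambda := \ergdist[\policy_{K+1},\pdist_K]$ and use the paper's shorthand $\JMFG(\policy,\pdist) = \JMFG(\policy,\pdist,\pdist)$, $\VMFG(\pdist) = \VMFG(\pdist,\pdist)$. Since the maximization in~\eqref{eq:def:exploitability} is exactly $\max_{\policy'}\JMFG(\policy',\lambda,\lambda) = \VMFG(\lambda)$, we have $\Exploit(\policy_{K+1},\pdist_K) = \VMFG(\lambda) - \JMFG(\policy_{K+1},\lambda)$. I would then insert the \emph{training} mean field $\pdist_K$ via the telescoping decomposition
\begin{align*}
\Exploit(\policy_{K+1},\pdist_K)
&= \left[\VMFG(\lambda) - \VMFG(\pdist_K)\right]
+ \left[\VMFG(\pdist_K) - \JMFG(\policy_{K+1},\pdist_K)\right] \\
&\quad + \left[\JMFG(\policy_{K+1},\pdist_K) - \JMFG(\policy_{K+1},\lambda)\right]
\eqsp,
\end{align*}
calling the three brackets $\term{I},\term{II},\term{III}$. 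The point of this split is that the optimization error is isolated in $\term{II}$, at the same mean field $\pdist_K$ on which $\policy_{K+1}$ was trained, whereas $\term{I}$ and $\term{III}$ only involve a change of the mean-field argument.

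Next I would bound each term. Since $\VMFG(\pdist_K) = \JMFG(\policy_{\pdist_K},\pdist_K)$, the term $\term{II}$ equals $\JMFG(\policy_{\pdist_K},\pdist_K) - \JMFG(\policy_{K+1},\pdist_K)\ge 0$, the best-response suboptimality of the \ExactTRPO[$(\pdist_K)$] output; the performance-difference identity bounds it by the occupation-measure-weighted deviation $\PE_{s\sim\dMeasMarg[\pdist_K,\pdist_K][\policy_{K+1}]}[\tvnorm{\policy_{K+1}(\cdot|s)-\policy_{\pdist_K}(\cdot|s)}]$. Passing to the reference distribution through Assumption~\ref{hyp:concentration_dMeas}, then applying Proposition~\ref{prop:convergence-exact-trpo} together with Jensen's inequality ($\PE[\tvnorm{\cdot}]\le \PE[\tvnorm{\cdot}^2]^{1/2}$), gives $\term{II}\le C\sqrt{\log L/L}$. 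For $\term{I}$ and $\term{III}$ I would invoke Lipschitz continuity of $\VMFG(\cdot)$ and of $\JMFG(\policy,\cdot)$ in the mean-field argument — a consequence of Assumption~\ref{hyp:Lipschitz_continuity} on $\rewardMDP,\kerMDP$ combined with the uniform ergodicity of Assumption~\ref{hyp:uniform-ergodicity} — to obtain $\term{I}+\term{III}\le C\norm{\lambda-\pdist_K}[2]$. Crucially, both value functions enter \emph{linearly} in the mean-field distance, so no square root is lost here.

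It then remains to control $\norm{\lambda-\pdist_K}[2] = \norm{\ergdist[\policy_{K+1},\pdist_K]-\pdist_K}[2]$. Using that $\pdistStar=\ergdist[\policy_{\pdistStar},\pdistStar]$ is itself a stationary distribution, I insert $\pdistStar$: $\norm{\lambda-\pdist_K}[2]\le \norm{\ergdist[\policy_{K+1},\pdist_K]-\pdistStar}[2]+\norm{\pdistStar-\pdist_K}[2]$, the last term being supplied by Proposition~\ref{thm:ExactMFTRPO-main_text}. For the first term I would apply a stationary-distribution perturbation bound for uniformly ergodic chains (Assumption~\ref{hyp:uniform-ergodicity}), controlling it by $(1-\rho)^{-1}$ times the kernel discrepancy $\norm{\kerMDP[\pdist_K][\policy_{K+1}]-\kerMDP[\pdistStar][\policy_{\pdistStar}]}$, which splits into a mean-field part bounded by $\CLipKerMDP\norm{\pdist_K-\pdistStar}[2]$ (Assumption~\ref{hyp:Lipschitz_continuity}) and a policy part $\sup_s\tvnorm{\policy_{K+1}(\cdot|s)-\policy_{\pdistStar}(\cdot|s)}$, itself routed through $\policy_{\pdist_K}$ into the \ExactTRPO\ error (Proposition~\ref{prop:convergence-exact-trpo}) and the best-response regularity (Proposition~\ref{prop:opt_policy-Lipschitz-pdist}).

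Combining everything gives $\Exploit(\policy_{K+1},\pdist_K)\le C\norm{\pdist_K-\pdistStar}[2]+C\sqrt{\log L/L}$; substituting Proposition~\ref{thm:ExactMFTRPO-main_text}, using $\sqrt{a+b}\le\sqrt a+\sqrt b$ to separate the exponential from the $\log L/L$ contribution, and absorbing $\log L/L$ into the dominating $\sqrt{\log L/L}$, yields $\varepsilon_K = C\exp(-\tfrac{\tau}{4}\sum_{j=1}^K\steppdist_j)+C\sqrt{\log L/L}$. The main obstacle is the last step of the previous paragraph: the bound must stay \emph{linear} in $\norm{\pdist_K-\pdistStar}[2]$ to reproduce the stated rate, yet feeding the Hölder-$1/2$ estimate of Proposition~\ref{prop:opt_policy-Lipschitz-pdist} (namely $\sup_s\tvnorm{\policy_{\pdist_K}-\policy_{\pdistStar}}^2\le \constLipPolicy\norm{\pdist_K-\pdistStar}[2]$) naively into the policy part of the kernel discrepancy only produces a term of order $\norm{\pdist_K-\pdistStar}[2]^{1/2}$, i.e.\ $(\log L/L)^{1/4}$, which is too large. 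Closing this gap requires genuinely Lipschitz control of the composite best-response-then-stationary-distribution map $\pdist\mapsto\ergdist[\policy_\pdist,\pdist]$ near $\pdistStar$, exploiting the softmax form of the entropy-regularized best response and the ergodicity of Assumption~\ref{hyp:uniform-ergodicity}; this is the delicate step I would concentrate on.
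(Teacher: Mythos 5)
Your overall route is, modulo rearrangement, the same as the paper's: the paper first shows (Proposition~\ref{appendix:prop:exploitability-bound}) that $\Exploit(\policy,\pdist)\le\bigl(\max_{\policy'}\JMFG(\policy',\pdist,\pdist)-\JMFG(\policy,\pdist,\pdist)\bigr)+\constExploit\norm{\ergdist[\policy,\pdist]-\pdist}[2]$ by exactly your telescoping in the mean-field and initial-distribution arguments, then bounds the value gap at $\pdist_K$ via Theorem~\ref{thm:convergence-exact-trpo} and bounds $\norm{\ergdist[\policy_{K+1},\pdist_K]-\pdist_K}[2]$ by the same triangle inequality through $\pdistStar=\ergdist[\policyStar,\pdistStar]$ that you propose. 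One detail to fix in your term $\term{II}$: routing it through the performance-difference lemma, Assumption~\ref{hyp:concentration_dMeas} and Jensen is both lossy and illegitimate here, since the corollary is stated under Assumptions~\ref{hyp:Lipschitz_continuity}, \ref{hyp:cvg-MFG-operator} and~\ref{hyp:uniform-ergodicity} only and you may not invoke the concentrability assumption; it is also unnecessary, because Theorem~\ref{thm:convergence-exact-trpo} bounds $\VMFG(\pdist_K)-\JMFG(\policy_{K+1},\pdist_K)$ directly by $C\log L/L$.

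The genuine gap is the one you flag yourself and do not close. The only regularity available for the best response is Proposition~\ref{prop:opt_policy-Lipschitz-pdist}, which gives $\sup_s\tvnorm{\policy_{\pdist_K}(\cdot|s)-\policy_{\pdistStar}(\cdot|s)}\le\sqrt{\constLipPolicy\norm{\pdist_K-\pdistStar}[2]}$, a H\"older-$1/2$ modulus. Feeding this into the stationary-distribution perturbation bound of Lemma~\ref{lemma:Lipschitz-erg-operator} yields only $\norm{\ergdist[\policy_{\pdist_K},\pdist_K]-\ergdist[\policyStar,\pdistStar]}[2]\le C\norm{\pdist_K-\pdistStar}[2]^{1/2}$, hence $\Exploit\le C\norm{\pdist_K-\pdistStar}[2]^{1/2}+\cdots$, which after substituting Proposition~\ref{thm:ExactMFTRPO-main_text} produces an exponent $\tau/8$ and a $(\log L/L)^{1/4}$ contribution rather than the stated $\tau/4$ and $\sqrt{\log L/L}$. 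Your proposal defers exactly this step (``genuinely Lipschitz control of the composite map'') without supplying it, so as written it does not establish the claimed $\varepsilon_K$. For what it is worth, the paper's own proof of Corollary~\ref{coroll:exact-epsilon-Nash} applies Corollary~\ref{coroll:lipschitz-policies} as though it read $\sup_s\tvnorm{\policy_{\pdist_k}(\cdot|s)-\policyStar(\cdot|s)}\le\constLipPolicy\norm{\pdist_k-\pdistStar}[2]$, i.e.\ it asserts the linear modulus rather than proving it; you have located a real soft spot in the argument, but identifying it is not the same as repairing it, and a complete proof must either establish true Lipschitzness of $\pdist\mapsto\policy_\pdist$ (plausible from the softmax form of the regularized best response, but not what Proposition~\ref{prop:opt_policy-Lipschitz-pdist} states) or restructure the bound so that only $\tvnorm{\cdot}^2$ of the policy increment ever appears.
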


\section{Stochastic approximation: \SampleBasedMFTRPO}
\label{main:section:SampleBased}

We provide \SampleBasedMFTRPO, a model-free variant of the previous algorithm designed to operate without explicit knowledge of the environment's dynamics, nor the reward function. \SampleBasedMFTRPO{}\ utilizes sampled trajectories to estimate these updates, making it more applicable to real-world scenarios.

\paragraph{TRPO.}
First, we adapt \ExactTRPO{}\ to estimate policy updates in a data-driven manner. This approach leverages sampled trajectories to approximate the policy gradient, providing quantitative bounds on the proximity of the best response. Additionally, this TRPO formulation is particularly well-suited to the considered oracle-based framework, incorporating the \(\nuRestart\)-restart modeling. This structure ensures robust exploration while maintaining stability in policy updates, aligning naturally with the trust-region optimization paradigm.

\begin{figure}[t]
    \vspace{-0.5cm}
    \begin{algorithm}[H]
    \centering
    \caption{\SampleBasedTRPO[$(\pdist)$]}
    \begin{algorithmic}[1]
        \STATE \textbf{Initialize:} $\policy_0(\cdot|s)=\cU(\A)$ for $s\in\S$.
        \STATE \textbf{Input:} $I_\ell$, $T_\ell$, $\epsilon$, $\delta > 0$, $L$.
        \FOR{$\ell \in[L]$}
            \STATE $S_\ell^{I_\ell} = \{\}$
            \STATE $\qfunc[\policyTRPO{\ell},\pdist](s, a) = 0$, $n_\ell(s, a) = 0$, for any $(s, a)$
        \FOR{$i = 1, \dots, I_\ell$}
            \STATE Sample $s_i \sim \dMeasMarg[\nuRestart,\pdist][\policyTRPO{\ell}](\cdot)$, $a_i \sim \cU(\A)$
            \STATE $\qfunc[\policyTRPO{\ell},\pdist](s_i, a_i, i) \gets \rewardMDP(s_i, a_i, \pdist) + $
            
            \quad $\sum_{t=1}^{T_\ell}
            \discountf^{t}
            \PE_{s_t\sim \delta_{s_i}(\kerMDP[\pdist][\policyTRPO{\ell}])^t, a_t\sim \policyTRPO{\ell}(\cdot|s_t)}
            \left[ 
          \rewardMDP(s_t,a_t, \pdist)\right.$
          
          \qquad \qquad \qquad\qquad\qquad \quad$\left. +\regulf \log\big(\policyTRPO{\ell}(a_t|s_t)\big)\right]$
            \STATE $\qfunc[\policyTRPO{\ell},\pdist](s_i, a_i) \gets \qfunc[\policyTRPO{\ell},\pdist](s_i, a_i) + \qfunc[\policyTRPO{\ell},\pdist](s_i, a_i, i)$
            \STATE $n_\ell(s_i, a_i) \gets n_\ell(s_i, a_i) + 1$
            \STATE $S_\ell^{I_\ell} = S_\ell^{I_\ell} \cup \{s_i\}$
        \ENDFOR
        \FOR{$s \in S_\ell^{I_\ell}$}
            \FOR{$a \in \A$}
                \STATE $\qfunc[\policyTRPO{\ell},\pdist](s, a) \gets \frac{ \nactions \qfunc[\policyTRPO{\ell},\pdist](s, a)}{\sum_{a^{\prime}\in \A}n_\ell(s, a^{\prime})}$
            \ENDFOR
            \STATE $\policyTRPO{\ell+1}(a | s)\gets \PolicyUpdate{(\policyTRPO{\ell},\qfunc[\policyTRPO{\ell},\pdist];\ell)}(a,s)
            $
        \ENDFOR
    \ENDFOR
    \STATE \textbf{Output:} $\UnifMixture[\policy][\pdistTRPO{k}]_L$.
    \end{algorithmic}
    \end{algorithm}
    \vspace{-1cm}
\end{figure}

The inherent stochasticity in the updates prevents us from establishing sample complexity bounds on the last iterate of the algorithm. However, by leveraging an averaging scheme, implemented through a dedicated subroutine (\cf~\Cref{appendix:rmk:subroutine-sampling-output-TRPO}), we mitigate this variability and provide clear quantitative bounds on the desired gap. Specifically, the policy we focus on is the \textit{uniform mixture} \(\UnifMixture[\policy][\pdist]_L\) over the first \(L+1\) policies. This averaging scheme is standard in the RL literature and, in the unregularized case, satisfies the identity
    \begin{align}
        \label{eq:unif-mixture-ppty}
        \frac{1}{L+1}\sum_{\ell=0}^L\JMFG(\policyTRPO{\ell},\pdist,\pdist) = \JMFG(\UnifMixture[\policy][\pdist]_L,\pdist,\pdist)
        \eqsp.
    \end{align}
    \!While we do not have a statistically feasible expression for this mixture policy, it is straightforward to sample from $\UnifMixture[\policy][\pdist]_L$, using \UniformMixturePolicy[$(\l\{\policyTRPO{\ell}\r\}_{\ell = 0, \dots, L})$], as discussed in~\Cref{appendix:rmk:subroutine-sampling-output-TRPO}.
\begin{proposition}
\label{prop:sample_based_trpo}
    Suppose Assumptions~\ref{hyp:Lipschitz_continuity},~\ref{hyp:cvg-MFG-operator},~\ref{hyp:uniform-ergodicity}, and~\ref{hyp:concentration_dMeas} hold. Fix $\epsilon,\delta>0$. Let $\UnifMixture[\policy][\pdistTRPO{k}]_L$ be the output of \SampleBasedTRPO[$(\pdist)$], over $L$ iterations.
    
    Then, there exists $C>0$ such that the following holds with probability greater than $1-\delta$
    \begin{align*}
        \PE_{s\sim\pdist}\left[\tvnorm{
            \UnifMixture[\policy][\pdist]_L
            (\cdot|s) - \policy_\pdist(\cdot|s)
            }^2
        \right]
        \leq
        C\l(
            \frac{\log L}L 
            + 
            \epsilon
        \r)
        \eqsp.
    \end{align*}
\end{proposition}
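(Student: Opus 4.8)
The plan is to treat \SampleBasedTRPO\ as the inexact-gradient counterpart of \ExactTRPO\ and to show that the sampling introduces only an additive error of order $\epsilon$ in the mirror-ascent regret, which then transfers to the policy distance exactly as in \Cref{prop:convergence-exact-trpo}. First I would set up the inexact mirror-ascent viewpoint: the subroutine \PolicyUpdate\ is a mirror-ascent step for the entropy-regularized objective with Bregman divergence $\KL$ and step size $\alpha_\ell=1/(\regulf(\ell+2))$, so the per-step descent lemma of~\citet{beck2017first} applies verbatim once the exact $Q$-function $\qfunc[\policyTRPO{\ell},\pdist]$ is replaced by its estimate $\widehat{Q}_{\policyTRPO{\ell},\pdist}$. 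Propagating the extra term through the telescoping and averaging argument gives an averaged suboptimality bound of the form $C\log L/L + \frac{1}{L+1}\sum_{\ell}\pscal{\widehat{Q}_{\policyTRPO{\ell},\pdist}-\qfunc[\policyTRPO{\ell},\pdist]}{\policyTRPO{\ell+1}-\policy_\pdist}$, where the estimation error enters \emph{linearly}; bounding each inner product (per state) by $2\sup_\ell\norm{\widehat{Q}_{\policyTRPO{\ell},\pdist}-\qfunc[\policyTRPO{\ell},\pdist]}[\infty]$ keeps the accumulated contribution at the order of $\sup_\ell\norm{\cdot}[\infty]$ rather than a growing multiple of it.

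Second, I would control the $Q$-estimation error by splitting it into a deterministic truncation bias and a stochastic Monte-Carlo part. Truncating the rollout at horizon $T_\ell$ incurs a bias bounded geometrically by $\discountf^{T_\ell}(\BoundReward+\regulf\log\nactions)/(1-\discountf)$, so $T_\ell=\Theta(\log(1/\epsilon))$ forces it below $\epsilon/2$. The Monte-Carlo part is an average of $n_\ell(s,a)$ bounded, conditionally i.i.d.\ truncated returns with uniform-action reweighting, so a Bernstein/Hoeffding bound yields an error $O(\sqrt{\log(\cdot/\delta)/n_\ell(s,a)})$. Here \Cref{hyp:concentration_dMeas} is essential: the concentrability constant $\CConcentrability$ lower-bounds the probability that a $\pdist$-relevant state is drawn from $\dMeasMarg[\nuRestart,\pdist][\policyTRPO{\ell}]$, guaranteeing that with $I_\ell$ large enough every relevant state-action accumulates $\Omega(I_\ell)$ samples with high probability, hence $n_\ell(s,a)$ is suitably lower-bounded.

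Third, I would make the estimate uniform by a union bound over the $L\,\nstates\,\nactions$ events, which fixes the required $I_\ell,T_\ell$ so that $\sup_\ell\norm{\widehat{Q}_{\policyTRPO{\ell},\pdist}-\qfunc[\policyTRPO{\ell},\pdist]}[\infty]\le\epsilon$ holds with probability at least $1-\delta$. On this event the averaged gap $\frac{1}{L+1}\sum_{\ell=0}^L[\JMFG(\policy_\pdist,\pdist)-\JMFG(\policyTRPO{\ell},\pdist)]$ is at most $C(\log L/L+\epsilon)$. Finally I would convert suboptimality into policy distance using the $\regulf$-strong concavity of the entropy-regularized objective, which yields the quadratic-growth inequality $\PE_{s\sim\pdist}\tvnorm{\policyTRPO{\ell}(\cdot|s)-\policy_\pdist(\cdot|s)}^2\le c^{-1}[\JMFG(\policy_\pdist,\pdist)-\JMFG(\policyTRPO{\ell},\pdist)]$ used to prove \Cref{prop:convergence-exact-trpo}; averaging over $\ell$ and applying Jensen's inequality to the convex map $x\mapsto\tvnorm{x}^2$, together with the fact that $\UnifMixture[\policy][\pdist]_L(\cdot|s)=\frac{1}{L+1}\sum_{\ell}\policyTRPO{\ell}(\cdot|s)$, transfers the average-iterate bound to the mixture and gives the claim.

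The main obstacle is the high-probability estimation control in a scheme where the sampling law $\dMeasMarg[\nuRestart,\pdist][\policyTRPO{\ell}]$ itself depends on the random current policy $\policyTRPO{\ell}$, so the samples are not globally independent across iterations. I would resolve this by conditioning on the filtration generated by the first $\ell$ stages: given $\policyTRPO{\ell}$ the stage-$\ell$ rollouts are i.i.d., the concentration bounds apply conditionally, and the concentrability constant $\CConcentrability$ provides coverage lower bounds that are \emph{uniform} in $\policyTRPO{\ell}$, after which the union bound over stages is legitimate. A secondary subtlety is ensuring the mirror-ascent error stays at order $\epsilon$ and is not amplified by the decreasing step sizes; this is precisely why the re-derived descent lemma must keep the gradient error entering only through $\norm{\cdot}[\infty]$ and linearly in the regret sum.
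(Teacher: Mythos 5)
Your overall architecture --- an inexact mirror-ascent regret bound, high-probability control of the $Q$-estimation error, and then a quadratic-growth inequality to convert a value gap into a squared-TV bound --- matches the paper's route (which imports Lemmas 19, 22 and 23 of \citet{shani2020adaptive} into Theorem~\ref{thm:convergence:sample_based_trpo} and then applies \Cref{prop:delta_policies-leq-delta_value_fct}). Two of your steps, however, do not go through as written.

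First, your final transfer to the mixture relies on the identity $\UnifMixture[\policy][\pdist]_L(\cdot|s)=\frac{1}{L+1}\sum_{\ell}\policyTRPO{\ell}(\cdot|s)$, which is false for the object the paper defines. The subroutine in \Cref{appendix:rmk:subroutine-sampling-output-TRPO} randomizes over \emph{whole policies} once per episode, so the Markov policy it induces at a state $s$ is the average of the $\policyTRPO{\ell}(\cdot|s)$ weighted by the occupancy masses $\dMeasMarg[\pdist,\pdist][\policyTRPO{\ell}](s)$, i.e.\ $\sum_\ell \dMeas[\pdist,\pdist][\policyTRPO{\ell}](s,a)\big/\sum_\ell\dMeasMarg[\pdist,\pdist][\policyTRPO{\ell}](s)$, not the uniform state-wise average. (This is also why \eqref{eq:unif-mixture-ppty} holds exactly in the unregularized case: the value is linear in the occupancy measure and the mixture's occupancy measure is the average of the individual ones; the state-wise average policy does not have this property.) Your Jensen step therefore bounds the TV distance of a different policy. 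The paper avoids the issue by staying at the level of values: concavity of the regularized objective in the occupancy measure gives $\frac{1}{L+1}\sum_\ell\JMFG(\policyTRPO{\ell},\pdist,\pdist)\le\JMFG(\UnifMixture[\policy][\pdist]_L,\pdist,\pdist)$, i.e.\ \eqref{eq:unif-mixture-ppty:ineq}, so the averaged regret bound becomes a value gap for the mixture itself, to which \Cref{prop:delta_policies-leq-delta_value_fct} is applied directly. Your intermediate bound $\frac{1}{L+1}\sum_\ell\PE_{s\sim\pdist}\bigl[\tvnorm{\policyTRPO{\ell}(\cdot|s)-\policy_\pdist(\cdot|s)}^2\bigr]\le C(\log L/L+\epsilon)$ is correct, but it is not the stated claim.

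Second, your reading of Assumption~\ref{hyp:concentration_dMeas} as a coverage \emph{lower} bound is not what it provides: it is an upper bound on the density ratio $\dMeasMarg[\pdist,\pdist][\policy_\pdist](s)/\nuRestart(s)$. A uniform guarantee $\sup_\ell\normop{\widehat{Q}_\ell-Q_\ell}[\infty]\le\epsilon$ would require every relevant state to accumulate $\Omega(I_\ell)$ samples, hence a lower bound on $\dMeasMarg[\nuRestart,\pdist][\policyTRPO{\ell}](s)$ that can only be taken as $(1-\discountf)\nuRestart(s)$, forcing $I_\ell$ to scale with $1/\min_s\nuRestart(s)$ --- a dependence the stated sample sizes do not have. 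The paper never decouples the error from its weights: the mirror-ascent error term is kept in the form $\sum_s\frac{\dMeasMarg[\pdist,\pdist][\policy_\pdist](s)}{\dMeasMarg[\nuRestart,\pdist][\policyTRPO{\ell}](s)}(\cdots)$, where the small denominators are cancelled by the numerators and the concentrability coefficient enters only as a multiplicative factor on $\epsilon$. Your filtration argument for the policy-dependent sampling law is fine and is indeed how the stage-wise concentration is made rigorous.
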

All the constants appearing in our results are explicitly defined and detailed in~\Cref{section:appendix:ModelFree}, where we also provide the complete proofs supporting our theoretical guarantees.

\paragraph{MF-TRPO.} We present a version of \SampleBasedMFTRPO{}, with the full algorithm and detailed implementation provided in Appendix~\ref{appendix:subsection:sampled-algorithm}.

One key aspect of the algorithm is the initialization step. Unlike in a generative model paradigm, access to a state \( s \sim \pdist \) is only available through a subroutine initialized at the restart distribution \( \nuRestart \). For further details on this procedure, we refer to~\Cref{appendix:subsection:initialization}.

The theoretical foundation of the convergence guarantees of this algorithm relies on deriving high-probability estimates
\begin{figure}[H]
    \vspace{-0.5cm}
    \begin{algorithm}[H]
    \centering
    \caption{\SampleBasedMFTRPO{}\ (informal)}
    \begin{algorithmic}[1]
        \STATE {\bfseries Input:}  Initial distribution $\pdist_0$, $K$.
        \STATE {\bfseries Initialize:}  Initial policy $\policy_0$ is the uniform policy.
            \FOR{$k \in [K]$}
                \STATE $\policyTRPO{k}\leftarrow$\SampleBasedTRPO[$(\pdistTRPO{k-1})$].
                \FOR{$p \in [P]$}
                    \STATE  Initialize $s_{0,p,k}\sim\pdistTRPO{k-1}$.
                    \FOR{$m\in[M]$}
                        \STATE $s_{m,p,k} \sim \kerMDP[\policyTRPO{k}][\pdistTRPO{k-1}](\cdot|s_{m-1,p,k})$
                    \ENDFOR
                    \STATE $\estimMFTRPO{k,p} \leftarrow \Ind_{\{s_{M,p,k}\}}(\cdot)$.
                \ENDFOR
                \STATE $\estimMFTRPO{k}\leftarrow \frac{1}{P}\sum_{p=1}^P\estimMFTRPO{k,p}$.
            \STATE $\pdistTRPO{k} \leftarrow \pdistTRPO{k-1} + \steppdist_{k}\l(\estimMFTRPO{k} - \pdistTRPO{k-1}\r)$
        \ENDFOR
        \STATE \textbf{Output:} $\pdist_{K}$.
    \end{algorithmic}
    \end{algorithm}
    \vspace{-1cm}
\end{figure}

(\cf~\Cref{prop:high-prop-iteration}). This is a crucial step in sample complexity bounds and is achieved using a martingale-based argument~\citep{harvey2019tight}. By leveraging concentration inequalities for martingales, the analysis ensures that the error in estimating key quantities, such as the policy value and state distributions, remains controlled with high probability throughout the learning process.


\begin{proposition}[informal]
\label{prop:SampleBasedMFTRPO-main_text}
    Suppose Assumptions~\ref{hyp:Lipschitz_continuity},~\ref{hyp:cvg-MFG-operator},~\ref{hyp:uniform-ergodicity}, and~\ref{hyp:concentration_dMeas} hold. Fix $\epsilon,\delta>0$.
    Then, there exists a constant $C>0$ such that the sequence $\{\pdist_{k}\}_{k\geq0}$ generated by \SampleBasedMFTRPO\ satisfies, for $k\geq 1$,
    \begin{align*}
        \norm{\pdist_k - \pdistStar}[2]^2
        \leq&
        2\exp\l(-\frac{\tau}{2}\sum_{j=1}^k \steppdist_{j}\r) + C\frac{\log(L)}{L} + C\epsilon
        \eqsp.
    \end{align*}
\end{proposition}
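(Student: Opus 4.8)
The plan is to mirror the deterministic analysis behind Proposition~\ref{thm:ExactMFTRPO-main_text}, treating the Monte-Carlo population estimate $\estimMFTRPO{k}$ as a noisy realization of the exact best-response update and absorbing the extra fluctuations through a martingale concentration argument. Writing the population iteration as $\pdist_k = \pdist_{k-1} + \steppdist_k(\estimMFTRPO{k} - \pdist_{k-1})$, so that
\[
\pdist_k - \pdistStar = (1-\steppdist_k)(\pdist_{k-1} - \pdistStar) + \steppdist_k(\estimMFTRPO{k} - \pdistStar),
\]
I would expand $\norm{\pdist_k - \pdistStar}[2]^2$ and focus on the cross term. Let $\mathcal{F}_{k-1}$ be the $\sigma$-field generated by all randomness up to and including the \SampleBasedTRPO[$(\pdist_{k-1})$] call at iteration $k$, so that $\policyTRPO{k}$ is $\mathcal{F}_{k-1}$-measurable and, since each $\estimMFTRPO{k}$ is an $M$-step rollout started from $\pdist_{k-1}$ under $\policyTRPO{k}$, one has $\CPE{\estimMFTRPO{k}}{\mathcal{F}_{k-1}} = \pdist_{k-1}(\kerMDP[\pdist_{k-1}][\policyTRPO{k}])^\powMonoton =: \hat\Phi_k$. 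The core step is the three-way decomposition
\[
\estimMFTRPO{k}-\pdistStar = \big(\estimMFTRPO{k}-\hat\Phi_k\big) + \big(\hat\Phi_k - \Phi(\pdist_{k-1})\big) + \big(\Phi(\pdist_{k-1}) - \Phi(\pdistStar)\big),
\]
where $\Phi(\pdist) := \pdist(\kerMDP[\pdist][\policy_\pdist])^\powMonoton$ is the exact operator of Assumption~\ref{hyp:cvg-MFG-operator} and $\pdistStar = \Phi(\pdistStar)$ by~\eqref{eq:def:fixed_point}.

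Next I would control each piece. The monotone term $\pscal{\pdist_{k-1}-\pdistStar}{\Phi(\pdist_{k-1}) - \Phi(\pdistStar)}$ is bounded by $\constMonoton\norm{\pdist_{k-1}-\pdistStar}[2]^2$ directly from Assumption~\ref{hyp:cvg-MFG-operator}. For the policy-bias term I would first establish a Lipschitz estimate of the $M$-step operator in the policy, namely $\norm{\pdist(\kerMDP[\pdist][\policy])^\powMonoton - \pdist(\kerMDP[\pdist][\policy'])^\powMonoton}[2] \leq C \sup_s \tvnorm{\policy(\cdot|s)-\policy'(\cdot|s)}$, obtained by telescoping the matrix product and using that $\kerMDP[\pdist][\policy]-\kerMDP[\pdist][\policy']$ is linear in the policy; then Proposition~\ref{prop:sample_based_trpo} bounds $\PE_{s\sim\pdist_{k-1}}[\tvnorm{\policyTRPO{k}(\cdot|s)-\policy_{\pdist_{k-1}}(\cdot|s)}^2]$ by $C(\log L/L + \epsilon)$ on the high-probability TRPO event, which after Cauchy--Schwarz and Young yields the persistent $C\log L/L + C\epsilon$ contribution. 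The centred term $\estimMFTRPO{k}-\hat\Phi_k$ has zero conditional mean, so $\sum_j \steppdist_j\pscal{\pdist_{j-1}-\pdistStar}{\estimMFTRPO{j}-\hat\Phi_j}$ is a martingale whose increments are uniformly bounded (each $\estimMFTRPO{j}$ is a Dirac mass, hence $\norm{\estimMFTRPO{j}-\pdistStar}[2]\le 2$); I would control it with high probability via Proposition~\ref{prop:high-prop-iteration}, following~\citet{harvey2019tight}.

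Collecting these bounds and applying Young's inequality to convert each error that is linear in $\norm{\pdist_{k-1}-\pdistStar}[2]$ into a small quadratic term plus a constant bias gives a one-step contraction of the form
\[
\norm{\pdist_k-\pdistStar}[2]^2 \leq \Big(1 - \tfrac{\tau}{2}\steppdist_k\Big)\norm{\pdist_{k-1}-\pdistStar}[2]^2 + \steppdist_k\big(C\tfrac{\log L}{L} + C\epsilon\big),
\]
with $\tau = 1-\constMonoton$, valid under the step-size constraints made explicit in~\Cref{section:appendix:ModelFree} (taking $\steppdist_k$ small enough that the $O(\steppdist_k^2)$ remainders, with $\norm{\estimMFTRPO{k}-\pdistStar}[2]^2$ bounded, are dominated). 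Unrolling this recursion, using $1-x\leq\rme^{-x}$, $\norm{\pdist_0-\pdistStar}[2]^2\leq 2$, and summing the geometric weights against the constant bias, produces the exponential term $2\exp(-\tfrac\tau2\sum_{j=1}^k\steppdist_j)$ together with the persistent $C\log L/L + C\epsilon$.

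The main obstacle is the uniform-in-$k$ control of the accumulated stochastic error: unlike the deterministic case, the noise must be handled with high probability across all $K$ iterations simultaneously, which requires both a union bound over the $K$ high-probability guarantees of Proposition~\ref{prop:sample_based_trpo} and the tight martingale concentration of Proposition~\ref{prop:high-prop-iteration}, so that the fluctuations contribute only at order $\epsilon$ rather than accumulating. Balancing the number of rollouts $P$, the TRPO accuracy $\epsilon$, and the failure probability $\delta$ to keep every term at the target order while respecting the step-size schedule is the delicate bookkeeping at the heart of the argument.
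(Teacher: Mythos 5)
Your proposal is correct and follows essentially the same route as the paper's proof of Theorem~\ref{thm:sample_based_mftrpo}: the same three-way decomposition of $\estimMFTRPO{k}-\pdistStar$ into centred noise, policy bias, and exact operator difference, the same invocation of Assumption~\ref{hyp:cvg-MFG-operator}, the telescoped Lipschitz bound on $(\kerMDP[\pdist][\policy])^\powMonoton$ (Lemma~\ref{lemma:Lipschitz-erg-operator} with Corollary~\ref{coroll:lipschitz-policies}), the sample-based TRPO guarantee, and Proposition~\ref{prop:high-prop-iteration} for the Monte-Carlo population estimate, all combined via Young's inequality into the one-step contraction $(1-\tfrac{\tau}{2}\steppdist_k)$ and then unrolled with the telescoping-sum trick. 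Your explicit insistence on a union bound over the $K$ per-iteration high-probability events is a point the paper handles only implicitly, but it is a refinement of the same argument rather than a different approach.
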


All the constants appearing in our results are explicitly defined and detailed in Appendix~\ref{section:appendix:ModelFree}, where we also provide the complete proofs supporting our theoretical guarantees.

The sample-based convergence analysis of \SampleBasedMFTRPO~
in the model-free setting demonstrates that the algorithm preserves the same desirata described in~\Cref{main:section:exact-algo} driving the convergence of its exact counterpart \ExactMFTRPO{}—proximal updates, entropic regularization, and trust-region optimization—remain intact. While having a model-agnostic nature, its theoretical power of estimation of the MFNE is preserved, as explicit knowledge of the environment's transition dynamics or reward structure is required. It builds on trajectory sampling to iteratively refine policy updates while maintaining stability and efficiency. 

This result entails that the stochastic error in the estimation of the mean-field distribution at each iteration does not compound throughout the iterative process. 
By leveraging concentration inequalities and high-probability guarantees, the cumulative impact of these estimation errors remains controlled, preventing divergence or instability. As a result, \SampleBasedMFTRPO{}\ retains strong convergence guarantees, making it a practical approach for solving MFG in a data-driven manner.

Moreover, we can also bound the exploitability as follows.
\begin{corollary}
    Suppose that Assumptions~\ref{hyp:Lipschitz_continuity},~\ref{hyp:cvg-MFG-operator},~\ref{hyp:uniform-ergodicity}, and~\ref{hyp:concentration_dMeas} hold. Fix $\epsilon,\delta>0$.
    Let $\pdistTRPO{K}$ (resp. $\UnifMixture[\policy][\pdistTRPO{K}]_L$) the output of \SampleBasedMFTRPO\ (resp. the iterates of \SampleBasedTRPO[$(\pdistTRPO{k})$]). Then, there exists a constant $C>0$ such that
    {\begin{align*}
        & 
        \Exploit(\UnifMixture[\policy][\pdistTRPO{K}]_L,\pdistTRPO{k}) \\
        & \leq C\l(\exp\l(-\frac{\tau}{4}\sum_{j=1}^K \steppdist_{j}\r) + \sqrt{\frac{\log(L)}{L}}+ \epsilon\r)\eqsp.
    \end{align*}}
\end{corollary}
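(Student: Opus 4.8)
The plan is to bound the exploitability $\Exploit(\UnifMixture[\policy][\pdistTRPO{K}]_L,\pdistTRPO{K})$ by decomposing the deviation into two sources: the error in the population distribution $\pdistTRPO{K}$ relative to $\pdistStar$, controlled by \Cref{prop:SampleBasedMFTRPO-main_text}, and the error in the best-response policy $\UnifMixture[\policy][\pdistTRPO{K}]_L$ relative to the optimal policy $\policy_\pdist$ for the relevant mean-field parameter, controlled by \Cref{prop:sample_based_trpo}. Recalling the definition~\eqref{eq:def:exploitability}, exploitability is evaluated at the stationary distribution $\ergdist[\UnifMixture[\policy][\pdistTRPO{K}]_L,\pdistTRPO{K}]$, so the first conceptual step is to relate this stationary distribution to $\pdistStar$. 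I would use the uniform mixing property from Assumption~\ref{hyp:uniform-ergodicity} together with the Lipschitz continuity of the optimal policy in the mean-field parameter (\Cref{prop:opt_policy-Lipschitz-pdist}) to show that $\ergdist[\UnifMixture[\policy][\pdistTRPO{K}]_L,\pdistTRPO{K}]$ is close to $\pdistStar$ whenever $\pdistTRPO{K}$ is close to $\pdistStar$ and the policy is close to the best response.

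Next I would bound the exploitability by two terms of the form $\JMFG(\policy',\lambda,\lambda) - \JMFG(\policy_\lambda,\lambda,\lambda)$ and $\JMFG(\policy_\lambda,\lambda,\lambda) - \JMFG(\UnifMixture[\policy][\pdistTRPO{K}]_L,\lambda,\lambda)$ for the relevant stationary $\lambda$. The first term is nonpositive since $\policy_\lambda$ is optimal for that mean-field parameter, and the second term is the suboptimality gap of the learned mixture policy against the true best response. The standard device here is a performance-difference lemma: the value gap $\JMFG(\policy_\lambda,\lambda,\lambda) - \JMFG(\policy,\lambda,\lambda)$ can be bounded by an occupation-measure-weighted sum of per-state advantage terms, which in the entropy-regularized setting are controlled by the KL or total-variation distance between $\policy$ and $\policy_\lambda$. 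Applying \Cref{prop:sample_based_trpo} then converts this policy gap into the $C(\log L/L + \epsilon)$ rate, after taking a square root to pass from the squared TV bound in that proposition to a first-order suboptimality bound (this is where the $\sqrt{\log(L)/L}$ appears instead of $\log(L)/L$).

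The combination is then routine: the population error $\norm{\pdistTRPO{K}-\pdistStar}[2]^2$ from \Cref{prop:SampleBasedMFTRPO-main_text} feeds through the Lipschitz estimates to bound the discrepancy caused by evaluating at $\ergdist[\UnifMixture[\policy][\pdistTRPO{K}]_L,\pdistTRPO{K}]$ rather than at the equilibrium, contributing the $\exp(-\frac{\tau}{4}\sum_j \steppdist_j)$ and additional $\sqrt{\log(L)/L}+\epsilon$ terms, while the policy suboptimality contributes the remaining rate terms; collecting everything under a single constant $C$ yields the stated bound. Throughout, the value function's Lipschitz dependence on the mean-field parameter — which follows from Assumption~\ref{hyp:Lipschitz_continuity} via the Lipschitz continuity of $\rewardMDP$ and $\kerMDP$ and the geometric ergodicity — is the glue that lets me transfer distributional closeness into value closeness.

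The main obstacle I expect is the careful handling of the stationary distribution $\ergdist[\UnifMixture[\policy][\pdistTRPO{K}]_L,\pdistTRPO{K}]$ at which exploitability is measured. Unlike the clean fixed-point $\pdistStar$, this stationary distribution is induced by the \emph{learned} mixture policy under the \emph{estimated} population $\pdistTRPO{K}$, so I must propagate both the policy error and the population error simultaneously through the ergodicity estimate~\eqref{eq:uniform-mixing}, which itself is stated for $\kerMDP[\pdist][\policy_\pdist]$ with the \emph{optimal} policy rather than an arbitrary one. Bridging this gap requires a perturbation argument showing that the stationary distribution of $\kerMDP[\pdistTRPO{K}][\UnifMixture[\policy][\pdistTRPO{K}]_L]$ is close to that of $\kerMDP[\pdistStar][\policy_\pdistStar]$, using Lipschitz continuity of the kernel in both the policy and the mean-field argument together with a uniform spectral-gap/mixing bound; keeping all the resulting error contributions at the claimed order, without the square-root inflation degrading the exponential term, is the delicate part of the argument.
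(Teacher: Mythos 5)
Your proposal is correct and follows essentially the same route as the paper's proof (Corollary~\ref{coroll:sample-based:epsilon-Nash}): the paper bounds exploitability via \Cref{appendix:prop:exploitability-bound} by the best-response gap at $\pdistTRPO{K}$ (controlled by \Cref{thm:convergence:sample_based_trpo}) plus $\constExploit\norm{\ergdist[{\UnifMixture[\policy][\pdistTRPO{K}]_L},\pdistTRPO{K}]-\pdistTRPO{K}}[2]$, which it splits through $\pdistStar=\ergdist[\policyStar,\pdistStar]$ and controls with \Cref{lemma:Lipschitz-erg-operator}, \Cref{coroll:lipschitz-policies}, \Cref{prop:delta_policies-leq-delta_value_fct}, and \Cref{thm:sample_based_mftrpo}, with the square roots arising exactly where you indicate. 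The only cosmetic difference is that you propose evaluating the suboptimality gap at the induced stationary distribution and transferring the TRPO guarantee there, while the paper keeps the gap at $\pdistTRPO{K}$ and pays the Lipschitz correction on the distribution side — the same machinery rearranged.
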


The complexity analysis demonstrates that the proposed \SampleBasedMFTRPO{} algorithm achieves a computational cost scaling as \(\tO(1/\varepsilon^6)\) to reach an $\varepsilon$-MFNE (\cf~\Cref{rmk:sample-complexity-total-eps}). This scaling emerges naturally from two principal contributions: the inner loop performing the policy optimization via \SampleBasedTRPO, and the outer population distribution update. These results align well with established convergence bounds in~\citet{shani2020adaptive}, emphasizing a balanced trade-off between computational efficiency and the precision required to approximate the MFNE.

\section{Numerical Experiments}
\label{main:section:numerical_experiments}

We present here the results of the numerical experiments obtained with the \SampleBasedMFTRPO\ algorithm. The environment considered is a Grid-Based Crowd Modeling game where, from a given initial distribution, agents are tasked with moving through a grid, avoiding both static obstacles and potential overcrowding. A representative player's state corresponds to her position within the grid, and at every time step, she can choose to move in any direction or stay in place. The reward structure imposes a small penalty for movement, offers a slight incentive for staying, and discourages agents from entering overcrowded areas. In addition, agents are encouraged to move toward a designated target, that is, 
{
\begin{equation*}
    \tilde{\rewardMDP}(s, a, \pdist) = \rewardMDP(s, a, \pdist) + \max \left\{0.3 - 0.1 \cdot d(s, s_{\text{target}});\eqsp 0 \right\} ,
\end{equation*}}
where $d(\cdot,\cdot)$ is a $\ell_1$-distance between the corresponding states, and the crowd reward is defined as
{
\begin{equation*}
    \rewardMDP (s, a, \pdist) = - \kappa \log(\pdist(s)) + \Gamma(a) \eqsp,
\end{equation*}}
where $\Gamma(a) = 0.2 \cdot \Ind_{\{a = \text{Stay} \}} - 0.2 \cdot \Ind_{\{a \neq \text{Stay} \}}$, with $\Ind$ being the indicator function and $\kappa$ being a crowd-aversion parameter. We refer to \Cref{section:appendix:experiments} for additional details and experimental results related to the \ExactMFTRPO\ algorithm. 
The environment used here is a $5 \times 5$ grid featuring three walls located at coordinates $(1,2)$, $(2,2)$, and $(3,2)$. The point of interest is located in the bottom-right corner of the grid, and all the players start clustered in the top-left corner.
In~\Cref{fig:expl_sample_based} it is possible to observe that the exploitability behavior of the \SampleBasedMFTRPO\ algorithm converges after a few iterations, matching the theoretical predictions, whereas \Cref{fig:dist_evolution_sb} illustrates the progression of the mean field distribution across three different time steps during the learning phase, thus demonstrating that the players progressively learn to distribute themselves around the point of interest, preserving spread over the whole state space.
\begin{figure}[t]
    \centering
    \includegraphics[width=0.48\linewidth]{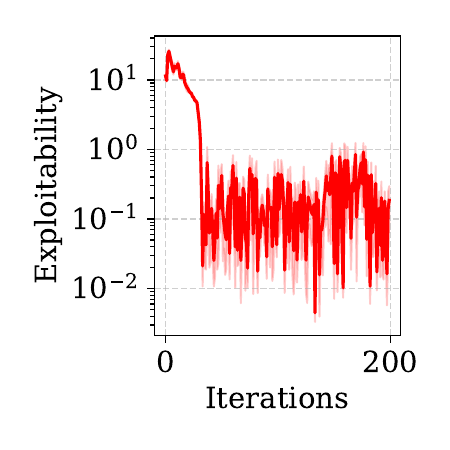}
    \includegraphics[width=0.48\linewidth]{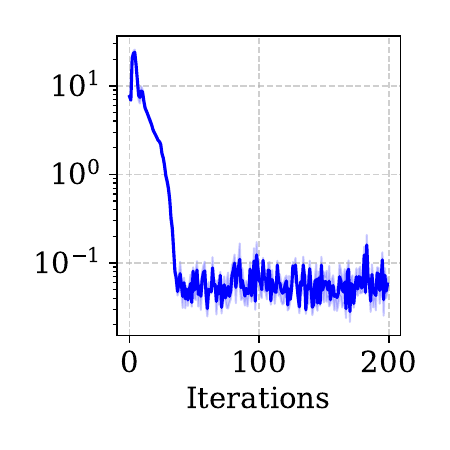}
    \caption{Exploitability achieved by the \SampleBasedMFTRPO\ algorithm in the $5 \times 5$ Grid-Based Crowd Modeling game with the bottom-right corner being a point of interest. The left plot corresponds to $\regulf = 0.05$, and the right to $\regulf = 0.3$ with results averaged over $10$ and $3$ random seeds, respectively.}
    \label{fig:expl_sample_based}
\end{figure}

\begin{figure}[t]
    \centering
    \includegraphics[width=0.32\linewidth]{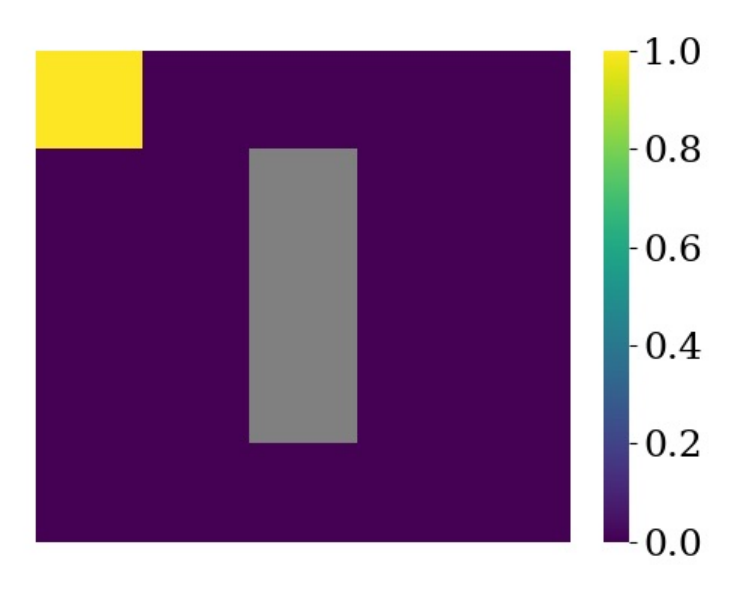}
    \includegraphics[width=0.32\linewidth]{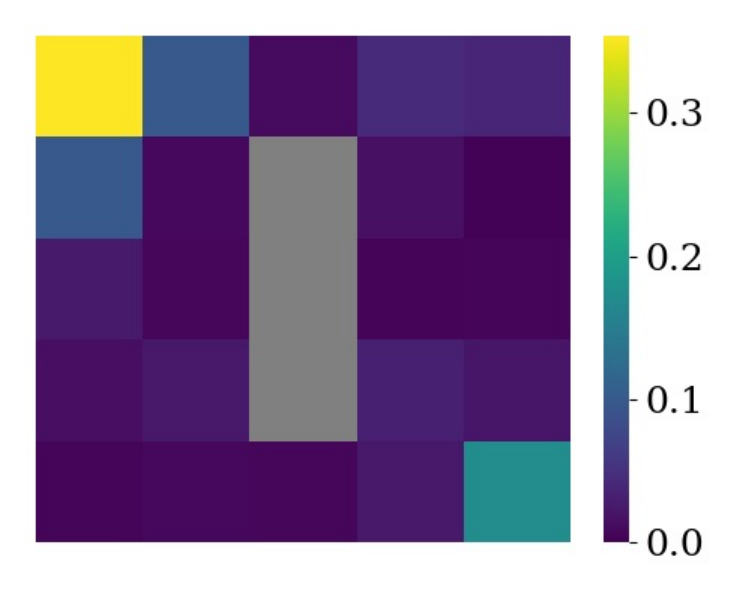}
    \includegraphics[width=0.32\linewidth]{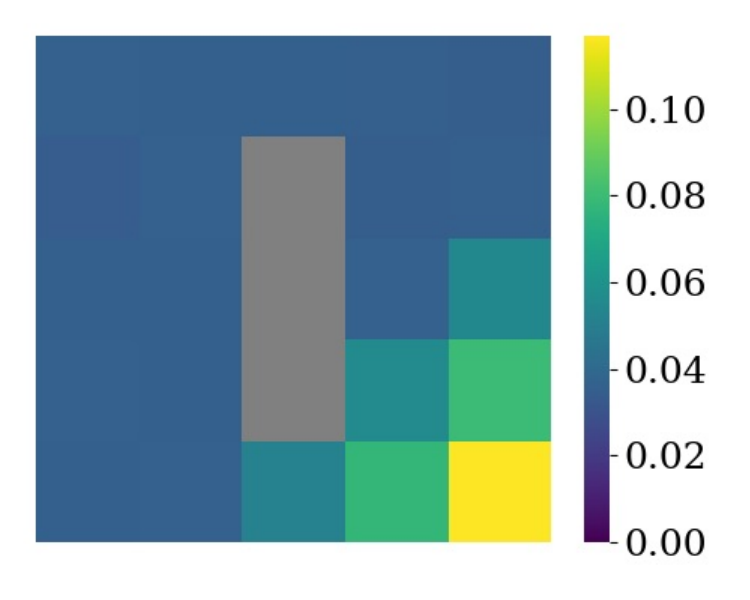}
    \caption{Evolution of the mean field distribution for $ \regulf = 0.05$ in the $5 \times 5$ Grid-Based Crowd Modeling game with the bottom-right corner being a point of interest. From left to right: step 0, step 10 and step 200. }
    \label{fig:dist_evolution_sb}
\end{figure}


\section{Conclusion}
\label{main:section:conclusion}

In this work, we introduced \ExactMFTRPO, a novel algorithm for computing MFNE in ergodic MFG. By leveraging the trust-region policy optimization framework, we established explicit non-asymptotic convergence guarantees, demonstrating that \ExactMFTRPO\ inherits the $\tO(1/L)$ rate from TRPO, ensuring efficient learning in structured multi-agent systems.

To bridge the gap between theoretical guarantees and practical applicability, we further developed \SampleBasedMFTRPO, a model-free variant that estimates policy updates solely from sampled trajectories, under the $\nuRestart$-restart RL paradigm. Using concentration inequalities, we provided finite-sample complexity bounds for this algorithm, proving convergence under more relaxed assumptions compared to recent literature. Moreover, we show that a total number of calls to the environment that scales as $\tO(1/\varepsilon^6)$, consistent with standard RL results~\citep[see, \eg,][]{shani2020adaptive}. This result highlights the potential of RL techniques for scalable and data-driven MFG solutions.

Overall, our work contributes to the growing intersection of MFG and RL, providing both theoretical insights and algorithmic advancements. Future directions include extending these methods to more general MFG settings, such as those with continuous state spaces, and exploring adaptive sampling techniques to further improve efficiency in real-world applications.





\section*{Acknowledgements}
The work of A.O. and E.M. was funded by the European Union (ERC-2022-SYG-OCEAN-101071601). Views and opinions expressed are however those of the author only and do not necessarily reflect those of the European Union or the European Research Council Executive Agency. Neither the European Union nor the granting
authority can be held responsible for them. The work of L.M. and E.M. has been supported by Technology Innovation Institute (TII), project Fed2Learn. The work of D.T. has been supported by the Paris Île-de-France Région in the framework of DIM AI4IDF.

\section*{Impact Statement}
This paper presents work whose goal is to advance the field of Machine Learning. There are many potential societal consequences of our work, none which we feel must be specifically highlighted here.


\bibliography{references.bib}
\bibliographystyle{icml2025}

\newpage
\appendix
\onecolumn

\section*{Appendix}

In the appendix, we provide a detailed exposition of the theoretical foundations and technical results supporting our main contributions.
In~\Cref{appendix:section:Related-works}, we give a broad description of related work in MFG and MFRL, and in~\Cref{section:appendix:Framework}, we lay out the fundamental framework of the MFG problem we aim to study. This section introduces the ergodic MFG formulation and its connection to the discounted setting. We also discuss the role of entropic regularization in RL and MFG, emphasizing its impact on stability and the approximation of Nash equilibria. In~\Cref{subsection:appendix:Assumptions}, we present key assumptions we use in the proofs. In~\Cref{section:appendix:ModelBased}, we present the exact TRPO-based algorithm for solving the MFG problem. This section provides a precise formulation of the exact methods and establishes its theoretical convergence guarantees. The adaptation of TRPO to the MFG setting is examined in detail, leveraging the structure of entropy-regularized RL. We present formal results on convergence rates and error bounds, ensuring the effectiveness and reliability of these methods in computing approximate MFNE. In~\Cref{section:appendix:ModelFree}, we extend our analysis to the sample-based version of the algorithm. Here, we derive global sample complexity results and analyze the statistical error introduced by sampling. We establish high-probability bounds on the approximation error at each iterations. Next, in~\Cref{section:appendix:TechnicalLemmata}, we provide additional proofs and auxiliary results that support the theoretical analysis conducted in the previous sections. These supplementary results play a crucial role in rigorously validating the convergence and stability properties of our proposed algorithms. Finally, in~\Cref{section:appendix:experiments} we provide additional experimental details as well as experiments on exact versions of the algorithm.

\section{Related works}
\label{appendix:section:Related-works}

Except for the well-known Linear-Quadratic (LQ) case, where explicit solutions can be derived analytically or through simple ordinary differential equations, computing MFNE numerically remains a challenging and active research area. A vast body of literature has focused on addressing the computational complexity of these models, leading to three major methodological approaches.

The first class of methods relies on PDE approximations, leveraging the classical formulation of MFG through the Hamilton--Jacobi--Bellman equation coupled with the Fokker--Planck--Kolmogorov equation~\citep{achdou2010mean,achdou2012mean,achdou2016convergence,achdou2020mean}. While mathematically elegant, these methods suffer from the well-documented curse of dimensionality, as solving PDEs numerically becomes intractable in high-dimensional state spaces.

The second approach leverages deep learning techniques with neural networks to approximate equilibrium solutions. These methods exploit function approximation to bypass explicit PDE resolution, making them a promising alternative in high-dimensional settings~\citep{weinan2017deep,chassagneux2019numerical,germain2022numerical}. However, they lack rigorous convergence guarantees. 
Additionally, these methods struggle to capture model specifications in a purely data-driven manner directly, limiting their adaptability in real-world applications.

The third category of numerical methods integrates RL techniques into the MFG framework, leading to two primary subcategories. The first subcategory employs RL as a solver for a given MFG model, using value-based or policy-based RL techniques to approximate Nash equilibria~\citep{perolat2022scaling,perrin2022generalization}. These approaches have achieved state-of-the-art performance in various settings and have been successfully deployed in large-scale simulations.  

The second subcategory focuses on developing model-free RL algorithms for solving MFG, often incorporating regularization techniques to enhance stability~\citep{cui2021approximately,perrin2021mean}. While these methods show promising empirical performance, theoretical guarantees on finite-sample complexity remain limited, particularly for model-based RL approaches.

\paragraph{RL for MFG.}
Among model-free RL approaches, we identify three main categories: value function-based methods, actor-critic methods—which combine value-based and policy-based approaches—and policy-oriented methods. As in classical RL, proximal methods have emerged as state-of-the-art techniques across a wide range of tasks, both in model-specific and model-agnostic settings.

Several studies have explored $Q$-learning-based algorithms for MFG, establishing theoretical convergence guarantees~\citep{angiuli2021unifiedreinforcementqlearningmean,angiuli2023convergence,guo2019learning,anahtarci2023q}. However, these approaches rely on stringent assumptions that are often difficult to verify in practice. Moreover, $Q$-learning operates within the \emph{generative model} oracle framework of RL, which assumes full query access to state-action transitions. In contrast, the restart oracle assumption provides a significantly weaker paradigm, offering a more practical and adaptable alternative for real-world learning settings.


Two-timescale updates have been employed in, \eg,~\citet{zaman2023oracle} and~\citet{mao2022mean}, where policy updates operate on a faster timescale, while the population distribution evolves more slowly based on model-based estimates of state dynamics. However, these approaches introduce significant challenges in both theoretical analysis and practical implementation. Their convergence often relies on strong assumptions that are difficult to verify, particularly in non-stationary environments.

In the MFG setting, proximal methods have been adopted for their stability properties and strong empirical performance.
Among these approaches, we highlight the work of~\citet{perolat2022scaling}, where the authors investigate Online Mirror Descent (OMD) from a model-specific perspective.~\citet{yardim2023policy} build up in this direction, developing the model-free analysis. Their approach, however, does not consider the crucial aspect of population updates, placing itself in the no-manipulation regime. In contrast, our work explicitly discusses how the monotonicity assumption can stabilize the population evolution up to a certain threshold, a perspective that aligns naturally with the ergodic nature of the Markov reward process once a policy has been selected. This additional consideration allows us to provide a more refined analysis of the learning dynamics in MFG, ensuring a more structured and well-posed approach to policy optimization.

Furthermore,~\citet{yardim2023policy} imposes stringent assumptions by requiring that policies remain uniformly bounded away from zero by a fixed constant, effectively enforcing an overly rigid form of regularization. This constraint exceeds the controlled bias typically introduced by entropic regularization. Notably, a similar assumption appears in~\citet{angiuli2021unifiedreinforcementqlearningmean,angiuli2023convergence}, where the Markov reward process is required to be absolutely continuous with respect to the uniform distribution over the state-action space—a significantly stronger condition than our unichain assumption.

By adopting a more flexible framework, we relax these restrictive conditions while improving sample complexity, particularly in terms of the number of required trajectories. Our analysis still achieves an $\tO(1/L)$ error bound in trajectory estimates but under significantly milder assumptions.



An interesting direction for future work would be to connect our framework with that of~\citet{anand2024mean}, which focuses on mean-field control. While their setting assumes cooperative agents optimizing a common objective, our work addresses non-cooperative mean-field games. Despite this key difference, it would be valuable to explore how our analytical tools—particularly for handling weak structural assumptions—could generalize their approach to broader settings.
\section{Framework}
\label{section:appendix:Framework}


\subsection{Ergodic MFG problem}

The ergodic problem focuses on optimizing the long-term average performance of a stochastic system over an infinite time horizon. In contrast to finite time horizon problems, the emphasis is on stability and efficiency over time, making it essential for operations such as energy systems, financial markets and resource management.
The goal is to find stationary policies that balance short-term costs with long-term gains. This approach is robust to uncertainties and ensures consistent performance despite stochastic disturbances.

Ergodic MFG have been first studied in continuous time and space problems ~\citep[see, \eg,][]{bardi2014linear,arapostathis2017solutions,carmona2021convergence}. In the context of discrete-time MFG, the ergodic setting has been studied by~\citet{saldi2020discrete} under the terminology of average-cost MFG.~\citet{anahtarci2023learning} proposed a learning algorithm based on $Q$-learning and analyzed its convergence and sample complexity using a strict contraction argument.

Similarly, in~\citet{guo2023general}, the authors address the problem of evolving mean-field parameters, focusing on dynamically adjusting the population distribution over time. This differs from our approach, where we aim to learn policies without requiring explicit control or manipulation of the mean-field distribution at each step.

\subsection{From ergodic MFG to discounted formulation}
\label{subsec:ergMFG-to-discMFG}

In this article, we focus on an ergodic equilibrium problem within the framework of mean-field games. This problem is traditionally defined as the unique solution resulting from the optimization of a long-term average cost function
\begin{align*}
    \JMFG_{\rm erg}(\vect{\policy},\pdist,\initdist) &:= \liminf_{T\to \infty}\frac{1}{T}\PE\left[ \sum_{t=0}^{T} \rewardMDP(s_t,a_t, \pdist_t)\middle|
    \substack{
        s_0 \sim \initdist,~
        a_t \sim \policy_t(\cdot|s_t),~ 
        \mu_{t+1}= \mu_t \kerMDP[\policy_t,\pdist_t]
        \\
        s_{t+1}\sim \kerMDP(\cdot|s_t,a_t,\pdist_t)
    }
    \right]
    \eqsp,
\end{align*}
for $\vect{\policy}= (\policy_t)_{t=0}^\infty$.
We seek Nash equilibria with respect to this cost function, \ie, a tuple $(\vect{\policy^\star},\pdist,\initdist)$ such that
\begin{itemize}
    \item $\JMFG_{\rm erg}(\vect{\policy^\star},\pdist,\initdist) = \max_{\vect{\policy}}\JMFG_{\rm erg}(\vect{\policy},\pdist,\initdist)$ ;
    \item $\cL(s_t)= \pdist_t$.
\end{itemize}

Such a problem is independent of the initial condition $\initdist$ and is stationary. Therefore, we can consider only constant vectors $\vect{\policy} = (\policy_t)_{t=0}^\infty$ for $\policy_t=\policy$, for any $t\geq0$, a$\policy \in \Policies$.

Moreover, given the stationarity of both the policy and the problem, we can further restrict ourselves to time-invariant policies and reformulate the problem as follows.
\begin{align*}
    \JMFG_{\rm erg,stat}(\policy,\pdist) &:= \liminf_{T\to \infty}\frac{1}{T}\PE\left[ \sum_{t=0}^{T} \rewardMDP(s_t,a_t, \pdist)\middle|
    \substack{
        s_0 \sim \pdist,~
        a_t \sim \policy_t(\cdot|s_t),
        \\
        s_{t+1}\sim \kerMDP(\cdot|s_t,a_t,\pdist)
    }
    \right]
    \eqsp,
\end{align*}

Moreover, as also noted in~\citet[Chapter 7.1, Volume 1,][]{carmona2018probabilistic}, we have that 
\begin{align*}
    \JMFG_{\rm erg,stat}(\policy,\pdist) &:= \lim_{\discountf\to 1}\frac{1}{1-\discountf}\JMFG_\discountf(\policy,\pdist, \pdist)
    \eqsp,
\end{align*}
with
\begin{align*}
    \JMFG_\discountf(\policy,\pdist, \initdist)= \PE\left[ \sum_{t=0}^{\infty}\discountf^t \rewardMDP(s_t,a_t, \pdist)\middle|
    \substack{
        s_0 \sim \initdist,~
        a_t \sim \policy_t(\cdot|s_t),
        \\
        s_{t+1}\sim \kerMDP(\cdot|s_t,a_t,\pdist)
    }
    \right]\eqsp.
\end{align*}

For this reason, for a $\discountf$ close to 1, we can consider the problem as formulated in~\Cref{main:section:Framework}. This is also the consideration behind the formulation of the ergodic problem presented in~\citet[Remark 8,][]{lauriere2022learning}.

\subsection{Regularization}

\paragraph{Entropy regularization in RL.}
Entropy regularization has been a prominent concept across various fields, including RL~\citep{sutton2018reinforcement,szepesvari2022algorithms}. In dynamic programming and RL contexts, entropy-regularized Bellman equations and corresponding algorithms have been extensively studied to address key challenges. These include inducing safe exploration~\citep{fox2016taming} and designing risk-sensitive policies~\citep{howard1972risk,marcus1997risk,ruszczynski2010risk}. Additionally, these methods have been employed to model behaviors of imperfect decision-makers, as demonstrated by~\citet{ziebart2010modeling,ziebart2010modeling_phd,braun2011path}.

Beyond dynamic programming approaches, direct policy search methods have emerged as a powerful alternative for optimizing entropy-regularized objectives. These methods, which aim to drive safe online exploration in unknown Markov decision processes, have been explored in works such as~\citet{williams1991function,peters2010relative,schulman2015trust,mnih2016asynchronous,o2016combining}. Notably, state-of-the-art RL methods, including those by~\citet{mnih2016asynchronous,schulman2015trust}, leverage entropy-regularized policy search to balance exploration and exploitation effectively, highlighting the central role of regularization in achieving robust and safe learning.

Regularization, particularly the entropic one, has been extensively studied in the theoretical literature. In~\citet{neu2017unified}, the authors provide a comprehensive analysis of mirror descent methods for RL, highlighting how regularization influences policy optimization and convergence properties. Similarly,~\citet{geist2019theory} formalize the theoretical impact of entropy regularization, demonstrating its role in stabilizing policy updates and improving exploration.

\paragraph{Regularization in RL for MFG.}
In the inherently non-linear MFG setting, stabilizing policy updates is essential to ensuring convergence and preventing oscillatory behavior. In this setting, the underlying dynamics involve the interplay of numerous agents and necessitate a precise balance between individual and collective objectives.  Regularization plays a key role by smoothing the cost landscape, mitigating instability, and creating well-conditioned optimization problems, ultimately leading to more reliable and efficient learning dynamics.

Moreover, MFG serve as approximations of the $N$-player game problem in MARL, leveraging assumptions like \emph{anonymity} and \emph{homogeneity} to simplify the otherwise intractable dynamics of joint policy updates in large-scale systems. Introducing regularization into the MFG framework not only enhances stability but is theoretically justified, as the additional approximation error introduced by regularization is comparable to the inherent $\cO(1/\sqrt{N})$ error of the MFG approximation itself.

Moreover, in the context of RL for MFG, regularize policies has been used to facilitates convergence of learning algorithms.~\citet{cui2021approximately} show that strict contraction property used by several other works fail to holds in general. The authors then studied a modified MFG with an entropy-regularized reward and showed that, for a sufficiently large degree of regularization, policy-iteration type RL algorithms can be shown to converge using contraction techniques. A similar approach has been used by~\citet{anahtarci2023q} to prove convergence of a $Q$-learning algorithm, and by~\citet{yardim2023policy} to prove the convergence of policies learned by independent learners in a regularized MFG. 
On the empirical side, policy regularization has also been used by~\citet{algumaei2023regularization} through an algorithm relying on proximal policy optimization (PPO).

\subsection{Discussion on the assumptions}
\label{subsection:appendix:Assumptions}

\paragraph{Lipschitz property of the MDP parameters.}

Assumption~\ref{hyp:Lipschitz_continuity} of Lipschitz continuity on the parameters of the MDP, reward function $ \rewardMDP$ and transition probability matrix $\kerMDP$, implies that the MDP does not change abruptly with respect to the state or action. This ensures that small perturbations in these variables lead to correspondingly small changes in the MDP’s dynamics.

This assumption is well-established in the RL literature~\citep[see, \eg,][]{asadi2018lipschitz,le2021metrics} and serves several critical purposes. First, it ensures the smoothness of value functions, essential for the stability of iterative optimization methods Second, it enables the derivation of meaningful error bounds and convergence rates, as shown in foundational works on policy optimization.

Moreover, as the reward function $\rewardMDP$ is defined on a compact domain, since $\S$ and $\A$ are finite, it is guaranteed that $\BoundReward<\infty$. A common practice in RL literature to normalize $\rewardMDP$, \ie, $\BoundReward = 1 $, without loss of generality~\citep{mei2020global}. This normalization simplifies expressions and makes algorithms scale-independent. However, in this work, we retain $\BoundReward$ explicitly in our analysis to emphasize the clear dependence of all constants and bounds on the magnitude of the reward function. This approach ensures transparency in how the properties of $\rewardMDP$ influence the theoretical results and practical performance.

\paragraph{Unique recurrence property of the Markov reward process.} To ensure the well-posedness of the TRPO algorithm and derive meaningful performance bounds, we impose the unichain assumption~\ref{hyp:uniform-ergodicity}. This assumption guarantees that the Markov chain induced by any admissible policy has a single recurrent class, potentially accompanied by a set of transient states. Such a property ensures that the long-term behavior of the Markov chain is well-defined, with a unique stationary distribution for each policy.

The unichain property plays a pivotal role in stabilizing the analysis of RL algorithms, particularly TRPO, as highlighted in~\citet{neu2017unified}. It eliminates ambiguities on the initial population distribution arising from multiple recurrent classes, which could otherwise complicate the evaluation of value functions and policy improvement steps.
As shown in in~\citet{puterman1978modified}, this condition is satisfied if all policies induce an irreducible and aperiodic Markov chain. Moreover, in the regularized setting, the regularization term helps in its satisfaction. Therefore, for any mean-field population profile $\pdist$, the Markov chain $\kerMDP[\pdist][\policy_\pdist]$ is irreducible and aperiodic, implying the existence of a unique stationary distribution $\ergdist[\policy_\pdist, \pdist]$ and establishes the foundation for the mixing property~\eqref{eq:uniform-mixing}.


The ergodicity assumption has been explored in various forms by different authors in the MFG literature. Notably,~\citet{angiuli2021unifiedreinforcementqlearningmean, angiuli2023convergence} impose the condition that the induced Markov chain is aperiodic and absolutely continuous with respect to the uniform distribution over the state space. While this guarantees strong mixing properties, it is a highly restrictive assumption, as it effectively enforces immediate communication between all states, which is often unrealistic in practical applications. Instead, we generalize this assumption by adopting a standard ergodicity condition widely used in RL literature~\citep{mei2020global}. This approach maintains the necessary stability properties while allowing for more realistic transition dynamics, ensuring broader applicability in complex multi-agent systems.

\paragraph{Finite concentration of the occupation measure.}
In this paper, we adopt a RL paradigm where access to the environment is structured through a \(\nuRestart\)-restart, ensuring that each learning episode begins from a well-defined initial state distribution. 
The concentration of the occupation measure assumption is crucial for the convergence of the \SampleBasedTRPO{}\ algorithm, ensuring that the estimation of the policy update remains stable and accurate over successive iterations.
To compute this task, the algorithm operates in an episodic setting, where each episode begins by drawing an initial state $s_0$ from the restart distribution $\nuRestart$, followed by collecting a trajectory $(s_0, \rbold_0, s_1, \rbold_1, \dots)$ under the current policy $\policy_k$.
This episodic structure allows the algorithm to interact with the MDP in a controlled manner, facilitating the estimation of quantities like the value function $\JMFG$. 

This approach builds on the seminal work of~\citet{kakade2003sample}, which introduced the notion of a $\nuRestart$-restart model as an intermediary assumption in RL. The $\nuRestart$-restart model is weaker than having direct access to the true model or a generative model~\citep{kearns1998finite,azar2013minimax,sidford2018near,agarwal2020model}, as it does not require full knowledge of the transition kernel or the reward function. At the same time, it is stronger than the unrestricted case where no restarts are allowed, ensuring that the algorithm can sample states from a well-defined initial distribution at the start of each episode. This controlled interaction with the environment is crucial for accurately estimating value functions and gradients in the \SampleBasedTRPO\ algorithm, ultimately enabling convergence guarantees.

The supremum in Assumption~\ref{hyp:concentration_dMeas}, often referred to as the concentrability coefficient, plays a critical role in the theoretical analysis of policy search algorithms. This concept was initially highlighted in the foundational work of~\citet{kakade2002approximately} and has since been extensively studied in the RL (RL) literature.

One of the reasons the concentrability coefficient has garnered attention is its frequent appearance in the analysis of approximate policy iteration schemes. Research by~\citet{scherrer2014local} and~\citet{bhandari2024global} shows that the concentrability coefficient often governs error propagation during learning. In essence, it provides bounds on how errors in approximating value functions or policies propagate through successive updates.




\section{Exact algorithms}
\label{section:appendix:ModelBased}
Various algorithms have been proposed in the literature to address the exact MFG problem in scenarios where the MDP kernel and the reward function are fully accessible.
In this case, the value function and a best response can be computed using dynamic programming and backward induction. This approach has been used, \eg, by~\citet{perrin2020fictitious} and~\citet{perolat2022scaling} to implement Fictitious Play (FP) and Online Mirror Descent (OMD) respectively.~\citet{cui2022learning} presented a exact fixed point algorithm for graphon games.~\citet{angiuli2023convergence} analyzed the convergence of a model-specific multi-scale algorithm for MFG.

This line of research often stems from the classical control theory and optimization frameworks, tailored to solve specific MFG problems with high precision. These methods focus on the exact representation of the MFG dynamics, providing critical insights into the equilibrium behavior of large-agent systems. In this work, we propose a novel adaptation of the TRPO algorithm, building on the robust framework of~\citet{shani2020adaptive}. Our adaptation incorporates key elements of the MFG structure, leveraging entropic regularization and mean-field population dynamics. Moreover, we establish finite sample complexity results for this algorithm, demonstrating its theoretical convergence properties and its practical applicability in solving the ergodic MFG problem under a finite state-action setting.

\subsection{TRPO - exact formulation}
TRPO, inherently structured as a mirror descent method, proves particularly well-suited for entropy-regularized settings. This framework benefits from a significant simplification: the policy update admits a closed-form solution~\citep{beck2017first}, expressed in terms of the $Q$-function associated with the current policy. By recasting the optimization problem in terms of $Q$-function computation, the algorithm focuses on the essential dynamics of the system, effectively bypassing the need for direct policy optimization over a high-dimensional space.

This closed-form update leverages the softmax form of the policy, a direct consequence of entropy regularization. The softmax structure ensures that the updated policies remain strictly in the interior of the probability simplex $\cP(\S)$, avoiding deterministic solutions. This property not only facilitates numerical stability but also aligns with the theoretical foundations of the regularized problem. The use of first-order conditions becomes feasible and efficient, as the regularization term enforces a smooth, convex optimization landscape.

Moreover, the reward function's linear dependence on the policy pairs seamlessly with the coercive nature of the entropy-regularized optimization problem. The coercivity guarantees that the optimal policies minimize the objective within the confines of the simplex, effectively balancing exploration and exploitation. This alignment between the problem structure and the algorithm’s mechanics underscores the power of TRPO in achieving convergence while maintaining theoretical guarantees in entropy-regularized MFG settings. By translating the original optimization problem into $Q$-function evaluations, the algorithm provides a practical yet robust pathway to finding approximate Nash equilibria in complex systems.

In the case where the mean-field population distribution parameter $\pdist$ is fixed, the algorithm \ExactTRPO[$(\pdist)$] provides a robust approximation to the value function. By iteratively updating the policy using trust region optimization techniques, the algorithm ensures convergence rates that explicitly depend on the regularization parameter \( \regulf \). Given a fixed \( \regulf \), the learning rate \( 1/(\regulf(\ell+2)) \) is optimally chosen to balance stability and efficiency in the policy updates.
The following result establishes the error bounds for the value function approximation, highlighting the role of entropic regularization in convergence guarantees. Specifically, the bounds quantify the discrepancy between the value function induced by the computed policy and the optimal value function for the given mean-field population profile. 

\begin{theorem}
[Theorem 16 in~\citet{shani2020adaptive}]
\label{thm:convergence-exact-trpo}
    Fix $\pdist \in \cP(\S)$ the initial distribution. Let $\{\policy_\ell\}_{\ell=0}^L$ be the sequence generated by the \ExactTRPO[$(\pdist)$] algorithm. Then, there exists a constant $\constTRPO{0}^\prime>0$ such that
        \begin{align}
            \label{eq:exact_trpo-bound}
            \JMFG(\policy_\pdist,\pdist,\pdist)
            -
            \JMFG(\policy_L,\pdist,\pdist) 
            \leq \constTRPO{0}^\prime \frac{ \l(\BoundReward+\regulf^2\log^2\nactions\r)}{\regulf (1-\discountf)^3}
            \cdot\frac{\log L}{L} 
            \eqsp.
        \end{align}
\end{theorem}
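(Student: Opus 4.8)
The plan is to observe that, once the population profile $\pdist$ is held fixed, the best-response problem~\eqref{eq:value-function} collapses to a single entropy-regularized MDP with kernel $\kerMDP(\cdot\mid\cdot,\cdot,\pdist)$ and reward $\rewardMDP(\cdot,\cdot,\pdist)$, and that the \PolicyUpdate\ rule~\eqref{eq:PolicyUpdate} is precisely the mirror-ascent (proximal) step for this regularized objective. Indeed, the softmax update is the closed-form maximizer over $p\in\cP(\A)$ of $\alpha_\ell\langle \qfunc[\policy_\ell,\pdist](s,\cdot) - \regulf\log\policy_\ell(\cdot\mid s),\,p\rangle - \KL(p\,\Vert\,\policy_\ell(\cdot\mid s))$, so \ExactTRPO[$(\pdist)$] is adaptive-step-size mirror ascent on the regularized surrogate, which is $\regulf$-strongly convex relative to the KL geometry. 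This reduction makes the analysis of \citet{shani2020adaptive} directly applicable once one checks that the MDP hypotheses hold (finiteness of $\S,\A$ and Assumption~\ref{hyp:uniform-ergodicity}); I sketch its skeleton below.

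First I would invoke the regularized performance-difference lemma to rewrite the value gap $\JMFG(\policy_\pdist,\pdist,\pdist) - \JMFG(\policy_\ell,\pdist,\pdist)$ as a discounted occupation-measure average, taken under $\policy_\pdist$ and starting from $\pdist$, of the regularized advantage of $\policy_\ell$. This converts the global suboptimality into a per-state quantity that the mirror-ascent step can control. Since $\rewardMDP\in[0,\BoundReward]$ and the per-step entropy is at most $\log\nactions$, the regularized $Q$-function obeys $\norm{\qfunc[\policy_\ell,\pdist]}[\infty]\lesssim(\BoundReward+\regulf\log\nactions)/(1-\discountf)$, which is where the factors $\log\nactions$ and a first power of $(1-\discountf)^{-1}$ enter.

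Next I would establish the per-iteration inequality. Applying the three-point (generalized Pythagorean) identity for the KL Bregman divergence to the proximal step at each state $s$, with comparator $\policy_\pdist(\cdot\mid s)$, gives
\begin{align*}
    &\alpha_\ell\langle \nabla_\ell(s),\,\policy_\pdist(\cdot\mid s)-\policy_{\ell+1}(\cdot\mid s)\rangle \\
    &\quad\leq \KL(\policy_\pdist(\cdot\mid s)\Vert\policy_\ell(\cdot\mid s)) - \KL(\policy_\pdist(\cdot\mid s)\Vert\policy_{\ell+1}(\cdot\mid s)) - \KL(\policy_{\ell+1}(\cdot\mid s)\Vert\policy_\ell(\cdot\mid s)),
\end{align*}
where $\nabla_\ell(s)=\qfunc[\policy_\ell,\pdist](s,\cdot)-\regulf\log\policy_\ell(\cdot\mid s)$. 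The crucial point, and the source of the fast rate, is that the $\regulf$-strong convexity of the entropic regularizer contributes an extra $-\regulf\,\KL(\policy_\pdist(\cdot\mid s)\Vert\policy_\ell(\cdot\mid s))$ term, so that with the tuned step size $\alpha_\ell=1/(\regulf(\ell+2))$ the recursion contracts. Averaging against the occupation marginal $\dMeasMarg[\pdist,\pdist][\policy_\pdist]$ (using Assumption~\ref{hyp:uniform-ergodicity} so this average is well defined and the distribution mismatch is controlled), summing over $\ell\in[L]$, and telescoping the KL terms, the strongly-convex term combines with the harmonic step-size schedule to yield the $\log L/L$ decay. The residual constants then assemble into $(\BoundReward+\regulf^2\log^2\nactions)/(\regulf(1-\discountf)^3)$ after bounding the initial divergence $\KL(\policy_\pdist(\cdot\mid s)\Vert\policy_0(\cdot\mid s))\leq\log\nactions$ and the $Q$-function magnitude; the two further powers of $(1-\discountf)^{-1}$ arise from the performance-difference normalization and from error propagation across the effective horizon.

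The main obstacle is that $\JMFG$ is \emph{not} convex in the policy, so the mirror-descent machinery cannot be applied to it directly. The resolution, which is the technical heart of the argument, is to run the mirror-ascent analysis on the \emph{linearized} surrogate $\langle\qfunc[\policy_\ell,\pdist](s,\cdot),\,\cdot\rangle$ state by state and then reassemble the global bound through the performance-difference lemma; this forces one to control the mismatch between $\dMeasMarg[\pdist,\pdist][\policy_\pdist]$ and $\dMeasMarg[\pdist,\pdist][\policy_\ell]$ and to keep the $\log\policy_\ell$ terms bounded near the boundary of the simplex. Both are exactly what the entropic regularization and the uniform-ergodicity Assumption~\ref{hyp:uniform-ergodicity} are designed to guarantee, and carefully tracking these bounds is what produces the explicit constant in~\eqref{eq:exact_trpo-bound}.
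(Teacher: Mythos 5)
The paper does not actually prove this theorem: it is imported verbatim as Theorem 16 of \citet{shani2020adaptive}, and the only argument supplied in the appendix is for the subsequent corollary (converting the value gap into a total-variation bound on policies via \Cref{prop:delta_policies-leq-delta_value_fct}). So the relevant comparison is between your sketch and the cited proof. Your reconstruction of that proof is faithful in its main lines: the reduction to a single entropy-regularized MDP for fixed $\pdist$, the identification of \PolicyUpdate\ with a KL mirror-ascent step, the three-point lemma, the extra $-\regulf\,\KL(\policy_\pdist\Vert\policy_\ell)$ term coming from the strong convexity of the entropic regularizer, the step size $\alpha_\ell = 1/(\regulf(\ell+2))$, and the telescoping that produces the $\log L / L$ rate are all exactly the ingredients of the cited analysis.

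There is, however, one genuine gap. The telescoping argument you describe bounds the \emph{averaged} suboptimality $\frac{1}{L+1}\sum_{\ell=0}^{L}\bigl(\JMFG(\policy_\pdist,\pdist,\pdist)-\JMFG(\policy_\ell,\pdist,\pdist)\bigr)$, whereas the statement \eqref{eq:exact_trpo-bound} is a bound on the \emph{last iterate} $\policy_L$. To pass from the average to $\policy_L$ one needs the exact-update policy improvement lemma (Lemma 15 in \citet{shani2020adaptive}, the analogue of Howard's improvement argument), which guarantees $\JMFG(\policy_{\ell+1},\pdist,\pdist)\geq\JMFG(\policy_\ell,\pdist,\pdist)$ for the exact updates, so that the last iterate is at least as good as the average. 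This step is not a formality: the paper itself emphasizes (in \Cref{appendix:sampled-based-TRPO}) that it is precisely the ingredient that fails in the sample-based setting, forcing the switch to the uniform-mixture policy there. Your sketch should state and invoke this monotonicity explicitly. A smaller point: the distribution-mismatch control you attribute to Assumption~\ref{hyp:uniform-ergodicity} is, in the exact analysis, handled by weighting the per-state inequalities with the comparator occupation measure $\dMeasMarg[\pdist,\pdist][\policy_\pdist]$ restricted to states of positive occupancy; no ergodicity is needed for the fixed-$\pdist$ statement as given.
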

\begin{corollary}
\label{cor:convergence-exact-trpo}
    Let $\{\policy_\ell\}_{\ell=0}^L$ be the sequence generated by the \ExactTRPO[$(\pdist)$] algorithm. Then, we have that
    \begin{align}
    \label{eq:exact_trpo-bound_policies}
        \sum_{s\in\S}\tvnorm{
            \policyTRPO{L}(\cdot|s) - \policy_\pdist(\cdot|s)
        }^2\pdist(s)
        \leq \constTRPO{0} \cdot \frac{\log L}{L} 
        \eqsp,
    \end{align}
    with
    \begin{align*}
        \constTRPO{0}:=\constTRPO{0}^\prime \frac{2}{\regulf(1-\discountf)} \cdot \frac{ \l(\BoundReward+\regulf^2\log^2\nactions\r)}{\regulf (1-\discountf)^3}
        \eqsp.
    \end{align*}
\end{corollary}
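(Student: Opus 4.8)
The plan is to convert the value-function suboptimality bound of \Cref{thm:convergence-exact-trpo} into a bound on the policy discrepancy. The only missing ingredient is a \emph{value-to-policy} inequality lower bounding the regularized suboptimality gap by a weighted squared total-variation distance, namely a lemma of the form
\begin{align*}
    \JMFG(\policy_\pdist,\pdist,\pdist) - \JMFG(\policyTRPO{L},\pdist,\pdist)
    \geq
    \frac{\regulf(1-\discountf)}{2}\sum_{s\in\S}\pdist(s)\tvnorm{\policyTRPO{L}(\cdot|s) - \policy_\pdist(\cdot|s)}^2
    \eqsp.
\end{align*}
Once this is available, dividing through by $\regulf(1-\discountf)/2$ and inserting the right-hand side of \eqref{eq:exact_trpo-bound} immediately yields \eqref{eq:exact_trpo-bound_policies} with the advertised constant $\constTRPO{0} = \constTRPO{0}^\prime\,\tfrac{2}{\regulf(1-\discountf)}\cdot\tfrac{\BoundReward + \regulf^2\log^2\nactions}{\regulf(1-\discountf)^3}$, so the entire argument reduces to establishing this one inequality.

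To prove the lemma I would exploit the soft, entropy-regularized structure rather than attempting to use convexity of $\JMFG$ directly, which is unavailable since the occupation measure depends nonlinearly on $\policy$. Recall that $\policy_\pdist$ is the unique softmax optimizer of $\VMFG(\pdist)$. The soft performance-difference identity then expresses the gap exactly as a discounted sum of Kullback--Leibler divergences accumulated along the trajectories of the evaluated policy,
\begin{align*}
    \JMFG(\policy_\pdist,\pdist,\pdist) - \JMFG(\policyTRPO{L},\pdist,\pdist)
    =
    \frac{\regulf}{1-\discountf}\sum_{s\in\S}\dMeasMarg[\pdist,\pdist][\policyTRPO{L}](s)\,\KL\big(\policyTRPO{L}(\cdot|s)\,\Vert\,\policy_\pdist(\cdot|s)\big)
    \eqsp,
\end{align*}
which I would derive by substituting $\qstar(s,a)=\regulf\log\policy_\pdist(a|s)+\vstar(s)$ into the one-step difference and unrolling the resulting recursion. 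From here two elementary reductions finish the lemma: Pinsker's inequality $\KL(p\Vert q)\geq 2\tvnorm{p-q}^2$ converts each KL term into a squared total-variation distance, and the pointwise lower bound $\dMeasMarg[\pdist,\pdist][\policyTRPO{L}](s)\geq(1-\discountf)\pdist(s)$ — which holds because the $t=0$ contribution to \eqref{eq:occupation-measure-marginal} is exactly $(1-\discountf)\pdist(s)$ and all later terms are nonnegative — re-weights the sum from the occupation measure onto $\pdist$. Combining the two gives the claimed inequality, in fact with room to spare in the constant.

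I expect the main obstacle to be the careful justification of the soft performance-difference identity in the present notation, including fixing the correct \emph{direction} of the KL divergence and verifying that it is the occupation measure of the evaluated policy $\policyTRPO{L}$, not of $\policy_\pdist$, that appears; only then does the crude bound $\dMeasMarg[\pdist,\pdist][\policyTRPO{L}]\geq(1-\discountf)\pdist$ apply in the needed direction. The nonlinearity of $\pdist\mapsto\dMeasMarg[\pdist,\pdist][\policy]$ and the sign convention of the entropic term both demand care here; everything else — Pinsker, the occupation-measure lower bound, and substitution of the $\log L / L$ rate — is routine.
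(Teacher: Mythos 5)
Your proposal is correct and follows essentially the same route as the paper: the value-to-policy inequality you identify as the missing ingredient is exactly \Cref{prop:delta_policies-leq-delta_value_fct}, and your proposed derivation of it --- writing the regularized suboptimality gap as an occupation-measure-weighted sum of KL divergences (the Bregman-divergence identity in the paper's proof), lower-bounding the occupation measure by $(1-\discountf)$ times the initial distribution, and applying Pinsker --- mirrors the paper's argument step for step. The only differences are cosmetic: you state the inequality already averaged over $s_0\sim\pdist$ rather than pointwise, and your Pinsker constant reflects a different total-variation normalization, which you correctly note only gives slack in the constant.
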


\begin{proof}
    Using~\Cref{prop:delta_policies-leq-delta_value_fct}, we use the relationship between the total variation distance of a policy $\policy$ to the optimal policy $\policy_\pdist$ and the corresponding difference in their value functions obtainine
    \begin{align*}
        \sum_{s\in\S}\tvnorm{\policyTRPO{L}(\cdot|s) - \policy_\pdist(\cdot|s)}^2\pdist(s) \leq \frac{2}{\regulf(1-\discountf)}
        \l(
            \JMFG\l(\policy_\pdist,\pdist,\pdist\r)
            -
            \frac{1}{L+1}\sum_{\ell=0}^{L}\JMFG\l(\policy_\ell,\pdist,\pdist\r)
        \r)
    \end{align*}
    Therefore, using~\eqref{eq:exact_trpo-bound}, we get~\eqref{eq:exact_trpo-bound_policies}.
\end{proof}

\subsection{Exact algorithm}

We now analyze Algorithm~\ref{alg:ExactAlgo}, where no approximation is made. This algorithm is exact and does not involve any approximation. We provide a convergence result for this algorithm in the tabular setting.

\begin{figure}[H]
\vspace{-0.5cm}
\begin{algorithm}[H]
\centering
\caption{\ExactAlgo}
\label{alg:ExactAlgo}
\begin{algorithmic}[1]
    \STATE {\bfseries Input:}  $M$.
    \STATE {\bfseries Initialize:} $\pdist_0$. 
        \FOR{$k \in[K]$}
        \STATE $\pdist_{k} := \pdist_{k-1} + \steppdist_{k}\l(
                \pdist_{k-1}\l(\kerMDP[\pdist_{k-1}][\policy_{\pdist_{k-1}}]\r)^M - \pdist_{k-1}\r).$
                \hfill \texttt{\# Update population distribution} 
            
    \ENDFOR
    \STATE \textbf{Output:} $\pdist_{K}$.
\end{algorithmic}
\end{algorithm}
\vspace{-1em}
\end{figure}

\begin{proposition}
\label{prop:exact_algorithm}
    Suppose that Assumptions~\ref{hyp:Lipschitz_continuity},~\ref{hyp:cvg-MFG-operator}, and~\ref{hyp:uniform-ergodicity} hold. Assume that, for any $k\geq 0$,
    \begin{align}
        \label{eq:step-size-pdist}
        \steppdist_k < \frac{\tau}{\tau-\constMonoton+ \constLipPolicy^2\constErgM[\powMonoton]^2} \eqsp,
    \end{align}
    with $\tau:= 1-\constMonoton$.
    Then, the sequence $\{\pdist_k\}_{k\geq0}$ defined in \ExactAlgo\ satisfies
    \begin{align}
    \label{eq:convergence-pdist}
        \norm{\pdist_k - \pdistStar}[2]^2
        \leq& \l(1-\tau \steppdist_{k}\r)\norm{\pdist_{k-1} - \pdistStar}[2]^2
        \eqsp.
    \end{align}
    Moreover, if the step-sizes $\steppdist_k$ satisfy
    \begin{align}
        \label{eq:step-size-pdist-infty}
        \sum_{k=0}^\infty \steppdist_k = \infty
        \eqsp,
    \end{align}
    we have that the exact algorithm \ExactAlgo\ converges to the optimal policy in the tabular setting.
\end{proposition}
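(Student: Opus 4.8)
The plan is to recognize the population update in \ExactAlgo\ as a damped fixed-point iteration for the $M$-step mean-field map. Writing $F(\pdist) := \pdist\big(\kerMDP[\pdist][\policy_\pdist]\big)^M$, the update reads $\pdist_k = (1-\steppdist_k)\pdist_{k-1} + \steppdist_k F(\pdist_{k-1})$, which also makes clear that $\pdist_k$ stays in $\cP(\S)$ precisely when $\steppdist_k \in (0,1]$, so I may take $1-\steppdist_k \geq 0$ throughout. Since $\pdistStar$ solves the fixed-point equation~\eqref{eq:def:fixed_point}, iterating gives $F(\pdistStar) = \pdistStar$, so subtracting $\pdistStar$ yields the error recursion $\pdist_k - \pdistStar = (1-\steppdist_k)(\pdist_{k-1}-\pdistStar) + \steppdist_k\big(F(\pdist_{k-1}) - F(\pdistStar)\big)$. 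First I would expand $\norm{\pdist_k - \pdistStar}[2]^2$ along this decomposition into the three standard terms.

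Next I would control the two nontrivial terms separately. For the cross term $2(1-\steppdist_k)\steppdist_k\pscal{\pdist_{k-1}-\pdistStar}{F(\pdist_{k-1})-F(\pdistStar)}$ I would invoke the monotonicity Assumption~\ref{hyp:cvg-MFG-operator} with $(\pdist,\pdist')=(\pdist_{k-1},\pdistStar)$, which bounds the inner product by $\constMonoton\norm{\pdist_{k-1}-\pdistStar}[2]^2$; the factor $(1-\steppdist_k)\geq 0$ keeps the inequality in the correct direction. For the quadratic term I would use a Lipschitz estimate $\norm{F(\pdist)-F(\pdist')}[2]^2 \leq \constLipPolicy^2\constErgM[\powMonoton]^2\norm{\pdist-\pdist'}[2]^2$. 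Collecting everything gives the coefficient $(1-\steppdist_k)^2 + 2(1-\steppdist_k)\steppdist_k\constMonoton + \steppdist_k^2\constLipPolicy^2\constErgM[\powMonoton]^2$, and I would expand it as $1 - 2\tau\steppdist_k + \steppdist_k^2(\tau-\constMonoton+\constLipPolicy^2\constErgM[\powMonoton]^2)$, using $\tau = 1-\constMonoton$ so that $1-2\constMonoton = \tau-\constMonoton$. Requiring this to be at most $1-\tau\steppdist_k$ reduces, after cancelling a factor $\steppdist_k$, to $\steppdist_k(\tau-\constMonoton+\constLipPolicy^2\constErgM[\powMonoton]^2)\leq\tau$, which is exactly the step-size condition~\eqref{eq:step-size-pdist}; this yields~\eqref{eq:convergence-pdist}.

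The main obstacle is the Lipschitz bound on $F$, since $F(\pdist)=\pdist(\kerMDP[\pdist][\policy_\pdist])^M$ depends on $\pdist$ through the leading distribution, through the kernel directly (Assumption~\ref{hyp:Lipschitz_continuity}), and through the optimal policy $\policy_\pdist$. I would telescope the difference of the two $M$-fold products $\pdist(\kerMDP[\pdist][\policy_\pdist])^M - \pdist'(\kerMDP[\pdist'][\policy_{\pdist'}])^M$ into a sum of $M$ terms, each isolating a single replacement of $\kerMDP[\pdist][\policy_\pdist]$ by $\kerMDP[\pdist'][\policy_{\pdist'}]$; the per-step kernel discrepancy is controlled by Assumption~\ref{hyp:Lipschitz_continuity} combined with the policy Lipschitz bound of Proposition~\ref{prop:opt_policy-Lipschitz-pdist}, while the uniform mixing of Assumption~\ref{hyp:uniform-ergodicity} prevents these $M$ contributions from accumulating and produces the ergodicity constant $\constErgM[\powMonoton]$. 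This is where all three assumptions must be combined carefully and where the exact threshold constant is pinned down.

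Finally, for the convergence claim I would iterate~\eqref{eq:convergence-pdist} to obtain $\norm{\pdist_k-\pdistStar}[2]^2 \leq \prod_{j=1}^k(1-\tau\steppdist_j)\,\norm{\pdist_0-\pdistStar}[2]^2$. Each factor lies in $[0,1)$, and using $1-x\leq\rme^{-x}$ the product is at most $\exp(-\tau\sum_{j=1}^k\steppdist_j)$, which tends to $0$ as $k\to\infty$ under the divergence condition~\eqref{eq:step-size-pdist-infty}; hence $\pdist_k\to\pdistStar$. Convergence of the policies then follows from Proposition~\ref{prop:opt_policy-Lipschitz-pdist}, which gives $\sup_{s\in\S}\tvnorm{\policy_{\pdist_k}(\cdot|s)-\policy_{\pdistStar}(\cdot|s)}^2 \leq \constLipPolicy\norm{\pdist_k-\pdistStar}[2]\to 0$, so $(\policy_{\pdist_k},\pdist_k)$ converges to the unique MFNE $(\policy_{\pdistStar},\pdistStar)$.
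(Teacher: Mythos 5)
Your proposal is correct and follows essentially the same route as the paper: the same decomposition of $\pdist_k-\pdistStar$ into $(1-\steppdist_k)(\pdist_{k-1}-\pdistStar)+\steppdist_k\bigl(F(\pdist_{k-1})-F(\pdistStar)\bigr)$ using the fixed-point identity, the monotonicity Assumption~\ref{hyp:cvg-MFG-operator} for the cross term, the Lipschitz/ergodicity bound (the paper's Lemma~\ref{lemma:Lipschitz-erg-operator} combined with Corollary~\ref{coroll:lipschitz-policies}, proved by exactly the telescoping you describe) for the quadratic term, and the same coefficient algebra and $1-x\leq\rme^{-x}$ iteration at the end. No meaningful differences to report.
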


\begin{proof}
    We focus on the convergence of the sequence $\pdist_k$ toward $\pdistStar$. Recall that $\pdistStar$ is the fixed point~\eqref{eq:def:fixed_point}.
    From this condition, we then have that
    \[
        \pdistStar
        = \pdistStar \kerMDP[\pdistStar][\policy_{\pdistStar}]
        = \pdistStar \l(\kerMDP[\pdistStar][\policy_{\pdistStar}]\r)^M
        \eqsp.
    \]
    We then have that
    \begin{align*}
        \norm{\pdist_k - \pdistStar}[2]^2
        =& \norm{\pdist_{k-1} - \pdistStar + \steppdist_{k}\l\{
            \pdist_{k-1} \l(\kerMDP[\pdist_{k-1}][\policy_{\pdist_{k-1}}]\r)^M - \pdist_{k-1}\r\} }[2]^2
        \\
        =& \norm{\pdist_{k-1} - \pdistStar + \steppdist_{k}\l\{
            \l(\pdist_{k-1} \l(\kerMDP[\pdist_{k-1}][\policy_{\pdist_{k-1}}]\r)^M - \pdist_{k-1}\r) -
            \l(\pdistStar \l(\kerMDP[\pdistStar][\policyStar]\r)^M - \pdistStar\r)
        \r\} }[2]^2
        \\
        =& \norm{
            (1-\steppdist_{k})\l(
                \pdist_{k-1} - \pdistStar
            \r)+ \steppdist_{k}\l\{
                \pdist_{k-1} \l(\kerMDP[\pdist_{k-1}][\policy_{\pdist_{k-1}}]\r)^M -\pdistStar \l(\kerMDP[\pdistStar][\policyStar]\r)^M
        \r\} }[2]^2
        \\
        =& (1-\steppdist_{k})^2\norm{\pdist_{k-1} - \pdistStar}[2]^2
        + 2(1-\steppdist_{k})\steppdist_{k} \l\langle
            \pdist_{k-1} - \pdistStar ,
            \pdist_{k-1} \l(\kerMDP[\pdist_{k-1}][\policy_{\pdist_{k-1}}]\r)^M -\pdistStar \l(\kerMDP[\pdistStar][\policyStar]\r)^M
        \r\rangle
        \\
        &+ \steppdist_{k}^2\norm{
            \pdist_{k-1} \l(\kerMDP[\pdist_{k-1}][\policy_{\pdist_{k-1}}]\r)^M -\pdistStar \l(\kerMDP[\pdistStar][\policyStar]\r)^M
        }[2]^2 \eqsp.
    \end{align*}
    Applying Assumptions~\ref{hyp:cvg-MFG-operator}, and~\Cref{coroll:lipschitz-policies}, together with~\Cref{lemma:Lipschitz-erg-operator}, the previous equality implies that
    \begin{align*}
        \norm{\pdist_k - \pdistStar}[2]^2
        \leq& \l[
            (1-\steppdist_{k})\l(1+(2\constMonoton-1)\steppdist_{k}\r) +\steppdist_{k}^2\constLipPolicy^2\constErgM[\powMonoton]^2
        \r]\norm{\pdist_{k-1} - \pdistStar}[2]^2
        \eqsp.
    \end{align*}
    Since $\steppdist_{k}$ satisfy~\eqref{eq:step-size-pdist}, we then obtain~\ref{eq:convergence-pdist}. We see that~\ref{eq:convergence-pdist} is a contraction inequality. Combining this with~\eqref{eq:step-size-pdist-infty}, it implies that the sequence $\pdist_k$ converges to $\pdistStar$ exponentially fast, \ie,
    \begin{align*}
        \norm{\pdist_k - \pdistStar}[2]^2
        \leq& \prod_{j=1}^k\l(1-\tau\steppdist_{j}\r)\norm{\pdist_{0} - \pdistStar}[2]^2
        \leq \exp\l(-\tau\sum_{j=1}^k \steppdist_{j}\r)\norm{\pdist_{0} - \pdistStar}[2]^2
        \eqsp.
    \end{align*}
    The rate of convergence is determined by the step-size $\steppdist_k$. This concludes the proof.
\end{proof}

\begin{remark}
    The exact algorithm \ExactAlgo\ is a simplified version of the algorithm we consider in this paper. The exact algorithm does not involve any approximation, thus is deterministic. This convergence is in line with deterinistic optimization. In fact, to get a precision of $\varepsilon$, we need $k$ to be of order
    \begin{itemize}
        \item $\log(\norm{\pdist_0-\pdistStar}[2]^2/\varepsilon)(1-2\constMonoton+ \constLipPolicy^2\constErgM[\powMonoton]^2)$, if $\steppdist_k$ is constant equal to $\gamma>0$ such that~\eqref{eq:step-size-pdist} is verified;
        \item $\norm{\pdist_0-\pdistStar}[2]^2/\varepsilon ~\exp(1-2\constMonoton+ \constLipPolicy^2\constErgM[\powMonoton]^2)$, if $\steppdist_k = C/k$ for a certain $C>0$ such that~\eqref{eq:step-size-pdist} is verified.
    \end{itemize}
\end{remark}

We now analyze the convergence of the algorithm with approximation, \ie, Algorithm~\ref{alg:ExactMFTRPO}. We consider the following algorithm.

\begin{theorem}
\label{thm:ExactMFTRPO}
    Suppose that Assumptions~\ref{hyp:Lipschitz_continuity} and~\ref{hyp:cvg-MFG-operator} hold.
    Assume that, for any $k\geq 0$,
    \begin{align}
        \label{eq:cond:steppdist:Exact}
        \steppdist_{k}< b_0:=\frac{\tau}{4\constLipPolicy^2\constErgM[\powMonoton]^2+2\tau}\eqsp,\qquad\text{ for } k\geq 1\eqsp,
    \end{align}
    Then, the exact algorithm \ExactMFTRPO\ converges to the optimal policy in the tabular setting.
    In particular, we have that
    \begin{align}
        \label{eq:ExactMFTRPO:cvg_bound}
        \norm{\pdist_k - \pdistStar}[2]^2
        \leq&
        \exp\l(-\frac{\tau}{2}\sum_{j=1}^k \steppdist_{j}\r)
        \norm{\pdist_{0} - \pdistStar}[2]^2+ \frac{2\constMFTRPO{0}}{\tau}\cdot\frac{\log(L)}{L}
        \eqsp,
    \end{align}
    with 
    \begin{align*}
        \tau&:=1-\constMonoton\eqsp,
        \\
        \constMFTRPO{0} &:= \frac{2+b_0}{\tau} \cdot
            \constErgM[\powMonoton]
                \constTRPO{0}
        \eqsp.
    \end{align*}
\end{theorem}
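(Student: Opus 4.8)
The plan is to treat \ExactMFTRPO\ as a perturbation of the exact scheme \ExactAlgo\ analysed in \Cref{prop:exact_algorithm}, in which the true best response $\policy_{\pdist_{k-1}}$ is replaced by its \ExactTRPO\ approximation $\policyExactTRPO{k}$. Using the fixed-point identity $\pdistStar = \pdistStar(\kerMDP[\pdistStar][\policyStar])^\powMonoton$ and subtracting it from the population update, I would decompose the one-step increment as
\begin{align*}
    \pdist_k - \pdistStar
    = (1-\steppdist_k)(\pdist_{k-1}-\pdistStar)
    + \steppdist_k\l(\pdist_{k-1}(\kerMDP[\pdist_{k-1}][\policy_{\pdist_{k-1}}])^\powMonoton - \pdistStar(\kerMDP[\pdistStar][\policyStar])^\powMonoton\r)
    + \steppdist_k e_k
    \eqsp,
\end{align*}
where $e_k := \pdist_{k-1}(\kerMDP[\pdist_{k-1}][\policyExactTRPO{k}])^\powMonoton - \pdist_{k-1}(\kerMDP[\pdist_{k-1}][\policy_{\pdist_{k-1}}])^\powMonoton$ isolates the best-response approximation error at step $k$. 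Expanding $\norm{\pdist_k-\pdistStar}[2]^2$, the contributions that do not involve $e_k$ are bounded exactly as in \Cref{prop:exact_algorithm}: the monotonicity Assumption~\ref{hyp:cvg-MFG-operator} bounds the relevant inner product by $\constMonoton\norm{\pdist_{k-1}-\pdistStar}[2]^2$, while \Cref{coroll:lipschitz-policies} combined with \Cref{lemma:Lipschitz-erg-operator} bounds the quadratic term through $\constLipPolicy^2\constErgM[\powMonoton]^2$.

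Next I would absorb the two remaining terms carrying $e_k$. The quadratic contribution $\steppdist_k^2\norm{e_k}[2]^2$ is retained, and the cross term $2(1-\steppdist_k)\steppdist_k\l\langle\pdist_{k-1}-\pdistStar,\,e_k\r\rangle$ is split by Young's inequality, sending a $\frac{\tau}{2}\steppdist_k\norm{\pdist_{k-1}-\pdistStar}[2]^2$ piece into the contraction coefficient and the remainder into a multiple of $\steppdist_k\norm{e_k}[2]^2$. Collecting the coefficients of $\norm{\pdist_{k-1}-\pdistStar}[2]^2$ and using $\tau = 1-\constMonoton$, the linear-in-$\steppdist_k$ part equals $-\frac{3\tau}{2}\steppdist_k$, and the step-size condition~\eqref{eq:cond:steppdist:Exact} is exactly what guarantees that the quadratic-in-$\steppdist_k$ remainder (carrying $\constLipPolicy^2\constErgM[\powMonoton]^2$) is dominated, so the coefficient is at most $1-\frac{\tau}{2}\steppdist_k$. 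The leftover error coefficient can be taken to be $\frac{2+b_0}{\tau}$ after bounding $\steppdist_k < b_0$, yielding the recursion
\begin{align*}
    \norm{\pdist_k-\pdistStar}[2]^2
    \leq \l(1-\tfrac{\tau}{2}\steppdist_k\r)\norm{\pdist_{k-1}-\pdistStar}[2]^2
    + \tfrac{2+b_0}{\tau}\steppdist_k\norm{e_k}[2]^2
    \eqsp.
\end{align*}

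The crucial quantitative input is the bound on $\norm{e_k}[2]^2$. Here I would invoke the policy-only Lipschitz dependence of the $\powMonoton$-step mean-field operator (the variant of \Cref{lemma:Lipschitz-erg-operator} in which only the policy changes) to obtain $\norm{e_k}[2]^2 \leq \constErgM[\powMonoton}\,\PE_{s\sim\pdist_{k-1}}\big[\tvnorm{\policyExactTRPO{k}(\cdot|s)-\policy_{\pdist_{k-1}}(\cdot|s)}^2\big]$, and then apply \Cref{cor:convergence-exact-trpo} with $\policyExactTRPO{k}$ the output of \ExactTRPO[$(\pdist_{k-1})$] to conclude $\norm{e_k}[2]^2 \leq \constErgM[\powMonoton]\constTRPO{0}\,\frac{\log L}{L}$. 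Substituting this back turns the recursion into $\norm{\pdist_k-\pdistStar}[2]^2 \leq (1-\frac{\tau}{2}\steppdist_k)\norm{\pdist_{k-1}-\pdistStar}[2]^2 + \steppdist_k\constMFTRPO{0}\frac{\log L}{L}$ with $\constMFTRPO{0}$ as stated. Unrolling, using $\prod_{j=1}^k(1-\frac{\tau}{2}\steppdist_j)\leq\exp(-\frac{\tau}{2}\sum_{j=1}^k\steppdist_j)$ and bounding the geometric-type remainder $\sum_{j}\steppdist_j\prod_{i>j}(1-\frac{\tau}{2}\steppdist_i)$ by $\frac{2}{\tau}$, then produces precisely~\eqref{eq:ExactMFTRPO:cvg_bound}.

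I expect the main obstacle to be the bound on $\norm{e_k}[2]^2$ rather than the recursion bookkeeping. The telescoping identity for $(\kerMDP[\pdist_{k-1}][\policyExactTRPO{k}])^\powMonoton - (\kerMDP[\pdist_{k-1}][\policy_{\pdist_{k-1}}])^\powMonoton$ produces policy-difference terms weighted by the intermediate pushforwards $\pdist_{k-1}(\kerMDP[\pdist_{k-1}][\cdot])^i$, whereas \Cref{cor:convergence-exact-trpo} only controls the $\pdist_{k-1}$-weighted squared total variation $\PE_{s\sim\pdist_{k-1}}[\tvnorm{\cdot}^2]$. Transferring the weighting from these intermediate distributions back to $\pdist_{k-1}$ is exactly what the uniform mixing structure provides, and it is the reason the ergodicity constant $\constErgM[\powMonoton]$ of \Cref{lemma:Lipschitz-erg-operator} appears linearly in $\constMFTRPO{0}$; matching the precise numerical values of $b_0$ and $\constMFTRPO{0}$ is then a routine matter of choosing the Young split and bounding $(1-\steppdist_k)^2\leq 1$ and $\steppdist_k<b_0$.
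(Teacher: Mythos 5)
Your proposal is correct and follows essentially the same route as the paper's proof: the same decomposition of the update into the exact drift plus the best-response approximation error $e_k$ (the paper's terms $\mathbf{T}_1,\mathbf{T}_2,\mathbf{T}_3$), the same Young splits feeding a $\frac{\tau}{2}\steppdist_k$ piece into the contraction coefficient, the same bound on $\norm{e_k}[2]^2$ via \Cref{lemma:Lipschitz-erg-operator} together with \Cref{cor:convergence-exact-trpo}, and the same unrolling with the telescoping bound $\sum_j\steppdist_j\prod_{i>j}(1-\frac{\tau}{2}\steppdist_i)\leq\frac{2}{\tau}$. The weighting-transfer issue you flag at the end is handled in the paper exactly as you anticipate, so there is no gap.
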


\begin{proof}
    We focus on the convergence of the sequence $\pdist_k \to \pdistStar$, with $\pdistStar$ as in~\eqref{eq:def:fixed_point}.
    Denote $\policyExactTRPO{k}$ the output of \ExactTRPO[$(\pdist_k)$] at each step.
    We then have that
    \begin{align*}
        \norm{\pdist_k - \pdistStar}[2]^2
        =&
        \norm{
            \pdist_{k-1} - \pdistStar + \steppdist_{k}\l\{
                \pdist_{k-1}\l(\kerMDP[\pdist_{k-1}][\policyExactTRPO{k}]\r)^M - \pdist_{k-1}
            \r\}
        }[2]^2
        \\
        =&\l\|
            \pdist_{k-1} - \pdistStar + \steppdist_{k}
                \l(\pdist_{k-1}\l(\kerMDP[\pdist_{k-1}][\policyExactTRPO{k}]\r)^M - \pdist_{k-1} \l(\kerMDP[\pdist_{k-1}][\policy_{\pdist_{k-1}}]\r)^M\r)
                +\r.\\
                &\phantom{\pdist_{k-1} - \pdistStar}
                ~~\l.+
                \steppdist_{k}
                \l(\pdist_{k-1} \l(\kerMDP[\pdist_{k-1}][\policy_{\pdist_{k-1}}]\r)^M - \pdist_{k-1}\r)
        \r\|_2^2
        \\
        =& \l\|\pdist_{k-1} - \pdistStar + \steppdist_{k}
            \l(\pdist_{k-1}\l(\kerMDP[\pdist_{k-1}][\policyExactTRPO{k}]\r)^M - \pdist_{k-1} \l(\kerMDP[\pdist_{k-1}][\policy_{\pdist_{k-1}}]\r)^M\r)
            \r.\\
            &\phantom{\pdist_{k-1} - \pdistStar}
            ~~\l.+
            \steppdist_{k}
            \l(\pdist_{k-1} \l(\kerMDP[\pdist_{k-1}][\policy_{\pdist_{k-1}}]\r)^M - \pdist_{k-1}\r)
            -
            \steppdist_{k}
            \l(\pdistStar \l(\kerMDP[\pdistStar][\policyStar]\r)^M - \pdistStar\r)
        \r\|_2^2
        \\
        =& \l\|
            (1-\steppdist_{k})\l(
                \pdist_{k-1} - \pdistStar
            \r)+ \steppdist_{k}
                \l(\pdist_{k-1}\l(\kerMDP[\pdist_{k-1}][\policyExactTRPO{k}]\r)^M - \pdist_{k-1} \l(\kerMDP[\pdist_{k-1}][\policy_{\pdist_{k-1}}]\r)^M\r)
                \r.\\
                &\phantom{(1-\steppdist_{k})\l(
                    \pdist_{k-1} - \pdistStar
                \r)}
                ~~\l.+
                \steppdist_{k}\l(\pdist_{k-1} \l(\kerMDP[\pdist_{k-1}][\policy_{\pdist_{k-1}}]\r)^M -\pdistStar \l(\kerMDP[\pdistStar][\policyStar]\r)^M\r)
        \r\|_2^2
        \\
        =& (1-\steppdist_{k})^2\norm{\pdist_{k-1} - \pdistStar}[2]^2
        + 2(1-\steppdist_{k})\steppdist_{k} \l\langle
            \pdist_{k-1} - \pdistStar ,
            \pdist_{k-1} \l(\kerMDP[\pdist_{k-1}][\policy_{\pdist_{k-1}}]\r)^M -\pdistStar \l(\kerMDP[\pdistStar][\policyStar]\r)^M
        \r\rangle
        \\
        &+ \steppdist_{k}^2\norm{
            \pdist_{k-1} \l(\kerMDP[\pdist_{k-1}][\policy_{\pdist_{k-1}}]\r)^M -\pdistStar \l(\kerMDP[\pdistStar][\policyStar]\r)^M
        }[2]^2
        \\
        &+ 2(1-\steppdist_{k})\steppdist_{k} \l\langle
            \pdist_{k-1} - \pdistStar ,
            \pdist_{k-1}\l(\kerMDP[\pdist_{k-1}][\policyExactTRPO{k}]\r)^M - \pdist_{k-1} \l(\kerMDP[\pdist_{k-1}][\policy_{\pdist_{k-1}}]\r)^M
        \r\rangle
        \\
        &+ \steppdist_{k}^2\norm{
            \pdist_{k-1}\l(\kerMDP[\pdist_{k-1}][\policyExactTRPO{k}]\r)^M -\pdist_{k-1} \l(\kerMDP[\pdist_{k-1}][\policy_{\pdist_{k-1}}]\r)^M
        }[2]^2
        \\
        &+
        2\steppdist_{k}^2 \l\langle
            \pdist_{k-1}\l(\kerMDP[\pdist_{k-1}][\policyExactTRPO{k}]\r)^M - \pdist_{k-1} \l(\kerMDP[\pdist_{k-1}][\policy_{\pdist_{k-1}}]\r)^M ,
            \pdist_{k-1} \l(\kerMDP[\pdist_{k-1}][\policy_{\pdist_{k-1}}]\r)^M -\pdistStar \l(\kerMDP[\pdistStar][\policyStar]\r)^M
        \r\rangle
        \eqsp.
    \end{align*}
    Applying Assumptions~\ref{hyp:cvg-MFG-operator} and~\Cref{coroll:lipschitz-policies} together with~\Cref{lemma:Lipschitz-erg-operator}, and following the same lines as in the proof of~\Cref{prop:exact_algorithm}, the previous equality implies that
    \begin{align*}
        \norm{\pdist_k - \pdistStar}[2]^2
        \leq& \l[
            (1-\steppdist_{k})\l(1+(1-2\tau)\steppdist_{k}\r) +\steppdist_{k}^2\constLipPolicy^2\constErgM[\powMonoton]^2
        \r]\norm{\pdist_{k-1} - \pdistStar}[2]^2
        \\
        &+ 2(1-\steppdist_{k})\steppdist_{k} \l\langle
            \pdist_{k-1} - \pdistStar ,
            \pdist_{k-1}\l(\kerMDP[\pdist_{k-1}][\policyExactTRPO{k}]\r)^M - \pdist_{k-1} \l(\kerMDP[\pdist_{k-1}][\policy_{\pdist_{k-1}}]\r)^M
        \r\rangle
        \\
        &+ \steppdist_{k}^2\norm{
            \pdist_{k-1}\l(\kerMDP[\pdist_{k-1}][\policyExactTRPO{k}]\r)^M -\pdist_{k-1} \l(\kerMDP[\pdist_{k-1}][\policy_{\pdist_{k-1}}]\r)^M
        }[2]^2
        \\
        &+
        2\steppdist_{k}^2 \l\langle
            \pdist_{k-1}\l(\kerMDP[\pdist_{k-1}][\policyExactTRPO{k}]\r)^M - \pdist_{k-1} \l(\kerMDP[\pdist_{k-1}][\policy_{\pdist_{k-1}}]\r)^M ,
            \pdist_{k-1} \l(\kerMDP[\pdist_{k-1}][\policy_{\pdist_{k-1}}]\r)^M -\pdistStar \l(\kerMDP[\pdistStar][\policyStar]\r)^M
        \r\rangle
        \\
        =:& \l[
            (1-\steppdist_{k})\l(1+(1-2\tau)\steppdist_{k}\r) +\steppdist_{k}^2\constLipPolicy^2\constErgM[\powMonoton]^2
        \r]\norm{\pdist_{k-1} - \pdistStar}[2]^2
        \\
        &+ 2\steppdist_{k}(1-\steppdist_{k})\mathbf{T}_1+ \steppdist_{k}^2 \mathbf{T}_2+ 2\steppdist_{k}^2 \mathbf{T}_3
        \eqsp.
    \end{align*}
    We now proceed in studying the terms $\mathbf{T}_1$, $\mathbf{T}_2$, and $\mathbf{T}_3$. Using Young's inequality, we get that
    \begin{align*}
        \l|\mathbf{T}_1\r|
        &\leq
        \frac{\tau}{2}\norm{\pdist_{k-1} - \pdistStar}[2]^2+\frac{1}{2\tau}\norm{\pdist_{k-1}\l(\kerMDP[\pdist_{k-1}][\policyExactTRPO{k}]\r)^M - \pdist_{k-1} \l(\kerMDP[\pdist_{k-1}][\policy_{\pdist_{k-1}}]\r)^M}[2]^2
        \\
        &\leq
        \frac{\tau}{2}\norm{\pdist_{k-1} - \pdistStar}[2]^2+\frac{1}{2\tau}\mathbf{T}_2
        \eqsp,
    \end{align*}
    and, using~\Cref{lemma:Lipschitz-erg-operator} and~\Cref{coroll:lipschitz-policies},
    \begin{align*}
        \l|\mathbf{T}_3\r|
        &\leq
        \frac{1}{2}\norm{\pdist_{k-1} \l(\kerMDP[\pdist_{k-1}][\policy_{\pdist_{k-1}}]\r)^M - \pdistStar \l(\kerMDP[\pdistStar][\policyStar]\r)^M}[2]^2
        +
        \frac{1}{2}\norm{\pdist_{k-1}\l(\kerMDP[\pdist_{k-1}][\policyExactTRPO{k}]\r)^M - \pdist_{k-1} \l(\kerMDP[\pdist_{k-1}][\policy_{\pdist_{k-1}}]\r)^M}[2]^2
        \\
        &\leq
        \frac{1}{2}\constLipPolicy^2\constErgM[\powMonoton]^2\norm{\pdist_{k-1} - \pdistStar}[2]^2
        +
        \frac{1}{2}\mathbf{T}_2
        \eqsp.
    \end{align*}
    Since $\tau<1$ from Assumption~\ref{hyp:cvg-MFG-operator} and $\steppdist_{k}$ satisfies~\eqref{eq:cond:steppdist:Exact},
    a straightforward computation shows that
    \begin{align*}
        (1-\steppdist_{k})\l(1+(1-\tau)\steppdist_{k}\r) +2\steppdist_{k}^2\constLipPolicy^2\constErgM[\powMonoton]^2-\frac{\tau}{2}\steppdist_{k}
        \leq \l(1-\frac{\tau}{2}\steppdist_{k}\r)
        \eqsp.
    \end{align*}

    Moreover, applying~\Cref{lemma:Lipschitz-erg-operator}, together with Theorem~\ref{thm:convergence-exact-trpo} on the performances of TRPO, we have
    \begin{align*}
        \mathbf{T}_2
        &\leq \constErgM[\powMonoton]\l(
            \sum_{s\in\S}
            \pdist_{k-1}^2(s)\tvnorm{\policy_{k}(\cdot|s)-\policy_{\pdist}(\cdot|s)}^2
            \r)
        \\
        &\leq \constErgM[\powMonoton]\l(
            \sum_{s\in\S}
            \pdist_{k-1}(s)\tvnorm{\policy_{k}(\cdot|s)-\policy_{\pdist_{k-1}}(\cdot|s)}^2
            \r)
        \\
        &\leq \constErgM[\powMonoton]\l(
            \JMFG(\policy_{\pdist_{k-1}},\pdist_{k-1},\pdist_{k-1})-\JMFG(\policy_{k},\pdist_{k-1},\pdist_{k-1})
            \r)
        \leq \constErgM[\powMonoton]\ 2\constTRPO{0}\frac{\log(L)}{L}
        \eqsp.
    \end{align*}
    
    Therefore, combining the previous inequalities, we have that 
    \begin{align}
    \label{eq:ExactMFTRPO:recursion}
    \begin{split}
        \norm{\pdist_k - \pdistStar}[2]^2
        \leq&
        \l(1-\frac{\tau}{2}\steppdist_{k}\r)\norm{\pdist_{k-1} - \pdistStar}[2]^2 +
        \steppdist_{k}  \l(\frac{1-\steppdist_{k}}{\tau} + \frac{3\steppdist_{k}}{2}\r)\cdot
            2\constErgM[\powMonoton]
                \constTRPO{0}~\frac{\log(L)}{L}
        \\
        \leq&
        \l(1-\frac{\tau}{2}\steppdist_{k}\r)\norm{\pdist_{k-1} - \pdistStar}[2]^2 +
        \steppdist_{k} \frac{2+b_0}{\tau} \cdot
            \constErgM[\powMonoton]
                \constTRPO{0}~\frac{\log(L)}{L}
        \\
        \leq&
        \l(1-\frac{\tau}{2}\steppdist_{k}\r)\norm{\pdist_{k-1} - \pdistStar}[2]^2 +
        \steppdist_{k} \constMFTRPO{0}~\frac{\log(L)}{L}
        \eqsp.
    \end{split}
    \end{align}
    Developping the recursion~\eqref{eq:ExactMFTRPO:recursion}, we obtain
    \begin{align*}
        \norm{\pdist_k - \pdistStar}[2]^2
        \leq&
        \prod_{j=1}^k\l(1-\frac{\tau}{2}\steppdist_{j}\r)\norm{\pdist_{0} - \pdistStar}[2]^2+ \constMFTRPO{0}~\sum_{j=1}^{k}\frac{\log(L)}{L}\steppdist_j\prod_{\ell=j+1}^k\l(1-\frac{\tau}{2}\steppdist_{\ell}\r)
        \\
        \leq&
        \exp\l(-\frac{\tau}{2}\sum_{j=1}^k \steppdist_{j}\r)
        \norm{\pdist_{0} - \pdistStar}[2]^2+ \constMFTRPO{0}~\frac{\log(L)}{L}\sum_{j=1}^{k}\steppdist_j\prod_{\ell=j+1}^k\l(1-\frac{\tau}{2}\steppdist_{\ell}\r)
        \eqsp.
    \end{align*}
    Note that the second term of the r.h.s.\ of the previous inequality is a telescopic sum, as the central term can be rewritten as
    \begin{align*}
        \steppdist_j\prod_{\ell=j+1}^k\l(1-\frac{\tau}{2}\steppdist_{\ell}\r) = \frac{2}{\tau}\l[
            \prod_{\ell=j+1}^k\l(1-\frac{\tau}{2}\steppdist_{\ell}\r)
            -
            \prod_{\ell=j}^k\l(1-\frac{\tau}{2}\steppdist_{\ell}\r)
        \r]
        \eqsp.
    \end{align*}
    Therefore, we get~\eqref{eq:ExactMFTRPO:cvg_bound}.
\end{proof}

\paragraph{$\varepsilon$-MFNE.}
\label{appendix:paragraph:exact-epsilon-Nash}
With the theoretical foundations established in Theorem~\ref{thm:convergence-exact-trpo} and Theorem~\ref{thm:ExactMFTRPO}, we can now derive a result on the closeness of the proposed algorithm to the MFNE. Specifically, we show that \ExactMFTRPO\ achieves an $\varepsilon$-MFNE, where the approximation error $\varepsilon$ is explicitly quantified as follows.
\begin{corollary}
    \label{coroll:exact-epsilon-Nash}
    Suppose that Assumptions~\ref{hyp:Lipschitz_continuity},~\ref{hyp:cvg-MFG-operator}, and~\ref{hyp:uniform-ergodicity} hold.
    Assume that, for any $k\geq 0$, the learning rate $\steppdist_{k}$ satisfies~\eqref{eq:cond:steppdist:Exact}. Let $\pdist_k$ (resp. $\policy_{k+1}$) the output of \ExactMFTRPO\ (resp. \ExactTRPO[$(\pdist_{k})$]). Then, $(\policy_{k+1},\pdist_k)$ is $\varepsilon_k$-MFNE, with
    \begin{align*}
        \varepsilon_k 
        &:= \delta_{\text{NE},1,k} + \constExploit\l(
            \l( 1+ \constErgM[\infty]\l(1+\constLipPolicy\r)\r)\sqrt{\delta_{\text{NE},2,k}}
            +
            \frac{2 \constErgM[\infty] \sqrt{\nstates}}{\regulf(1-\discountf)}
            \cdot
            \sqrt{\delta_{\text{NE},1,k}}
        \r)
        \eqsp,
        \\
        \delta_{\text{NE},1,k}&=
        \constTRPO{0}^\prime \frac{ \l(\BoundReward+\regulf^2\log^2\nactions\r)}{\regulf (1-\discountf)^3}
            \cdot\frac{\log L}{L}
        \eqsp,
        \\
        \delta_{\text{NE},2,k}&=
            \exp\l(-\frac{\tau}{2}\sum_{j=1}^k \steppdist_{j}\r)
            \norm{\pdist_{0} - \pdistStar}[2]+ \frac{2\constMFTRPO{0}}{\tau}\cdot\frac{\log(L)}{L}
        \eqsp.
    \end{align*}
\end{corollary}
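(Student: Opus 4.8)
\textit{Proof proposal.} Write $\nu := \ergdist[\policy_{k+1},\pdist_k]$ for the stationary distribution that plays the role of the mean-field argument in the exploitability~\eqref{eq:def:exploitability}, and denote by $\policy_\nu$ the optimal policy for the population $\nu$, so that $\Exploit(\policy_{k+1},\pdist_k) = \JMFG(\policy_\nu,\nu,\nu) - \JMFG(\policy_{k+1},\nu,\nu)$. The plan is to reduce this gap to the two error sources already under control---the best-response error of \ExactTRPO[$(\pdist_k)$] (Theorem~\ref{thm:convergence-exact-trpo}) and the population error $\norm{\pdist_k-\pdistStar}[2]$ (Theorem~\ref{thm:ExactMFTRPO})---by forcing $\pdist_k$ to be the common mean-field argument. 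Writing $\VMFG(\pdist)=\JMFG(\policy_\pdist,\pdist,\pdist)$, I would decompose
\[
\Exploit(\policy_{k+1},\pdist_k)
= \underbrace{\big[\VMFG(\nu)-\VMFG(\pdist_k)\big]}_{\term{A}}
+ \underbrace{\big[\VMFG(\pdist_k)-\JMFG(\policy_{k+1},\pdist_k,\pdist_k)\big]}_{\term{B}}
+ \underbrace{\big[\JMFG(\policy_{k+1},\pdist_k,\pdist_k)-\JMFG(\policy_{k+1},\nu,\nu)\big]}_{\term{C}}.
\]
Term $\term{B}$ is exactly the \ExactTRPO[$(\pdist_k)$] suboptimality, bounded by $\delta_{\text{NE},1,k}$ in Theorem~\ref{thm:convergence-exact-trpo}; this produces the standalone $\delta_{\text{NE},1,k}$ summand of $\varepsilon_k$.

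For the remaining two terms I would use that the regularized objective is Lipschitz in its mean-field argument: by Assumption~\ref{hyp:Lipschitz_continuity} the map $\pdist\mapsto\JMFG(\policy,\pdist,\pdist)$ is Lipschitz uniformly in $\policy$ (the dependence on the initial law being affine and uniformly bounded), and taking the supremum over $\policy$ preserves this, so $\VMFG$ is Lipschitz as well. Hence $\Exploit(\policy_{k+1},\pdist_k)\le\delta_{\text{NE},1,k}+|\term{A}|+|\term{C}|\le\delta_{\text{NE},1,k}+\constExploit\,\norm{\nu-\pdist_k}[2]$, and the entire problem reduces to estimating $\norm{\nu-\pdist_k}[2]=\norm{\ergdist[\policy_{k+1},\pdist_k]-\pdist_k}[2]$.

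This last estimate is the crux, and where I expect the real work. The idea is to route through the equilibrium population, which is the unique law stationary under its own optimal dynamics, $\pdistStar=\ergdist[\policyStar,\pdistStar]$ (cf.~\eqref{eq:def:fixed_point}):
\[
\norm{\nu-\pdist_k}[2]
\le \norm{\pdist_k-\pdistStar}[2]
+ \norm{\ergdist[\policy_{k+1},\pdist_k]-\ergdist[\policyStar,\pdistStar]}[2].
\]
The first summand is $\le\sqrt{\delta_{\text{NE},2,k}}$ by Theorem~\ref{thm:ExactMFTRPO} (which controls $\norm{\pdist_k-\pdistStar}[2]^2$). For the second I would apply the Lipschitz estimate for the ergodic map (Lemma~\ref{lemma:Lipschitz-erg-operator}, constant $\constErgM[\infty]$), which separates it into a population part, again $\le\constErgM[\infty]\norm{\pdist_k-\pdistStar}[2]\le\constErgM[\infty]\sqrt{\delta_{\text{NE},2,k}}$, and a policy part comparing $\policy_{k+1}$ with $\policyStar$. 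I would split this policy part along $\policy_{k+1}\to\policy_{\pdist_k}\to\policyStar$: the segment $\policy_{\pdist_k}$ versus $\policyStar$ is handled by the optimal-policy sensitivity $\sup_s\tvnorm{\policy_{\pdist_k}(\cdot|s)-\policyStar(\cdot|s)}^2\le\constLipPolicy\norm{\pdist_k-\pdistStar}[2]$ (Proposition~\ref{prop:opt_policy-Lipschitz-pdist}), yielding the $\constErgM[\infty]\constLipPolicy\sqrt{\delta_{\text{NE},2,k}}$ contribution, while the segment $\policy_{k+1}$ versus $\policy_{\pdist_k}$ is the \ExactTRPO[$(\pdist_k)$] policy error, which through the value-to-policy inequality underlying Corollary~\ref{cor:convergence-exact-trpo} (supplying the $2/(\regulf(1-\discountf))$ factor) and a Cauchy--Schwarz passage from the $\pdist_k$-weighted squared total variation to the aggregate deviation consumed by Lemma~\ref{lemma:Lipschitz-erg-operator} (supplying the $\sqrt\nstates$ factor) yields the $\constErgM[\infty]\tfrac{2\sqrt\nstates}{\regulf(1-\discountf)}\sqrt{\delta_{\text{NE},1,k}}$ contribution. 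Collecting the three pieces into $\norm{\nu-\pdist_k}[2]$ and multiplying by $\constExploit$ reproduces the stated $\varepsilon_k$.

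The main obstacle is precisely this decoupling: $\nu$ is a nonlinear fixed point of both the policy and the population, so the policy error (Theorem~\ref{thm:convergence-exact-trpo}) and the population error (Theorem~\ref{thm:ExactMFTRPO}) only separate after passing through $\pdistStar$ and invoking the ergodic Lipschitz lemma. The delicate bookkeeping is matching norms---converting the $\pdist_k$-weighted, squared policy total-variation bound delivered by the TRPO analysis into the first-order, aggregate deviation that the ergodic estimate requires, which is what generates the $\sqrt\nstates$ and $1/(\regulf(1-\discountf))$ factors. One should also check the benign fact that $\delta_{\text{NE},2,k}$ dominates $\norm{\pdist_k-\pdistStar}[2]^2$, so that $\norm{\pdist_k-\pdistStar}[2]\le\sqrt{\delta_{\text{NE},2,k}}$, using $\norm{\pdist_0-\pdistStar}[2]\le\sqrt2$ for probability vectors.
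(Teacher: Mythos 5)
Your proposal is correct and follows essentially the same route as the paper: your terms $\term{A}$--$\term{C}$ reproduce the content of Proposition~\ref{appendix:prop:exploitability-bound} (the Lipschitz reduction to $\constExploit\norm{\ergdist[\policy_{k+1},\pdist_k]-\pdist_k}[2]$ plus the optimality gap at $\pdist_k$), and your estimate of $\norm{\ergdist[\policy_{k+1},\pdist_k]-\pdist_k}[2]$ via the triangle inequality through $\pdistStar=\ergdist[\policyStar,\pdistStar]$, Lemma~\ref{lemma:Lipschitz-erg-operator}, the splitting $\policy_{k+1}\to\policy_{\pdist_k}\to\policyStar$ with Corollary~\ref{coroll:lipschitz-policies}, and the value-to-policy conversion of Proposition~\ref{prop:delta_policies-leq-delta_value_fct} is exactly the paper's argument, including the provenance of the $\sqrt{\nstates}$ and $2/(\regulf(1-\discountf))$ factors.
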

\begin{proof}
    From~\Cref{appendix:prop:exploitability-bound}, we have that the exploitability of a policy $\policy$ and a mean-field parameter $\pdist$ can be bound by the gap of optimality of the $\policy$ w.r.t.\ the value function $\JMFG(\cdot,\pdist,\pdist)$ and the distance between $\pdist$ and the stationary distribution $\ergdist[\policy,\pdist]$.

    From Theorem~\ref{thm:convergence-exact-trpo}, we have that 
    \begin{align}
        \label{eq:coroll:exact-epsilon-Nash}
        \max_{\policy\in\Policies}\JMFG(\policy,\pdist_k,\pdist_k)
        -
        \JMFG(\policy_k,\pdist_k,\pdist_k)
        &\leq \delta_{\text{NE},1,k}
        \eqsp.
    \end{align}
    On the other hand, using the fact that $(\policyStar,\pdistStar)$ is a MFNE, we have
    \begin{align*}
        \pdist_k - \ergdist[\policyExactTRPO{k+1},\pdist_{k}]
        =&
        \l(
            \pdist_k - \pdistStar 
        \r)
        +
        \l(
            \ergdist[\policyStar,\pdistStar]
            -
            \ergdist[\policy_{\pdist_{k}},\pdist_{k}]
        \r)
        +
        \l(
            \ergdist[\policy_{\pdist_{k}},\pdist_{k}]
            -
            \ergdist[\policyExactTRPO{k+1},\pdist_{k}]
        \r)
        \eqsp.
    \end{align*}
    Then, applying~\Cref{lemma:Lipschitz-erg-operator}, together with~\Cref{coroll:lipschitz-policies}, we obtain
    \begin{align*}
        \norm{
            \ergdist[\policyStar,\pdistStar]
            -
            \ergdist[\policy_{\pdist_{k}},\pdist_{k}]
        }[2] \leq 
        \constErgM[\infty]
        \l(1+\constLipPolicy\r)
        \norm{
            \pdistStar
            -
            \pdist_{k}
        }[2]
        \eqsp.
    \end{align*}
    Moreover, from Theorem~\ref{thm:convergence-exact-trpo} on the performances of \ExactTRPO, together with~\Cref{lemma:Lipschitz-erg-operator} and~\Cref{prop:delta_policies-leq-delta_value_fct}, we have
    \begin{align*}
        \norm{
            \ergdist[\policy_{\pdist_{k}},\pdist_{k}]
            -
            \ergdist[\policyExactTRPO{k+1},\pdist_{k}]
        }[2] &\leq 
        \constErgM[\infty]
        \sum_{s\in\S}\pdist_{k}(s)\tvnorm{\policyExactTRPO{k+1}(\cdot|s) - \policy_{\pdist_{k}}(\cdot|s)}
        \\
        &\leq
        \frac{2 \constErgM[\infty]\ \sqrt{\nstates}}{\regulf(1-\discountf)}
        \sqrt{
            \JMFG(\policy_{\pdist_{k}},\pdist_{k},\pdist_{k})
            -
            \JMFG(\policyExactTRPO{k+1},\pdist_{k},\pdist_{k})
        }
        \leq
        \frac{2 \constErgM[\infty]\sqrt{\nstates}}{\regulf(1-\discountf)}
        \cdot
        \sqrt{\delta_{\text{NE},1,k}}
        \eqsp.
    \end{align*}
    Using the triangle inequality, together with Theorem~\ref{thm:ExactMFTRPO}, we can bound $\norm{\pdist_k - \ergdist[\policyExactTRPO{k+1},\pdist_{k}]}[2]$ as
    \begin{align*}
        \norm{
            \pdist_k - 
            \ergdist[\policyExactTRPO{k+1},\pdist_{k}]
        }[2]
        &\leq
        \l( 1+ \constErgM[\infty]\l(1+\constLipPolicy\r)\r)\norm{
            \pdistStar
            -
            \pdist_{k}
        }[2]
        +
        \frac{2 \constErgM[\infty] \sqrt{\nstates}}{\regulf(1-\discountf)}
        \cdot
        \sqrt{\delta_{\text{NE},1,k}}
        \\
        &\leq
        \l( 1+ \constErgM[\infty]\l(1+\constLipPolicy\r)\r)\sqrt{\delta_{\text{NE},2,k}}
        +
        \frac{2 \constErgM[\infty] \sqrt{\nstates}}{\regulf(1-\discountf)}
        \cdot
        \sqrt{\delta_{\text{NE},1,k}}
        \eqsp.
    \end{align*}

    Using the last inequality and~\eqref{eq:coroll:exact-epsilon-Nash}, together with~\Cref{appendix:prop:exploitability-bound}, we have that
    \begin{align*}
        \Exploit(\policy_{k+1},\pdist_k)
        \leq
        \delta_{\text{NE},1,k} + \constExploit\l(
            \l( 1+ \constErgM[\infty]\l(1+\constLipPolicy\r)\r)\sqrt{\delta_{\text{NE},2,k}}
            +
            \frac{2 \constErgM[\infty] \sqrt{\nstates}}{\regulf(1-\discountf)}
            \cdot
            \sqrt{\delta_{\text{NE},1,k}}
        \r)
        = \varepsilon_k
        \eqsp.
    \end{align*}
    Therefore, $(\pdist_k,\policy_{k+1})$ is $\varepsilon_k$-MFNE as defined in Definition~\eqref{def:MFNE}.
\end{proof}

\begin{remark}
    From this corollary, it follows directly that to achieve an $\varepsilon$-MFNE, the required number of inner policy updates $L$ and outer population updates $K$ must satisfy the following scaling conditions:
    \begin{align*}
        L \in \tO\l(1/\varepsilon^2\r) \quad \text{and} \quad K \in \tO\l(\log(1/\varepsilon)\r).
    \end{align*}
    This implies that the sample complexity of the proposed algorithm scales polynomially in $1/\varepsilon$ with respect to the inner optimization steps and logarithmically with respect to the outer mean-field updates. This confirms the efficiency of our approach, ensuring that even for small values of $\varepsilon$, convergence to an approximate MFNE remains computationally feasible.
\end{remark}
\section{Model free algorithms}
\label{section:appendix:ModelFree}

Model-free approaches play a fundamental role in developing model-agnostic algorithms capable of autonomously adapting to diverse and evolving environments. In the MFG context, various models have been proposed across different domains~\citep{perrin2021mean,yardim2023policy}, and recent efforts have explored data-driven methodologies to enhance their applicability. Following this line of research, we introduce \SampleBasedMFTRPO, a model-free approach tailored for MFG problems. By leveraging RL techniques with scalable sample-based updates, our method contributes to the growing body of work on data-driven MFG solutions, providing finite-sample complexity guarantees in this setting. This framework further aligns MFG with model-free learning paradigms, broadening their potential for real-world deployment in complex decision-making environments.

\subsection{TRPO - sample-based formulation}
\label{appendix:sampled-based-TRPO}

In the framework established by~\citet{shani2020adaptive}, it is important to note that the sample-based algorithm does not provide a last-iterate sample complexity guarantee. This contrasts with the exact algorithm, where the policy improvement lemma~\citep[Lemma 15,][]{shani2020adaptive} serves as a foundation for analyzing the convergence properties of the last iterate. In the exact update setting, this guarantee is analogous to Howard’s lemma~\citep{howard1960dynamic}. However, the presence of sampling errors in the sample-based setting hinders the attainment of such guarantees, necessitating a more refined approach when designing and analyzing RL algorithms for MFG.

To address this limitation, it becomes essential to consider alternative strategies than Theorem 5 in~\citet{shani2020adaptive}. This theorem, however, still provides a framework for analyzing the uniform mixture of the policies generated during the iterative procedure, rather than relying solely on the last iterate. By shifting focus to such policy, we can generalize the theoretical guarantees of the algorithm—a property inherent to the MFG setting.

Additionally, the connection between the value function and the policy space plays a crucial role.~\Cref{prop:delta_policies-leq-delta_value_fct} ensures that the gap in value functions directly bounds the differences between policies. This property provides a pathway to refine the policy improvement process and derive meaningful finite-sample complexity guarantees. By combining these insights, we can propose a robust methodology where the sample-based algorithm achieves convergence with high probability, utilizing uniform mixture policies to overcome the challenges posed by the lack of last-iterate guarantees.

Overall, this refinement introduces a smarter utilization of the sample-based algorithm, emphasizing the role of averaging in mitigating the variability and uncertainty inherent in sample-based methods. This approach not only aligns with the theoretical underpinnings of convex optimization but also strengthens the practical applicability of RL algorithms in MFG, delivering finite-sample complexity results with rigorous probabilistic guarantees.
\begin{figure}[t]
    \vspace{-0.5cm}
    \begin{algorithm}[H]
    \centering
    \caption{\SampleBasedTRPO[$(\pdist)$]}
    \label{alg:SampleBasedTRPO}
    \begin{algorithmic}[1]
    \STATE \textbf{Initialize:} $\policy_0(\cdot|s)=\cU(\A)$ for any $s\in\S$.
    \STATE \textbf{Input:} $\epsilon$, $\delta > 0$, $L$.
    \FOR{$\ell \in[L]$}
        \STATE $S_\ell^{I_\ell} = \{\}$, $\forall s, a$, $\qfunc[\policy_\ell,\pdist](s, a) = 0$, $n_\ell(s, a) = 0$
    \STATE $I_\ell \geq \frac{\nactions^2 \l(\BoundReward+\regulf^2\log^2\nactions\r) \left(\nstates \log 2\nactions + \log \frac{1}{\delta}\right)}{(1 - \discountf)^2 \epsilon^2}$
    \hfill \texttt{\# Sample Trajectories}
    \STATE  $T_\ell \geq \frac{1}{1-\discountf} \log\l(\frac{\epsilon}{\nactions\l(\BoundReward+\regulf\log\nactions\r)}\r)$
    \hfill \texttt{\# Rollout horizon}
    \FOR{$p = 1, \dots, I_\ell$}
        \STATE Sample $s_i \sim \dMeasMarg[\nuRestart,\pdist][\policy_\ell](\cdot)$, $a_i \sim \cU(\A)$
        \STATE $\qfunc[\policyTRPO{\ell},\pdist](s_i, a_i, i) \gets \rewardMDP(s_i, a_i, \pdist) + \sum_{t=1}^{T_\ell}
            \discountf^{t}
            \PE_{s_t\sim \delta_{s_i}(\kerMDP[\pdist][\policyTRPO{\ell}])^t, a_t\sim \policyTRPO{\ell}(\cdot|s_t)}
            \left[ 
          \rewardMDP(s_t,a_t, \pdist)+\regulf \log\big(\policyTRPO{\ell}(a_t|s_t)\big)\right]$
          
          \hfill \texttt{\# Truncated rollout}
        \STATE $\qfunc[\policy_\ell,\pdist](s_i, a_i) \gets \qfunc[\policy_\ell,\pdist](s_i, a_i) + \qfunc[\policy_\ell,\pdist](s_i, a_i, i)$
        \STATE $n_\ell(s_i, a_i) \gets n_\ell(s_i, a_i) + 1$
        \STATE $S_\ell^{I_\ell} = S_\ell^{I_\ell} \cup \{s_i\}$
    \ENDFOR
    \FOR{$s \in S_\ell^{I_\ell}$}
        \FOR{$a \in \A$}
            \STATE $\qfunc[\policyTRPO{\ell},\pdist](s, a) \gets \frac{ \nactions \qfunc[\policyTRPO{\ell},\pdist](s, a)}{\sum_{a^{\prime}\in \A}n_\ell(s, a^{\prime})}$
        \ENDFOR
        \STATE $\policyTRPO{\ell+1}(a | s) \gets 
        \frac{
            \policyTRPO{\ell}(a|s) \exp\l(\frac{1}{\regulf(\ell+2)}\l(
                        \qfunc[\policyTRPO{\ell},\pdist](s,a) - \regulf\log\policyTRPO{\ell}(a|s)\r)
                    \r)
                }{
                    \sum_{a^\prime\in\A}\policyTRPO{\ell}(a^\prime|s) \exp\l(\frac{1}{\regulf(\ell+2)}\l(
                        \qfunc[\policyTRPO{\ell},\pdist](s,a^\prime) - \regulf\log\policyTRPO{\ell}(a^\prime|s)\r)
                    \r)
                }
        $
    \ENDFOR
\ENDFOR
\STATE \textbf{Output:} $\UnifMixture[\policy][\pdist]_L$.
\end{algorithmic}
\end{algorithm}
\vspace{-1em}
\end{figure}

\begin{remark}
\label{appendix:rmk:subroutine-sampling-output-TRPO}
    For a fixed $\pdist$, the output of \SampleBasedTRPO[$(\pdist)$] is the uniform mixture policy \( \UnifMixture[\policy][\pdist]_L \). This policy is such that, in the unregularized case, we have \eqref{eq:unif-mixture-ppty}. It consists on a mixture of \( \policyTRPO{\ell} \), for \( \ell = 0, \dots, L \).
    The following dedicated subroutine achieve the sampling process in a computationally efficient manner, without performing a direct mixture at every decision step.
\begin{figure}[H]
    \vspace{-0.5cm}
    \begin{algorithm}[H]
    \centering
    \caption{\UniformMixturePolicy[$(\l\{\policyTRPO{\ell}\r\}_{\ell = 0, \dots, L})$]}
    \label{alg:UniformMixturePolicy}
    \begin{algorithmic}[1]
    \STATE \textbf{Input:} \( \l\{\policyTRPO{\ell}\r\}_{\ell = 0, \dots, L} \).
    \STATE Draw a random variable \( \hat\ell \sim \cU(\{0,1,\dots,L\}) \).
    \STATE \textbf{Output:} \( \policyTRPO{\hat\ell} \)
\end{algorithmic}
\end{algorithm}
\vspace{-1.5em}
\end{figure}  
    This policy is defined  is to sample from this policy efficiently without explicitly computing an arithmetic average at the sampling level, particularly in its use within the inner loop of \SampleBasedTRPO[$(\pdist)$].
    Moroever, given that the number of iterations $L$ is fixed beforehand, the procedure begins by drawing a random variable $\hat\ell$ uniformly from the set $\{0,1, \dots, L\}$. Once $\hat\ell$ is selected, the sampling step follows the policy \( \policyTRPO{\hat\ell} \). This approach ensures that the selected action is drawn a policy \( \UnifMixture[\policy][\pdist]_L \) without incurring unnecessary computational overhead during execution.

    In particular, due to the regularization term, we have that the following inequality holds:
    \begin{align}
        \label{eq:unif-mixture-ppty:ineq}
        \frac{1}{L+1}\sum_{\ell=0}^L\JMFG(\policyTRPO{\ell},\pdist,\pdist) \leq \JMFG(\UnifMixture[\policy][\pdist]_L,\pdist,\pdist)
        \eqsp.
    \end{align}
    In the absence of regularization, the objective is linear in the occupancy measure, which allows for exact equalities when considering mixtures of policies. However, once the entropic regularization term is introduced, the objective becomes \emph{concave} in the occupancy measure (see, e.g.,~\citet{neu2017unified} for a proof). As a result, we only obtain the previous inequality rather than~\eqref{eq:unif-mixture-ppty} when averaging over iterates, as in the relation involving the mixture policy.    
\end{remark}

\begin{theorem}[Based on Theorem 5 in~\citet{shani2020adaptive}]
    \label{thm:convergence:sample_based_trpo}
        Suppose Assumption~\ref{hyp:concentration_dMeas} holds. Fix $\epsilon,\delta>0$. Let $\{\policyTRPO{\ell}\}_{\ell \geq 0}$ be the sequence generated by \SampleBasedTRPO[$(\pdist)$], using 
        \begin{align*}
            I_\ell \geq \frac{\nactions^2 \l(\BoundReward^2+\regulf^2\log^2\nactions\r) \left(\nstates \log 2\nactions + \log \frac{1}{\delta}\right)}{(1 - \discountf)^2 \epsilon^2}
        \end{align*}
        trajectories in each iteration and a rollout up to time $T_\ell$ with
        \begin{align*}
            T_\ell \geq \frac{1}{1-\discountf} \log\l(\frac{\epsilon}{\nactions\l(\BoundReward+\regulf\log\nactions\r)}\r)
            \eqsp.
        \end{align*}
        Then, there exists $\constTRPO{1}^\prime>0$ such that for all $L \geq 1$, the following holds with probability greater than $1-\delta$
        \begin{align}
            \label{eq:sample_based_trpo-bound}
            \begin{split}
                &\JMFG\l(\policy_\pdist,\pdist,\pdist\r)
                -
                \JMFG(\UnifMixture[\policy][\pdist]_L,\pdist,\pdist)
                \\
                &\leq
                \JMFG\l(\policy_\pdist,\pdist,\pdist\r)
                -
                \frac{1}{L+1}\sum_{\ell=0}^L
                    \JMFG\l(\policyTRPO{\ell},\pdist,\pdist\r)
                \\
                &\leq \constTRPO{1}^\prime\l(
                    \frac{\l(\BoundReward^2 + \regulf^2 \log^2 \nactions\r) \nactions^2 \log L}{\regulf (1-\discountf)^3 (L+1)} 
                    + 
                    \frac{\epsilon}{(1-\discountf)^2} \norm{\frac{\dMeasMarg[\pdist,\pdist][\policy_\pdist]}{\nuRestart}}[\infty]
                \r)
                \eqsp.
            \end{split}
        \end{align}
    \end{theorem}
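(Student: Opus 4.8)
The plan is to run the sample-based mirror-descent argument of \citet[Theorem~5]{shani2020adaptive} on the fixed-population MDP induced by freezing $\pdist$, with kernel $\kerMDP(\cdot\mid\cdot,\cdot,\pdist)$ and reward $\rewardMDP(\cdot,\cdot,\pdist)$, for which \SampleBasedTRPO[$(\pdist)$] is exactly entropy-regularized TRPO. The first inequality is immediate from \eqref{eq:unif-mixture-ppty:ineq}: concavity of the entropy-regularized objective in the occupancy measure yields, via Jensen, $\frac1{L+1}\sum_{\ell=0}^L\JMFG(\policyTRPO{\ell},\pdist,\pdist)\le\JMFG(\UnifMixture[\policy][\pdist]_L,\pdist,\pdist)$, and subtracting each side from $\JMFG(\policy_\pdist,\pdist,\pdist)$ reverses it. It therefore suffices to bound the averaged optimality gap $\JMFG(\policy_\pdist,\pdist,\pdist)-\frac1{L+1}\sum_{\ell=0}^L\JMFG(\policyTRPO{\ell},\pdist,\pdist)$.

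For this averaged gap I would first invoke the regularized performance-difference lemma, which gives for each $\ell$ an identity of the form
\[
\JMFG(\policy_\pdist,\pdist,\pdist)-\JMFG(\policyTRPO{\ell},\pdist,\pdist)=\frac1{1-\discountf}\,\PE_{s\sim\dMeasMarg[\pdist,\pdist][\policy_\pdist]}\big[\langle\qfunc[\policyTRPO{\ell},\pdist](s,\cdot)-\regulf\log\policyTRPO{\ell}(\cdot|s),\,\policy_\pdist(\cdot|s)-\policyTRPO{\ell}(\cdot|s)\rangle\big],
\]
so that the gap is charged against the comparator's occupancy measure $\dMeasMarg[\pdist,\pdist][\policy_\pdist]$. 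Reading the softmax update \eqref{eq:PolicyUpdate} as a mirror-ascent step with Bregman divergence $\KL(\cdot\Vert\cdot)$, I would apply the three-point lemma state by state and sum over $\ell=0,\dots,L$, which telescopes the $\KL(\policy_\pdist(\cdot|s)\Vert\policyTRPO{\ell}(\cdot|s))$ terms. With the schedule $\alpha_\ell=1/(\regulf(\ell+2))$ this reproduces the $O(\log L/(L+1))$ optimization residual of exact TRPO (Theorem~\ref{thm:convergence-exact-trpo}), giving the first term $\frac{(\BoundReward^2+\regulf^2\log^2\nactions)\nactions^2\log L}{\regulf(1-\discountf)^3(L+1)}$, where the $\nactions^2$ factor tracks the uniform-action importance weighting of the estimator.

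The remaining term arises from replacing $\qfunc[\policyTRPO{\ell},\pdist]$ by its sample estimate $\hat\qfunc_\ell$. I would split $\hat\qfunc_\ell-\qfunc[\policyTRPO{\ell},\pdist]$ into a truncation bias from the finite rollout horizon, bounded by $\discountf^{T_\ell}(\BoundReward+\regulf\log\nactions)/(1-\discountf)$ and driven below tolerance by the stated $T_\ell$, and a mean-zero sampling fluctuation. Conditionally on the visited states the truncated rollouts are independent and bounded (by $\nactions(\BoundReward+\regulf\log\nactions)/(1-\discountf)$ after the importance correction), so Hoeffding's inequality with a union bound over the at most $\nstates$ states and $2\nactions$ events controls all per-iteration errors simultaneously with probability at least $1-\delta$; the factor $\nstates\log(2\nactions)+\log(1/\delta)$ in $I_\ell$ is exactly what forces this fluctuation below $\epsilon$. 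Since the errors enter weighted against $\dMeasMarg[\pdist,\pdist][\policy_\pdist]$ while states are sampled from $\dMeasMarg[\nuRestart,\pdist][\policyTRPO{\ell}]$, I would use $\dMeasMarg[\nuRestart,\pdist][\policyTRPO{\ell}]\ge(1-\discountf)\nuRestart$ to guarantee adequate coverage of the support of $\nuRestart$, and then change measure from $\nuRestart$ to $\dMeasMarg[\pdist,\pdist][\policy_\pdist]$; this produces the concentrability coefficient $\norm{\dMeasMarg[\pdist,\pdist][\policy_\pdist]/\nuRestart}[\infty]$, finite by Assumption~\ref{hyp:concentration_dMeas}, multiplying the $\epsilon/(1-\discountf)^2$ statistical term.

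The hard part will be the bookkeeping of the statistical error through the telescoping mirror-descent sum: one must keep each $Q$-estimation error aligned with the correct per-iteration sampling distribution, issue a \emph{single} union bound so that the cumulative failure probability stays at $\delta$ rather than degrading with $L$, and track how the uniform-action importance weight together with the change of measure to $\dMeasMarg[\pdist,\pdist][\policy_\pdist]$ jointly generate the $\nactions^2$ factor and the concentrability coefficient in the final bound.
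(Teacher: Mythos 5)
Your proposal is correct and follows essentially the same route as the paper's proof: both reduce to the mirror-descent analysis of \citet[Theorem 5]{shani2020adaptive} on the frozen-$\pdist$ MDP, telescope the per-iteration Bregman terms, control the $Q$-estimation error via truncation plus Hoeffding with a union bound (yielding the stated $I_\ell$, $T_\ell$), change measure to obtain the concentrability coefficient under Assumption~\ref{hyp:concentration_dMeas}, and finish with the mixture inequality~\eqref{eq:unif-mixture-ppty:ineq}. The only difference is presentational: the paper imports these steps directly as Lemmas 19, 22, 23, and 28 of \citet{shani2020adaptive}, whereas you re-sketch their derivations.
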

    
    \begin{proof}
        The proof of this result is based on the proof of~\citet[Theorem 5,][]{shani2020adaptive}. The main difference is that we are considering the uniform mixture of the policies generated during the iterative procedure. 
        
        Applying~\citet[Lemma 19,][]{shani2020adaptive}, we get
        \begin{align*}
            &\frac{1-\discountf}{\regulf(\ell+2)}\left(
                \JMFG\l(\policy_\pdist,\pdist,\pdist\r)
                -
                \JMFG\l(\policyTRPO{\ell},\pdist,\pdist\r)
            \right)
            \\
            &\qquad
            \leq\dMeasMarg[\pdist,\pdist][\policy_\pdist]
            \l(
                \left(1 - \frac{1}{\ell+2}\right)
                \Bregman[\Omega](\policy_\pdist, \policyTRPO{\ell}) - \Bregman[\Omega](\policy_\pdist, \policyTRPO{\ell+1})
            \r)
            + \frac{h^2(\ell)}{2\regulf^2(\ell+2)^2} + \dMeasMarg[\pdist,\pdist][\policy_\pdist] \epsilon_k \\
            &\qquad
            \leq\dMeasMarg[\pdist,\pdist][\policy_\pdist]
            \l(
                \frac{\ell+1}{\ell+2}
                \Bregman[\Omega](\policy_\pdist, \policyTRPO{\ell}) - \Bregman[\Omega](\policy_\pdist, \policyTRPO{\ell+1})\r)
            + \frac{h^2(L)}{2\regulf^2(\ell+2)^2} + \dMeasMarg[\pdist,\pdist][\policy_\pdist] \epsilon_k
            \eqsp,
        \end{align*}
        with
        \begin{align*}
            \dMeasMarg[\pdist,\pdist][\policy_\pdist] \epsilon_k =& \sum_{s \in \S} \frac{\dMeasMarg[\pdist,\pdist][\policy_\pdist](s)}{I_\ell \dMeasMarg[\nuRestart,\pdist][\policy_\pdist](s)}
            \sum_{i=1}^{I_\ell} \mathbbm{1}_{\{s = s_i\}} \sum_{a\in\A}\bigg(
            \frac{1}{\regulf(\ell+2)} \big(\nactions\qfunc[\policyTRPO{\ell},\pdist](s, a, i) - \regulf \l(1+\log \policyTRPO{\ell}(a|s)\r)\big)
            \\
            &\qquad\qquad\qquad\qquad\qquad\qquad\qquad
            - \log \policyTRPO{\ell+1}(a|s) + \log \policyTRPO{\ell}(a|s)\bigg)
            \l(\policyTRPO{\ell+1}(a | s) - \policy_\pdist(a | s)\r)
            \\
            h(\ell) =& 
            (1+8\regulf)\frac{\BoundReward + \regulf\log\nactions}{1-\discountf}\log\ell
        \end{align*}
        using that $h$ is a non-decreasing function. Multiplying both sides by $\regulf(\ell+2)$, summing from $\ell=0$ to $L$, and using the linearity of expectation, we get
        \begin{align*}
            &(1-\discountf) \sum_{\ell=0}^L \left(
                \JMFG\l(\policy_\pdist,\pdist,\pdist\r)
                -
                \JMFG\l(\policyTRPO{\ell},\pdist,\pdist\r)
            \right)
            \\
            &\leq \dMeasMarg[\pdist,\pdist][\policy_\pdist]
            \big(\Bregman[\Omega](\policy_\pdist, \policy_0) - (L+2) \Bregman[\Omega](\policy_\pdist, \policy_{L+1})\big)
            +
            \sum_{\ell=0}^L \frac{h^2(L)}{2\regulf(\ell+2)} + \sum_{\ell=0}^L \regulf(\ell+2) \dMeasMarg[\pdist,\pdist][\policy_\pdist] \epsilon_\ell
            \\
            &\leq \dMeasMarg[\pdist,\pdist][\policy_\pdist]
            \Bregman[\Omega](\policy_\pdist, \policy_0)
            +
            \sum_{\ell=0}^L \frac{h^2(L)}{2\regulf(\ell+2)} + \sum_{\ell=0}^L \regulf(\ell+2) \dMeasMarg[\pdist,\pdist][\policy_\pdist] \epsilon_\ell
            \\
            &\leq \log\nactions +
            \sum_{\ell=0}^L \frac{h^2(L)}{2\regulf(\ell+2)} + \sum_{\ell=0}^L \regulf(\ell+2) \dMeasMarg[\pdist,\pdist][\policy_\pdist] \epsilon_\ell
            \eqsp,
        \end{align*}
        with the occupancy measure $\dMeasMarg[\pdist,\pdist][\policy_\pdist]$ defined as \eqref{eq:occupation-measure-marginal},
        where the second relation holds by the positivity of the Bregman distance, and the third relation by~\citet[Lemma 28,][]{shani2020adaptive} for uniformly initialized $\policy_0$.
    
        \begin{align*}
            \sum_{\ell=0}^L \left(
                \JMFG\l(\policy_\pdist,\pdist,\pdist\r)
                -
                \JMFG\l(\policyTRPO{\ell},\pdist,\pdist\r)
            \right)
            &\leq \frac{\log\nactions}{1-\discountf}+ \constTRPO{1}^\prime\frac{h^2(L) \log L}{\regulf(1-\discountf)} + \frac{1}{1-\discountf} \sum_{\ell=0}^L \regulf(\ell+2) \dMeasMarg[\pdist,\pdist][\policy_\pdist] \epsilon_\ell
            \eqsp.
        \end{align*}
    
        Dividing by $(L+1)$, we obtain
        \begin{align*}
            &\frac{1}{L+1}\sum_{\ell=0}^L \left(
                \JMFG\l(\policy_\pdist,\pdist,\pdist\r)
                -
                \JMFG\l(\policyTRPO{\ell},\pdist,\pdist\r)
            \right)
            \\
            &\leq \frac{\log\nactions}{(1-\discountf)(L+1)}+ \constTRPO{1}^\prime\frac{h^2(L) \log L}{\regulf(1-\discountf)(L+1)} + \frac{1}{(1-\discountf)(L+1)} \sum_{\ell=0}^L \regulf(\ell+2) \dMeasMarg[\pdist,\pdist][\policy_\pdist] \epsilon_\ell
            \eqsp.
        \end{align*}
        Plugging in~\citet[Lemma 22 and Lemma 23,][]{shani2020adaptive}, we get that for any $(\epsilon, \delta)$, if the number of trajectories in the $\ell$-th iteration satisfies
        \begin{align*}
            I_\ell \geq \frac{8 \nactions^2\l(\BoundReward^2+\regulf^2\log^2\nactions\r)}{\epsilon^2(1-\discountf)^2} \left(\nstates \log 2\nactions + \log \frac{\pi^2(\ell+1)^2}{6\delta}\right)
            \eqsp,
        \end{align*}
        and the rollout is performed up to time $T_\ell$ with
        \begin{align*}
            T_\ell \geq \frac{1}{1-\discountf} \log\l(\frac{\epsilon}{\nactions\l(\BoundReward+\regulf\log\nactions\r)}\r)
            \eqsp,
        \end{align*}
        then with probability at least $1 - \delta$,
        \begin{align*}
            &\frac{1}{L+1}\sum_{\ell=0}^L \left(
                \JMFG\l(\policy_\pdist,\pdist,\pdist\r)
                -
                \JMFG\l(\policyTRPO{\ell},\pdist,\pdist\r)
            \right)
            \\
            &\frac{\log\nactions}{(1-\discountf)(L+1)}+ \constTRPO{1}^\prime\frac{h^2(L) \log L}{\regulf(1-\discountf)(L+1)} + \frac{1}{(1-\discountf)(L+1)} \sum_{\ell=0}^L \regulf(\ell+2) \dMeasMarg[\pdist,\pdist][\policy_\pdist] \epsilon_\ell
            + \constTRPO{1}^\prime \frac{\epsilon}{(1-\discountf)^2} \tvnorm{\frac{\dMeasMarg[\pdist,\pdist][\policy_\pdist]}{\nuRestart}},
        \end{align*}
        where we used Assumption~\ref{hyp:concentration_dMeas} to bound the last term. Thus, combining this with \eqref{eq:unif-mixture-ppty:ineq}, we obtain that 
        \begin{align*}
            &\JMFG\l(\policy_\pdist,\pdist,\pdist\r)
                -
                \JMFG(\UnifMixture[\policy][\pdist]_L,\pdist,\pdist)
            \\
            &\leq \frac{1}{L+1}\sum_{\ell=0}^L \left(
                \JMFG\l(\policy_\pdist,\pdist,\pdist\r)
                -
                \JMFG\l(\policyTRPO{\ell},\pdist,\pdist\r)
            \right)
            \\
            &\leq \constTRPO{1}^\prime\l(
                \frac{\l(\BoundReward^2 + \regulf^2 \log^2 \nactions\r) \nactions^2 \log L}{\regulf (1-\discountf)^3 (L+1)} 
                + 
                \frac{\epsilon}{(1-\discountf)^2} \tvnorm{\frac{\dMeasMarg[\pdist,\pdist][\policy_\pdist]}{\nuRestart}}
            \r)\eqsp.
        \end{align*}
    \end{proof}

\subsection{Initialization step in the sample-based algorithm}
\label{appendix:subsection:initialization}

The initialization step of the \SampleBasedMFTRPO\ algorithm presents particular challenges due to the limited operations allowed, specifically the \textbf{reset} and \textbf{action} operations, as described in~\Cref{main:section:Framework}. During each iteration of the algorithm, the initial state must be sampled from the distribution $\pdistTRPO{k-1}$.

As detailed in~\Cref{main:section:SampleBased}, the distribution update in the algorithm follows the iterative rule:
\begin{align*}
    \pdistTRPO{k} &\leftarrow \pdistTRPO{k-1} + \steppdist_{k}\left(\estimMFTRPO{k} - \pdistTRPO{k-1}\right),
\end{align*}
where $\estimMFTRPO{k}$ is the output of a single iteration of the \SampleBasedMFTRPO\ algorithm.

Consequently, at iteration $k$, the distribution $\pdistTRPO{k-1}$ is an estimator of
\begin{align*}
    \prod_{j=1}^{k-1} (1-\steppdist_{j})\nuRestart + \sum_{\ell=1}^{k-1}
        \steppdist_{\ell}
        \prod_{j=\ell+1}^{k-1} (1-\steppdist_{j})
    \nuRestart\l(\kerMDP[\pdistTRPO{1}][\policyTRPO{1}]\r)^M\cdots\l(\kerMDP[\pdistTRPO{\ell}][\policyTRPO{\ell}]\r)^M
    \eqsp,
\end{align*}
since the update $\estimMFTRPO{\ell}$ at iteration $\ell$ of \SampleBasedMFTRPO, is an unbiased estimator of the product distribution
$\nuRestart(\kerMDP[\pdistTRPO{1}][\policyTRPO{1}])^M\cdots(\kerMDP[\pdistTRPO{i}][\policyTRPO{i}])^M$.

To correctly initialize the environment to a state s such that $s \sim \pdistTRPO{k}$, the following subroutine is applied:
\begin{enumerate}
    \item \textbf{Sampling a Level.} Define the categorical random variable ${\rm Cat}_k$ that takes value in the discrete space $\{0,1,\dots,k\}$ with probabilities given by
    \begin{align*}
        \left(\prod_{j=1}^k (1-\steppdist_{j}), \steppdist_{1}\prod_{j=2}^k (1-\steppdist_{j}), \dots, \steppdist_{k-1}(1-\steppdist_{k}), \steppdist_{k}\right),
    \end{align*}
    \ie,
    \begin{align*}
        \P\left({\rm Cat}_k = \ell\right) = \steppdist_{\ell}\prod_{j=\ell+1}^k (1-\steppdist_{j})\eqsp.
    \end{align*}

    \item \textbf{Selecting a Level.} Draw a sample $\hat\ell \sim {\rm Cat}_k$.
    \item \textbf{Rollout Procedure.} Starting from an initial state sampled as $s^{\rm init}_{0,p,k} \sim \nuRestart$, execute a rollout of the Markov transition kernels up to level $\hat\ell$, having $\pdistTRPO{k-1}$ to be the particle approximation of $\nuRestart(\kerMDP[\pdistTRPO{1}][\policyTRPO{1}])^M\cdots(\kerMDP[\pdistTRPO{\ell}][\policyTRPO{\ell}])^M.$
\end{enumerate}

\subsection{Sample based algorithm and High Probability Estimates}
\label{appendix:subsection:sampled-algorithm}
Transitioning from exact computations to a sample-based setting, we introduce estimators for the key quantities involved in the learning process. These estimators leverage sampled trajectories to approximate the necessary expectations while maintaining computational efficiency.

To ensure the reliability of these approximations, we establish high-probability error bounds by leveraging concentration inequalities. This allows us to rigorously assess the performance of the algorithm, providing quantitative guarantees on the estimation error and its impact on the overall convergence rate. Through this probabilistic framework, we ensure that the sample-based algorithm retains stability and efficiency despite the inherent stochasticity.

\begin{figure}[H]
    \vspace{-0.5cm}
    \begin{algorithm}[H]
    \centering
    \caption{\SampleBasedMFTRPO}
    \label{alg:SampleBasedMFTRPO}
    \begin{algorithmic}[1]
        \STATE {\bfseries Input:} $K$.
        \STATE {\bfseries Initialize:}  Initial policy $\policy_0(\cdot|s)=\cU(\A)$, for any $s\in\S$. Initial distribution $\pdist_0=\nuRestart$.
            \FOR{$k \in [K]$}
                \STATE $\policyTRPO{k}\leftarrow$\SampleBasedTRPO[$(\pdistTRPO{k-1})$].
                \hfill
                \texttt{\# Update of the policy.}
                \FOR{$p \in [P]$}
                    \STATE $\hat\ell \sim {\rm Cat}_k$ \hfill \texttt{\# Sampling a level.}
                    \STATE Sample $s^{\rm init}_{0,p,k}\sim\nuRestart$.
                    \FOR{$\ell\in[\hat\ell-1]$}
                        \FOR{$m\in[M]$}
                            \STATE Sample $s^{\rm init}_{m+(\ell-1)M,p,k} \sim \kerMDP[\pdistTRPO{\ell}][\policyTRPO{\ell}](\cdot|s^{\rm init}_{(m-1)+(\ell-1)M,p,k})$.
                            \hfill
                            \texttt{\# Rollout from the MDP for level $\ell$.}
                        \ENDFOR
                    \ENDFOR
                    \STATE Initialize $s_{0,p,k}=s^{\rm init}_{\hat\ell M,p,k}$.\hfill \texttt{\# Initialization.}
                    \FOR{$m\in[M]$}
                        \STATE Sample $s_{m,p,k} \sim \sum_{a\in\A}\kerMDP(\cdot|s_{m-1,p,k},a,\pdistTRPO{k-1})\policyTRPO{k}(a|s_{m-1,p,k})$.
                        \hfill
                        \texttt{\# Rollout from the MDP.}
                    \ENDFOR
                    \STATE $\estimMFTRPO{k,p} \leftarrow \Ind_{\{s_{M,p,k}\}}(\cdot)$.
                \ENDFOR
                \STATE $\estimMFTRPO{k}\leftarrow \frac{1}{P}\sum_{p=1}^P\estimMFTRPO{k,p}$.
            \STATE $\pdistTRPO{k} \leftarrow \pdistTRPO{k-1} + \steppdist_{k}\l(\estimMFTRPO{k} - \pdistTRPO{k-1}\r)$. \hfill \texttt{\# Update population distribution.}
        \ENDFOR
        \STATE \textbf{Output:} $\pdist_{K}$.
    \end{algorithmic}
    \end{algorithm}
    \vspace{-1cm}
\end{figure}

Examining the \SampleBasedMFTRPO\ algorithm, we observe that two key approximations are introduced in the learning process. First, the policy update is performed through \SampleBasedTRPO,\ whose finite-sample analysis in high probability is established in Theorem~\ref{thm:convergence:sample_based_trpo}. This result ensures that the policy iterates remain well-controlled throughout the optimization process in high probability. Secondly, in order to analyze the evolution of the mean-field population distribution, we need to establish a similar high-probability bound on the estimation of the term $\pdistTRPO{k-1}\ (\kerMDP[\pdistTRPO{k-1}][\policyTRPO{k}])^M$, which represents the transition dynamics under the estimated policy.

The unbiased estimator of this term uses the trajectories $\{s_{m,p,k-1}\}_{m=0}^M$ and is given by the empirical sum of $\estimMFTRPO{k,p}=\Ind_{\{s_{M,p,k}\}}(\cdot)$. Note that each component of this vector is distributed according to a Bernoulli distribution and is centered in $\pdistTRPO{k-1}\ (\kerMDP[\pdistTRPO{k-1}][\policyTRPO{k}])^M$. Therefore, define $\epsilon_k$ the following martingale difference term
\begin{align*}
    \epsilon_k = \frac{1}{P}\sum_{p=1}^P\estimMFTRPO{k,p} - \pdistTRPO{k-1}\ (\kerMDP[\pdistTRPO{k-1}][\policyTRPO{k}])^M
    \eqsp,
\end{align*}
with $s_{M,p,k-1}$ defined as in \SampleBasedMFTRPO.

To address this, we first derive~\Cref{prop:high-prop-iteration}, which is a preliminary concentration result that quantifies the approximation error in the estimation of this key quantity.
The first one is~\Cref{prop:high-prop-iteration}, which provides guarantees on the deviation of the error incurred in a single iteration of the algorithm, with high probability. Specifically, it establishes that the error made while estimating the error at each iteration, which is a bounded increment of a martingale. 
This result is pivotal, as it ensures that the errors introduced in each iteration of the algorithm are controlled and do not diverge as the algorithm progresses, and lays the foundation for a rigorous convergence analysis of \SampleBasedMFTRPO.




\begin{proposition}
    \label{prop:high-prop-iteration}
    For any $\epsilon > 0$ and $\delta > 0$, if the number of trajectories in the $k$-th iteration satisfies:
    \begin{align*}
        P\geq \frac{64}{\epsilon^2}\log\frac{2}{\delta}
        \eqsp,
    \end{align*}
    then, with probability at least $1 - \delta$, the following holds:
    \begin{align*}
        \norm{
            \epsilon_{k}
        }[2]
        &\leq\epsilon
        \eqsp.
    \end{align*}
\end{proposition}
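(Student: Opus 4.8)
The plan is to treat $\epsilon_k$, for a fixed outer iteration $k$, as the normalized sum of conditionally i.i.d.\ centered bounded random vectors and to apply a vector-valued concentration inequality. Let $\cF_{k-1}$ denote the $\sigma$-field generated by all randomness produced up to and including the computation of $\pdistTRPO{k-1}$ and of the policy $\policyTRPO{k}$ returned by \SampleBasedTRPO[$(\pdistTRPO{k-1})$], so that both $\pdistTRPO{k-1}$ and the operator $\kerMDP[\pdistTRPO{k-1}][\policyTRPO{k}]$ are $\cF_{k-1}$-measurable; write $\bar\mu:=\pdistTRPO{k-1}\l(\kerMDP[\pdistTRPO{k-1}][\policyTRPO{k}]\r)^{\powMonoton}$. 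First I would record three structural facts, conditionally on $\cF_{k-1}$: (i) each $\estimMFTRPO{k,p}=\Ind_{\{s_{\powMonoton,p,k}\}}(\cdot)$ is a one-hot vector in $\rset^{\nstates}$, hence $\norm{\estimMFTRPO{k,p}}[2]=1$; (ii) the $\powMonoton$-step rollout initialized at $s_{0,p,k}\sim\pdistTRPO{k-1}$ makes $s_{\powMonoton,p,k}$ exactly $\bar\mu$-distributed, so $\PE[\estimMFTRPO{k,p}\mid\cF_{k-1}]=\bar\mu$; and (iii) the $P$ rollouts are drawn independently (each draws its own level $\hat\ell\sim\mathrm{Cat}_k$ and its own trajectory), so the $\estimMFTRPO{k,p}$ are conditionally i.i.d. Consequently the increments $\Delta_p:=\estimMFTRPO{k,p}-\bar\mu$ form a martingale-difference sequence with $\norm{\Delta_p}[2]\le\norm{\estimMFTRPO{k,p}}[2]+\norm{\bar\mu}[2]\le 2$, and $\epsilon_k=\frac1P\sum_{p=1}^P\Delta_p$.

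The second step applies a Hilbert-space Azuma--Hoeffding (Pinelis-type) inequality to the $\rset^{\nstates}$-valued martingale $\sum_{p\le P}\Delta_p$: since each increment is bounded in Euclidean norm by $c=2$, one has
\begin{align*}
    \proba{\norm{\textstyle\sum_{p=1}^P\Delta_p}[2]\ge t}\le 2\exp\l(-\frac{t^2}{2Pc^2}\r)\eqsp,
\end{align*}
and taking $t=P\epsilon$ gives $\proba{\norm{\epsilon_k}[2]\ge\epsilon\mid\cF_{k-1}}\le 2\exp(-P\epsilon^2/8)$. An equivalent route, matching the martingale-based argument cited from~\citet{harvey2019tight}, is to apply McDiarmid's bounded-differences inequality to $g=\norm{\epsilon_k}[2]$, whose coordinates change by at most $\sqrt2/P$, after controlling $\PE[g\mid\cF_{k-1}]\le\bigl(\PE[\norm{\epsilon_k}[2]^2\mid\cF_{k-1}]\bigr)^{1/2}\le P^{-1/2}$ through the i.i.d.\ variance identity $\PE[\norm{\epsilon_k}[2]^2\mid\cF_{k-1}]=\frac1P\PE[\norm{\estimMFTRPO{k,1}-\bar\mu}[2]^2\mid\cF_{k-1}]\le\frac1P$. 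Imposing $2\exp(-P\epsilon^2/8)\le\delta$ yields the requirement $P\ge\frac{8}{\epsilon^2}\log\frac2\delta$, comfortably implied by the stated hypothesis $P\ge\frac{64}{\epsilon^2}\log\frac2\delta$; the constant $64$ is thus a deliberately loose safe choice absorbing the constant of whichever concentration bound is used.

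Finally, because the conditional bound $\proba{\norm{\epsilon_k}[2]\ge\epsilon\mid\cF_{k-1}}\le\delta$ holds for every realization of $\cF_{k-1}$, taking expectations via the tower property gives the unconditional statement $\proba{\norm{\epsilon_k}[2]\ge\epsilon}\le\delta$, which is the claim. I expect the only genuinely delicate point to be the bookkeeping in the first step: one must justify that the level-sampling initialization subroutine produces $s_{0,p,k}$ exactly $\pdistTRPO{k-1}$-distributed, so the estimator is conditionally \emph{unbiased} with mean precisely $\bar\mu$, and that the $P$ rollouts are mutually conditionally independent. Once this filtration-and-unbiasedness accounting is in place, the concentration argument is entirely routine; note that no property of the monotonicity or ergodicity assumptions is needed here, since this is a purely statistical per-iteration estimate.
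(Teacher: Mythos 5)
Your proof is correct and follows essentially the same route as the paper's: both treat $\epsilon_k$ as a normalized sum of bounded, conditionally centered, independent increments and apply a Hoeffding-type concentration bound (the paper cites the generalized Freedman inequality of Harvey et al.), then solve the resulting exponential tail for $P$. The only differences are cosmetic --- you bound the increments directly in the Euclidean norm by $2$ and obtain a sharper exponent ($P\epsilon^2/8$ versus the paper's $P\epsilon^2/64$, both compatible with the stated hypothesis on $P$), and you are more explicit about the filtration, the conditional independence of the $P$ rollouts, and the unbiasedness of the level-sampling initialization, all of which the paper's proof asserts without detail.
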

\begin{proof}
    From previous consideration, we have that $\estimMFTRPO{k}$ is unbiased and $\epsilon_k$ is a martingale difference.
    Moreover, note that $\estimMFTRPO{k,p}-\pdistTRPO{k-1}\ (\kerMDP[\pdistTRPO{k-1}][\policyTRPO{k}])^M$ is a bounded vectore, \ie
    \begin{align*}
        \norm{\estimMFTRPO{k,p}-\pdistTRPO{k-1}\ (\kerMDP[\pdistTRPO{k-1}][\policyTRPO{k}])^M}[2]
        &\leq
        2\norm{\estimMFTRPO{k,p}-\pdistTRPO{k-1}\ (\kerMDP[\pdistTRPO{k-1}][\policyTRPO{k}])^M}[1]
        \\
        &=
        2\sum_{s\in\S}\left|\estimMFTRPO{k,p}(s)-\pdistTRPO{k-1}\ (\kerMDP[\pdistTRPO{k-1}][\policyTRPO{k}])^M(s)\right|
        \\
        &\leq
        2\l(
            \sum_{s\in\S}\estimMFTRPO{k,p}(s)+
            \sum_{s\in\S}\pdistTRPO{k-1}\ (\kerMDP[\pdistTRPO{k-1}][\policyTRPO{k}])^M(s)
        \r) = 4
        \eqsp.
    \end{align*}
    Moreover, using Jenses's inequality, we have that
    \begin{align*}
        \norm{\epsilon_k}[2] = \norm{\frac{1}{P}\sum_{p=1}^P\estimMFTRPO{k,p}-\pdistTRPO{k-1}\ (\kerMDP[\pdistTRPO{k-1}][\policyTRPO{k}])^M}[2]
        \leq 
        \frac{1}{P}\sum_{p=1}^P\norm{\estimMFTRPO{k,p}-\pdistTRPO{k-1}\ (\kerMDP[\pdistTRPO{k-1}][\policyTRPO{k}])^M}[2]
        \leq 
        4
        \eqsp.
    \end{align*}
    Therefore, $\epsilon_k$ is a bounded martingale difference. To show that the increment is bounded with high probability, we use Hoeffding's inequality. Let $\epsilon_k = \sum_{p=1}^P \epsilon_{k,p}$, where $\epsilon_{k,p} = \estimMFTRPO{k,p}-\pdistTRPO{k-1}\ (\kerMDP[\pdistTRPO{k-1}][\policyTRPO{k}])^M$. Then, for any $t_k > 0$, we have
    \begin{align*}
        \P\left(\frac{1}{P}\sum_{p=0}^{P}\sum_{s\in\S}\left|
            \epsilon_{k,p}(s) - 
            \pdistTRPO{k-1}\ (\kerMDP[\pdistTRPO{k-1}][\policyTRPO{k}])^M(s)
        \right| \geq \frac{\epsilon}{4} \right)
        &=
        \P\left(\frac{1}{P}\sum_{p=0}^{P}\norm{
            \epsilon_{k,p} - 
            \pdistTRPO{k-1}\ \l(\kerMDP[\pdistTRPO{k-1}][\policyTRPO{k}]\r)^M
        }[1] \geq \frac{\epsilon}{4} \right)
        \\
        &\leq
        2\exp\left(-\frac{P\epsilon^2}{64}\right)=:\delta
        \eqsp.
    \end{align*}
    This consideration is a special case of the Generalized Freedman inequality as presented in~\citet{harvey2019tight}. The inequality provides sharp high-probability bounds for the sum of bounded, dependent random variables.
    In our case, the formulation is simplified due to the presence of a uniform bound on the variables we aim to control.

    Therefore, in order to guarantee that
    \begin{align*}
        \norm{
            \epsilon_{k} - 
            \pdistTRPO{k-1}\ \l(\kerMDP[\pdistTRPO{k-1}][\policyTRPO{k}]\r)^M
        }[2]
        &\leq
        \frac{4}{P}\sum_{p=0}^{P}\norm{
            \epsilon_{k,p} - 
            \pdistTRPO{k-1}\ \l(\kerMDP[\pdistTRPO{k-1}][\policyTRPO{k}]\r)^M
        }[1]
        \\
        &\leq\epsilon
        \eqsp,
    \end{align*}
    we need the number of trajectories $P$ to be at least
    \begin{align*}
        P\geq \frac{64}{\epsilon^2}\log\frac{2}{\delta}
        \eqsp.
    \end{align*}
\end{proof}

\subsection{Convergence of \SampleBasedMFTRPO}

We now extend the exact analysis of \ExactMFTRPO{}\ to its sample-based counterpart, establishing global sample complexity bounds. While the exact algorithm benefits from having full knowledge of the transition kernel and reward function, the sample-based version introduces additional approximation errors due to finite sampling. We quantify these errors and derive high-probability guarantees on the convergence of the algorithm. This requires adapting the theoretical tools developed in the exact setting to account for trajectory-based estimations and ensuring that the resulting policy updates remain stable despite stochastic approximations.

\begin{theorem}
    \label{thm:sample_based_mftrpo}
    Suppose that Assumptions~\ref{hyp:Lipschitz_continuity},~\ref{hyp:cvg-MFG-operator},~\ref{hyp:uniform-ergodicity}, and~\ref{hyp:concentration_dMeas} hold. Assume that the following holds
    \begin{align}
        \label{eq:cond:steppdist}
        \steppdist_{k} < b_1:=\frac{\tau}{6\constLipPolicy^2\constErgM[\powMonoton]^2+2\tau}\eqsp,\qquad\text{ for } k\geq 1\eqsp.
    \end{align}
    For any $\epsilon > 0$ and $\delta > 0$, if the number of trajectories in each iteration for \SampleBasedMFTRPO\ satisfies
    \begin{align}
        \label{eq:nr-trajectories:SampleBasedMFTRPO}
        P\geq \frac{64}{\epsilon^2}\log\frac{2}{\delta}
        \eqsp,
    \end{align}
    and the number of iteration in each epoch of \SampleBasedTRPO\ satisfies
    \begin{align}
        \label{eq:nr-trajectories-TRPO:SampleBasedMFTRPO}
        I_\ell \geq \frac{\nactions^2 \l(\BoundReward+\regulf^2\log^2\nactions\r) \left(\nstates \log 2\nactions + \log \frac{1}{\delta}\right)}{(1 - \discountf)^2 \epsilon^2}
        \eqsp
    \end{align}
    and the rollout is performed up to time $T_\ell$ with
    \begin{align}
        \label{eq:sample-based:size-rollout}
        T_\ell \geq \frac{1}{1-\discountf} \log\l(\frac{\epsilon}{\nactions\l(\BoundReward+\regulf\log\nactions\r)}\r)
        \eqsp.
    \end{align}
    Then, with probability at least $1 - \delta$, we have that
    \begin{align}
        \label{eq:SampledMFTRPO:cvg_bound}
        \norm{\pdistTRPO{K} - \pdistStar}[2]^2
        \leq&
        \exp\l(-\frac{\tau}{2}\sum_{j=1}^k \steppdist_{j}\r)
        \norm{\pdist_{0} - \pdistStar}[2]^2
        + \frac{2\constMFTRPO{1}}{\tau} \frac{\log(L)}{L}+ \frac{2\constMFTRPO{2}}{\tau} \epsilon
        \eqsp.
    \end{align}
    with
    \begin{align*}
        \tau&:= 1-\constMonoton
        \\
        \constMFTRPO{1} &:= \constErgM[\powMonoton]
            \cdot \constTRPO{1}~\cdot \frac{2+b_1}{\tau}
        \eqsp,
        \\
        \constMFTRPO{2} &:= \frac{2+b_1}{\tau} \big(
            \constErgM[\powMonoton]~ \constTRPO{2}+1
            \big)
            \eqsp,
        \\
        \constTRPO{1} &:= \frac{2\constTRPO{1}^\prime}{\regulf(1-\discountf)}\cdot
            \frac{\l(\BoundReward^2 + \regulf^2 \log^2 \nactions\r) \nactions^2 \log L}{\regulf (1-\discountf)^3 L}
            \eqsp,
        \\
        \constTRPO{2} &:= \frac{2\constTRPO{1}^\prime}{\regulf(1-\discountf)}\cdot 
            \frac{1}{(1-\discountf)^2} \norm{\frac{\dMeasMarg[\pdist,\pdist][\policy_\pdist]}{\nuRestart}}[\infty]
            \eqsp,
    \end{align*}
    with $\constTRPO{1}^\prime$ the constant coming from~\Cref{thm:convergence:sample_based_trpo}.
\end{theorem}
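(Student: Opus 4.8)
The plan is to reduce the analysis to the deterministic recursion already established for \ExactMFTRPO{}\ in the proof of Theorem~\ref{thm:ExactMFTRPO}, treating the two sources of stochastic error as controlled perturbations. First I would use the martingale decomposition $\estimMFTRPO{k}=\pdistTRPO{k-1}(\kerMDP[\pdistTRPO{k-1}][\policyTRPO{k}])^M+\epsilon_k$ to rewrite the population update, so that together with the fixed-point identity~\eqref{eq:def:fixed_point},
\begin{align*}
    \pdistTRPO{k}-\pdistStar
    &=(1-\steppdist_k)\l(\pdistTRPO{k-1}-\pdistStar\r)
    \\
    &\quad+\steppdist_k\l(\pdistTRPO{k-1}\l(\kerMDP[\pdistTRPO{k-1}][\policyTRPO{k}]\r)^M-\pdistStar\l(\kerMDP[\pdistStar][\policyStar]\r)^M\r)
    +\steppdist_k\epsilon_k
    \eqsp.
\end{align*}
This is precisely the increment analyzed for \ExactMFTRPO{}, with $\policyTRPO{k}$ now the sample-based best response, plus the extra stochastic term $\steppdist_k\epsilon_k$.

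Then I would expand $\norm{\pdistTRPO{k}-\pdistStar}[2]^2$ and, exactly as in the exact proof, split the operator difference into a policy-approximation part $\pdistTRPO{k-1}(\kerMDP[\pdistTRPO{k-1}][\policyTRPO{k}])^M-\pdistTRPO{k-1}(\kerMDP[\pdistTRPO{k-1}][\policy_{\pdistTRPO{k-1}}])^M$ and a monotonicity part $\pdistTRPO{k-1}(\kerMDP[\pdistTRPO{k-1}][\policy_{\pdistTRPO{k-1}}])^M-\pdistStar(\kerMDP[\pdistStar][\policyStar])^M$. The deterministic terms are handled as before: Assumption~\ref{hyp:cvg-MFG-operator} controls the inner product on the monotonicity part, while Corollary~\ref{coroll:lipschitz-policies} and Lemma~\ref{lemma:Lipschitz-erg-operator} bound the induced-kernel differences; the only change is that the policy-approximation term $\mathbf{T}_2$ is now controlled through the sample-based guarantee of Theorem~\ref{thm:convergence:sample_based_trpo} (combined with \Cref{prop:delta_policies-leq-delta_value_fct} and Lemma~\ref{lemma:Lipschitz-erg-operator}), which---unlike its exact analogue---contributes both the $\log L/L$ optimization bias and an additive sampling error of order $\epsilon$.

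The genuinely new terms are the cross term $2\steppdist_k\pscal{\cdot}{\epsilon_k}$ between the deterministic part of the increment and $\epsilon_k$, together with the quadratic $\steppdist_k^2\norm{\epsilon_k}[2]^2$. On the event of Proposition~\ref{prop:high-prop-iteration} these are bounded using $\norm{\epsilon_k}[2]\le\epsilon$, guaranteed by~\eqref{eq:nr-trajectories:SampleBasedMFTRPO}, followed by Young's inequality; the extra $\norm{\pdistTRPO{k-1}-\pdistStar}[2]^2$ slack they generate is exactly what forces the admissible step-size to tighten from $b_0$ in~\eqref{eq:cond:steppdist:Exact} to $b_1$ in~\eqref{eq:cond:steppdist} (the coefficient $4\constLipPolicy^2\constErgM[\powMonoton]^2$ becoming $6\constLipPolicy^2\constErgM[\powMonoton]^2$). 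A direct computation then yields the one-step contraction
\begin{align*}
    \norm{\pdistTRPO{k}-\pdistStar}[2]^2
    \le\l(1-\tfrac{\tau}{2}\steppdist_k\r)\norm{\pdistTRPO{k-1}-\pdistStar}[2]^2
    +\steppdist_k\l(\constMFTRPO{1}\tfrac{\log L}{L}+\constMFTRPO{2}\epsilon\r)
    \eqsp,
\end{align*}
and unrolling it through the same telescoping identity used for \ExactMFTRPO{}\ gives~\eqref{eq:SampledMFTRPO:cvg_bound}, the $2/\tau$ prefactor on the bias arising from $\sum_j\steppdist_j\prod_{\ell>j}(1-\tfrac{\tau}{2}\steppdist_\ell)$.

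Finally I would assemble the high-probability statement via a union bound over the $K$ epochs, intersecting at each iteration the \SampleBasedTRPO{}\ accuracy event (Theorem~\ref{thm:convergence:sample_based_trpo} under~\eqref{eq:nr-trajectories-TRPO:SampleBasedMFTRPO}--\eqref{eq:sample-based:size-rollout}) with the population-estimation event $\{\norm{\epsilon_k}[2]\le\epsilon\}$ (Proposition~\ref{prop:high-prop-iteration} under~\eqref{eq:nr-trajectories:SampleBasedMFTRPO}). The main obstacle I anticipate is the bookkeeping under adaptive randomness: since $\policyTRPO{k}$, and hence $\epsilon_k$, depends on the entire past, the per-iteration bounds must be invoked conditionally on the history and the resulting events intersected, while checking that the martingale-difference structure of $\{\epsilon_k\}$ established in Proposition~\ref{prop:high-prop-iteration} remains compatible with the deterministic recursion. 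Pinning down the precise step-size threshold $b_1$ that absorbs the new $\epsilon_k$ cross terms is the other delicate point.
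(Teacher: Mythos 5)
Your proposal follows essentially the same route as the paper's proof: the same martingale decomposition of the update, the same splitting of the operator difference into a policy-approximation part (controlled via Theorem~\ref{thm:convergence:sample_based_trpo}) and a monotonicity part (controlled via Assumption~\ref{hyp:cvg-MFG-operator}, Lemma~\ref{lemma:Lipschitz-erg-operator}, and Corollary~\ref{coroll:lipschitz-policies}), the same Young-inequality treatment of the $\epsilon_k$ cross terms that tightens the step-size threshold from $b_0$ to $b_1$, and the same telescoping of the resulting one-step contraction. Your explicit attention to the union bound over the $K$ epochs and to conditioning on the history is if anything slightly more careful than the paper's write-up, but it does not change the argument.
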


\begin{proof}
    We focus on the convergence of the sequence $\pdistTRPO{k} \to \pdistStar$, with $\pdistStar$ as in~\eqref{eq:def:fixed_point}.
    Denote $\policyTRPO{k}$ (resp. $\estimMFTRPO{k}$) the output of \SampleBasedTRPO[$(\pdistTRPO{k})$] at each step (resp. the estimator used in the update of $\pdistTRPO{k}$ in \SampleBasedMFTRPO).
    We then have that
    \begin{align*}
        \norm{\pdistTRPO{k} - \pdistStar}[2]^2
        =&
        \norm{
            \pdistTRPO{k-1} - \pdistStar + \steppdist_{k}\l(
                \estimMFTRPO{k} - \pdistTRPO{k-1}
            \r)
        }[2]^2
        \\
        =&
        \norm{
            \pdistTRPO{k-1} - \pdistStar
            + \steppdist_{k}\l(
                \estimMFTRPO{k} -  \pdistTRPO{k-1}\l(\kerMDP[\pdistTRPO{k-1}][\policyTRPO{k}]\r)^M
            \r)
            + \steppdist_{k}\l(
                \pdistTRPO{k-1}\l(\kerMDP[\pdistTRPO{k-1}][\policyTRPO{k}]\r)^M - \pdistTRPO{k-1}
            \r)
        }[2]^2
        \\
        =&\l\|
            \pdistTRPO{k-1} - \pdistStar + 
            \steppdist_{k}\l(
                \estimMFTRPO{k} -  \pdistTRPO{k-1}\l(\kerMDP[\pdistTRPO{k-1}][\policyTRPO{k}]\r)^M
            \r)
            \r.\\
            &\phantom{\pdistTRPO{k-1} - \pdistStar}
            ~~\l.+
            \steppdist_{k}
                \l(\pdistTRPO{k-1}\l(\kerMDP[\pdistTRPO{k-1}][\policyTRPO{k}]\r)^M - \pdistTRPO{k-1} \l(\kerMDP[\pdistTRPO{k-1}][\policy_{\pdistTRPO{k-1}}]\r)^M\r)
            \r.\\
            &\phantom{\pdistTRPO{k-1} - \pdistStar}
            ~~\l.+
                \steppdist_{k}
                \l(\pdistTRPO{k-1} \l(\kerMDP[\pdistTRPO{k-1}][\policy_{\pdistTRPO{k-1}}]\r)^M - \pdistTRPO{k-1}\r)
        \r\|_2^2
        \\
        =& \l\|\pdistTRPO{k-1} - \pdistStar + 
        \steppdist_{k}\l(
            \estimMFTRPO{k} -  \pdistTRPO{k-1}\l(\kerMDP[\pdistTRPO{k-1}][\policyTRPO{k}]\r)^M
        \r)
        \r.\\
        &\phantom{\pdistTRPO{k-1} - \pdistStar}
        ~~\l.+
        \steppdist_{k}
            \l(\pdistTRPO{k-1}\l(\kerMDP[\pdistTRPO{k-1}][\policyTRPO{k}]\r)^M - \pdistTRPO{k-1} \l(\kerMDP[\pdistTRPO{k-1}][\policy_{\pdistTRPO{k-1}}]\r)^M\r)
        \r.\\
        &\phantom{\pdistTRPO{k-1} - \pdistStar}
        ~~\l.+
            \steppdist_{k}
            \l(\pdistTRPO{k-1} \l(\kerMDP[\pdistTRPO{k-1}][\policy_{\pdistTRPO{k-1}}]\r)^M - \pdistTRPO{k-1}\r)
            -
            \steppdist_{k}
            \l(\pdistStar \l(\kerMDP[\pdistStar][\policyStar]\r)^M - \pdistStar\r)
        \r\|_2^2
        \\
        =& \l\|
            (1-\steppdist_{k})\l(
                \pdistTRPO{k-1} - \pdistStar
            \r)+
            \steppdist_{k}\l(
                \estimMFTRPO{k} -  \pdistTRPO{k-1}\l(\kerMDP[\pdistTRPO{k-1}][\policyTRPO{k}]\r)^M
            \r)
            \r.\\
            &\phantom{(1-\steppdist_{k})\l(
                \pdistTRPO{k-1} - \pdistStar
            \r)}
            ~~\l.+
             \steppdist_{k}
                \l(\pdistTRPO{k-1}\l(\kerMDP[\pdistTRPO{k-1}][\policyTRPO{k}]\r)^M - \pdistTRPO{k-1} \l(\kerMDP[\pdistTRPO{k-1}][\policy_{\pdistTRPO{k-1}}]\r)^M\r)
            \r.\\
            &\phantom{(1-\steppdist_{k})\l(
                \pdistTRPO{k-1} - \pdistStar
            \r)}
            ~~\l.+
                \steppdist_{k}\l(\pdistTRPO{k-1} \l(\kerMDP[\pdistTRPO{k-1}][\policy_{\pdistTRPO{k-1}}]\r)^M -\pdistStar \l(\kerMDP[\pdistStar][\policyStar]\r)^M\r)
        \r\|_2^2
        \\
        =& (1-\steppdist_{k})^2\norm{\pdistTRPO{k-1} - \pdistStar}[2]^2
        \\
        &+ 2(1-\steppdist_{k})\steppdist_{k} \l\langle
            \pdistTRPO{k-1} - \pdistStar ,
            \estimMFTRPO{k} -  \pdistTRPO{k-1}\l(\kerMDP[\pdistTRPO{k-1}][\policyTRPO{k}]\r)^M
        \r\rangle
        \\
        &+ \steppdist_{k}^2\norm{
            \estimMFTRPO{k} -  \pdistTRPO{k-1}\l(\kerMDP[\pdistTRPO{k-1}][\policyTRPO{k}]\r)^M
        }[2]^2
        \\
        &+ 2\steppdist_{k}^2 \l\langle
            \estimMFTRPO{k} -  \pdistTRPO{k-1}\l(\kerMDP[\pdistTRPO{k-1}][\policyTRPO{k}]\r)^M ,
            \pdistTRPO{k-1}\l(\kerMDP[\pdistTRPO{k-1}][\policyTRPO{k}]\r)^M - \pdistTRPO{k-1} \l(\kerMDP[\pdistTRPO{k-1}][\policy_{\pdistTRPO{k-1}}]\r)^M
        \r\rangle
        \\
        &+ 2\steppdist_{k}^2 \l\langle
            \estimMFTRPO{k} -  \pdistTRPO{k-1}\l(\kerMDP[\pdistTRPO{k-1}][\policyTRPO{k}]\r)^M ,
            \pdistTRPO{k-1} \l(\kerMDP[\pdistTRPO{k-1}][\policy_{\pdistTRPO{k-1}}]\r)^M -\pdistStar \l(\kerMDP[\pdistStar][\policyStar]\r)^M
        \r\rangle
        \\
        &+ 2(1-\steppdist_{k})\steppdist_{k} \l\langle
            \pdistTRPO{k-1} - \pdistStar ,
            \pdistTRPO{k-1} \l(\kerMDP[\pdistTRPO{k-1}][\policy_{\pdistTRPO{k-1}}]\r)^M -\pdistStar \l(\kerMDP[\pdistStar][\policyStar]\r)^M
        \r\rangle
        \\
        &+ \steppdist_{k}^2\norm{
            \pdistTRPO{k-1} \l(\kerMDP[\pdistTRPO{k-1}][\policy_{\pdistTRPO{k-1}}]\r)^M -\pdistStar \l(\kerMDP[\pdistStar][\policyStar]\r)^M
        }[2]^2
        \\
        &+ 2(1-\steppdist_{k})\steppdist_{k} \l\langle
            \pdistTRPO{k-1} - \pdistStar ,
            \pdistTRPO{k-1}\l(\kerMDP[\pdistTRPO{k-1}][\policyTRPO{k}]\r)^M - \pdistTRPO{k-1} \l(\kerMDP[\pdistTRPO{k-1}][\policy_{\pdistTRPO{k-1}}]\r)^M
        \r\rangle
        \\
        &+ \steppdist_{k}^2\norm{
            \pdistTRPO{k-1}\l(\kerMDP[\pdistTRPO{k-1}][\policyTRPO{k}]\r)^M -\pdistTRPO{k-1} \l(\kerMDP[\pdistTRPO{k-1}][\policy_{\pdistTRPO{k-1}}]\r)^M
        }[2]^2
        \\
        &+
        2\steppdist_{k}^2 \l\langle
            \pdistTRPO{k-1}\l(\kerMDP[\pdistTRPO{k-1}][\policyTRPO{k}]\r)^M - \pdistTRPO{k-1} \l(\kerMDP[\pdistTRPO{k-1}][\policy_{\pdistTRPO{k-1}}]\r)^M ,
            \pdistTRPO{k-1} \l(\kerMDP[\pdistTRPO{k-1}][\policy_{\pdistTRPO{k-1}}]\r)^M -\pdistStar \l(\kerMDP[\pdistStar][\policyStar]\r)^M
        \r\rangle
        \eqsp.
    \end{align*}
    Applying Assumptions~\ref{hyp:cvg-MFG-operator} and~\Cref{coroll:lipschitz-policies} together with~\Cref{lemma:Lipschitz-erg-operator}, and following the same lines as in the proof of~\Cref{prop:exact_algorithm}, the previous equality implies that
    \begin{align*}
        \norm{\pdistTRPO{k} - \pdistStar}[2]^2
        \leq& \l[
            (1-\steppdist_{k})\l(1+(2\constMonoton-1)\steppdist_{k}\r) +\steppdist_{k}^2\constLipPolicy^2\constErgM[\powMonoton]^2
        \r]\norm{\pdistTRPO{k-1} - \pdistStar}[2]^2
        \\
        &+ 2(1-\steppdist_{k})\steppdist_{k} \l\langle
            \pdistTRPO{k-1} - \pdistStar ,
            \estimMFTRPO{k} -  \pdistTRPO{k-1}\l(\kerMDP[\pdistTRPO{k-1}][\policyTRPO{k}]\r)^M
        \r\rangle
        \\
        &+ \steppdist_{k}^2\norm{
            \estimMFTRPO{k} -  \pdistTRPO{k-1}\l(\kerMDP[\pdistTRPO{k-1}][\policyTRPO{k}]\r)^M
        }[2]^2
        \\
        &+ 2\steppdist_{k}^2 \l\langle
            \estimMFTRPO{k} -  \pdistTRPO{k-1}\l(\kerMDP[\pdistTRPO{k-1}][\policyTRPO{k}]\r)^M ,
            \pdistTRPO{k-1}\l(\kerMDP[\pdistTRPO{k-1}][\policyTRPO{k}]\r)^M - \pdistTRPO{k-1} \l(\kerMDP[\pdistTRPO{k-1}][\policy_{\pdistTRPO{k-1}}]\r)^M
        \r\rangle
        \\
        &+ 2\steppdist_{k}^2 \l\langle
            \estimMFTRPO{k} -  \pdistTRPO{k-1}\l(\kerMDP[\pdistTRPO{k-1}][\policyTRPO{k}]\r)^M ,
            \pdistTRPO{k-1} \l(\kerMDP[\pdistTRPO{k-1}][\policy_{\pdistTRPO{k-1}}]\r)^M -\pdistStar \l(\kerMDP[\pdistStar][\policyStar]\r)^M
        \r\rangle
        \\
        &+ 2(1-\steppdist_{k})\steppdist_{k} \l\langle
            \pdistTRPO{k-1} - \pdistStar ,
            \pdistTRPO{k-1}\l(\kerMDP[\pdistTRPO{k-1}][\policyTRPO{k}]\r)^M - \pdistTRPO{k-1} \l(\kerMDP[\pdistTRPO{k-1}][\policy_{\pdistTRPO{k-1}}]\r)^M
        \r\rangle
        \\
        &+ \steppdist_{k}^2\norm{
            \pdistTRPO{k-1}\l(\kerMDP[\pdistTRPO{k-1}][\policyTRPO{k}]\r)^M -\pdistTRPO{k-1} \l(\kerMDP[\pdistTRPO{k-1}][\policy_{\pdistTRPO{k-1}}]\r)^M
        }[2]^2
        \\
        &+
        2\steppdist_{k}^2 \l\langle
            \pdistTRPO{k-1}\l(\kerMDP[\pdistTRPO{k-1}][\policyTRPO{k}]\r)^M - \pdistTRPO{k-1} \l(\kerMDP[\pdistTRPO{k-1}][\policy_{\pdistTRPO{k-1}}]\r)^M ,
            \pdistTRPO{k-1} \l(\kerMDP[\pdistTRPO{k-1}][\policy_{\pdistTRPO{k-1}}]\r)^M -\pdistStar \l(\kerMDP[\pdistStar][\policyStar]\r)^M
        \r\rangle
        \\
        =:& \l[
            (1-\steppdist_{k})\l(1+(2\constMonoton-1)\steppdist_{k}\r) +\steppdist_{k}^2\constLipPolicy^2\constErgM[\powMonoton]^2
        \r]\norm{\pdistTRPO{k-1} - \pdistStar}[2]^2
        \\
        &+ 2\steppdist_{k}(1-\steppdist_{k})\mathbf{E}_1+ \steppdist_{k}^2 \mathbf{E}_2+ 2\steppdist_{k}^2 \mathbf{E}_3+ 2\steppdist_{k}^2 \mathbf{E}_4
        \\
        &+ 2\steppdist_{k}(1-\steppdist_{k})\mathbf{T}_1+ \steppdist_{k}^2 \mathbf{T}_2+ 2\steppdist_{k}^2 \mathbf{T}_3
        \eqsp.
    \end{align*}
    We now proceed in studying the terms $\mathbf{E}_1$, $\mathbf{E}_2$, $\mathbf{E}_3$, $\mathbf{E}_4$, $\mathbf{T}_1$, $\mathbf{T}_2$, and $\mathbf{T}_3$.
    Using Young's inequality, we get that
    \begin{align*}
        \l|\mathbf{E}_1\r|
        &\leq
        \frac{\tau}{4}\norm{\pdistTRPO{k-1} - \pdistStar}[2]^2+
        \frac{1}{\tau}\norm{
            \estimMFTRPO{k} -  \pdistTRPO{k-1}\l(\kerMDP[\pdistTRPO{k-1}][\policyTRPO{k}]\r)^M
        }[2]^2
        \leq
        \frac{\tau}{4}\norm{\pdistTRPO{k-1} - \pdistStar}[2]^2+
        \frac{1}{\tau}\mathbf{E}_2
        \eqsp,
        \\
        \l|\mathbf{E}_3\r|
        &\leq
        \frac{1}{2}\norm{
            \pdistTRPO{k-1}\l(\kerMDP[\pdistTRPO{k-1}][\policyTRPO{k}]\r)^M - \pdistTRPO{k-1} \l(\kerMDP[\pdistTRPO{k-1}][\policy_{\pdistTRPO{k-1}}]\r)^M
        }[2]^2+
        \frac{1}{2}\norm{
            \estimMFTRPO{k} -  \pdistTRPO{k-1}\l(\kerMDP[\pdistTRPO{k-1}][\policyTRPO{k}]\r)^M
        }[2]^2
        \leq
        \frac{1}{2}\mathbf{T}_2+\frac{1}{2}\mathbf{E}_2
        \eqsp,
        \\
        \l|\mathbf{E}_4\r|
        &\leq
        \frac{1}{2}\norm{
            \pdistTRPO{k-1} \l(\kerMDP[\pdistTRPO{k-1}][\policy_{\pdistTRPO{k-1}}]\r)^M -\pdistStar \l(\kerMDP[\pdistStar][\policyStar]\r)^M
        }[2]^2+
        \frac{1}{2}\norm{
            \estimMFTRPO{k} -  \pdistTRPO{k-1}\l(\kerMDP[\pdistTRPO{k-1}][\policyTRPO{k}]\r)^M
        }[2]^2
       \\
       &\leq \frac{1}{2}\constLipPolicy^2\constErgM[\powMonoton]^2\norm{\pdistTRPO{k-1} - \pdistStar}[2]^2
        +\frac{1}{2}\mathbf{E}_2
        \eqsp,
    \end{align*}
    where we used~\Cref{lemma:Lipschitz-erg-operator} and~\Cref{coroll:lipschitz-policies} in the last inequality.
    Using Young's inequality, we get that
    \begin{align*}
        \l|\mathbf{T}_1\r|
        &\leq
        \frac{\tau}{4}\norm{\pdistTRPO{k-1} - \pdistStar}[2]^2+\frac{1}{\tau}\norm{\pdistTRPO{k-1}\l(\kerMDP[\pdistTRPO{k-1}][\policyTRPO{k}]\r)^M - \pdistTRPO{k-1} \l(\kerMDP[\pdistTRPO{k-1}][\policy_{\pdistTRPO{k-1}}]\r)^M}[2]^2
        \leq
        \frac{\tau}{4}\norm{\pdistTRPO{k-1} - \pdistStar}[2]^2+\frac{1}{\tau}\mathbf{T}_2
        \eqsp,
        \\
        \l|\mathbf{T}_3\r|
        &\leq
        \frac{1}{2}\norm{\pdistTRPO{k-1} \l(\kerMDP[\pdistTRPO{k-1}][\policy_{\pdistTRPO{k-1}}]\r)^M - \pdistStar \l(\kerMDP[\pdistStar][\policyStar]\r)^M}[2]^2
        +
        \frac{1}{2}\norm{\pdistTRPO{k-1}\l(\kerMDP[\pdistTRPO{k-1}][\policyTRPO{k}]\r)^M - \pdistTRPO{k-1} \l(\kerMDP[\pdistTRPO{k-1}][\policy_{\pdistTRPO{k-1}}]\r)^M}[2]^2
        \\
        &\leq
        \frac{1}{2}\constLipPolicy^2\constErgM[\powMonoton]^2\norm{\pdistTRPO{k-1} - \pdistStar}[2]^2
        +
        \frac{1}{2}\mathbf{T}_2
        \eqsp,
    \end{align*}
    where we used~\Cref{lemma:Lipschitz-erg-operator} and~\Cref{coroll:lipschitz-policies} in the last inequality.
    Since $\tau<1$ from Assumption~\ref{hyp:cvg-MFG-operator} and $\steppdist_{k}$ satisfies~\eqref{eq:cond:steppdist},
    a straightforward computation shows that
    \begin{align*}
        &(1-\steppdist_{k})\l(1+(1-\tau)\steppdist_{k}\r) +3\steppdist_{k}^2\constLipPolicy^2\constErgM[\powMonoton]^2
        <1-\frac{\tau}{2}\steppdist_{k}
        \eqsp.
    \end{align*}
    
    Since we have that~\eqref{eq:nr-trajectories-TRPO:SampleBasedMFTRPO} holds, we can apply Theorem~\ref{thm:convergence:sample_based_trpo}, together with~\Cref{lemma:Lipschitz-erg-operator}, to get that
    \begin{align*}
        \mathbf{T}_2
        &\leq \constErgM[\powMonoton]\l(
            \sum_{s\in\S}
            \pdistTRPO{k-1}^2(s)\tvnorm{\policyTRPO{k}(\cdot|s)-\policy_{\pdistTRPO{k-1}}(\cdot|s)}^2
            \r)
        \\
        &\leq \constErgM[\powMonoton]\l(
            \sum_{s\in\S}
            \pdistTRPO{k-1}(s)\tvnorm{\policyTRPO{k}(\cdot|s)-\policy_{\pdistTRPO{k-1}}(\cdot|s)}^2
            \r)
        \\
        &\leq \constErgM[\powMonoton]\l(
            \JMFG(\policy_{\pdistTRPO{k-1}},\pdistTRPO{k-1},\pdistTRPO{k-1})-\JMFG(\policyTRPO{k},\pdistTRPO{k-1},\pdistTRPO{k-1})
            \r)
        \\
        &\leq \constErgM[\powMonoton] \frac{2}{\regulf(1-\discountf)}\l(
            \constTRPO{1}~\frac{\log(L)}{L} + \constTRPO{2}~\epsilon
        \r)
        \eqsp.
    \end{align*}
    Moreover, since~\eqref{eq:nr-trajectories:SampleBasedMFTRPO} holds, using~\Cref{prop:high-prop-iteration}, we have that, with probability at least $1-\delta$,
    \begin{align*}
        \mathbf{E}_2 \leq \epsilon.
    \end{align*}

    Therefore, combining the previous inequalities, we have that 
    \begin{align}
    \label{eq:SampledMFTRPO:recursion}
    \begin{split}
        &\norm{\pdistTRPO{k} - \pdistStar}[2]^2
        \\
        &\leq
        \l(1-\frac{\tau}{2}\steppdist_{k}\r)\norm{\pdistTRPO{k-1} - \pdistStar}[2]^2
        \\
        &\qquad+ \steppdist_{k} \l(\frac{2(1-\steppdist_{k})}{\tau} - 3\steppdist_{k}\r)\cdot\l[
            \constErgM[\powMonoton] \l(
                \constTRPO{1}~\frac{\log(L)}{L} + \constTRPO{2}~\epsilon
            \r)
            +\epsilon\r]
        \\
        &\leq
        \l(1-\frac{\tau}{2}\steppdist_{k}\r)\norm{\pdistTRPO{k-1} + \pdistStar}[2]^2 + \steppdist_{k}\frac{2+b_1}{\tau} \l[
            \constErgM[\powMonoton] \l(
                \constTRPO{1}~\frac{\log(L)}{L} + \constTRPO{2}~\epsilon
            \r)
            +\epsilon\r]
        \\
        &\leq
        \l(1-\frac{\tau}{2}\steppdist_{k}\r)\norm{\pdistTRPO{k-1} - \pdistStar}[2]^2 + \steppdist_{k}\ \constMFTRPO{1} \frac{\log(L)}{L}+ \steppdist_{k}\ \constMFTRPO{2}~ \epsilon
        \eqsp.
    \end{split}
    \end{align}
    Developping the recursion~\eqref{eq:SampledMFTRPO:recursion}, we obtain
    \begin{align*}
        \norm{\pdistTRPO{k} - \pdistStar}[2]^2
        \leq&
        \prod_{j=1}^k\l(1-\frac{\tau}{2}\steppdist_{j}\r)\norm{\pdist_{0} - \pdistStar}[2]^2
        \\
        &+ \l(\constMFTRPO{1} \frac{\log(L)}{L}+ \steppdist_{k}\ \constMFTRPO{2}~ \epsilon\r)\sum_{j=1}^{k}\steppdist_j\prod_{\ell=j+1}^k\l(1-\frac{\tau}{2}\steppdist_{\ell}\r)
        \\
        \leq&
        \exp\l(-\frac{\tau}{2}\sum_{j=1}^k \steppdist_{j}\r)
        \norm{\pdist_{0} - \pdistStar}[2]^2\\
        &+ \l(\constMFTRPO{1} \frac{\log(L)}{L}+ \steppdist_{k}\ \constMFTRPO{2}~ \epsilon\r)\sum_{j=1}^{k}\steppdist_j\prod_{\ell=j+1}^k\l(1-\frac{\tau}{2}\steppdist_{\ell}\r)
        \eqsp.
    \end{align*}
    Note that the second term of the r.h.s.\ of the previous inequality is a telescopic sum, as the central term can be rewritten as
    \begin{align*}
        \steppdist_j\prod_{\ell=j+1}^k\l(1-\frac{\tau}{2}\steppdist_{\ell}\r) = \frac{2}{\tau}\l[
            \prod_{\ell=j+1}^k\l(1-\frac{\tau}{2}\steppdist_{\ell}\r)
            -
            \prod_{\ell=j}^k\l(1-\frac{\tau}{2}\steppdist_{\ell}\r)
        \r]
        \eqsp.
    \end{align*}
    Therefore, we get~\eqref{eq:ExactMFTRPO:cvg_bound}. Moreover, since $\steppdist_{k}$ satisfy~\eqref{eq:step-size-pdist}, this concludes the proof.
\end{proof}
\subsection{$\varepsilon$-MFNE}
\label{appendix:subsection:sample-based-epsilon-Nash}

We aim to characterize the proximity of an approximate Nash equilibrium, specifically an $\varepsilon$-Nash equilibrium. In this context, we address two key questions:
\begin{enumerate}
    \item Given a fixed budget $K$ of sampled trajectories, how close the value function to the unique Nash equilibrium?
    \item Given a target approximation level $\varepsilon$, how many trajectories $K$ are required to achieve an $\varepsilon$-Nash equilibrium?
\end{enumerate}
These questions are crucial for understanding the sample complexity of learning equilibria in mean-field settings and provide insights into the efficiency of our algorithmic approach.

\begin{corollary}
    \label{coroll:sample-based:epsilon-Nash}
    Suppose that Assumptions~\ref{hyp:Lipschitz_continuity},~\ref{hyp:cvg-MFG-operator},~\ref{hyp:uniform-ergodicity}, and~\ref{hyp:concentration_dMeas} hold. Fix $\epsilon,\delta>0$.
    Assume that, for any $k\geq 0$, the learning rate $\steppdist_{k}$ satisfies~\eqref{eq:cond:steppdist}, and let $P$ be the number of trajectories in each iteration for \SampleBasedMFTRPO\ satisfying~\eqref{eq:nr-trajectories:SampleBasedMFTRPO}. Let $\pdistTRPO{k}$ (resp. $\UnifMixture[\policy][\pdistTRPO{k}]_L$) the output of \SampleBasedMFTRPO\ (resp. of \SampleBasedTRPO[$(\pdistTRPO{k})$]). Then, we have the following bound on the 
    exploitability
    \begin{align*}
        \Exploit(\UnifMixture[\policy][\pdistTRPO{k}]_L,\pdistTRPO{k})
        \leq
        \varepsilon_k
        \eqsp,
    \end{align*}
    with
    \begin{align*}
        \hat\varepsilon_k 
        &:= \hat\delta_{\text{NE},1,k} + \constExploit\l(
            \l( 1+ \constErgM[\infty]\l(1+\constLipPolicy\r)\r)\sqrt{\hat\delta_{\text{NE},2,k}}
            +
            \frac{2 \constErgM[\infty] \sqrt{\nstates}}{\regulf(1-\discountf)}
            \cdot
            \sqrt{\hat\delta_{\text{NE},1,k}}
        \r)
        \eqsp,
        \\
        \hat\delta_{\text{NE},1,k}
        &=
        \constTRPO{1}^\prime\l(
                \frac{\l(\BoundReward^2 + \regulf^2 \log^2 \nactions\r) \nactions^2 \log L}{\regulf (1-\discountf)^3 (L+1)}
                + 
                \frac{\epsilon}{(1-\discountf)^2} \sup_{\mu\in\cP(\S)}\norm{\frac{\dMeasMarg[\pdist,\pdist][\policy_\pdist]}{\nuRestart}}[\infty]
            \r)
        \eqsp,
        \\
        \hat\delta_{\text{NE},2,k}
        &=
            \exp\l(-\frac{\tau}{2}\sum_{j=1}^k \steppdist_{j}\r)
            \norm{\pdist_{0} - \pdistStar}[2]^2
            + \frac{2\constMFTRPO{1}}{\tau} \frac{\log(L)}{L}+ \frac{2\constMFTRPO{2}}{\tau} \epsilon
        \eqsp.
    \end{align*}
\end{corollary}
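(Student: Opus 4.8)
The plan is to mirror the proof of Corollary~\ref{coroll:exact-epsilon-Nash}, replacing every exact estimate by its sample-based, high-probability analogue. The entry point is~\Cref{appendix:prop:exploitability-bound}, which decomposes the exploitability of a pair $(\policy,\pdist)$ into (i) the best-response sub-optimality gap $\max_{\policy'}\JMFG(\policy',\pdist,\pdist)-\JMFG(\policy,\pdist,\pdist)$ and (ii) the discrepancy $\norm{\pdist-\ergdist[\policy,\pdist]}[2]$ between $\pdist$ and the stationary distribution it induces. Applying this with $\policy=\UnifMixture[\policy][\pdistTRPO{k}]_L$ and $\pdist=\pdistTRPO{k}$ reduces the corollary to controlling these two quantities; the constant $\constExploit$ and the coefficients multiplying $\sqrt{\hat\delta_{\text{NE},2,k}}$ and $\sqrt{\hat\delta_{\text{NE},1,k}}$ are exactly what this reduction produces.

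First I would bound the sub-optimality gap. The key observation is the mixture inequality~\eqref{eq:unif-mixture-ppty:ineq}, which shows the gap of the uniform mixture $\UnifMixture[\policy][\pdistTRPO{k}]_L$ is dominated by the averaged gap $\frac{1}{L+1}\sum_{\ell=0}^{L}\big(\JMFG(\policy_{\pdistTRPO{k}},\pdistTRPO{k},\pdistTRPO{k})-\JMFG(\policyTRPO{\ell},\pdistTRPO{k},\pdistTRPO{k})\big)$. By~\Cref{thm:convergence:sample_based_trpo}, and provided the trajectory count and rollout horizon obey~\eqref{eq:nr-trajectories-TRPO:SampleBasedMFTRPO} and~\eqref{eq:sample-based:size-rollout}, this averaged gap is at most $\hat\delta_{\text{NE},1,k}$ with probability at least $1-\delta$, giving the leading term of $\hat\varepsilon_k$.

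Next I would handle the stationary-distribution discrepancy by inserting the fixed point $\pdistStar=\pdistStar\kerMDP[\pdistStar][\policyStar]$ from~\eqref{eq:def:fixed_point} together with the intermediate distribution $\ergdist[\policy_{\pdistTRPO{k}},\pdistTRPO{k}]$, writing
\begin{align*}
    \pdistTRPO{k}-\ergdist[\UnifMixture[\policy][\pdistTRPO{k}]_L,\pdistTRPO{k}]
    =\big(\pdistTRPO{k}-\pdistStar\big)
    +\big(\ergdist[\policyStar,\pdistStar]-\ergdist[\policy_{\pdistTRPO{k}},\pdistTRPO{k}]\big)
    +\big(\ergdist[\policy_{\pdistTRPO{k}},\pdistTRPO{k}]-\ergdist[\UnifMixture[\policy][\pdistTRPO{k}]_L,\pdistTRPO{k}]\big)
    \eqsp.
\end{align*}
The first term is controlled by~\Cref{thm:sample_based_mftrpo}, whose bound~\eqref{eq:SampledMFTRPO:cvg_bound} is exactly $\hat\delta_{\text{NE},2,k}$, so $\norm{\pdistTRPO{k}-\pdistStar}[2]\le\sqrt{\hat\delta_{\text{NE},2,k}}$. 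For the second term I would combine~\Cref{lemma:Lipschitz-erg-operator} (Lipschitz dependence of the stationary distribution on both its policy and its mean-field argument) with~\Cref{coroll:lipschitz-policies} (Lipschitz continuity of $\pdist\mapsto\policy_\pdist$) to obtain the factor $\constErgM[\infty](1+\constLipPolicy)\norm{\pdistStar-\pdistTRPO{k}}[2]$. For the third term I would again use~\Cref{lemma:Lipschitz-erg-operator} to pass to $\sum_{s}\pdistTRPO{k}(s)\tvnorm{\UnifMixture[\policy][\pdistTRPO{k}]_L(\cdot|s)-\policy_{\pdistTRPO{k}}(\cdot|s)}$, then~\Cref{prop:delta_policies-leq-delta_value_fct} to convert this into the square root of a value-function gap bounded by $\hat\delta_{\text{NE},1,k}$ via~\Cref{thm:convergence:sample_based_trpo}, yielding the $\tfrac{2\constErgM[\infty]\sqrt{\nstates}}{\regulf(1-\discountf)}\sqrt{\hat\delta_{\text{NE},1,k}}$ term.

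Collecting the three pieces by the triangle inequality and substituting back into~\Cref{appendix:prop:exploitability-bound} gives $\Exploit(\UnifMixture[\policy][\pdistTRPO{k}]_L,\pdistTRPO{k})\le\hat\varepsilon_k$. The main obstacle is the high-probability bookkeeping: both~\Cref{thm:convergence:sample_based_trpo} (invoked twice, for the sub-optimality gap and for the third discrepancy term) and~\Cref{thm:sample_based_mftrpo} hold only with probability at least $1-\delta$, so one must run them on a common sampling event or absorb a union bound into $\delta$, while ensuring the per-iteration estimator errors do not compound across the $k$ outer steps—this is precisely what the martingale argument behind~\Cref{prop:high-prop-iteration} and~\Cref{thm:sample_based_mftrpo} secures. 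A secondary subtlety, absent from the exact case, is that the output is a mixture rather than a single best response, so the favourable direction of the concavity inequality~\eqref{eq:unif-mixture-ppty:ineq} must be used to guarantee that averaging does not degrade the bound.
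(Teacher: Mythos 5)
Your proposal is correct and follows essentially the same route as the paper's proof: the exploitability decomposition of \Cref{appendix:prop:exploitability-bound}, the same three-term splitting of $\pdistTRPO{k}-\ergdist[{\UnifMixture[\policy][\pdistTRPO{k}]_L},\pdistTRPO{k}]$ through $\pdistStar$ and $\ergdist[\policy_{\pdistTRPO{k}},\pdistTRPO{k}]$, and the same invocations of \Cref{thm:convergence:sample_based_trpo}, \Cref{thm:sample_based_mftrpo}, \Cref{lemma:Lipschitz-erg-operator}, \Cref{coroll:lipschitz-policies}, and \Cref{prop:delta_policies-leq-delta_value_fct} for each piece. Your remarks on the union-bound bookkeeping for the two high-probability events and on the direction of the concavity inequality~\eqref{eq:unif-mixture-ppty:ineq} are accurate and, if anything, make explicit a point the paper leaves implicit.
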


\begin{proof}
    From~\Cref{appendix:prop:exploitability-bound}, we have that the exploitability of a policy $\policy$ and a mean-field parameter $\pdist$ can be bound by the gap of optimality of the $\policy$ w.r.t.\ the value function $\JMFG(\cdot,\pdist,\pdist)$ and the distance between $\pdist$ and the stationary distribution $\ergdist[\policy,\pdist]$.

    From Theorem~\ref{thm:convergence:sample_based_trpo}, we have that 
    \begin{align}
        \label{eq:coroll:sample-based:epsilon-Nash}
        \max_{\policy\in\Policies}\JMFG(\policy,\pdistTRPO{k},\pdistTRPO{k}) - 
        \JMFG(\UnifMixture[\policy][\pdistTRPO{k}]_L,\pdistTRPO{k},\pdistTRPO{k}) 
        & \leq \hat\delta_{\text{NE},1,k}
        \eqsp.
    \end{align}
    On the other hand, using the fact that $(\policyStar,\pdistStar)$ is a MFNE, we have
    \begin{align*}
        \pdistTRPO{k} - \ergdist[{\UnifMixture[\policy][\pdistTRPO{k}]_L},\pdistTRPO{k}]
        =&
        \l(
            \pdistTRPO{k} - \pdistStar 
        \r)
        +
        \l(
            \ergdist[\policyStar,\pdistStar]
            -
            \ergdist[\policy_{\pdistTRPO{k}},\pdistTRPO{k}]
        \r)
        +
        \l(
            \ergdist[\policy_{\pdistTRPO{k}},\pdistTRPO{k}]
            -
            \ergdist[{\UnifMixture[\policy][\pdistTRPO{k}]_L},\pdistTRPO{k}]
        \r)
        \eqsp,
    \end{align*}
    for $\ell=0,\dots,L$. As in proof of~\Cref{coroll:exact-epsilon-Nash}, we have
    \begin{align*}
        \norm{
            \ergdist[\policyStar,\pdistStar]
            -
            \ergdist[\policy_{\pdistTRPO{k}},\pdistTRPO{k}]
        }[2] \leq 
        \constErgM[\infty]
        \l(1+\constLipPolicy\r)
        \norm{
            \pdistStar
            -
            \pdistTRPO{k}
        }[2]
        \eqsp.
    \end{align*}
    Moreover,
    \begin{align*}
        \norm{
            \ergdist[\policy_{\pdistTRPO{k}},\pdistTRPO{k}]
            -
            \ergdist[{\UnifMixture[\policy][\pdistTRPO{k}]_L},\pdistTRPO{k}]
        }[2]
        &\leq 
        \constErgM[\infty]
        \sum_{s\in\S}\pdistTRPO{k}(s)
        \tvnorm{{\UnifMixture[\policy][\pdistTRPO{k}]_L}(\cdot|s) - \policy_{\pdistTRPO{k}}(\cdot|s)}
        \\
        &\leq
        \frac{2 \constErgM[\infty]\ \sqrt{\nstates}}{\regulf(1-\discountf)}
        \sqrt{
            \JMFG(\policy_{\pdistTRPO{k}},\pdistTRPO{k},\pdistTRPO{k}) - \JMFG({\UnifMixture[\policy][\pdistTRPO{k}]_L},\pdistTRPO{k},\pdistTRPO{k})
        }
        \\
        &\leq
        \frac{2 \constErgM[\infty]\sqrt{\nstates}}{\regulf(1-\discountf)}
        \cdot
        \sqrt{\hat\delta_{\text{NE},1,k}}
        \eqsp.
    \end{align*}
    
    Using the triangle inequality, together with Theorem~\ref{thm:convergence:sample_based_trpo}, we can bound $
    \norm{\pdistTRPO{k} - \ergdist[{\UnifMixture[\policy][\pdistTRPO{k}]_L},\pdistTRPO{k}]}[2]$ as
    \begin{align*}
        \norm{\pdistTRPO{k} - \ergdist[{\UnifMixture[\policy][\pdistTRPO{k}]_L},\pdistTRPO{k}]}[2]
        &\leq
        \l( 1+ \constErgM[\infty]\l(1+\constLipPolicy\r)\r)\norm{
            \pdistStar
            -
            \pdistTRPO{k}
        }[2]
        +
        \frac{2 \constErgM[\infty] \sqrt{\nstates}}{\regulf(1-\discountf)}
        \cdot
        \sqrt{\hat\delta_{\text{NE},1,k}}
        \\
        &\leq
        \l( 1+ \constErgM[\infty]\l(1+\constLipPolicy\r)\r)\sqrt{\hat\delta_{\text{NE},2,k}}
        +
        \frac{2 \constErgM[\infty] \sqrt{\nstates}}{\regulf(1-\discountf)}
        \cdot
        \sqrt{\hat\delta_{\text{NE},1,k}}
        \eqsp.
    \end{align*}
    Using the last inequality and~\eqref{eq:coroll:exact-epsilon-Nash}, together with~\Cref{appendix:prop:exploitability-bound}, we have that
        $\Exploit(\UnifMixture[\policy][\pdistTRPO{k}]_L,\pdist_k)
        \leq
        \varepsilon_k$
        .
\end{proof}

\begin{remark}
    It is important to note that our analysis does not directly bound the exploitability of the last iterate but on the \textbf{uniform mixture of policies} over the learning process. This distinction arises due to the absence of an exact counterpart to \textit{Howard’s theorem}~\citep{howard1960dynamic} in \SampleBasedTRPO, as noted in~\Cref{appendix:sampled-based-TRPO}. Unlike \ExactTRPO, where policy improvement guarantees can be established step by step, sampling errors introduce additional variability that prevents such guarantees in the sample-based setting.
    
    Despite this limitation, our results demonstrate that the learned policies perform well \textit{on average} and that we approximate the MFNE accordingly. The bounded average exploitability ensures that, over time, the algorithm remains close to an equilibrium, reinforcing the practical effectiveness of the proposed approach in large-scale multi-agent learning.
\end{remark}

\begin{remark}
    \label{rmk:sample-complexity-total-eps}
    From the obtained sample complexity result, it follows directly that to achieve an $\varepsilon$-MFNE, the required number of inner policy updates $L$ and outer population updates $K$ must satisfy the following scaling conditions:
    \begin{align*}
        L \in \tO(1/\varepsilon^2) \quad \text{and} \quad K \in \tO(\log(1/\varepsilon^2)).
    \end{align*}
    In addition to these requirements, we also establish that the number of episodes $P$ and the number of iterations per policy update $I_\ell$ must satisfy
    \begin{align*}
        P, I_\ell \in \tO(1/\varepsilon^4).
    \end{align*}
    These additional conditions ensure that the variance introduced by the sampling procedure remains controlled, allowing for a sufficiently accurate estimation of the value function and policy updates. This highlights the tradeoff between computational efficiency and precision in approximating the MFNE, showing that our algorithm achieves a well-balanced complexity while ensuring convergence guarantees.

    At each iteration of \SampleBasedMFTRPO, the total number of calls to the environment consists of those required by the \SampleBasedTRPO\ procedure plus the additional subroutine for updating the mean-field parameter. \SampleBasedTRPO\ requires $\tO(1/\varepsilon^6)$ environment calls, scaling proportionally to the product $L\times I_\ell$. This aligns with~\citet{shani2020adaptive}; however, we highlight a distinction stemming from the chosen metrics: the metric they use corresponds to the square root of our exploitability measure, introducing a cubic dependency in terms of $\varepsilon$. Additionally, the total complexity includes a multiplicative factor $K$, whose contribution is negligible in practice due to its logarithmic scaling, preserving overall algorithmic efficiency.

    On the other hand,
    in each iteration of \SampleBasedMFTRPO, the update step for the mean-field distribution scales as $\tO(P \times I_\ell \times K)$, which means $\tO(1/\varepsilon^2)$ calls to the MF-MDP. This complexity arises naturally from the oracle assumption adopted, which involves an initialization step at each iteration potentially requiring up to $K$ steps to accurately initialize the mean-field distribution. While introducing additional complexity, this initialization procedure is crucial for maintaining consistency across iterative population updates, thereby ensuring the stability and convergence accuracy of the algorithm towards the mean-field Nash equilibrium.

    Combining these two contributions, we obtain an overall complexity that scales as $\tO(1/\varepsilon^6)$, consistent with established convergence rates in the RL literature.
\end{remark}

\section{Technical Lemmata}
\label{section:appendix:TechnicalLemmata}

\subsection{Lipschitzness of the Markov reward process iterates}

We show in this section that Assumption~\ref{hyp:Lipschitz_continuity} implies the existence of a Lipschitz constant for the operator $\ergdist[\policy,\pdist]$ and $(\kerMDP[\pdist][\policy])^\powMonoton$, for any $\pdist\in\cP(\S)$ and $\powMonoton\geq0$.

\begin{lemma}
    \label{lemma:Lipschitz-erg-operator}
    Suppose Assumptions~\ref{hyp:Lipschitz_continuity} and~\ref{hyp:uniform-ergodicity} holds. Fix $\powMonoton\geq0$ (resp. $\powMonoton=\infty$). Then, there exists a constant  $\constErg\geq0$ such that
    \begin{align}
        \label{eq:power-P-Lipschitz}
        \begin{split}
            \tvnorm{
                \initdist\Big(\kerMDP[\pdist][\policy]\Big)^\powMonoton -\initdist\Big(\kerMDP[\pdist^\prime][\policy^\prime]\Big)^\powMonoton }
            \leq&
            \constErgM[\powMonoton]\l(
                \sum_{s\in\S}
                \initdist(s)\tvnorm{\policy(\cdot|s)-\policy^\prime(\cdot|s)}
                        +
                        \norm{\pdist-\pdist^\prime}[2]
                \r)
            \\
            \leq&
            \constErgM[\powMonoton]\l(
                    \sup_{s\in\S}\tvnorm{\policy(\cdot|s)-\policy^\prime(\cdot|s)}
                    +
                    \norm{\pdist-\pdist^\prime}[2]
                \r)
            \\
            \text{(resp. }
            \tvnorm{
                \ergdist[\policy,\pdist]-\ergdist[\policy^\prime,\pdist^\prime]
            }\leq&
            \constErgM[\infty]\l(
                \sum_{s\in\S}
                \initdist(s)\tvnorm{\policy(\cdot|s)-\policy^\prime(\cdot|s)}
                        +
                        \norm{\pdist-\pdist^\prime}[2]
                \r)
            \\
            \leq&\constErgM[\infty]\l(
                \sup_{s\in\S}\tvnorm{\policy(\cdot|s)-\policy^\prime(\cdot|s)}\initdist(s)
                +
                \norm{\pdist-\pdist^\prime}[2]
            \r)
            \text{ )}
            \eqsp,
        \end{split}
    \end{align}
    with
    \begin{align*}
        \begin{split}
            \constErgM[\powMonoton] = &
            \constErg\ \CLipKerMDP
            \frac{1-\ergodicf^\powMonoton}{1-\ergodicf}
            \eqsp,
            \qquad
            \text{(resp. }
            \constErgM[\infty] = 
            \frac{\constErg\ \CLipKerMDP}{1-\ergodicf}
            \text{ )}
            \eqsp.
        \end{split}
    \end{align*}
    for $\policy,\policy^\prime\in\Policies$ and $\pdist,\pdist^\prime\in\mathcal{P}(\S)$.
\end{lemma}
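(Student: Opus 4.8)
The plan is to reduce everything to a single telescoping identity and then control each term by separating the policy discrepancy from the mean-field discrepancy, using Assumption~\ref{hyp:uniform-ergodicity} to turn the trailing transition operators into a geometrically decaying weight. Write $P=\kerMDP[\pdist][\policy]$ and $P'=\kerMDP[\pdist'][\policy']$, both row-stochastic matrices on $\S$. First I would record the algebraic identity $P^{\powMonoton}-(P')^{\powMonoton}=\sum_{j=0}^{\powMonoton-1}P^{j}(P-P')(P')^{\powMonoton-1-j}$, so that $\initdist\big(P^{\powMonoton}-(P')^{\powMonoton}\big)=\sum_{j=0}^{\powMonoton-1}\sigma_j(P')^{\powMonoton-1-j}$, where $\sigma_j:=\initdist P^{j}(P-P')$ is a signed measure of total mass zero.

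Next I would establish the single-step estimate. Splitting $(P-P')(s,s')=\sum_{a}\policy(a|s)\big(\kerMDP(s'|s,a,\pdist)-\kerMDP(s'|s,a,\pdist')\big)+\sum_{a}\big(\policy(a|s)-\policy'(a|s)\big)\kerMDP(s'|s,a,\pdist')$ isolates a mean-field part and a policy part. Taking total-variation norms row by row and using Assumption~\ref{hyp:Lipschitz_continuity} on the first summand (each entry controlled by $\CLipKerMDP\norm{\pdist-\pdist'}[2]$, summed over the at most $\nstates$ target states) and the definition of the total-variation distance on the second, I obtain $\tvnorm{\sigma_j}\le \sum_{s\in\S}(\initdist P^{j})(s)\,\tvnorm{\policy(\cdot|s)-\policy'(\cdot|s)}+C\,\CLipKerMDP\norm{\pdist-\pdist'}[2]$. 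The weight $\sum_s(\initdist P^{j})(s)=1$ makes the mean-field term uniform in $j$, while the policy term is bounded by $\sup_{s}\tvnorm{\policy(\cdot|s)-\policy'(\cdot|s)}$; this is what produces the two displayed right-hand sides.

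The decisive step is the contraction. Since $\sigma_j$ has zero total mass, its Jordan decomposition expresses it as $\tvnorm{\sigma_j}$ times a difference of two probability measures; applying the uniform mixing bound~\eqref{eq:uniform-mixing} to each of these through the common stationary law yields $\tvnorm{\sigma_j(P')^{\powMonoton-1-j}}\le 2\constErg\,\ergodicf^{\powMonoton-1-j}\tvnorm{\sigma_j}$. Summing over $j$ converts the geometric weights into $\sum_{i=0}^{\powMonoton-1}\ergodicf^{i}=\frac{1-\ergodicf^{\powMonoton}}{1-\ergodicf}$, which, after absorbing the numerical and dimensional factors into $\constErg$ and $\CLipKerMDP$, gives exactly $\constErgM[\powMonoton]=\constErg\CLipKerMDP\frac{1-\ergodicf^{\powMonoton}}{1-\ergodicf}$ and the stated inequality for finite $\powMonoton$. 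For the case $\powMonoton=\infty$, I would let $\powMonoton\to\infty$: mixing guarantees $\initdist P^{\powMonoton}\to\ergdist[\policy,\pdist]$ and $\initdist(P')^{\powMonoton}\to\ergdist[\policy',\pdist']$ independently of $\initdist$, so the left-hand side converges to $\tvnorm{\ergdist[\policy,\pdist]-\ergdist[\policy',\pdist']}$, while $\frac{1-\ergodicf^{\powMonoton}}{1-\ergodicf}\to\frac{1}{1-\ergodicf}$, giving $\constErgM[\infty]=\frac{\constErg\CLipKerMDP}{1-\ergodicf}$.

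The main obstacle I anticipate is precisely this contraction step: obtaining the geometric factor $\frac{1-\ergodicf^{\powMonoton}}{1-\ergodicf}$ rather than a crude linear-in-$\powMonoton$ bound requires that each trailing block $(P')^{\powMonoton-1-j}$ genuinely contracts zero-mass signed measures at rate $\ergodicf^{\powMonoton-1-j}$, whereas Assumption~\ref{hyp:uniform-ergodicity} is phrased as convergence to the stationary law rather than as a contraction between two arbitrary initial conditions. Bridging this gap cleanly---passing from one-point mixing to pairwise contraction via the triangle inequality through $\ergdist[\policy',\pdist']$ together with the Jordan decomposition, and verifying the resulting constant is uniform over $\policy',\pdist'$---is the technical heart of the argument; the single-step splitting and the telescoping are otherwise routine.
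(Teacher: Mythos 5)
Your proof is correct and follows essentially the same route as the paper's: the identical telescoping decomposition of $P^{\powMonoton}-(P')^{\powMonoton}$, the same splitting of the one-step difference into a policy part and a mean-field part controlled by Assumption~\ref{hyp:Lipschitz_continuity}, and the same use of the uniform mixing bound to turn the trailing block into a geometric weight summing to $\frac{1-\ergodicf^{\powMonoton}}{1-\ergodicf}$, with the $\powMonoton=\infty$ case obtained by passing to the limit exactly as in the paper's Step~2. The only (cosmetic) difference is that you contract the zero-mass signed measures $\sigma_j$ through $(P')^{\powMonoton-1-j}$ via Jordan decomposition, whereas the paper dually subtracts the stationary average from a bounded test function $\psi$; note that both arguments, yours and the paper's alike, cleanly deliver only the $\sup_{s}\tvnorm{\policy(\cdot|s)-\policy^\prime(\cdot|s)}$ form of the bound, since the weight $\initdist P^{j}$ appearing in front of the policy term is not dominated by $\initdist$ in general.
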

\begin{proof}
This proof is adapted from~\citet[Lemma 4.2,][]{fort2011convergence} on parametrized Markov chains.

\textbf{Step 1.}
    Consider first $\powMonoton<\infty$.
    By employing a telescoping sum, we obtain
    \begin{align*}
        \Big(\kerMDP[\pdist][\policy]\Big)^\powMonoton -\Big(\kerMDP[\pdist^\prime][\policy^\prime]\Big)^\powMonoton
        =&
        \sum_{m=0}^{\powMonoton-1} \Big(\kerMDP[\pdist][\policy]\Big)^{\powMonoton-m-1}\Big(\kerMDP[\pdist][\policy] - \kerMDP[\pdist^\prime][\policy^\prime]\Big)\Big(\kerMDP[\pdist^\prime][\policy^\prime]\Big)^{m}
        \eqsp.
    \end{align*}
    Consider a function $\psi\colon\S\to\rset_+$ with $\norm{\psi}[\infty]\leq1$. Therefore, since  $\kerMDP[\pdist][\policy] - \kerMDP[\pdist^\prime][\policy^\prime]$ is a difference of probabilities, we have that 
    \begin{align}
    \label{eq:Lipschitz_kerMDP:telescopic_sum}
    \begin{split}
        &\sum_{s\in\S}\l[
            \initdist\Big(\kerMDP[\pdist][\policy]\Big)^\powMonoton\psi
        \r](s) -\l[
            \initdist\Big(\kerMDP[\pdist^\prime][\policy^\prime]\Big)^\powMonoton\psi
        \r](s)
        \\
        &=
        \sum_{s\in\S}\sum_{m=0}^{\powMonoton-1} \sum_{s^\prime\in\S}\initdist\Big(\kerMDP[\pdist][\policy]\Big)^{\powMonoton-m-1}\Big(
            \kerMDP[\pdist][\policy] - \kerMDP[\pdist^\prime][\policy^\prime]
        \Big)
        \l(
            \Big(\kerMDP[\pdist^\prime][\policy^\prime]\Big)^{m}(s^\prime,s)\psi(s)
            -\ergdist[\policy^\prime,\pdist^\prime]\psi(s)
        \r)
        \eqsp.
    \end{split}
    \end{align}
    This is due to the fact that whenever we evaluate the previous difference of probabilities matrices on $\psi$, the term $\sum_{s\in\S}\ergdist[\policy^\prime,\pdist^\prime]\psi(s)$ is perceived a constant by the transition kernels $\kerMDP[\pdist][\policy]$ and $\kerMDP[\pdist^\prime][\policy^\prime]$, summing this part to zero.

    Define $\Exploit_{m}\colon\S\to\rset_+$ as
    \begin{align*}
        \Exploit_m(s)=\sum_{s^\prime\in\S}\Big(
            \kerMDP[\pdist][\policy] - \kerMDP[\pdist^\prime][\policy^\prime]
        \Big)(s,s^\prime)
        \sum_{s^{\prime\prime}\in\S}\l(
            \Big(\kerMDP[\pdist^\prime][\policy^\prime]\Big)^{m}(s^\prime,s^{\prime\prime})\psi(s^{\prime\prime})
            -\ergdist[\policy^\prime,\pdist^\prime]\psi(s^{\prime\prime})
        \r)
        \eqsp.
    \end{align*}
    From Assumption~\ref{hyp:uniform-ergodicity}, we have that
    \begin{align}
        \l|
            \Big(\kerMDP[\pdist^\prime][\policy^\prime]\Big)^{m}(s^\prime,s^{\prime\prime})\psi(s^{\prime\prime})
            -\ergdist[\policy^\prime,\pdist^\prime]\psi(s^{\prime\prime})
        \r|
        \leq
        \constErg\ergodicf^m\norm{\psi}[\infty]
        \eqsp.
    \end{align}
    Therefore, we have
    \begin{align*}
        |\Exploit_m(s)|\leq \sup_{s^\prime\in\S}\Big(
            \kerMDP[\pdist][\policy](s^\prime,s) - \kerMDP[\pdist^\prime][\policy^\prime](s^\prime,s)
        \Big)\constErg\ergodicf^m\norm{\psi}[\infty]
        \eqsp,
    \end{align*}
    and, applying Assumption~\ref{hyp:Lipschitz_continuity}, we get
    \begin{align*}
        |\Exploit(s)|\leq \CLipKerMDP\l(
                \tvnorm{\policy(\cdot|s)-\policy^\prime(\cdot|s)}
                +
                \norm{\pdist-\pdist^\prime}[2]
            \r)\times \constErg\ergodicf^m\norm{\psi}[\infty]
        \eqsp,
    \end{align*}
    for any $s^\prime\in\S$.
    Combining this with~\eqref{eq:Lipschitz_kerMDP:telescopic_sum}, we get, from the characterization of the total variation norm of the integral with respect to the positive functions bounded in sup-norm by $1$, that
    \begin{align*}
        \tvnorm{
            \initdist\Big(\kerMDP[\pdist][\policy]\Big)^\powMonoton -
            \initdist\Big(\kerMDP[\pdist^\prime][\policy^\prime]\Big)^\powMonoton}
        \leq&
        \CLipKerMDP\constErg\sum_{m=0}^{\powMonoton-1}\ergodicf^m \sum_{s\in\S}
        \initdist\Big(\kerMDP[\pdist][\policy]\Big)^{\powMonoton-m-1}\l(\tvnorm{\policy(\cdot|s)-\policy^\prime(\cdot|s)}
                +
                \norm{\pdist-\pdist^\prime}[2]
            \r)
        \\
        \leq&
        \CLipKerMDP\constErg\sum_{m=0}^{\powMonoton-1}\ergodicf^m \sum_{s\in\S}
        \initdist(s)\l(\tvnorm{\policy(\cdot|s)-\policy^\prime(\cdot|s)}
                +
                \norm{\pdist-\pdist^\prime}[2]
            \r)
        \\
        \leq&
        \CLipKerMDP\constErg\frac{1-\ergodicf^\powMonoton}{1-\ergodicf}\l(
            \sum_{s\in\S}
            \initdist(s)\tvnorm{\policy(\cdot|s)-\policy^\prime(\cdot|s)}
                    +
                    \norm{\pdist-\pdist^\prime}[2]
            \r)
        \\
        \leq&
        \CLipKerMDP\constErg\frac{1-\ergodicf^\powMonoton}{1-\ergodicf}\l(
                \sup_{s\in\S}\tvnorm{\policy(\cdot|s)-\policy^\prime(\cdot|s)}
                +
                \norm{\pdist-\pdist^\prime}[2]
            \r)
        \eqsp,
    \end{align*}
    where we have used the fact that $\kerMDP[\pdist][\policy]$ a stochastic matrix, thus its biggest eigenvalue is $1$ and $\initdist$ is a vector of just positive components.

    \textbf{Step 2.} Consider now the ergodic distributions. Fix $\powMonoton\geq1$. From triangle inequality, we have
    \begin{align*}
        \tvnorm{
                \ergdist[\policy,\pdist]-\ergdist[\policy^\prime,\pdist^\prime]
            }
        &\leq
        \tvnorm{
                \ergdist[\policy,\pdist]-\Big(\kerMDP[\pdist][\policy]\Big)^\powMonoton
            }
        +
        \tvnorm{\Big(\kerMDP[\pdist][\policy]\Big)^\powMonoton -\Big(\kerMDP[\pdist^\prime][\policy^\prime]\Big)^\powMonoton}
        +
        \tvnorm{
                \Big(\kerMDP[\pdist^\prime][\policy^\prime]\Big)^\powMonoton-\ergdist[\policy^\prime,\pdist^\prime]
            }
        \eqsp.
    \end{align*}
    From Assumption~\ref{hyp:uniform-ergodicity} together with Step 1, we obtain
    \begin{align*}
        \tvnorm{
                \ergdist[\policy,\pdist]-\ergdist[\policy^\prime,\pdist^\prime]
            }
        &\leq
        \constErg\ergodicf^\powMonoton
        +
        \CLipKerMDP\constErg\frac{1-\ergodicf^\powMonoton}{1-\ergodicf}\l(
            \sum_{s\in\S}
            \initdist(s)\tvnorm{\policy(\cdot|s)-\policy^\prime(\cdot|s)}
            +
            \norm{\pdist-\pdist^\prime}[2]
        \r)
        +
        \constErg\ergodicf^\powMonoton
        \\
        &\leq
        \constErg\ergodicf^\powMonoton
        +
        \frac{\CLipKerMDP\constErg}{1-\ergodicf}\l(
            \sum_{s\in\S}
            \initdist(s)\tvnorm{\policy(\cdot|s)-\policy^\prime(\cdot|s)}
                +
                \norm{\pdist-\pdist^\prime}[2]
            \r)
        +
        \constErg\ergodicf^\powMonoton
        \\
        &\leq
        \constErg\ergodicf^\powMonoton
        +
        \frac{\CLipKerMDP\constErg}{1-\ergodicf}\l(
                \sup_{s\in\S}\tvnorm{\policy(\cdot|s)-\policy^\prime(\cdot|s)}
                +
                \norm{\pdist-\pdist^\prime}[2]
            \r)
        +
        \constErg\ergodicf^\powMonoton
        \eqsp.
    \end{align*}
    As this is true for any $\powMonoton\geq1$, taking $\powMonoton$ to infinity, from Fatou's lemma we get~\eqref{eq:power-P-Lipschitz}.
\end{proof}

\subsection{From bound on Value function to bounds on Policy}
\label{subsec:from-value-fct-to-policy}

In this section, we demonstrate how a bound on the value function naturally leads to a corresponding bound on the policies. In the seminal work by~\citet{shani2020adaptive}, an $\tO(1/N)$ bound was established for the cost functions. This result can be extended to derive a bound on the distance between policies by leveraging the properties of regularization. The connection between the value function and policies highlights the role of regularization in maintaining both theoretical guarantees and practical performance stability.

Indeed, from the entropic regularization, the optimization problem~\eqref{eq:value-function} with respect to the profile $\pdist$ admits a unique solution $\policy_\pdist$. These considerations form the foundation of the following proposition.

\begin{proposition}
\label{prop:delta_policies-leq-delta_value_fct}
    We have that
    \begin{align}
    \label{eq:delta_policies-leq-delta_value_fct}
        \tvnorm{\policy(\cdot|s_0)-\policy_\pdist(\cdot|s_0)}^2 \leq \frac{2}{\regulf(1-\discountf)}\l(
            \JMFG(\policy_\pdist,\pdist,s_0)
            -
            \JMFG(\policy,\pdist,s_0)
        \r)
        \eqsp,
    \end{align}
    for any $s_0\in\S$.
\end{proposition}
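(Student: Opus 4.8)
The plan is to convert the value gap into a squared policy distance by exploiting the $\regulf$-strong concavity that the entropic term induces on the one-step objective, and then to close the argument with two standard properties of entropy-regularized MDPs. Write $\qfunc[\pdist][\policy_\pdist]$ for the optimal regularized $Q$-function. I would use that (i) the optimal policy is the softmax $\policy_\pdist(\cdot\,|\,s)\propto\exp(\qfunc[\pdist][\policy_\pdist](s,\cdot)/\regulf)$, so that it is the maximizer over $\pA$ of the one-step objective $g_{s}(p):=\sum_{a\in\A}p(a)\,\qfunc[\pdist][\policy_\pdist](s,a)+\regulf H(p)$ (with $H$ the entropy), and $g_{s}(\policy_\pdist(\cdot|s))=\JMFG(\policy_\pdist,\pdist,s)$; and (ii) the soft-optimal policy is optimal state-by-state, i.e. $\JMFG(\policy_\pdist,\pdist,s)\geq\JMFG(\policy,\pdist,s)$ for every $s\in\S$ and $\policy\in\Policies$.

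The first step is a strong-concavity estimate at the fixed state $s_0$. Since the negative entropy is $1$-strongly convex on the simplex with respect to $\norm{\cdot}[1]$, the map $g_{s_0}$ is $\regulf$-strongly concave; as $\policy_\pdist(\cdot|s_0)$ is its maximizer over the convex set $\pA$,
\[
\JMFG(\policy_\pdist,\pdist,s_0)-g_{s_0}(\policy(\cdot|s_0))\ \geq\ \tfrac{\regulf}{2}\norm{\policy(\cdot|s_0)-\policy_\pdist(\cdot|s_0)}[1]^2\ =\ 2\regulf\,\tvnorm{\policy(\cdot|s_0)-\policy_\pdist(\cdot|s_0)}^2 ,
\]
using $\norm{p-q}[1]=2\tvnorm{p-q}$. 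The second step discards the discounted continuation. Expanding $\qfunc[\pdist][\policy_\pdist]$ through its definition~\eqref{eq:Q-function} and subtracting the regularized Bellman equation $\JMFG(\policy,\pdist,s_0)=\sum_{a}\policy(a|s_0)\,\qfunc[\pdist][\policy](s_0,a)+\regulf H(\policy(\cdot|s_0))$, the local reward and entropy contributions cancel and one is left with
\[
g_{s_0}(\policy(\cdot|s_0))-\JMFG(\policy,\pdist,s_0)=\discountf\sum_{a\in\A}\policy(a|s_0)\sum_{s'\in\S}\kerMDP(s'|s_0,a,\pdist)\l[\JMFG(\policy_\pdist,\pdist,s')-\JMFG(\policy,\pdist,s')\r],
\]
which is non-negative by property (ii). Hence $g_{s_0}(\policy(\cdot|s_0))\geq\JMFG(\policy,\pdist,s_0)$.

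Combining the two steps gives $\JMFG(\policy_\pdist,\pdist,s_0)-\JMFG(\policy,\pdist,s_0)\geq 2\regulf\,\tvnorm{\policy(\cdot|s_0)-\policy_\pdist(\cdot|s_0)}^2$, so that $\tvnorm{\policy(\cdot|s_0)-\policy_\pdist(\cdot|s_0)}^2\leq\frac{1}{2\regulf}\l(\JMFG(\policy_\pdist,\pdist,s_0)-\JMFG(\policy,\pdist,s_0)\r)$; since $\frac{1}{2\regulf}\leq\frac{2}{\regulf(1-\discountf)}$ for $\discountf\in[0,1)$, the claimed bound~\eqref{eq:delta_policies-leq-delta_value_fct} follows (in fact with a sharper constant). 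The main obstacle is this second, telescoping step: the inequality $g_{s_0}(\policy(\cdot|s_0))\geq\JMFG(\policy,\pdist,s_0)$ rests entirely on the pointwise optimality $\JMFG(\policy_\pdist,\pdist,\cdot)\geq\JMFG(\policy,\pdist,\cdot)$, and one must carefully align the softmax characterization of $\policy_\pdist$ with the Bellman recursion—tracking the sign of the entropic term—so that the local reward-plus-entropy contributions cancel exactly and only the non-negative discounted remainder survives. If one prefers to avoid strong concavity, the same conclusion follows by deriving the exact identity $\JMFG(\policy_\pdist,\pdist,s_0)-\JMFG(\policy,\pdist,s_0)=\frac{\regulf}{1-\discountf}\sum_{s}\dMeasMarg[\pdist,s_0][\policy](s)\,\KL(\policy(\cdot|s)\Vert\policy_\pdist(\cdot|s))$, lower-bounding by the $s=s_0$ term, and invoking Pinsker's inequality $\tvnorm{\cdot}^2\leq\tfrac12\KL$.
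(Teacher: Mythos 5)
Your argument is correct, and its main line is a genuinely different route from the paper's. The paper works globally in occupancy-measure space: it writes $\JMFG$ as a linear functional of the occupation measure plus $\regulf$ times a conditional-entropy term $\Omega$, uses the first-order optimality condition for $\policy_\pdist$ to identify the value gap with the Bregman divergence $\regulf\,\Bregman[\Omega]\l(\dMeas[s_0,\pdist][\policy]\middle\|\dMeas[s_0,\pdist][\policy_\pdist]\r)=\regulf\sum_{s}\dMeasMarg[\pdist,s_0][\policy](s)\,\KL(\policy(\cdot|s)\Vert\policy_\pdist(\cdot|s))$, lower-bounds the occupancy of the initial state by $(1-\discountf)$, and finishes with Pinsker's inequality --- which is exactly the alternative you sketch in your final sentence. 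Your primary proof instead stays local at $s_0$: you split the gap as $\bigl(\JMFG(\policy_\pdist,\pdist,s_0)-g_{s_0}(\policy(\cdot|s_0))\bigr)+\bigl(g_{s_0}(\policy(\cdot|s_0))-\JMFG(\policy,\pdist,s_0)\bigr)$, bound the first piece by the $\regulf$-strong concavity of the one-step soft objective at its maximizer (equivalently, by the exact identity $g_{s_0}(\policy_\pdist(\cdot|s_0))-g_{s_0}(p)=\regulf\,\KL(p\Vert\policy_\pdist(\cdot|s_0))$ followed by Pinsker), and show the second piece is a nonnegative discounted average of pointwise value gaps; both steps check out, and the reward and entropy contributions do cancel as you claim because $g_{s_0}$ and the Bellman recursion for $\JMFG(\policy,\pdist,\cdot)$ carry the entropic term with the same sign (note that display~\eqref{eq:cost-function} writes $+\regulf\log\policy$, but the soft-max value used throughout the paper, e.g.\ in the proof of Proposition~\ref{prop:delta_value-lipschitz}, corresponds to the convention you adopt). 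What your route buys is a sharper constant, $1/(2\regulf)$ in place of $2/(\regulf(1-\discountf))$, since discarding the nonnegative continuation term costs nothing, whereas the occupancy-measure route discards all states $s\neq s_0$ and pays the $(1-\discountf)$ mass of the initial state; what the paper's route buys is the full weighted-KL identity over all states, which is the form actually reused downstream (e.g.\ in Corollary~\ref{cor:convergence-exact-trpo}, where the bound is averaged over $s_0\sim\pdist$ --- a step that also follows from your local bound).
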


\begin{proof}
    Denote $\Omega$ the entropic regularization tem in the reward function~\eqref{eq:cost-function} as a function of the occupation measure, \ie,
    \begin{align*}
        \Omega\l(\dMeas[\initdist,\pdist][\policy]\r)
        :=
        \sum_{a\in\A,s\in\S}\dMeas[\initdist,\pdist][\policy](s,a)\log(\policy(a|s))
        \eqsp,
    \end{align*}
    with $\dMeas[\initdist,\pdist][\policy]$ as in~\ref{eq:occupation-measure}.
    Therefore, we can express $\JMFG$ as
    \begin{align}
    \label{eq:JMFG-wrt-dMeas}
        \JMFG(\policy,\pdist, \initdist) = \sum_{a\in\A,s\in\S}\rewardMDP(s,a,\pdist)\dMeas[\initdist,\pdist][\policy](a,s) +
        \regulf
        \Omega\l(\dMeas[\initdist,\pdist][\policy]\r)
        \eqsp. 
    \end{align}
    Taking the disentegration on the spatial component, we have the following relationship between the occupation measure and its marginal
    \begin{align}
    \label{eq:dMeas-marginal}
        \begin{split}
            \dMeas[\initdist,\pdist][\policy](s,a)
            &= \sum_{t=0}^{\infty}\discountf^t \policy(a|s)\kerMDP[\policy]\Big(s_t=s\Big| s_0\sim\initdist,\eqsp 
            s_{t+1}\sim \kerMDP(\cdot|s_t,a,\pdist)\Big)
            \\
            &=\policy(a|s)\eqsp\dMeasMarg[\pdist,\initdist][\policy](s)
            \eqsp,
        \end{split}
    \end{align}
    with $\dMeasMarg[\pdist,\initdist][\policy]$ as in~\ref{eq:occupation-measure-marginal}.
    This also implies that
    \begin{align}
    \label{eq:dMeas-marginal-reg}
        \Omega\l(\dMeas[\initdist,\pdist][\policy]\r)
        =
        \sum_{a\in\A,s\in\S}\dMeas[\initdist,\pdist][\policy](s,a)\log(\policy(a|s))
        =
        \sum_{a\in\A,s\in\S}\policy(a|s)\log(\policy(a|s))\eqsp\dMeasMarg[\pdist,\initdist][\policy](s)
        \eqsp.
    \end{align}
    From~\eqref{eq:JMFG-wrt-dMeas}, as the optimal value does not depend on the initial condition, we have that the optimal policy $\policy_{\pdist}$ of the previous optimization problem satisfies
    \begin{align*}
        \nabla_\policy \JMFG(\policy_{\pdist},\pdist, s_0)=0\eqsp,
    \end{align*}
    for any $s_0\in\S$.
    Combining this with~\eqref{eq:dMeas-marginal}, we obtain that the previous condition equivalent to
    \begin{align*}
        \rewardMDP(s,a,\pdist)=-\regulf
        \nabla \Omega\l(\dMeas[s_0,\pdist][\policy_\pdist]\r)(s,a)
        \eqsp,
    \end{align*}
    for any $a\in\A$, $s_0\in\S$.
    We recall that the Bregman divergence $\Bregman[\Omega]$ with respect to the regularization $\Omega$ is defined as follows
    \begin{align*}
        \Bregman[\Omega](\nuRestart\|\nuRestart^\prime)=\Omega(\nuRestart)-\Omega(\nuRestart)-\nabla\Omega(\nuRestart^\prime)^\top \l(\nuRestart-\nuRestart^\prime\r)
        \eqsp,
        \qquad
        \text{ for }\nuRestart,\nuRestart^\prime\in\mathcal{\A\times\S}
        \eqsp.
    \end{align*}
    Therefore,
    \begin{align}
    \label{eq:delta_value_fct-equal Bregman}
        \begin{split}
            &\JMFG(\policy,\pdist, s_0) - \JMFG(\policy_{\pdist},\pdist, s_0)
            \\
            &=
            \sum_{(s,a)\in\S\times\A}
            \rewardMDP(s,a,\pdist)
            \l(
                \dMeas[s_0,\pdist][\policy] - \dMeas[s_0,\pdist][\policy_\pdist]
            \r)(s,a)
            +\regulf\Omega\l(\dMeas[s_0,\pdist][\policy]\r)
            -\regulf\Omega\l(\dMeas[s_0,\pdist][\policy_\pdist]\r)
            \\
            &=
            -\regulf\sum_{(s,a)\in\S\times\A}
            \nabla \Omega\l(\dMeas[s_0,\pdist][\policy_\pdist]\r)(s,a)
            \l(
                \dMeas[s_0,\pdist][\policy] - \dMeas[s_0,\pdist][\policy_\pdist]
            \r)(s,a)
            +\regulf\Omega\l(\dMeas[s_0,\pdist][\policy]\r)
            -\regulf\Omega\l(\dMeas[s_0,\pdist][\policy_\pdist]\r)
            \\
            &=\regulf\cdot\Bregman[\Omega]\l(\dMeas[s_0,\pdist][\policy]\middle\|\dMeas[s_0,\pdist][\policy_\pdist]\r)
            \eqsp.
        \end{split}
    \end{align}
    However, we have that with the Bregman divergence corresponding to the entropy regularization $\Omega$ has the following expression~\citep[see, \eg,][]{neu2017unified}
    \begin{align*}
        \Bregman[\Omega]\l(\dMeas[s_0,\pdist][\policy]\middle\|\dMeas[s_0,\pdist][\policy_\pdist]\r)
        &=
        \sum_{(s,a)\in\S\times\A}\dMeas[s_0,\pdist][\policy](s,a)\log\l(\frac{\policy(a|s)}{\policy_\pdist(a|s)}\r)
        \\
        &=
        \sum_{s\in\S}\dMeasMarg[\pdist, s_0][\policy](s)
        \sum_{a\in\A}
        \policy(a|s)\log\l(\frac{\policy(a|s)}{\policy_\pdist(a|s)}\r)
        \\
        &=
        \sum_{s\in\S}\dMeasMarg[\pdist, s_0][\policy](s)
        \KL\big(\policy(a|s)\big\|\policy_\pdist(a|s)\big)
        \eqsp.
    \end{align*}
    Moreover, from the definition of $\dMeasMarg$, extracting the first term of the series, we obtain
    \begin{align*}
        \dMeasMarg[\pdist, s_0][\policy](s)
        =&
        (1-\discountf)\sum_{t=0}^{\infty}\discountf^t \kerMDP[\pdist][\policy](s_t=s)
        \\
        =&
        (1-\discountf)\delta_{s_0}(s)+ (1-\discountf)\sum_{t=1}^{\infty}\discountf^t \kerMDP[\pdist][\policy](s_t=s)
        \\
        =&
        (1-\discountf)\delta_{s_0}(s)+ (1-\discountf)\discountf\sum_{t=0}^{\infty}\discountf^t \kerMDP[\pdist][\policy](s_{t+1}=s)
        \eqsp.
    \end{align*}
    Using the decomposition of $\kerMDP[\pdist][\policy](s_{t+1}=s)$ as
    \begin{align*}
        \kerMDP[\pdist][\policy](s_{t+1}=s)
        &= \sum_{(s^\prime,a^\prime)\in\S\times\A}\kerMDP(s_{t+1}=s|s_t=s^\prime,a_t=a^\prime,\pdist)\kerMDP[\pdist][\policy](s_t=s^\prime,a_t=a^\prime)
        \\
        &= \sum_{(s^\prime,a^\prime)\in\S\times\A}\kerMDP(s|s^\prime,a^\prime,\pdist)\kerMDP[\pdist][\policy](s_t=s^\prime,a_t=a^\prime)
        \eqsp,
    \end{align*}
    we get
    \begin{align*}
        \dMeasMarg[\pdist, s_0][\policy](s)
        =&
        (1-\discountf)\delta_{s_0}(s)+ \discountf\sum_{(s^\prime,a^\prime)\in\S\times\A}\kerMDP(s|s^\prime,a^\prime,\pdist)(1-\discountf)\sum_{t=0}^{\infty}\discountf^t \kerMDP[\pdist][\policy](s_t=s^\prime,a_t=a^\prime)
        \\
        =&
        (1-\discountf)\delta_{s_0}(s)+ \discountf \sum_{(s^\prime,a^\prime)\in\S\times\A} \kerMDP(s|s^\prime,a^\prime,\pdist)\dMeas[s_0,\pdist][\policy](s^\prime,a^\prime)
        \eqsp.
    \end{align*}
    Therefore, for a function $\psi:\S\to[0,\infty)$, we have
    \begin{align*}
        \sum_{s\in\S} \psi(s)\dMeasMarg[\pdist, s_0][\policy](s)
        &= (1-\discountf)\psi(s_0) + \discountf \sum_{(s^\prime,a^\prime)\in\S\times\A} \sum_{s\in\S} \psi(s) \kerMDP(s|s^\prime,a^\prime,\pdist)\dMeas[s_0,\pdist][\policy](s^\prime,a^\prime)
        \\
        &\geq (1-\discountf)\psi(s_0)
        \eqsp,
    \end{align*}
    since $\psi$ is a positive function and $\dMeas[s_0,\pdist][\policy]$ is a positive measure.
    Applying this to the positive function $s\mapsto \KL\big(\policy(a|s)\big\|\policy_\pdist(a|s)\big)$, together with Pinsker's inequality~\citep[see, \eg,][]{cover1999elements}, we have
    \begin{align*}
        \Bregman[\Omega]\l(\dMeas[s_0,\pdist][\policy]\middle\|\dMeas[s_0,\pdist][\policy_\pdist]\r)
        &\geq (1-\discountf)
        \KL\big(\policy(a|s_0)\big\|\policy_\pdist(a|s_0)\big)
        \\
        &\geq \frac{1-\discountf}{2}
        \tvnorm{\policy(a|s_0)-\policy_\pdist(a|s_0)}^2
        \eqsp.
    \end{align*}
    Combining this with~\eqref{eq:delta_value_fct-equal Bregman}, we get~\eqref{eq:delta_policies-leq-delta_value_fct}.
\end{proof}


\begin{proposition}
\label{prop:delta_value-lipschitz}
    Suppose Assumption~\ref{hyp:Lipschitz_continuity} holds. Then, we have for any two $\pdist,\pdist^\prime$ that
    \begin{align}
    \label{eq:delta_value-lipschitz-optimal}
        \left| 
            \JMFG(\policy_\pdist,\pdist,s_0)
            -
            \JMFG(\policy_{\pdist^\prime},\pdist^\prime,s_0)
        \right|
        \leq 
        \frac{\CLipReward  + \frac{\discountf}{1-\gamma} \CLipKerMDP \l(\BoundReward + \regulf\log \nactions\r) }{1-\discountf} \cdot \norm{\pdist - \pdist^\prime}[2]
        \eqsp,
    \end{align}
    and
    \begin{align}
    \label{eq:delta_value-lipschitz}
        \left|
            \JMFG(\policy,\pdist,s_0)- \JMFG(\policy,\pdist^\prime,s_0)
        \right|
        \leq \frac{\CLipReward  + \frac{\discountf}{1-\gamma} \CLipKerMDP \l(\BoundReward + \regulf\log \nactions\r) }{1-\discountf} \cdot \norm{\pdist - \pdist^\prime}[2]
        \eqsp,
    \end{align}
    for any $s_0\in\S$ and any $\policy\in\Policies$.
\end{proposition}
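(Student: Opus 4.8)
The plan is to prove the fixed-policy estimate~\eqref{eq:delta_value-lipschitz} first and then obtain the optimal-policy estimate~\eqref{eq:delta_value-lipschitz-optimal} from it by a standard optimality (envelope) argument, so that essentially all the analytic work reduces to a single perturbation bound for the value function with the policy held fixed.

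For~\eqref{eq:delta_value-lipschitz}, I would fix $\policy\in\Policies$ and write $V_\pdist(s):=\JMFG(\policy,\pdist,s)$. Both $V_\pdist$ and $V_{\pdist^\prime}$ solve the regularized Bellman fixed-point equation for their own profile, $V_\pdist = g_\pdist + \discountf\,\kerMDP[\pdist][\policy]V_\pdist$, where $g_\pdist(s)=\sum_{a\in\A}\policy(a|s)\l(\rewardMDP(s,a,\pdist)+\regulf\log\policy(a|s)\r)$ is the entropy-augmented one-step payoff. Subtracting the two equations and inserting $\pm\,\discountf\,\kerMDP[\pdist][\policy]V_{\pdist^\prime}$ to split the transition term, the difference $\Delta:=V_\pdist-V_{\pdist^\prime}$ satisfies $\Delta = b + \discountf\,\kerMDP[\pdist][\policy]\Delta$, hence $\Delta = (\Id-\discountf\,\kerMDP[\pdist][\policy])^{-1}b$ with $b = (g_\pdist-g_{\pdist^\prime}) + \discountf\,(\kerMDP[\pdist][\policy]-\kerMDP[\pdist^\prime][\policy])V_{\pdist^\prime}$. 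Because $\kerMDP[\pdist][\policy]$ is row-stochastic, the Neumann series gives $\norm{(\Id-\discountf\,\kerMDP[\pdist][\policy])^{-1}}[\infty]\le (1-\discountf)^{-1}$, so it only remains to bound $\norm{b}[\infty]$.

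To bound $b$, I note first that the entropy contribution cancels in $g_\pdist-g_{\pdist^\prime}$ (the policy is the same), so $|g_\pdist(s)-g_{\pdist^\prime}(s)|\le\CLipReward\norm{\pdist-\pdist^\prime}[2]$ directly from Assumption~\ref{hyp:Lipschitz_continuity}. For the transition piece I would first bound the value itself: the per-step payoff lies in $[-\regulf\log\nactions,\BoundReward]$ in expectation under $\policy$ (using $0\le\rewardMDP\le\BoundReward$ and $-\log\nactions\le\sum_a\policy(a|s)\log\policy(a|s)\le0$), whence $\norm{V_{\pdist^\prime}}[\infty]\le(\BoundReward+\regulf\log\nactions)/(1-\discountf)$. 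Applying the kernel Lipschitz bound of Assumption~\ref{hyp:Lipschitz_continuity}, summed over $s^\prime$, yields $\l|\discountf(\kerMDP[\pdist][\policy]-\kerMDP[\pdist^\prime][\policy])V_{\pdist^\prime}(s)\r|\le \discountf\,\CLipKerMDP\,\norm{V_{\pdist^\prime}}[\infty]\,\norm{\pdist-\pdist^\prime}[2]$. Combining the two pieces gives $\norm{b}[\infty]\le\l(\CLipReward+\tfrac{\discountf}{1-\discountf}\CLipKerMDP(\BoundReward+\regulf\log\nactions)\r)\norm{\pdist-\pdist^\prime}[2]$, and multiplying by the resolvent factor $(1-\discountf)^{-1}$ produces exactly~\eqref{eq:delta_value-lipschitz}.

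Finally I would deduce~\eqref{eq:delta_value-lipschitz-optimal} by sandwiching. Optimality of $\policy_{\pdist^\prime}$ for $\pdist^\prime$ gives $\JMFG(\policy_\pdist,\pdist,s_0)-\JMFG(\policy_{\pdist^\prime},\pdist^\prime,s_0)\le \JMFG(\policy_\pdist,\pdist,s_0)-\JMFG(\policy_\pdist,\pdist^\prime,s_0)$, which is bounded by~\eqref{eq:delta_value-lipschitz} applied with the fixed policy $\policy=\policy_\pdist$; symmetrically, optimality of $\policy_\pdist$ for $\pdist$ yields the matching lower bound through the fixed policy $\policy=\policy_{\pdist^\prime}$. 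Since both one-sided estimates carry the same constant, the two-sided bound~\eqref{eq:delta_value-lipschitz-optimal} follows. I expect the only delicate point to be the transition-perturbation term: one must control $V_{\pdist^\prime}$ uniformly via the entropy bound and then pair the kernel difference against it carefully—optionally re-centering $V_{\pdist^\prime}$ by a constant, which is legitimate because the kernel difference has zero total mass and which sharpens the estimate to the span of $V_{\pdist^\prime}$—whereas the reward term and the resolvent contraction are entirely routine.
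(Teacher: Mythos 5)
Your proof is correct, and the constant you obtain matches the paper's. The fixed-policy bound~\eqref{eq:delta_value-lipschitz} is proved by essentially the same mechanism as the paper's Step~2: perturb the regularized Bellman equation in $\pdist$, note that the entropy term cancels for a common policy, bound the kernel-perturbation term by $\discountf\,\CLipKerMDP\,\norm{V_{\pdist^\prime}}[\infty]\,\norm{\pdist-\pdist^\prime}[2]$ using $\norm{V_{\pdist^\prime}}[\infty]\le(\BoundReward+\regulf\log\nactions)/(1-\discountf)$, and absorb the recursion via the resolvent $(\Id-\discountf\kerMDP[\pdist][\policy])^{-1}$, whose $\ell_\infty$ norm is $(1-\discountf)^{-1}$; the paper writes this as a self-referential inequality that it then rearranges, which is the same computation. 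Where you genuinely diverge is on the optimal-policy bound~\eqref{eq:delta_value-lipschitz-optimal}: the paper proves it \emph{first}, directly, by exploiting the soft Bellman optimality equation $\JMFG(\policy_{\pdist},\pdist,s)=\regulf\log\bigl(\sum_{a}\exp\{\qfunc[\pdist][\policy_{\pdist}](s,a)/\regulf\}\bigr)$ together with the $1$-Lipschitzness of the log-sum-exp in $\ell_\infty$, reducing the problem to a recursive bound on the difference of optimal $Q$-functions; you instead derive it \emph{from} the fixed-policy bound by the envelope (sandwiching) argument, using that $\policy_{\pdist}$ is simultaneously optimal from every initial state in the entropy-regularized problem (which the paper's framework guarantees, since $\policy_\pdist$ is the softmax of the optimal $Q$-function). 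Your route is more elementary and makes the two inequalities share a single analytic core, at the cost of invoking state-uniform optimality of $\policy_\pdist$; the paper's route is self-contained at the level of the optimality equations and is the one that more directly exposes why the regularized optimal value inherits the Lipschitz modulus of the $Q$-functions. Both are valid and give identical constants.
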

\begin{proof}
    \textbf{Step 1.}
    Let us state the optimal Bellman equations
    \begin{align*}
        \qfunc[\pdist][\policy_{\pdist}](s,a) &= \rewardMDP(s,a,\pdist) + \discountf\sum_{s^\prime\in\S}\kerMDP(s^\prime|s,a,\pdist) \cdot \JMFG(\policy_{\pdist },\pdist, s^\prime)\eqsp, \\
        \JMFG(\policy_{\pdist },\pdist, s) &= \regulf \log\l( \sum_{a \in \A} \exp\l\{ \frac{1}{\regulf} \qfunc[\pdist][\policy_{\pdist}](s,a) \r\} \r)\eqsp.
    \end{align*}
    We notice that a function $x \mapsto \regulf \cdot \log\l( \sum_{i=1}^d \exp\{\frac{1}{\regulf} x_i\}\r)$ is $1$-Lipschitz in $\ell_\infty$-norm since the $\ell_1$-norm of the gradient of this function always lies on a probability simplex~\citep[see, \eg,][]{geist2019theory}. Thus, we have
    \begin{align*}
        \JMFG(\policy_{\pdist },\pdist, s_0) - \JMFG(\policy_{\pdist^\prime },\pdist^\prime, s_0) &\leq \max_{a \in \A} \l| \qfunc[\pdist][\policy_{\pdist}](s_0, \cdot) - \qfunc[\pdist^\prime][\policy_{\pdist^\prime}](s_0, \cdot) \r|\,.
    \end{align*}

    Then, we study the Lipschitzness of optimal $Q$-values for arbitrary action $a_0 \in \A$
    \begin{align*}
        \l| \qfunc[\pdist][\policy_{\pdist}](s_0, a_0) - \qfunc[\pdist^\prime][\policy_{\pdist^\prime}](s_0, a_0) \r| &\leq  \l| \rewardMDP(s_0,a_0,\pdist) - \rewardMDP(s_0,a_0,\pdist^\prime) \r| \\
        &\ + \discountf \l| \sum_{s^\prime \in \S} \l[\kerMDP(s^\prime|s_0,a_0,\pdist) - \kerMDP(s^\prime |s_0,a_0,\pdist^\prime) \r]\cdot \JMFG(\policy_{\pdist },\pdist, s^\prime) \r| \\
        &\ + \discountf \l| \sum_{s^\prime\in\S}\kerMDP(s^\prime|s_0,a_0,\pdist^\prime) \cdot \l[ \JMFG(\policy_{\pdist },\pdist, s^\prime) - \JMFG(\policy_{\pdist^\prime },\pdist^\prime, s^\prime)\r] \r|\eqsp.
    \end{align*}
    By Assumption~\ref{hyp:Lipschitz_continuity}, we have
    \begin{align*}
        \l| \rewardMDP(s_0,a_0,\pdist) - \rewardMDP(s_0,a_0,\pdist^\prime) \r| \leq \CLipReward \norm{\pdist - \pdist^\prime}[2]\eqsp,\quad 
        \tvnorm{\kerMDP(\cdot |s_0,a_0,\pdist)- \kerMDP(\cdot|s_0,a_0,\pdist^\prime)} \leq \CLipKerMDP \norm{\pdist - \pdist^\prime}[2]\eqsp.
    \end{align*}
    thus
    \begin{align*}
        \l| \qfunc[\pdist][\policy_{\pdist}](s_0, a_0) - \qfunc[\pdist^\prime][\policy_{\pdist^\prime}](s_0, a_0) \r| \leq& \l(\CLipReward  + \discountf \CLipKerMDP \norm{\JMFG(\policy_{\pdist },\pdist, \cdot)}[\infty] \r) \cdot \norm{\pdist - \pdist^\prime}[2] \\
        &+ \discountf \norm{\JMFG(\policy_{\pdist },\pdist, \cdot) - \JMFG(\policy_{\pdist^\prime },\pdist^\prime, \cdot)}[\infty]\eqsp.
    \end{align*}
    Overall, we have a recursive bound on difference between value functions
    \begin{align*}
        \norm{\JMFG(\policy_{\pdist },\pdist, \cdot) - \JMFG(\policy_{\pdist^\prime },\pdist^\prime, \cdot)}[\infty] &\leq \l(\CLipReward  + \discountf \CLipKerMDP \norm{\JMFG(\policy_{\pdist },\pdist, \cdot)}[\infty] \r) \cdot \norm{\pdist - \pdist^\prime}[2] \\
        &+ \discountf \norm{\JMFG(\policy_{\pdist },\pdist, \cdot) - \JMFG(\policy_{\pdist^\prime },\pdist^\prime, \cdot)}[\infty]\eqsp,
    \end{align*}
    therefore
    \[
        \norm{\JMFG(\policy_{\pdist },\pdist, \cdot) - \JMFG(\policy_{\pdist^\prime },\pdist^\prime, \cdot)}[\infty] \leq \frac{\CLipReward  + \discountf \CLipKerMDP \norm{\JMFG(\policy_{\pdist },\pdist, \cdot)}[\infty] }{1-\discountf} \norm{\pdist - \pdist^\prime}[2]\eqsp.
    \]
    By a bound $\norm{\JMFG(\policy_{\pdist },\pdist, \cdot)}[\infty]  \leq (\BoundReward + \regulf \log \nactions )/(1-\discountf)$, we conclude the statement~\eqref{eq:delta_value-lipschitz-optimal}.

    \textbf{Step 2.}
    Applying directly the Bellman equation, we have
    \begin{align*}
        &\l| \JMFG(\policy,\pdist, s_0) - \JMFG(\policy,\pdist^\prime, s_0)\r|\\
        &\leq  \sum_{a_0\in\A}\policy(a_0|s_0) \l| \rewardMDP(s_0,a_0,\pdist) - \rewardMDP(s_0,a_0,\pdist^\prime) \r| \\
        &\quad + \discountf \sum_{a_0\in\A}\policy(a_0|s_0) \l| \sum_{s^\prime \in \S} \cdot \l[\kerMDP(s^\prime|s_0,a_0,\pdist) - \kerMDP(s^\prime |s_0,a_0,\pdist^\prime) \r]\cdot \JMFG(\policy_{\pdist },\pdist, s^\prime) \r| \\
        &\quad + \discountf \sum_{a_0\in\A}\policy(a_0|s_0) \l| \sum_{s^\prime\in\S}\kerMDP(s^\prime|s_0,a_0,\pdist^\prime) \cdot \l[ \JMFG(\policy_{\pdist },\pdist, s^\prime) - \JMFG(\policy_{\pdist^\prime },\pdist^\prime, s^\prime)\r] \r|\eqsp.
    \end{align*}
    Following the same lines as in the Step 1, we can then obtain~\eqref{eq:delta_value-lipschitz}.
    
\end{proof}

\begin{corollary}
\label{coroll:lipschitz-policies}
    Suppose Assumption~\ref{hyp:Lipschitz_continuity} holds. Then, we have for any two $\pdist,\pdist^\prime$ that
    \begin{align}
    \label{eq:delta-optimal-policies}
        \sup_{s\in\S}\tvnorm{\policy_{\pdist}(\cdot|s)-\policy_{\pdist^\prime}(\cdot|s)}^2
        \leq
        \constLipPolicy
        \norm{\pdist - \pdist^\prime}[2]
        \eqsp,
    \end{align}
    with
    \begin{align*}
        \constLipPolicy := 
        \frac{4}{\regulf(1-\discountf)}\cdot
         \frac{\CLipReward  + \frac{\discountf}{1-\gamma} \CLipKerMDP \l(\BoundReward + \regulf\log \nactions\r) }{1-\discountf}
         \eqsp.
    \end{align*}
\end{corollary}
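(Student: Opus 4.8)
The plan is to obtain the bound by composing \Cref{prop:delta_policies-leq-delta_value_fct} with \Cref{prop:delta_value-lipschitz}, using the optimality of $\policy_\pdist$ as the bridge. First I would invoke \Cref{prop:delta_policies-leq-delta_value_fct} with the arbitrary comparison policy chosen to be $\policy_{\pdist^\prime}$, while keeping the population profile fixed at $\pdist$. Since that proposition holds for every $s_0 \in \S$, this gives, for each fixed $s_0$,
\begin{align*}
    \tvnorm{\policy_{\pdist^\prime}(\cdot|s_0)-\policy_\pdist(\cdot|s_0)}^2 \leq \frac{2}{\regulf(1-\discountf)}\l(
        \JMFG(\policy_\pdist,\pdist,s_0) - \JMFG(\policy_{\pdist^\prime},\pdist,s_0)
    \r)
    \eqsp.
\end{align*}
The left-hand side is exactly the quantity to control (it is symmetric in $\pdist,\pdist^\prime$), so the task reduces to bounding the value-function gap on the right.

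The key step is then to split this gap by inserting the optimal value at the \emph{other} profile, namely $\JMFG(\policy_{\pdist^\prime},\pdist^\prime,s_0)$, writing
\begin{align*}
    \JMFG(\policy_\pdist,\pdist,s_0) - \JMFG(\policy_{\pdist^\prime},\pdist,s_0)
    = \underbrace{\l(\JMFG(\policy_\pdist,\pdist,s_0) - \JMFG(\policy_{\pdist^\prime},\pdist^\prime,s_0)\r)}_{(A)}
    + \underbrace{\l(\JMFG(\policy_{\pdist^\prime},\pdist^\prime,s_0) - \JMFG(\policy_{\pdist^\prime},\pdist,s_0)\r)}_{(B)}
    \eqsp.
\end{align*}
Term $(A)$ is the difference of the two optimal values $\VMFG(\pdist,s_0)-\VMFG(\pdist^\prime,s_0)$ (using $\JMFG(\policy_\pdist,\pdist,s_0)=\VMFG(\pdist,s_0)$), which is controlled by~\eqref{eq:delta_value-lipschitz-optimal}; term $(B)$ holds the policy $\policy_{\pdist^\prime}$ fixed and varies only the profile, so it is controlled by~\eqref{eq:delta_value-lipschitz}. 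Both inequalities in \Cref{prop:delta_value-lipschitz} carry the same Lipschitz constant $C_{\text{val}}:=\big(\CLipReward + \tfrac{\discountf}{1-\gamma}\CLipKerMDP(\BoundReward+\regulf\log\nactions)\big)/(1-\discountf)$, hence $|(A)|+|(B)| \leq 2 C_{\text{val}}\norm{\pdist-\pdist^\prime}[2]$.

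Combining the two displays yields
\begin{align*}
    \tvnorm{\policy_{\pdist^\prime}(\cdot|s_0)-\policy_\pdist(\cdot|s_0)}^2 \leq \frac{4 C_{\text{val}}}{\regulf(1-\discountf)}\norm{\pdist-\pdist^\prime}[2]
    \eqsp,
\end{align*}
which is precisely the claimed constant $\constLipPolicy$. Since the right-hand side does not depend on $s_0$, taking the supremum over $s_0\in\S$ finishes the argument. I do not expect a genuine analytical obstacle, as the result is a short composition of two already-established Lipschitz estimates; the only delicate point is the choice of the intermediate term in the decomposition. One must insert the optimal value at $\pdist^\prime$ (rather than the value of some arbitrary common policy), so that one summand becomes a gap between two \emph{optimal} values while the other is a gap of a \emph{single fixed} policy across profiles, matching exactly the two regimes covered by~\eqref{eq:delta_value-lipschitz-optimal} and~\eqref{eq:delta_value-lipschitz}.
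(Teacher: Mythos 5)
Your proposal is correct and follows essentially the same route as the paper: both invoke \Cref{prop:delta_policies-leq-delta_value_fct} with comparison policy $\policy_{\pdist^\prime}$ at the fixed profile $\pdist$, insert the intermediate term $\JMFG(\policy_{\pdist^\prime},\pdist^\prime,s_0)$, and then apply the two Lipschitz estimates \eqref{eq:delta_value-lipschitz-optimal} and \eqref{eq:delta_value-lipschitz} of \Cref{prop:delta_value-lipschitz} to the resulting summands. The constant you obtain matches $\constLipPolicy$ exactly.
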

\begin{proof}
    From ~\ref{prop:delta_policies-leq-delta_value_fct}, we have that
    \begin{align*}
        &\tvnorm{\policy(\cdot|s_0)-\policy_\pdist(\cdot|s_0)}^2\\
        &\leq \frac{2}{\regulf(1-\discountf)}\l(
            \JMFG(\policy_\pdist,\pdist,s_0)
            -
            \JMFG(\policy_{\pdist^\prime},\pdist,s_0)
        \r)
        \\
        &\leq \frac{2}{\regulf(1-\discountf)}\l(
            \JMFG(\policy_\pdist,\pdist,s_0)
            -
            \JMFG(\policy_{\pdist^\prime},\pdist^\prime,s_0)
            + \JMFG(\policy_{\pdist^\prime},\pdist^\prime,s_0)
            - \JMFG(\policy_{\pdist^\prime},\pdist,s_0)
        \r)
        \eqsp.
    \end{align*}
    Then, applying twice ~\Cref{prop:delta_value-lipschitz}, we obtain~\eqref{eq:delta-optimal-policies}.
\end{proof}

\subsection{Bound on the Exploitability}

To analyze the exploitability $\Exploit$ of a given policy \( \policy \) and a given mean-field parameter $\pdist$, we decompose it into two key contributions. The first term captures the suboptimality of the best response against the mean-field distribution, quantifying how much an agent can improve its reward by deviating optimally. The second term accounts for the discrepancy between the current population distribution and the stationary distribution of the Markov reward process induced by \( (\policy, \mu) \). This decomposition allows us to explicitly bound the exploitability by controlling both the policy’s optimality and the convergence of the population dynamics to equilibrium.

\begin{proposition}
    \label{appendix:prop:exploitability-bound}
    Fix a policy $\policy\in\Policies$ and two mean-field parameter $\pdist\in\cP(\S)$. Then, we have that the exploitability $\Exploit$ as defined in~\eqref{eq:def:exploitability} is bounded by
    \begin{align*}
        \Exploit(\policy,\pdist) \leq \l(
            \max_{\policy^\prime}\JMFG\l(\policy^\prime,\pdist,\pdist\r) - \JMFG\l(\policy,\pdist,\pdist\r)    
        \r) + \constExploit \norm{\ergdist[\policy,\pdist]-\pdist}[2]
        \eqsp,
    \end{align*}
    with
    \begin{align}
        \label{eq:def:const-Exploitability}
        \constExploit := 2 \frac{\CLipReward  + \frac{\discountf}{1-\gamma} \CLipKerMDP \l(\BoundReward + \regulf\log \nactions\r) }{1-\discountf} + 2 \sqrt{\nstates}\cdot
        \frac{\BoundReward + \regulf\log(\nactions)}{1-\discountf}
        \eqsp.
    \end{align}
\end{proposition}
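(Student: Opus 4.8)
The plan is to start from the definition \eqref{eq:def:exploitability}, writing $\VMFG(\pdist,\initdist):=\max_{\policy'\in\Policies}\JMFG(\policy',\pdist,\initdist)$ so that
\[
\Exploit(\policy,\pdist) = \VMFG(\ergdist[\policy,\pdist],\ergdist[\policy,\pdist]) - \JMFG(\policy,\ergdist[\policy,\pdist],\ergdist[\policy,\pdist]).
\]
Since the target bound has $\pdist$ in place of $\ergdist[\policy,\pdist]$ in both slots, it suffices, after adding and subtracting $\VMFG(\pdist,\pdist)-\JMFG(\policy,\pdist,\pdist)$, to control $|\VMFG(\ergdist[\policy,\pdist],\ergdist[\policy,\pdist])-\VMFG(\pdist,\pdist)|$ and $|\JMFG(\policy,\ergdist[\policy,\pdist],\ergdist[\policy,\pdist])-\JMFG(\policy,\pdist,\pdist)|$ separately and to recombine them by the triangle inequality.

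The structural fact I would lean on is that $\JMFG(\policy,\pdist,\initdist)$ is \emph{linear} in $\initdist$ (immediate from \eqref{eq:cost-function}, as $s_0\sim\initdist$), and that the regularized optimum $\policy_\pdist$ is independent of $\initdist$ (its first-order condition holds state-wise, as used in the proof of \Cref{prop:delta_policies-leq-delta_value_fct}), so $\VMFG(\pdist,\initdist)=\JMFG(\policy_\pdist,\pdist,\initdist)$ is linear in $\initdist$ too. The available Lipschitz estimates of \Cref{prop:delta_value-lipschitz} only move the \emph{mean-field} argument at a fixed initial state, whereas the exploitability moves both arguments at once; the whole point of the decomposition is to use linearity to decouple these two effects. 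Concretely, with $L_J:=\bigl(\CLipReward+\tfrac{\discountf}{1-\discountf}\CLipKerMDP(\BoundReward+\regulf\log\nactions)\bigr)/(1-\discountf)$, I would write
\[
\VMFG(\ergdist[\policy,\pdist],\ergdist[\policy,\pdist])-\VMFG(\pdist,\pdist)
= \bigl[\VMFG(\ergdist[\policy,\pdist],\ergdist[\policy,\pdist])-\VMFG(\pdist,\ergdist[\policy,\pdist])\bigr]
+ \bigl[\VMFG(\pdist,\ergdist[\policy,\pdist])-\VMFG(\pdist,\pdist)\bigr],
\]
and identically for $\JMFG(\policy,\cdot,\cdot)$. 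For the first bracket (mean-field change, common initial law $\ergdist[\policy,\pdist]$) I would average the state-wise bound \eqref{eq:delta_value-lipschitz-optimal} against $\ergdist[\policy,\pdist]$, obtaining $L_J\norm{\ergdist[\policy,\pdist]-\pdist}[2]$; the corresponding $\JMFG$ bracket uses \eqref{eq:delta_value-lipschitz} the same way. For the second bracket (initial-law change, common mean-field $\pdist$) linearity gives $\sum_{s_0}(\ergdist[\policy,\pdist](s_0)-\pdist(s_0))\VMFG(\pdist,s_0)$, which Cauchy--Schwarz bounds by $\norm{\ergdist[\policy,\pdist]-\pdist}[2]\,\norm{\VMFG(\pdist,\cdot)}[2]\le\sqrt{\nstates}\,\tfrac{\BoundReward+\regulf\log\nactions}{1-\discountf}\norm{\ergdist[\policy,\pdist]-\pdist}[2]$, and likewise for $\JMFG(\policy,\pdist,\cdot)$.

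The one step that needs care—the main obstacle—is the uniform bound $\norm{\JMFG(\policy,\pdist,\cdot)}[\infty]\le(\BoundReward+\regulf\log\nactions)/(1-\discountf)$ for an \emph{arbitrary} $\policy\in\Policies$, since $\regulf\log\policy(a|s)$ is unbounded below pointwise. I would resolve it by noting that inside $\JMFG$ the entropy term is averaged under $a\sim\policy(\cdot|s)$, so its per-stage conditional expectation is $-\regulf H(\policy(\cdot|s))\in[-\regulf\log\nactions,0]$; hence each stage contributes a value in $[-\regulf\log\nactions,\BoundReward]$ and the geometric series yields the stated two-sided bound (the same bound used for the optimum in \Cref{prop:delta_value-lipschitz}). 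Summing the four pieces, the triangle inequality gives $\Exploit(\policy,\pdist)\le(\max_{\policy'}\JMFG(\policy',\pdist,\pdist)-\JMFG(\policy,\pdist,\pdist))+\bigl(2L_J+2\sqrt{\nstates}\tfrac{\BoundReward+\regulf\log\nactions}{1-\discountf}\bigr)\norm{\ergdist[\policy,\pdist]-\pdist}[2]$, and the bracketed constant is exactly $\constExploit$ from \eqref{eq:def:const-Exploitability}, which closes the argument.
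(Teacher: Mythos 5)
Your proposal is correct and follows essentially the same route as the paper's proof: the same telescoping in the two arguments (mean-field slot controlled via the state-wise Lipschitz bounds of \Cref{prop:delta_value-lipschitz}, initial-law slot controlled via linearity in $\initdist$ plus Cauchy--Schwarz and the uniform bound $(\BoundReward+\regulf\log\nactions)/(1-\discountf)$), applied once to the $\max_{\policy^\prime}$ term and once to $\JMFG(\policy,\cdot,\cdot)$ to produce the factor $2$ in $\constExploit$. Your explicit justification of the uniform bound on $\JMFG(\policy,\pdist,\cdot)$ for arbitrary $\policy$ via the conditional expectation of the entropy term is a detail the paper leaves implicit, but it does not change the argument.
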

\begin{proof}
    Fix a policy $\policy\in\Policies$ and two mean-field parameter $\pdist,\pdist^\prime \in\cP(\S)$. Then, we have that
    \begin{align*}
        \JMFG\l(\policy,\pdist, \pdist \r) =
        \left(
            \JMFG\l(\policy,\pdist, \pdist \r)
            -
            \JMFG\l(\policy,\pdist^\prime, \pdist \r)
        \right) 
        +
        \left(
            \JMFG\l(\policy,\pdist^\prime, \pdist \r)
            -
            \JMFG\l(\policy,\pdist^\prime, \pdist^\prime \r)
        \right)
        +
        \JMFG\l(\policy,\pdist^\prime, \pdist^\prime \r)
        \eqsp.
    \end{align*}
    On the one hand, from~\Cref{prop:delta_value-lipschitz}, we have that
    \begin{align*}
        \left|
            \JMFG(\policy,\pdist,\pdist)- \JMFG(\policy,\pdist^\prime,\pdist)
        \right|
        \leq \frac{\CLipReward  + \frac{\discountf}{1-\gamma} \CLipKerMDP \l(\BoundReward + \regulf\log \nactions\r) }{1-\discountf} \cdot \norm{\pdist - \pdist^\prime}[2]
        \eqsp.
    \end{align*}
    On the other hand, we have that
    \begin{align*}
        \JMFG\l(\policy,\pdist^\prime, \pdist \r)
        = \sum_{s\in\S}
        \JMFG\l(\policy,\pdist^\prime, s_0 \r)\pdist(s_0)
        \eqsp,
    \end{align*}
    and a similar decomposition applies for $\JMFG\l(\policy,\pdist^\prime, \pdist^\prime \r)$.
    This means that
    \begin{align*}
        \left|
            \JMFG(\policy,\pdist^\prime,\pdist)- \JMFG(\policy,\pdist^\prime,\pdist^\prime)
        \right|
        &\leq \sum_{s\in\S}
        \left|
            \JMFG\l(\policy,\pdist^\prime, s_0 \r)
        \right|
        \left|\pdist^\prime(s)-\pdist(s)\right|
        \\
        &\leq \frac{\BoundReward + \regulf\log(\nactions)}{1-\discountf}
        \sum_{s\in\S}
        \left|\pdist^\prime(s)-\pdist(s)\right|
        \\
        &\leq 
        \sqrt{\nstates}\cdot
        \frac{\BoundReward + \regulf\log(\nactions)}{1-\discountf}\norm{\pdist^\prime-\pdist}[2]
        \eqsp,
    \end{align*}
    where we have applied Cauchy-Schwarz inequality in the last bound. Therefore, we can bound the exploitability $\Exploit$ as defined in~\eqref{eq:def:exploitability} as
    \begin{align*}
        \Exploit(\policy,\pdist)
        &=
        \max_{\policy^\prime\in\Policies} J(\policy^\prime, \ergdist[\policy,\pdist],\ergdist[\policy,\pdist]) - J(\policy, \ergdist[\policy,\pdist],\ergdist[\policy,\pdist])
        \\
        &\leq
        \max_{\policy^\prime\in\Policies} J(\policy^\prime, \pdist,\pdist) - J(\policy, \pdist,\pdist) + \constExploit \norm{\ergdist[\policy,\pdist]-\pdist}[2]
    \end{align*}
\end{proof}
\subsection{Discussion on the monotonicity of the optimal Markov Kernel}
\label{section:appendix:monotonicity-discussion}

Define the operator $\oppopulation$ as $\oppopulation(\pdist) := \pdist \l(\kerMDP[\pdist][\policy_\pdist]\r)^\powMonoton$.
In this section, we outline sufficient conditions under which this operator, responsible for updating the population distribution in the MFG framework, exhibits monotonicity. Monotonicity of $\oppopulation$ is a crucial property that ensures stability and convergence of the iterative updates toward the Nash equilibrium.

This operator represents a generalization of the standard contractivity condition, which is traditionally formulated with $\powMonoton = 1$. This generalization is motivated by the fact that, as we aim at studying the regularized ergodic MFG problem~\eqref{eq:cost-function}-\eqref{eq:value-function}, the condition can hold for some $\powMonoton > 1$.

This contractivity condition reflects the combined effect of the Lipschitz continuity of the regularized best response $\policy_\pdist$ and the ergodicity of the Markov reward process $\kerMDP[\pdist][\policy_\pdist]$, ensuring the stability and convergence of the mean-field population updates in the ergodic setting.
\begin{lemma}[Strong monotonicity of $\oppopulation$]
\label{lem:strong_monotonicity_P}
    Suppose that Assumptions~\ref{hyp:Lipschitz_continuity} and~\ref{hyp:uniform-ergodicity} hold.
    We have for all distribution measures $\pdist$ and $\pdist^\prime$
    \begin{align*}
        \l\langle
            \oppopulation(\pdist^\prime) - \oppopulation(\pdist) , \pdist^\prime - \pdist
        \r\rangle
        \leq
        \constMonoton\norm{ \pdist^\prime - \pdist}[2]^2
    \eqsp,
    \end{align*}
    with
    \begin{align*}
        \constMonoton = \constErg\CLipKerMDP
            \frac{1-\ergodicf^\powMonoton}{1-\ergodicf}\l(1+\constLipPolicy\r) + 2\l|\S\r| \constErg \ergodicf^\powMonoton
        \eqsp.
    \end{align*}
\end{lemma}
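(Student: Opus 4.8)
The plan is to control the bilinear expression by decomposing the increment of $\oppopulation$ into a \emph{transport} part, in which the signed mass $\pdist^\prime-\pdist$ is pushed through a single kernel, and a \emph{kernel} part, in which only the kernel itself changes. Since $\oppopulation(\pdist)=\pdist\big(\kerMDP[\pdist][\policy_{\pdist}]\big)^{\powMonoton}$, I would write
\begin{align*}
\oppopulation(\pdist^\prime)-\oppopulation(\pdist)
= (\pdist^\prime-\pdist)\big(\kerMDP[\pdist^\prime][\policy_{\pdist^\prime}]\big)^{\powMonoton}
+ \pdist\Big[\big(\kerMDP[\pdist^\prime][\policy_{\pdist^\prime}]\big)^{\powMonoton}-\big(\kerMDP[\pdist][\policy_{\pdist}]\big)^{\powMonoton}\Big],
\end{align*}
take the inner product of each summand with $\pdist^\prime-\pdist$, and bound each by a constant multiple of $\norm{\pdist^\prime-\pdist}[2]^2$. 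The two constants are designed to add up precisely to the two summands of $\constMonoton$.

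For the kernel part I would use Cauchy--Schwarz followed by the Lipschitz estimate for the $\powMonoton$-step operator of \Cref{lemma:Lipschitz-erg-operator}, which controls $\tvnorm{\pdist(\kerMDP[\pdist^\prime][\policy_{\pdist^\prime}])^{\powMonoton}-\pdist(\kerMDP[\pdist][\policy_{\pdist}])^{\powMonoton}}$ by $\constErgM[\powMonoton]$ times $\sup_{s}\tvnorm{\policy_{\pdist^\prime}(\cdot|s)-\policy_{\pdist}(\cdot|s)}+\norm{\pdist^\prime-\pdist}[2]$. Inserting the Lipschitz dependence of the optimal policy on the mean field from \Cref{coroll:lipschitz-policies}, which bounds $\sup_{s}\tvnorm{\policy_{\pdist^\prime}(\cdot|s)-\policy_{\pdist}(\cdot|s)}$ by a $\constLipPolicy$-multiple of $\norm{\pdist^\prime-\pdist}[2]$, collapses the policy term so that this part is at most $\constErgM[\powMonoton](1+\constLipPolicy)\norm{\pdist^\prime-\pdist}[2]^2=\constErg\CLipKerMDP\frac{1-\ergodicf^{\powMonoton}}{1-\ergodicf}(1+\constLipPolicy)\norm{\pdist^\prime-\pdist}[2]^2$, i.e.\ the first summand of $\constMonoton$.

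The transport part is the conceptual crux, and it is where ergodicity, rather than Lipschitzness, enters. Setting $w:=\pdist^\prime-\pdist$, the decisive fact is that $w$ has zero total mass, $\sum_{s}w(s)=0$, so the stationary row may be subtracted for free: $w\big(\kerMDP[\pdist^\prime][\policy_{\pdist^\prime}]\big)^{\powMonoton}=\sum_{s}w(s)\big[\delta_{s}(\kerMDP[\pdist^\prime][\policy_{\pdist^\prime}])^{\powMonoton}-\ergdist[\policy_{\pdist^\prime},\pdist^\prime]\big]$, and each bracketed row is controlled by the uniform mixing bound of Assumption~\ref{hyp:uniform-ergodicity} with $\initdist=\delta_{s}$, giving $\tvnorm{\delta_{s}(\kerMDP[\pdist^\prime][\policy_{\pdist^\prime}])^{\powMonoton}-\ergdist[\policy_{\pdist^\prime},\pdist^\prime]}\le\constErg\ergodicf^{\powMonoton}$. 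Expanding the inner product as the double sum $\sum_{s,s^\prime}w(s)w(s^\prime)\big[(\kerMDP[\pdist^\prime][\policy_{\pdist^\prime}])^{\powMonoton}(s,s^\prime)-\ergdist[\policy_{\pdist^\prime},\pdist^\prime](s^\prime)\big]$ (no Cauchy--Schwarz needed here), bounding the inner sum over $s^\prime$ by $\norm{w}[\infty]\cdot 2\constErg\ergodicf^{\powMonoton}$, and then summing over $s$ yields $2\constErg\ergodicf^{\powMonoton}\norm{w}[1]\norm{w}[\infty]$; the crude comparisons $\norm{w}[\infty]\le\norm{w}[1]$ and $\norm{w}[1]\le\sqrt{\nstates}\,\norm{w}[2]$ give $\norm{w}[1]\norm{w}[\infty]\le\nstates\,\norm{w}[2]^2$, hence the bound $2\nstates\constErg\ergodicf^{\powMonoton}\norm{\pdist^\prime-\pdist}[2]^2$, matching the second summand of $\constMonoton$.

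Adding the two parts closes the estimate. I expect the transport term to be the main obstacle: the temptation is to bound it by plain Lipschitzness, which would lose the geometric factor $\ergodicf^{\powMonoton}$ entirely and ruin the mechanism that lets $\constMonoton<1$ hold for large $\powMonoton$; the correct route is to exploit the zero-mass cancellation against the stationary distribution and the mixing rate, while keeping scrupulous track of the passages between the total-variation, $\ell_1$, and $\ell_2$ norms, which is what manufactures the $\nstates$ factor.
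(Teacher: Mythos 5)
Your proof is correct and follows essentially the same route as the paper: the same two-term decomposition of $\oppopulation(\pdist^\prime)-\oppopulation(\pdist)$ (up to swapping which of $\pdist,\pdist^\prime$ carries the kernel difference), the same appeal to \Cref{lemma:Lipschitz-erg-operator} and \Cref{coroll:lipschitz-policies} for the kernel part, and the same use of the mixing bound of Assumption~\ref{hyp:uniform-ergodicity} for the transport part. One remark in your favor: your handling of the transport term, via the zero-mass cancellation $\sum_s w(s)=0$ and the factorization $2\constErg\ergodicf^{\powMonoton}\norm{w}[1]\norm{w}[\infty]\le 2\nstates\constErg\ergodicf^{\powMonoton}\norm{w}[2]^2$, yields a genuinely quadratic bound, whereas the paper's displayed chain stops at $\norm{w}[1]\cdot 2\constErg\ergodicf^{\powMonoton}$ before asserting the quadratic estimate, so your version is the more careful one. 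One shared imprecision to be aware of (inherited from the paper's own argument): \Cref{coroll:lipschitz-policies} bounds the \emph{square} of $\sup_s\tvnorm{\policy_{\pdist}(\cdot|s)-\policy_{\pdist^\prime}(\cdot|s)}$ by $\constLipPolicy\norm{\pdist-\pdist^\prime}[2]$, so substituting a $\constLipPolicy$-multiple of $\norm{\pdist-\pdist^\prime}[2]$ for the unsquared total-variation distance, as both you and the paper do, is not literally what that corollary provides.
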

\begin{proof}
    Consider the following decomposition
    \begin{align*}
        \l\langle
            \pdist^\prime - \pdist,
            \oppopulation(\pdist^\prime) - \oppopulation(\pdist)
        \r\rangle =
        \l\langle
            \pdist^\prime - \pdist,
            \pdist^\prime\l[
                \Big(\kerMDP[\pdist^\prime][\policy_{\pdist^\prime}]\Big)^\powMonoton
                -
                \Big(\kerMDP[\pdist][\policy_\pdist]\Big)^\powMonoton
            \r]
        \r\rangle
        +
        \l\langle
            \pdist^\prime - \pdist,
            (\pdist^\prime - \pdist)
                \Big(\kerMDP[\pdist][\policy_\pdist]\Big)^\powMonoton
        \r\rangle
        \eqsp.
    \end{align*}

    On one hand, applying~\Cref{lemma:Lipschitz-erg-operator} and Cauchy-Schwartz inequality,
    we obtain
    \begin{align*}
        &\l\langle
            \pdist^\prime - \pdist,
            \pdist^\prime\l[
                \Big(\kerMDP[\pdist^\prime][\policy_{\pdist^\prime}]\Big)^\powMonoton
                -
                \Big(\kerMDP[\pdist][\policy_\pdist]\Big)^\powMonoton
            \r]
        \r\rangle
        \\
        \leq& 
        \norm{\pdist^\prime - \pdist}[2]\cdot
        \tvnorm{\Big(\kerMDP[\pdist^\prime][\policy_{\pdist^\prime}]\Big)^\powMonoton
                -
                \Big(\kerMDP[\pdist][\policy_\pdist]\Big)^\powMonoton}
        \\
        \leq&
        \norm{\pdist^\prime - \pdist}[2]\cdot
        \constErg\CLipKerMDP
            \frac{1-\ergodicf^\powMonoton}{1-\ergodicf}\l(
                \sup_{s\in\S}\tvnorm{\policy_{\pdist}(\cdot|s)-\policy_{\pdist^\prime}(\cdot|s)}
                +
                \norm{\pdist-\pdist^\prime}[2]
            \r)
        \\
        \leq&
        \constErg\CLipKerMDP
            \frac{1-\ergodicf^\powMonoton}{1-\ergodicf}\l(1+\constLipPolicy\r)
                \norm{\pdist-\pdist^\prime}[2]^2
            \eqsp,
    \end{align*}
    where in the last inequality we have applied Corollary~\eqref{coroll:lipschitz-policies}.

    On the other hand, from Assumption~\ref{hyp:uniform-ergodicity}, we get that
    \begin{align*}
        \l\langle
            \pdist^\prime - \pdist,
            (\pdist^\prime - \pdist)
                \Big(\kerMDP[\pdist][\policy_\pdist]\Big)^\powMonoton
        \r\rangle
        \leq&
        \norm{\pdist^\prime - \pdist}[1]
        \tvnorm{(\pdist^\prime - \pdist)\Big(\kerMDP[\pdist][\policy_\pdist]\Big)^\powMonoton}
        \\
        \leq&
        \norm{\pdist^\prime - \pdist}[1]\l(
        \tvnorm{\pdist\Big(\kerMDP[\pdist][\policy_\pdist]\Big)^\powMonoton - \ergdist[\policy_\pdist,\pdist]}
        +\tvnorm{\ergdist[\policy_\pdist,\pdist] - \pdist^\prime\Big(\kerMDP[\pdist][\policy_\pdist]\Big)^\powMonoton }
        \r)
        \\
        \leq&
        \norm{\pdist^\prime - \pdist}[1] \cdot 2\constErg\ergodicf^\powMonoton
        \\
        \leq&
        2\l|\S\r| \constErg \ergodicf^\powMonoton\norm{\pdist^\prime - \pdist}[2]^2
        \eqsp.
    \end{align*}
\end{proof}
The condition $\constMonoton < 1$ is satisfied when the Lipschitz constant $\CLipKerMDP$ associated with the transition kernel is sufficiently small, and the exponent $\powMonoton$ is large enough.

Intuitively, a smaller $\CLipKerMDP$ indicates that the transition dynamics of the MDP are less sensitive to changes in the population distribution, reducing the potential for instability.
A regularity condition on \( \CLipKerMDP \) is a standard assumption in the literature to ensure the uniqueness of the MFNE. Similar assumptions have been employed in various works, including~\citet{becherer2024common},~\citet{espinosa2015optimal},~\citet{lacker2019mean}, and~\citet{tangpi2024optimal}, among many others. These studies leverage regularity constraints to prevent degeneracies in equilibrium selection and to guarantee well-posedness in the associated fixed-point problems.

Meanwhile, a larger $\powMonoton$ amplifies the effect of the contraction over multiple iterations of the operator, ensuring convergence even in cases where individual updates are not strongly contractive. This interplay between $\CLipKerMDP$ and $\powMonoton$ highlights the importance of balancing the model’s inherent dynamics with the structural assumptions to guarantee monotonicity and stability in the population updates.

\section{Additional Experiments}
\label{section:appendix:experiments}
We present results for the \ExactMFTRPO\ algorithm on two extensions of the Crowd Modeling game and we benchmark our results against Ficticious Play (FP)~\citep{perrin2020fictitious} and Online Mirror Descent (OMD)~\citep{perolat2022scaling}. Our findings demonstrate that the exact algorithm matches the performance of state-of-the-art methods, highlighting its effectiveness in these settings. 
In the following, we provide a detailed overview of the games employed.

\textbf{Grid-based Crowd Modeling Game.}
This environment, inspired by the Four Rooms example from~\citet{geist2022concaveutilityreinforcementlearning}, is based on a two-dimensional grid with obstacles.
Each agent's state is defined by her position on the grid, and she can choose from five possible actions: moving left, right, up, down, or staying in place. The reward function is designed to discourage overcrowding by penalizing agents based on the population density at their next position. Specifically, agents receive a negative reward proportional to the logarithm of the density at their destination, encouraging a more even distribution across the state space. Additionally, a small bonus is given for staying in place, while moving in any direction results in a penalty. Formally, the reward function is defined as
\begin{equation*}
    \rewardMDP (s, a, \pdist) = - \kappa \log(\pdist(s)) + \Gamma(a) \eqsp,
\end{equation*}
where $\Gamma(a) = 0.2 \cdot \Ind_{\{a = \text{Stay} \}} - 0.2 \cdot \Ind_{\{a \neq \text{Stay} \}}$, with $\Ind$ being the indicator function and $\kappa$ being a crowd-aversion parameter.

In this environment, the transition matrix does not depend on the mean-field distribution $\pdist$; however, some stochasticity is introduced through a slipperiness parameter: when an agent selects an action, she is most likely to follow it, but there remains a smaller probability of performing a different valid move.
 In particular, for each action, a total slipperiness probability of $0.1$ is evenly distributed among the alternative actions. Furthermore, this game can be extended by introducing a designated point of interest, denoted as $s_{\text{target}}$, which guides the behavior of the players.
The modified reward function is defined as
\begin{equation*}
    \tilde{\rewardMDP}(s, a, \pdist) = \rewardMDP(s, a, \pdist) + \max \left(0.3 - 0.1 \cdot d(s, s_{\text{target}}), 0 \right) \eqsp ,
\end{equation*}
where $\rewardMDP(s, a, \pdist) $ denotes the previously defined reward function, and $ d(s, s_{\text{target}}) $ is the distance between state $s$ and the target state $s_{\text{target}}$, computed as the $\ell_1 $ norm of their coordinate difference.

\begin{figure}[b]
    \centering
    \includegraphics[width=0.3\linewidth]{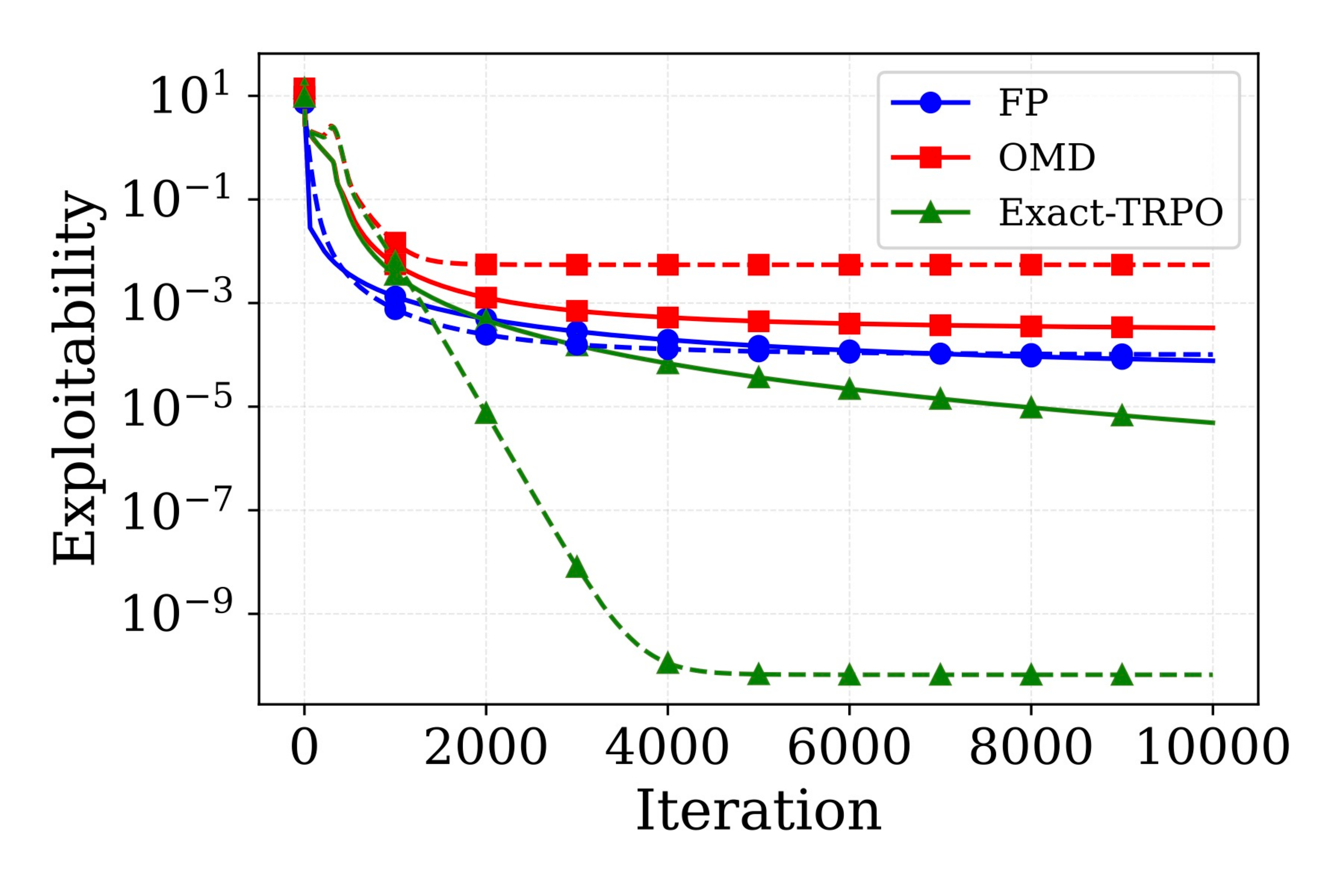}
    \includegraphics[width=0.3\linewidth]{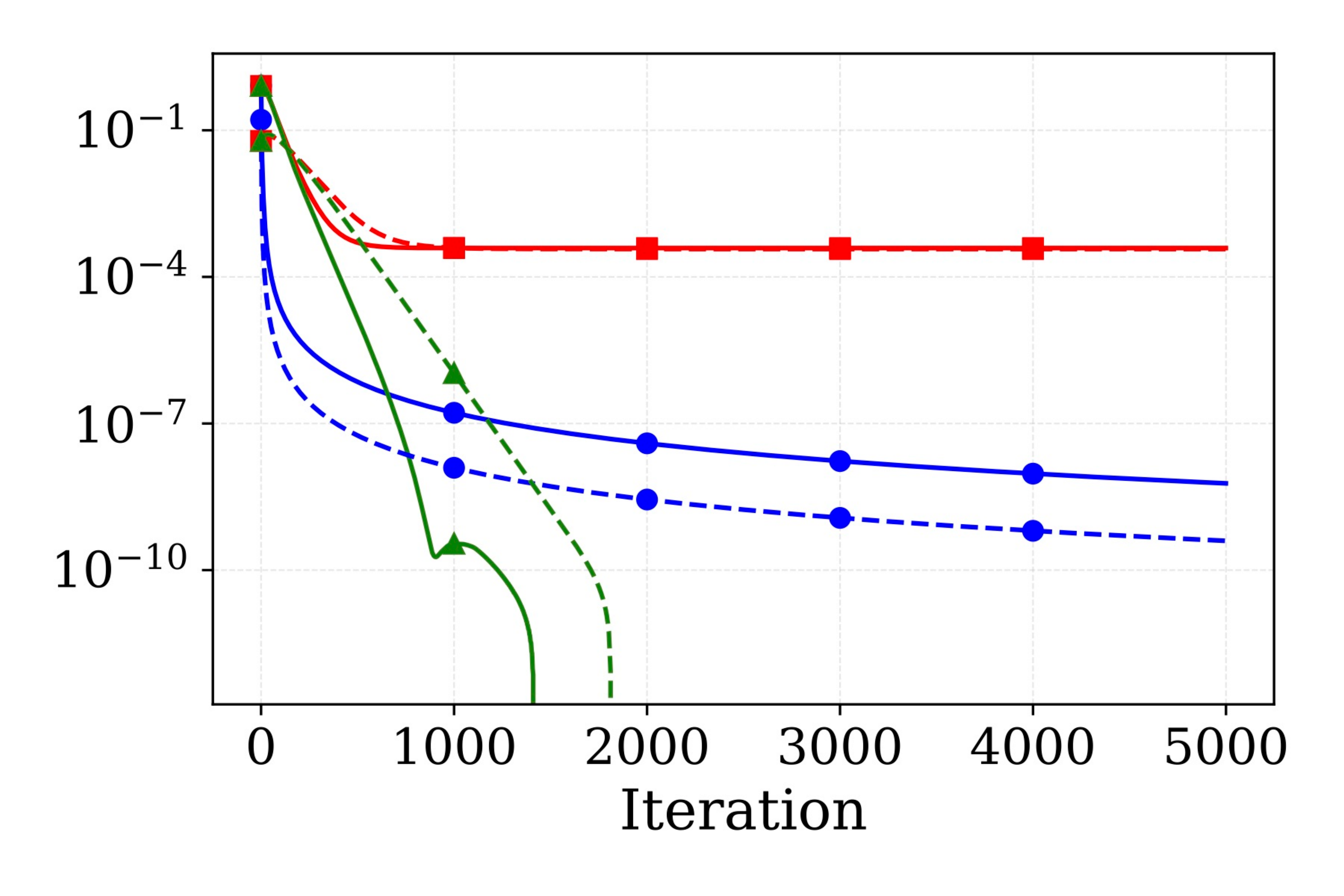}
    \includegraphics[width=0.3\linewidth]{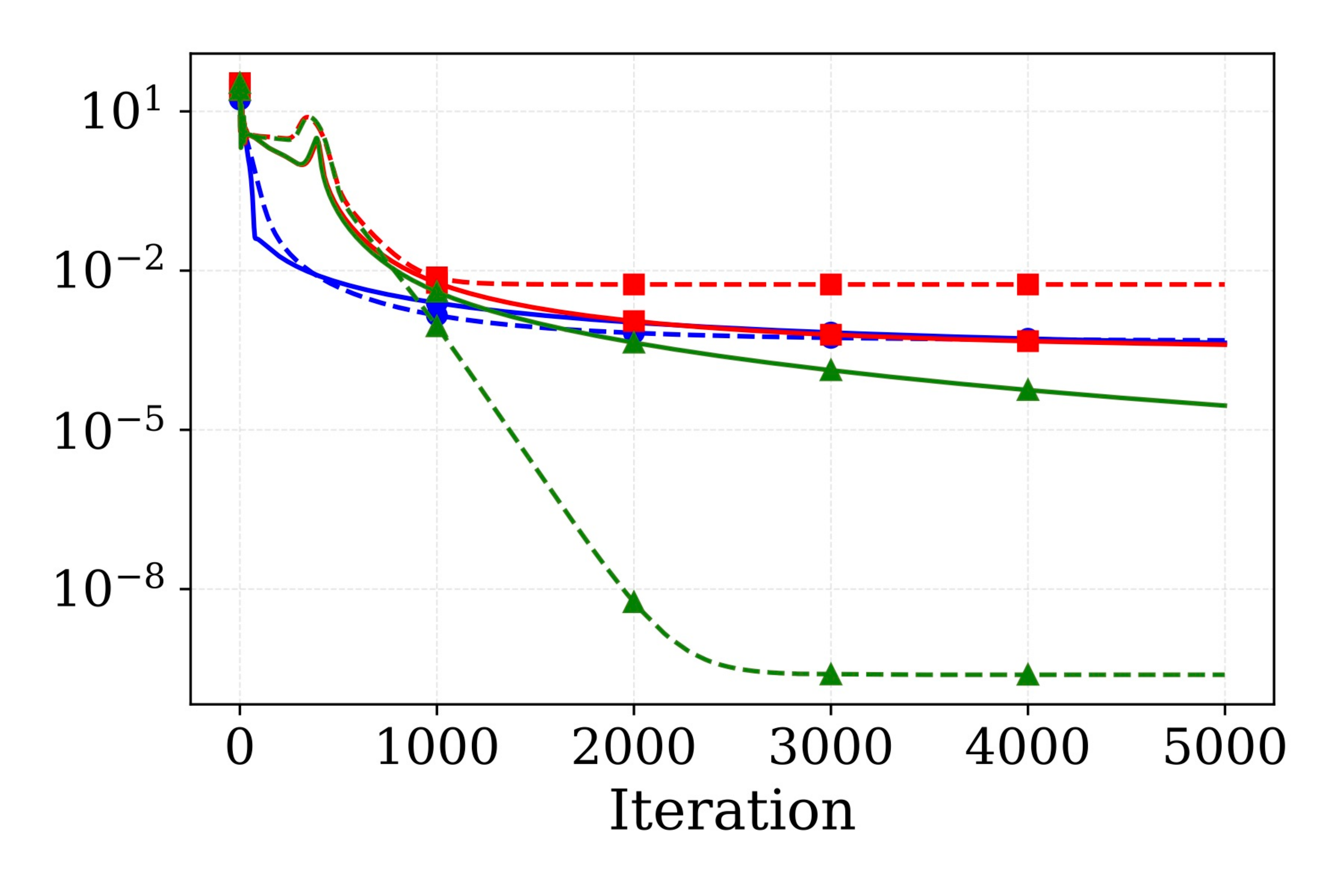}
    \caption{The reading order is (from left to right): Four Rooms Crowd Modeling, Two-Islands-Graph Crowd Modeling, and Four Rooms Crowd Modeling with a point of interest. Solid lines denote $\regulf = 0.05$, whereas dashed lines indicate $\regulf = 0.3$.}
    \label{fig:expl_comparison_exact}
\end{figure}
 
\textbf{Two-Islands-Graph Crowd Modeling.}
The Two Islands Crowd Modeling Game replaces the grid structure with two interconnected graphs, referred to as \emph{islands}, connected by a single narrow bridge. The main challenge in this setting arises from the limited connectivity between the two sub-populations. The transition matrix is generated randomly, assigning to each node a probability distribution over its neighboring nodes, including itself. The reward function penalizes the logarithm of the mean-field distribution while encouraging movement toward the second island $i_2$,
\begin{equation*}
    \rewardMDP (s, \pdist) = - \kappa \log(\pdist(s)) (2\cdot \Ind_{s \in i_2} + \Ind_{s \in i_1}) \eqsp .
\end{equation*}


The \ExactMFTRPO\ algorithm is evaluated on the two proposed variants of the Grid-Based environment and on the Two-Islands-Graph Crowd Modeling game. The former is modeled as an $11 \times 11$ grid with walls delineating four symmetric and interconnected rooms, as in~\citet{geist2022concaveutilityreinforcementlearning}, with all the players starting clustered in the top-left corner. For the latter, we consider a state space of size $\nstates = 14$ and an action space of size $\nactions = 2$, with a branching factor of 2, that is, each state is connected to exactly two neighbors. 
Here, initially, all players are positioned at location 2 on the first island $i_1$ (see~\Cref{fig:dist_evolution_exact_2_isl}).

\subsection{Experimental setting}
Results are presented for two different values of the regularization parameter: $\regulf = 0.05$ and $\regulf = 0.3$ and, 
throughout all experiments, the discount factor is set to $\gamma = 0.9$.  A key feature of both the exact and sample-based methods is the use of a \emph{warm start} for the policy in the RL component. Rather than resetting the policy to a uniform distribution over actions at each iteration, it is initialized with the policy learned from the previous iteration. 
Moreover, the step size used for updating the distribution remains constant throughout the learning phase, \ie, $\steppdist_k=\steppdist$. The key parameters for the two algorithms are summarized in~\Cref{tab:params}.

\begin{table}[t]
\centering
\caption{Parameter settings for the algorithms.}
\label{tab:params}
\begin{tabular}{lcccccccc}
\toprule
 Algorithm/Parameter& $\kappa$ & $\gamma$ & $\regulf$ & $L$ & $\steppdist$  & $I_\ell$ & $P$ & $M$ \\
\midrule
\ExactMFTRPO\ & \(\{0.2, 0.4\}\) & 0.9 & \(\{0.05, 0.3\}\) & 10 & 0.01 & \textit{N/A} & \textit{N/A} &  \textit{N/A}\\
\SampleBasedMFTRPO & 0.2 & 0.9 & \(\{0.05, 0.3\}\) & 100 & 0.1 & \(3 \cdot 10^{5}\) & \(3 \cdot 10^{5}\)  &  $100$\\
\bottomrule
\end{tabular}
\end{table}

\subsection{Results}

The plots presented show the exploitability, defined in~\Cref{eq:def:exploitability}, to evaluate the effectiveness of our approach, along with the evolution of the mean-field distribution over time.
Compared to FP and OMD, \ExactMFTRPO\ performs competitively across all evaluated environments, demonstrating superior long-term performance. As training progresses, the model continually improves its policy and ultimately outperforms the other algorithms (see~\Cref{fig:expl_comparison_exact}).
 Moreover, players in grid-based games tend to move toward less crowded areas, gradually achieving a more uniform distribution  (see \Cref{fig:dist_evolution_exact_four_rooms}). Moreover, when a point of interest is introduced, the players manage to cluster around it (see \Cref{fig:dist_evolution_exact_target}). Finally, as shown in~\Cref{fig:dist_evolution_exact_2_isl}, the players progressively concentrate on the second island, attracted by the higher reward present in that region.

\begin{figure}[H]
    \centering
    \includegraphics[width=0.25\linewidth]{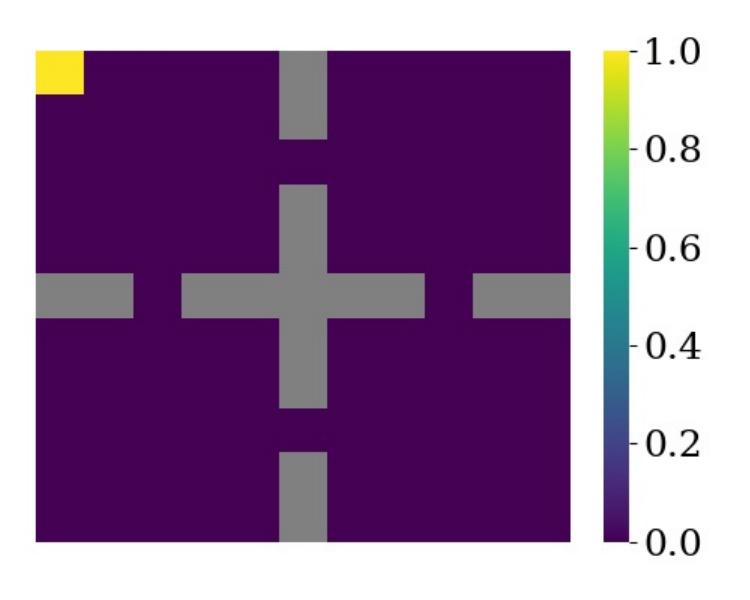}
    \includegraphics[width=0.25\linewidth]{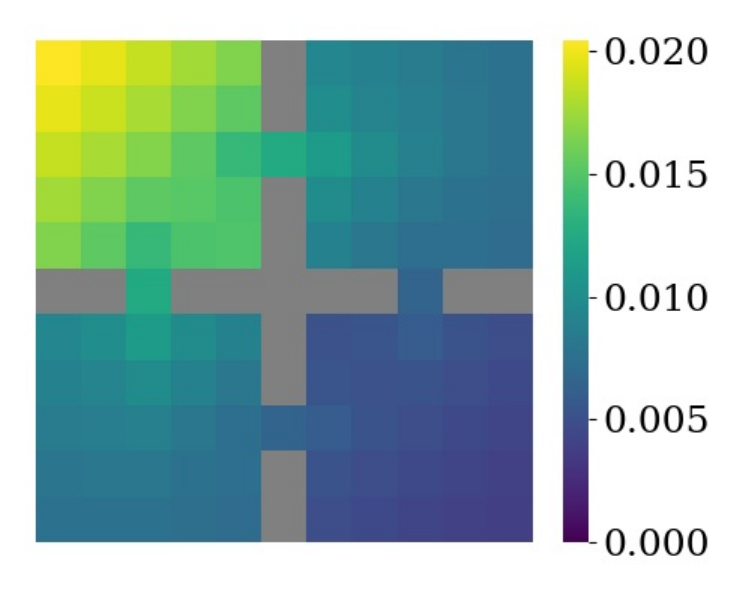}
    \includegraphics[width=0.25\linewidth]{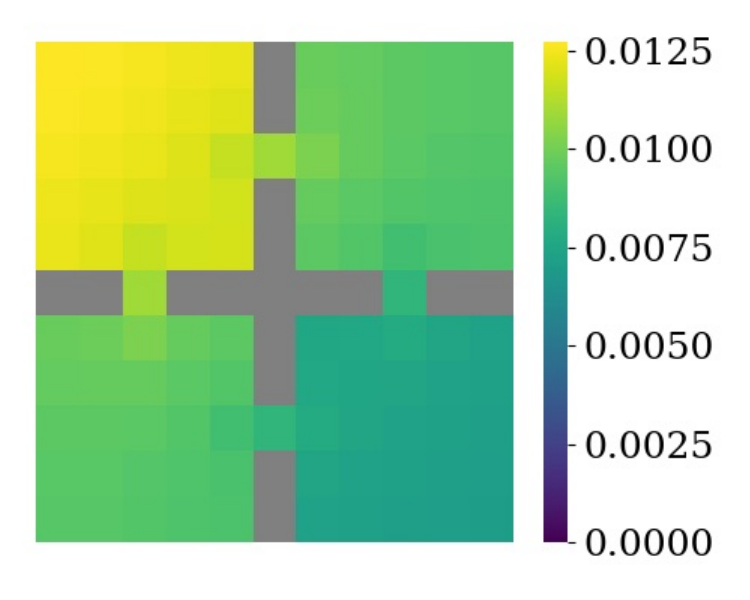}
    \caption{Evolution of the mean field distribution for $ \regulf = 0.05$ in the Four Rooms Crowd Modeling game. From left to right: step 0, step 1000 and step 5000. }
    \label{fig:dist_evolution_exact_four_rooms}
\end{figure}

\begin{figure}[H]
    \centering
    \includegraphics[width=0.25\linewidth]{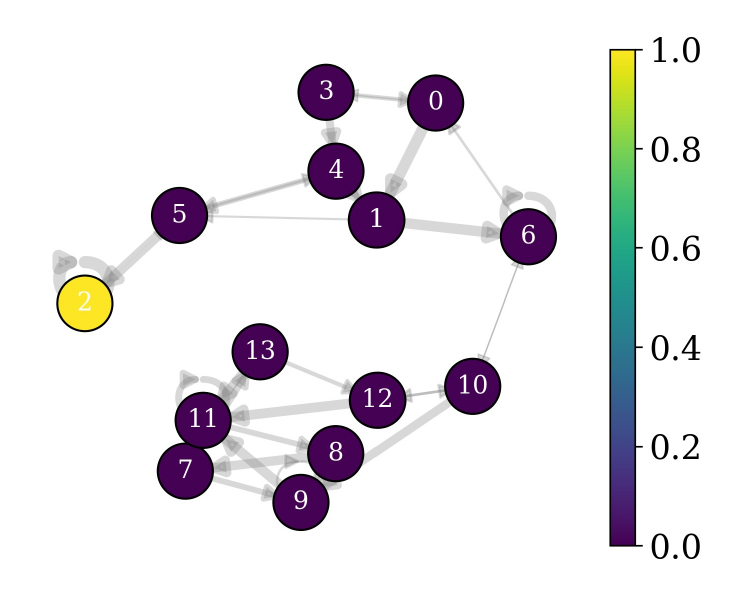}
    \includegraphics[width=0.25\linewidth]{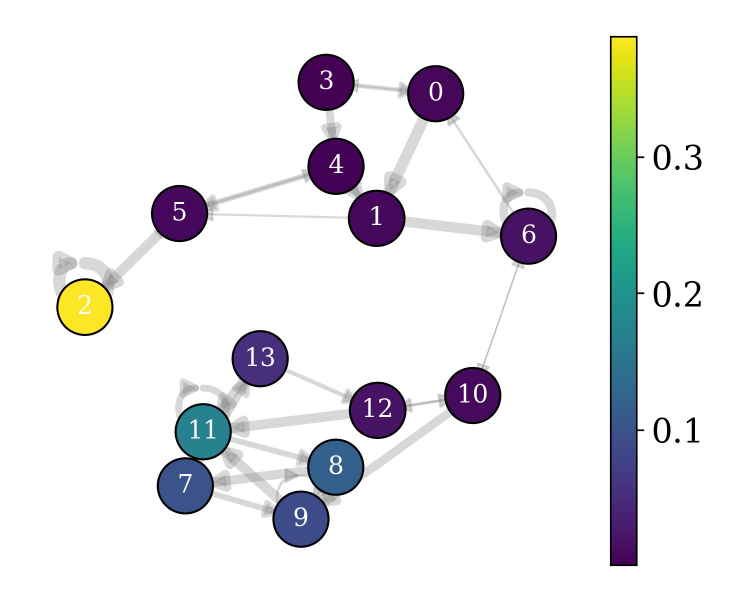}
    \includegraphics[width=0.25\linewidth]{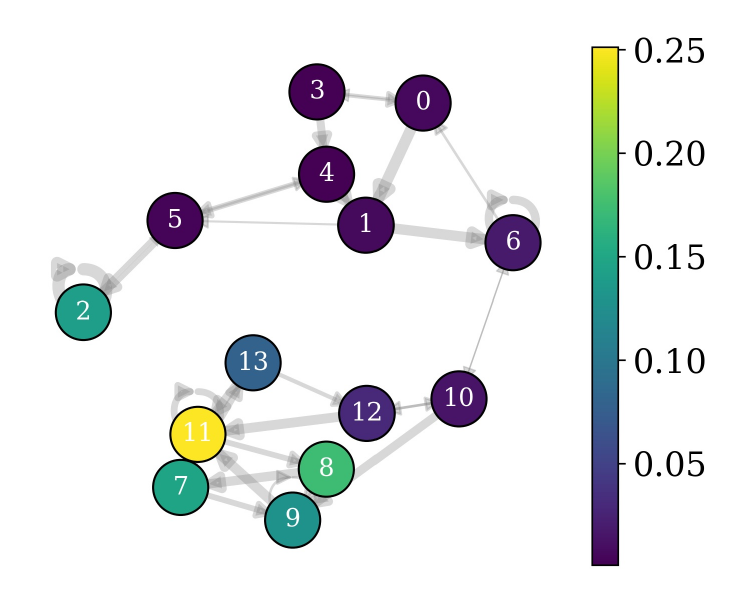}
    \caption{Evolution of the mean field distribution for $ \regulf = 0.05$ in the Two-Islands Graph Crowd Modeling game. From left to right: step 0, step 2000 and step 5000. }
    \label{fig:dist_evolution_exact_2_isl}
\end{figure}

\begin{figure}[H]
    \centering
    \includegraphics[width=0.25\linewidth]{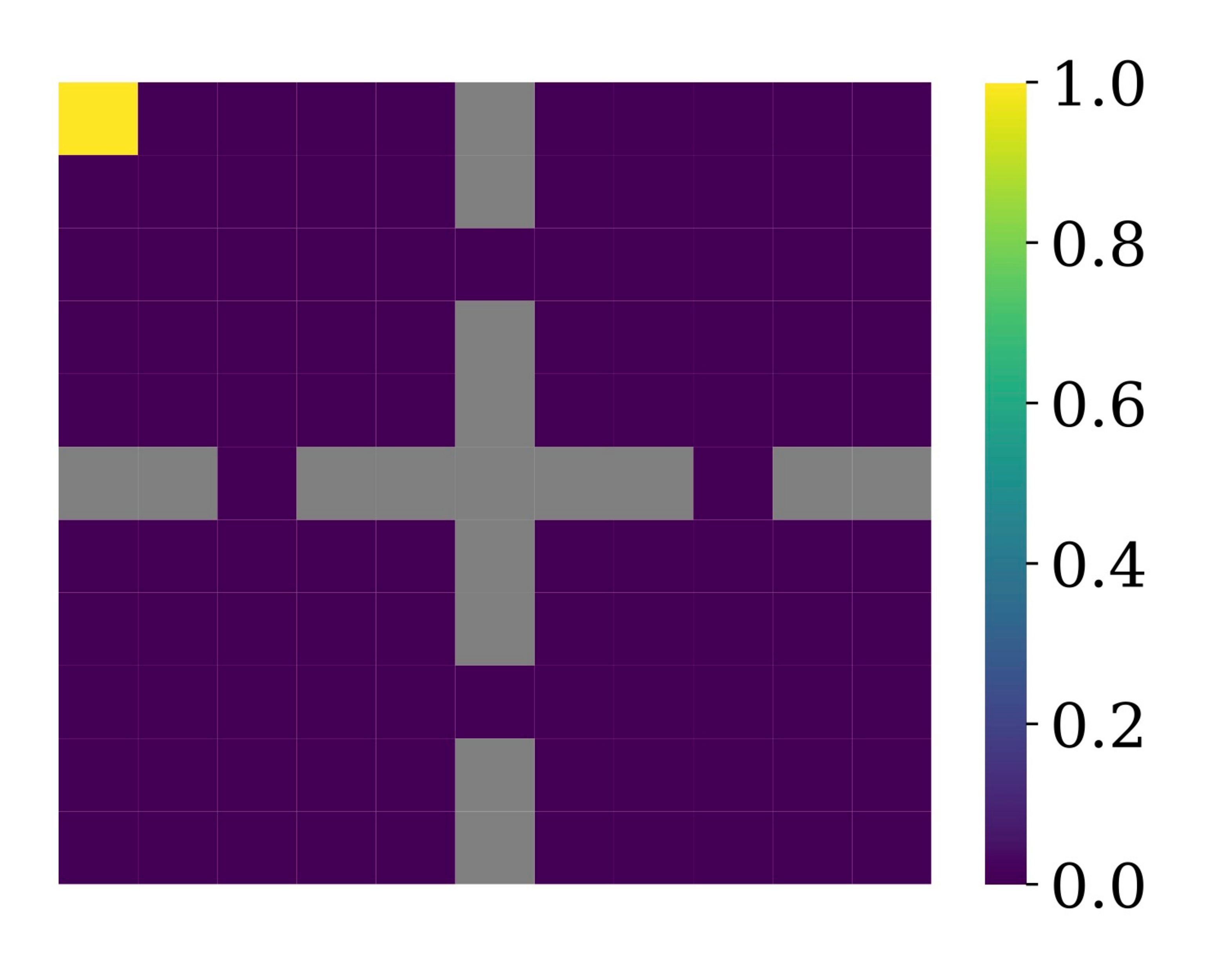}
    \includegraphics[width=0.25\linewidth]{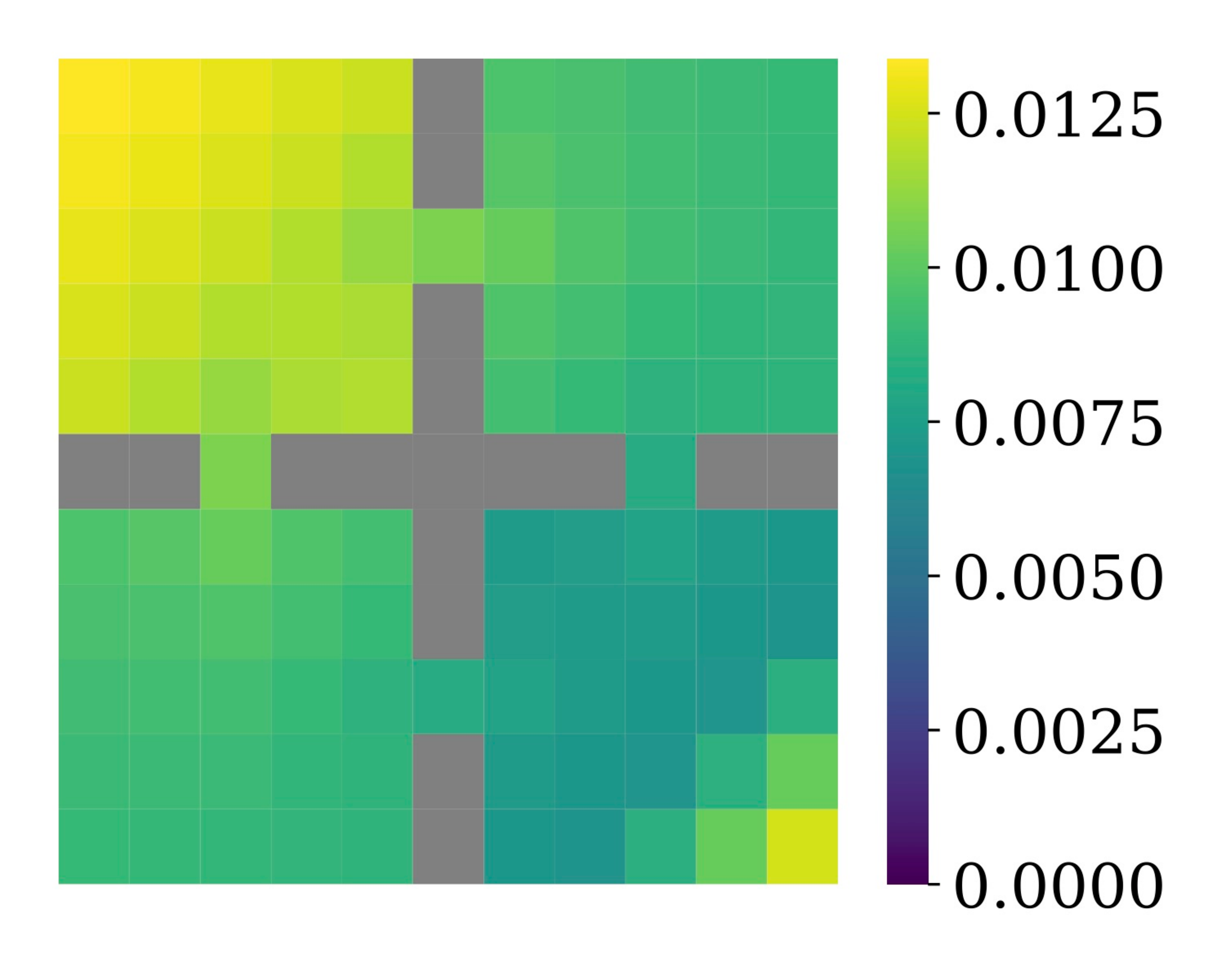}
    \includegraphics[width=0.25\linewidth]{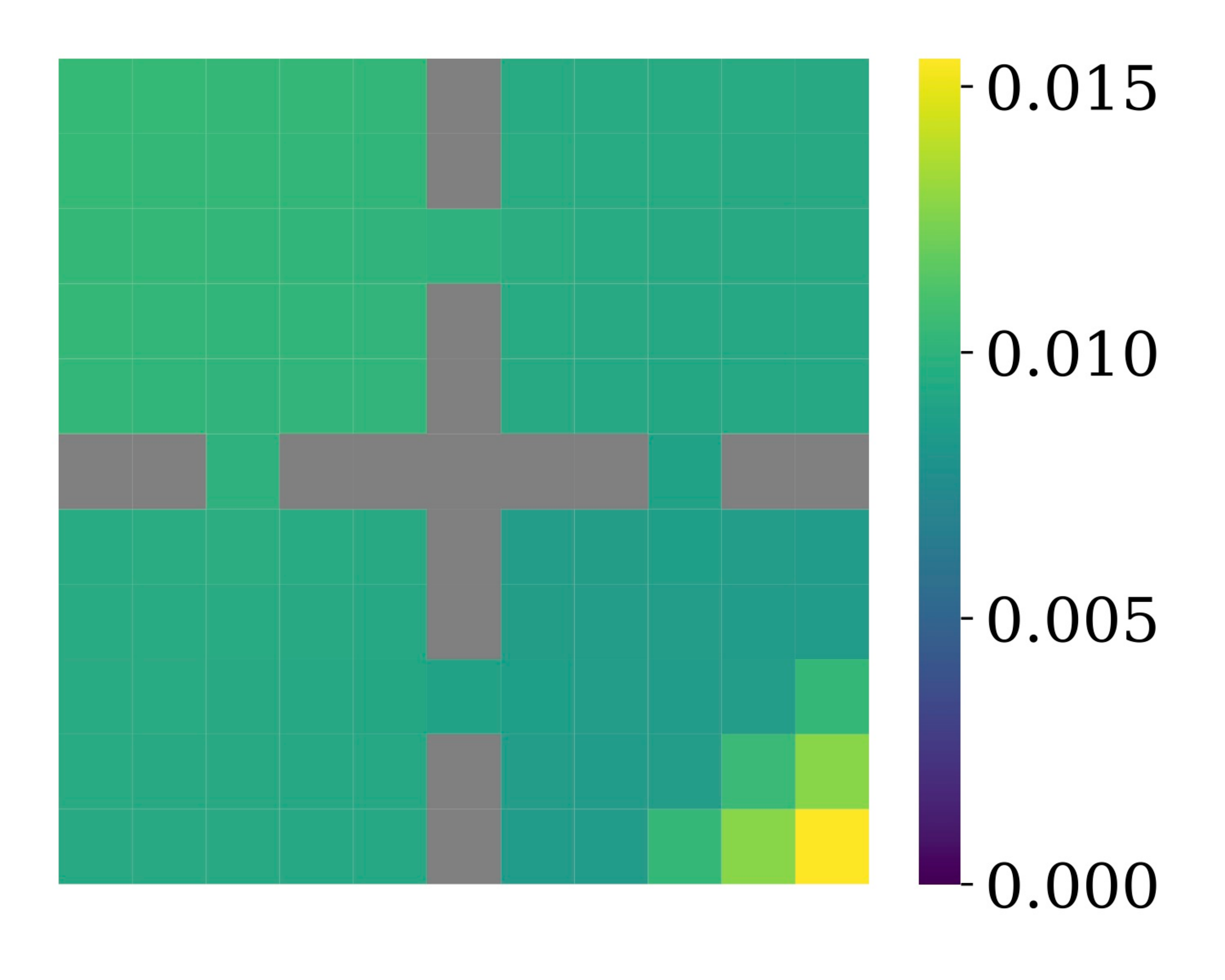}
    \caption{Evolution of the mean field distribution for $ \regulf = 0.05$ in the Four Rooms Graph Crowd Modeling game with the bottom-right corner being a point of interest. From left to right: step 0, step 1000 and step 5000. }
    \label{fig:dist_evolution_exact_target}
\end{figure}


\end{document}